\newcommand{\x}{{\bf x}}
\newcommand\xrowht[2][0]{\addstackgap[.5\dimexpr#2\relax]{\vphantom{#1}}}
\newtheorem{theorem}{Theorem}
\newtheorem{lemma}{Lemma}
\title{Late Fusion Multi-view Clustering via Global and Local Alignment Maximization
}
\author{
  Siwei Wang, Xinwang Liu, En Zhu \\
  School of Computer\\
  National University of Defense Technology\\
  Changsha, China, 410073 \\
  \texttt{\{wangsiwei13,\,xinwangliu,\,enzhu\}@nudt.edu.cn} \\
}
\begin{document}
\maketitle

\begin{abstract}
Multi-view clustering (MVC) optimally integrates complementary information from different views to improve clustering performance. Although demonstrating promising performance in various applications, most of existing approaches directly fuse multiple pre-specified similarities to learn an optimal similarity matrix for clustering, which could cause over-complicated optimization and intensive computational cost. In this paper, we propose late fusion MVC via alignment maximization to address these issues. To do so, we first reveal the theoretical connection of existing $k$-means clustering and the alignment between base partitions and the consensus one. Based on this observation, we propose a simple but effective multi-view algorithm termed {Late Fusion Multi-view Clustering via Global Alignment Maximization (LF-MVC-GAM)}. It optimally fuses multiple source information in partition level from each individual view, and maximally aligns the consensus partition with these weighted base ones. Such an alignment is beneficial to integrate partition level information and significantly reduce the computational complexity by sufficiently simplifying the optimization procedure. We then design another variant, {Late Fusion Multi-view Clustering via Local Alignment Maximization (LF-MVC-LAM)} to further improve the clustering performance by preserving the local intrinsic structure among multiple partition spaces. After that, we develop two three-step iterative algorithms to solve the resultant optimization problems with theoretically guaranteed convergence. Further, we provide the generalization error bound analysis of the proposed algorithms. Extensive experiments on eighteen multi-view benchmark datasets demonstrate the effectiveness and efficiency of the proposed LF-MVC-GAM and LF-MVC-LAM, ranging from small to large-scale data items. The codes of the proposed algorithms are publicly available at \url{https://github.com/wangsiwei2010/latefusionalignment}.
\end{abstract}

\keywords{Multiple kernel clustering \and multiple view clustering}

\section{Introduction}
Multi-view clustering (MVC) optimally integrates multiple view information to categorize data with similar structures or patterns into the same cluster \cite{zhan2018multiview,nie2017auto,liang2020multi,liu2020optimal,peng2018structured,gonen2008localized,chaudhuri2009multi,deng2020multi,han2020multi,huang2012multiple,liu2013efficient,gonen2014localized,Liu2016Multiple,Li2016Multiple,li2020multi,liu2020multi,liu2017optimal,liu2018late,zhang2018binary,wang2015multi,zhang2020consensus,wu2014k}. Many multi-view clustering algorithms have been proposed in literature, which can roughly be summarized with three categories: i) Co-training style; ii) Subspace clustering; iii) Multiple kernel clustering (MKC). The co-training approach for MVC iteratively learns multiple clustering results that can provide predicted clustering indices for the unlabeled data for other views \cite{blum1998combining,kumar2011co,yang2014information,tao2017ensemble,yao2017revisiting,zhang2017robust}. By this way, besides extracting the specific cluster information from the corresponding view, the clustering results are forced to be consistent across views. Subspace clustering approach for MVC aims to optimize a unified subspace from multi-view data representation \cite{liu2013multi,cao2015diversity,cai2013multi,zhao2017multi,xu2015multi,wang2017exclusivity,zhu2018one,wang2019gmc,zhang2016joint,sun2021projective}. \cite{DBLP:conf/ijcai/NieLL17} proposes a self-weighted strategy to adaptively update the view coefficients and output the ideal low-rank graph similarity matrix for spectral clustering (SWMC). Moreover, \cite{2021Fast} proposes a unified framework to jointly conduct prototype graph fusion and directly output the clustering labels. By following the multiple kernel learning framework, multiple kernel clustering (MKC) approach learns a group of optimal kernel coefficient to improve clustering performance, which has been intensively studied \cite{yu2012optimized,liu2013efficient,gonen2014localized,Liu2016Multiple,Li2016Multiple,
liu2017optimal,liu2019multiple,liu2019robust,yin2018subspace,xu2017re,yao2017revisiting}.The work in \cite{chen2007nonlinear} proposes a three-step alternate algorithm to jointly optimize clustering, kernel coefficients and dimension reduction. A multiple kernel $k$-means clustering algorithm with a matrix-induced regularization term to reduce the redundancy of the selected kernels is presented in \cite{Liu2016Multiple}. In addition, local kernel alignment criterion has been applied to multiple kernel clustering to enhance the clustering performance in \cite{Li2016Multiple}. In \cite{liu2017optimal}, a multiple-kernel clustering algorithm is developed to allow the optimal kernel to reside in the neighborhood of the linearly combined kernels, which is considered to enlarge the representation ability of the learned optimal kernel. Our work in this paper belongs to the third category.

Although the aforementioned multiple kernel clustering algorithms have improved clustering performances from different aspects, we observe that most of them suffer from the following drawbacks: i) The majority of existing multiple kernel clustering methods learn a shared optimal kernel matrix and then apply kernel $k$-means to obtain the final clustering result. In contrast, few of them fuse partition level information for further improving clustering quality. ii) The intensive computational complexity prevents them from being applied into large-scale clustering tasks, i.e., usually $\mathcal{O}(mn^3)$ per iteration, where $n$ and $m$ are the number of samples and views respectively. iii) According to \cite{liu2018late}, the resultant optimization processes of existing multiple kernel clustering algorithms are usually over-complicated, which could increase the risk of being trapped into over-fitting situation, leading to unsatisfying clustering performance.

In this paper, we propose to fuse multi-view information from partition level by maximizing the alignment between base partitions and the consensus one. Before doing so, we first theoretically show that this criterion is conceptually equivalent to minimizing the clustering loss of existing $k$-means algorithm. Based on this observation, we propose a novel algorithm, termed \emph{Late Fusion Multi-view Clustering via Global Alignment Maximization} (LF-MVC-GAM), to significantly improve the computational complexity as well as clustering performances of existing multiple kernel clustering. In specific, it maximally aligns the consensus partition and the weighted base ones with optimal permutations in a global way. We further improve LF-MVC-GAM and develop another variant, termed \emph{Late Fusion Multi-view Clustering via Local Alignment Maximization} (LF-MVC-LAM), to preserve the local partition structure among data and utilize more reliable base partitions to guide the learning of consensus partition. After that, two efficient algorithms are developed to solve the resultant optimization problems and their computational complexity and convergence are theoretically analyzed. Moreover, the generalization error bound is provided to theoretically justify the performance of the proposed algorithms. In addition, extensive experiments on twelve multiple-view benchmark datasets are conducted to evaluate the effectiveness and efficiency of our proposed method, including the clustering performance, the running time, the evolution of the learned consensus partition and the objective value with iterations. As demonstrated, the proposed algorithms enjoy superior clustering performance with significant reduction in computational cost, in comparison with several state-of-the-art multi-view kernel-based clustering methods.

This paper is substantially extended from our conference version \cite{wang2019mvclfa}. Its significant improvement over the conference version can be summarized as follows: 1) We design a new algorithm variant termed LF-MVC-LAM, by incorporating multiple views information in a local late fusion manner, and develop an iterative algorithm to efficiently solve the resultant optimization problem. By well preserving the local cluster structures and alleviating unreliable similarity evaluation in partition spaces, the newly proposed LF-MVC-LAM significantly outperforms LF-MVC-GAM proposed in the previous paper \cite{wang2019mvclfa}.
2) We theoretically study the generalization bound of the proposed LF-MVC-GAM and LF-MVC-LAM on test data.
3) Besides more detailed discussion and extension, we also conduct more comprehensive experiments to validate the effectiveness and efficiency of the proposed algorithms.

\section{Related Work}\label{Related Work}

In this section, we introduce existing work most related to our study in this paper, including $\emph{k}$-means algorithm, multi-view clustering and multiple-kernel clustering methods. Table \ref{notations} lists main notations used throughout the paper.

\begin{table}[!t]
\begin{center}
{\caption{{Main notations used throughout the paper.}}\label{notations}
\resizebox{0.99\textwidth}{!}{
\begin{tabular}{c||c}
\toprule
Notation        & Meaning   \\
\midrule
\hline
$n$         & The number of samples     \\
\hline
$m$             &  The number of views   \\
\hline
$k$       &  The number of clusters   \\
\hline
$\boldsymbol{\beta} \in \mathbb{R}^{m \times 1}$       &  The view coefficient    \\
\hline
\xrowht{10pt}
$\mathbf{X}_i^{(p)}$           & The $p$-th view of the $i$-th sample   \\
\hline
\xrowht{10pt}
$\mathbf{F}\in \mathbb{R}^{n \times k}$           & The consensus clustering indicator matrix    \\
\hline
\xrowht{10pt}
$\mathbf{K}_p \in \mathbb{R}^{n \times n}$            &  The kernel matrix for the $p$-th view    \\
\hline
\xrowht{10pt}
$\mathbf{H}_p \in \mathbb{R}^{n \times k}$            & The base partition matrix for the $p$-th view    \\
\hline
\xrowht{10pt}
$\mathbf{W}_p \in \mathbb{R}^{k \times k}$            & The permutation matrix for the $p$-th base partition    \\
\hline
\xrowht{10pt}
$\mathbf{B}\in \mathbb{R}^{n \times k}$    &  The combined partition representation   \\
\hline
\xrowht{10pt}
${\mathbf{A}^{(i)}_{p}} \in {\left\{0,1\right\}}^{n\times n}$  &  The indicator matrix for the neighbors of $i$-th sample   \\
\hline
\xrowht{10pt}
$\mathbf{K}_{\boldsymbol{\beta}}$  &  The combined kernel matrix $\mathbf{K}_{\boldsymbol{\beta}} = \sum\nolimits_{p=1}^m {\boldsymbol{\beta}}_p^2 \mathbf{K}_p$   \\
\hline
\xrowht{10pt}
${\{\kappa_{p}(\cdot,\cdot)\}}_p$  &  The kernel function for the $p$-th view  \\
\hline
\xrowht{10pt}
$\mathbf{M}$  &  The partition matrix obtained by performing kernel $k$-means with average kernel  \\
\bottomrule
\end{tabular}}
}
\end{center}
\end{table}

\subsection{$\emph{k}$-means clustering for single view}

As one of the classical and widely-used algorithms, $\mathit{k}$-means algorithm provides an intuitive and effective way to perform clustering. Defining $\{\mathbf{x}_{i}\}_{i=1}^{n}\subseteq\mathcal{X}$ as a collection of $n$ samples in $k$ clusters and the data matrix $\mathbf{X} \in \mathbb{R}^{\mathit{n}\times \mathit{d}}$ whose rows correspond to data points with $d$ features.

We represent the $k$-means clustering of $\mathbf{X}$ by its cluster indicator matrix $\mathbf{F} \in  \mathbb{R}^{\mathit{n}\times \mathit{k}}$. Each column $j = 1, \cdots, k$ of $\mathbf{F}$ corresponds to a cluster. Each row $i = 1, \cdots , n$ indicates the cluster membership of the point $\mathbf{x}_{i}$. As a result, $F_{ij} = \frac {1} {\sqrt{|\mathbf{C}_j}|}$ if and only if data point $\mathbf{x}_i$ is in the $j$-th cluster $\mathbf{C}_j$. The traditional $k$-means problem can be equivalently rewritten as \cite{boutsidis2015randomized}, 
	\begin{equation}
	\label{2}
\begin{split}
&\; \max\limits_{\mathbf{F}}\; \mathrm{Tr}(\mathbf{F}^{\top}\mathbf{X} \mathbf{X}^{\top}\mathbf{F}),\\
s.t.   \mathbf{F} \in  \mathbb{R}^{\mathit{n}\times \mathit{k}}, &\;	
	\mathbf{F}_{ij}=\left\{
	\begin{array}{lr}
		 \frac {1} {\sqrt{|\mathbf{C}_j}|},\text{if } x_i \text{ is in the $j$-th cluster}.&\\
		0,\text{otherwise}.&
	\end{array}
	\right.
\end{split}
\end{equation}
	where $\mathbf{X}\in\mathbb{R}^{n\times d}$ denotes the data matrix and $|\mathbf{C}_j|$ denotes the number of samples in $j$-th cluster. According to the definition of $\mathbf{F}$, it naturally induces an orthogonal constraint on ${\mathbf{F}}$ such that ${\mathbf{F}}^{\top}\mathbf{F} = \mathbf{I}_{k}$.

\begin{figure}[t]
\centering
\subfloat{\includegraphics[width=0.99\textwidth]{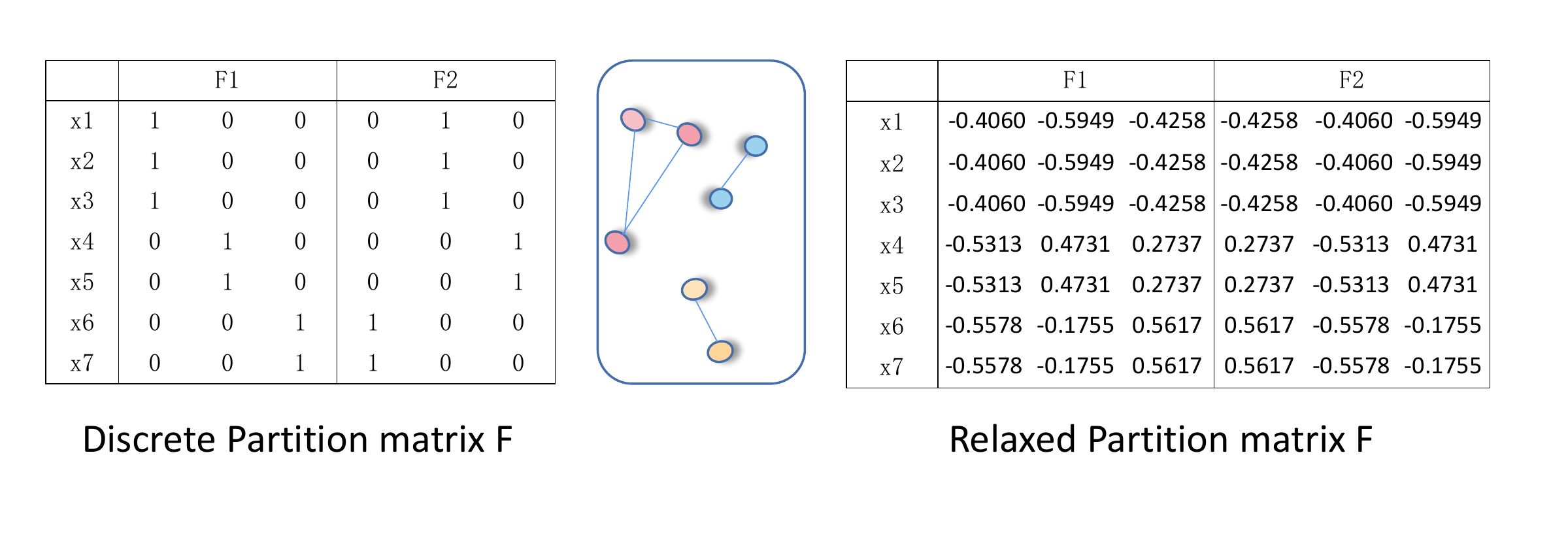}}
\caption{The illustration of partition matrix $\mathbf{F}$. An example of hard partition matrix $\mathbf{F}$ is shown on the left and the right is the relaxed orthogonal version. (left)In each line of $\mathbf{F}$, the index of non-zero element (others 0) is output as the label for the respective sample. (Right) $\mathbf{F}$ is relaxed into orthogonal constraints and as the input for Floyd algorithm \cite{DBLP:journals/tit/Lloyd82} to obtain the clustering labels. Both of them represent the cluster label of 7 samples belonging to 3 clusters. }
\label{partition}
\end{figure}

Directly solve the hard partition assignment problem is proven to be NP-hard. By relaxing $\mathbf{F}$ into orthogonal constraints, the clustering loss is transformed as follows,
\begin{equation}\label{LFA 1}
\begin{split}
\min\limits_{\mathbf{F}}\;&\;\mathrm{Tr}(\mathbf{X} \mathbf{X}^{\top}) - \mathrm{Tr}(\mathbf{F}^{\top}\mathbf{X} \mathbf{X}^{\top}\mathbf{F}),\\
s.t. &\;  \mathbf{F} \in \mathbb{R}^{\mathit{n}\times \mathit{k}},\, {\mathbf{F}}^{\top}\mathbf{F} = \mathbf{I}_{k},
\end{split}
\end{equation}
where $\mathbf{F}$ is the partition indicator matrix. After obtaining the relaxed representation $\mathbf{F}$, $\mathbf{F}$ is taken as the input to the Floyd algorithm \cite{DBLP:journals/tit/Lloyd82} to get final clustering labels: $\Large{\textcircled{\small{1}}}$initial $k$ centers and update assignment;$\Large{\textcircled{\small{2}}}$update new cluster centers by assignment matrix. The two processes are repeated until converged and the discrete clustering labels are output. Since the first term of Eq. (\ref{LFA 1}) is constant regrading of optimizing $\mathbf{F}$, we can obtain the optimal $\mathbf{F}$ for Eq. (\ref{LFA 1}) by taking the $k$ eigenvectors that correspond to the $k$ largest eigenvalues of $\mathbf{X}\mathbf{X}^{\top}$. 

Eq. (\ref{LFA 1}) could not be directly applied into multiple view data since $\mathbf{X}$ in Eq. (\ref{LFA 1}) represents the single-view data. However, it is not unusual in many real-world applications that data are collected from diverse sources or represented by heterogeneous features from different views. To overcome this limitation, many novel algorithms have been proposed to handle multi-view clustering, as introduced in the following section.

\subsection{Multi-view clustering}

Multi-view Clustering methods perform effectively on multiple source data by utilizing the diversity and
complementarity of different views. Existing literature can be categorized into three ways: co-training, subspace clustering and multi-kernel clustering. The simplest approach is to concatenate multiple source features directly and conduct traditional clustering in the whole data matrix . Most of the existing methods for multi-view subspace clustering integrate multi-view information in similarity or representation level by merging multiple graphs or representation matrices into a shared one \cite{li2015large,wang2015robust,wang2018multiview}. For example, \cite{Guo2013} learns a shared sparse subspace representation by performing matrix factorization. Different from obtaining a shared representation or graph directly, \cite{xia2014robust} and \cite{cao2015diversity} induce HSIC criterion and Markov chain to learn distinct subspace representations and then add them together directly or adaptively to get unified representation. Our work in this paper belongs to the third category, namely, multiple kernel clustering \cite{chen2007nonlinear,yu2012optimized,liu2013efficient,gonen2014localized,Liu2016Multiple,Li2016Multiple,liu2017optimal,liu2019multiple,huang2016ensemble,liu2020efficient}. In the following section, we will introduce multi-kernel $k$-means (MKKM) to handle multi-view clustering.

\subsection{Multi-kernel $k$-means (MKKM)}\label{MKKM}

Let $\phi_{p}: \mathcal{X} \rightarrow \mathcal{H}_{p}$ be the $p$-th feature mapping that maps $\mathbf{x}$ onto a reproducing kernel Hilbert space $\mathcal{H}_{p}~(1\leq p\leq m)$.
In the multiple kernel setting, each sample is represented as $\phi_{\boldsymbol{\beta}}(\mathbf{x} )=[{\boldsymbol{\beta}}_1\phi_{1}(\mathbf{x})^{\top},\cdots,{\boldsymbol{\beta}}_m\phi_{m}(\mathbf{x})^{\top}]^{\top}$, where $\boldsymbol{\beta}=[{\boldsymbol{\beta}}_1,\cdots,{\boldsymbol{\beta}}_m]^{\top}$ consists of the coefficients of the $m$ base kernel functions $\{\kappa_{p}(\cdot,\cdot)\}_{p=1}^{m}$. These coefficients will be optimized during learning procedure. Table \ref{kernel functions} lists several commonly-used kernel functions.

\begin{table}[htbp]
\centering
\caption{Commonly-used kernel functions for calculating $\mathbf{x}_{i}^{(p)}$ and $\mathbf{x}_{j}^{(p)}$}\label{kernel functions}
\resizebox{0.999\textwidth}{!}{
\begin{tabular}{c|c|c}
\toprule
Name & Expression&Parameter \tabularnewline
\midrule[1pt]
Linear kernel  &$\kappa \left (  {\mathbf{x}}_{i}^{(p)}, \x_{j}^{(p)}\right )={\x_{i}^{(p)}}^{\top}\x_{j}^{(p)}$ 
\\ \hline
Polynomial kernel & $\kappa \left ( \x_{i}^{(p)},\x_{j}^{(p)}\right )=\left ( {\x_{i}^{(p)}}^{\top}\x_{j}^{(p)} \right )^{d}$& $d\geq 1$ is the degree of the polynomial \\ \hline
Gaussian kernel & $\kappa \left ( \x_{i}^{(p)},\x_{j}^{(p)}\right )=\textup{exp}\left ( -\frac{\left \| \x_{i}^{(p)}-\x_{j}^{(p)} \right \|^{2}}{2\sigma ^{2}} \right )$&$\sigma > 0$ is the bandwidth of the Gaussian kernel \\ \hline
Laplace kernel & $\kappa \left ( \x_{i}^{(p)},\x_{j}^{(p)}\right )=\textup{exp}\left ( -\frac{\left \| \x_{i}^{(p)}-\x_{j}^{(p)} \right \|}{\sigma} \right )$&$\sigma > 0$ \\ \hline \xrowht{20pt}
Sigmoid kernel & $\kappa \left ( \x_{i}^{(p)},\x_{j}^{(p)}\right )=\textup{tanh}\left ( \gamma {\x_{i}^{(p)}}^{\top}\x_{j}^{(p)}+\theta \right ) $&tanh is the hyperbolic tangent function,$\gamma > 0$,$\theta < 0$\\
\hline
\end{tabular}}
\end{table}

Based on the definition of $\phi_{\boldsymbol{\beta}}(\mathbf{x} )$, a kernel function can be expressed as
\begin{equation}\label{eq:Kmatrix}
{\kappa_{\boldsymbol{\beta}}(\mathbf{x}_{i},\mathbf{x}_{j}) = \phi_{\boldsymbol{\beta}}(\mathbf{x}_{i})^{\top}\phi_{\boldsymbol{\beta}}(\mathbf{x}_{j}) = \sum\nolimits_{p=1}^{m}{\boldsymbol{\beta}}_p^2\kappa_{p}(\mathbf{x}_{i},\mathbf{x}_{j}).}
\end{equation}
A kernel matrix $\mathbf{K}_{\boldsymbol{\beta}}$ is then calculated by applying the kernel function $\kappa_{\boldsymbol{\beta}}(\cdot,\cdot)$ into $\{\mathbf{x}_{i}\}_{i=1}^{n}$.
Based on the kernel matrix $\mathbf{K}_{\boldsymbol{\beta}}$, the objective of MKKM can be written as
\begin{equation}\label{eq:MKKM:v1}
{\begin{split}
\min\limits_{\;\mathbf{H},\boldsymbol{\beta}}\;&\;\mathrm{Tr}(\mathbf{K}_{\boldsymbol{\beta}}
(\mathbf{I}_{n}-\mathbf{H}\mathbf{H}^{\top}))\\
s.t.\;&\;\mathbf{H}\in\mathbb{R}^{n\times k},\;\mathbf{H}^{\top}\mathbf{H}= \mathbf{I}_{k},\\
\;&\;\boldsymbol{\beta}^{\top}\mathbf{1}_{m}=1,\;{\boldsymbol{\beta}}_{p}\geq 0,\;\forall p,
\end{split}}
\end{equation}
where $\mathbf{I}_{k}$ is an identity matrix with size $k\times k$.

The optimization problem in Eq. (\ref{eq:MKKM:v1}) can be solved by alternately updating $\mathbf{H}$ and $\boldsymbol{\beta}$:

i) \textbf{Optimizing $\mathbf{H}$ given $\boldsymbol{\beta}$}. With the kernel coefficients $\boldsymbol{\beta}$ fixed, $\mathbf{H}$ can be obtained by solving a kernel $k$-means clustering optimization problem shown in Eq. (\ref{eq:KkM:v3}),
\begin{equation}\label{eq:KkM:v3}
{\begin{split}
\max\limits_{\;\mathbf{H}}\;&\;\mathrm{Tr}(\mathbf{H}^{\top}\mathbf{K}_{\boldsymbol{\beta}}\mathbf{H}),
s.t.\;\;\mathbf{H}\in\mathbb{R}^{n\times k},\mathbf{H}^{\top}\mathbf{H}= \mathbf{I}_{k},
\end{split}}
\end{equation}
The optimal $\mathbf{H}$ for Eq. (\ref{eq:KkM:v3}) can be obtained by taking the $k$ eigenvectors of the respective largest eigenvalues of $\mathbf{K}_{\boldsymbol{\beta}}$ \cite{JegelkaGSSL09}.

ii) \textbf{Optimizing $\boldsymbol{\beta}$ given $\mathbf{H}$}. With $\mathbf{H}$ fixed, $\boldsymbol{\beta}$ can be optimized via solving the following quadratic programming with linear constraints,
\begin{equation}\label{eq:MKKM:kw}
{\begin{split}
\min\limits_{\boldsymbol{\beta}}\;&\;\sum\limits_{p=1}^{m}{\boldsymbol{\beta}}_p^{2}\mathrm{Tr}(\mathbf{K}_{p}
(\mathbf{I}_{n}-\mathbf{H}\mathbf{H}^{\top})), 
s.t.\boldsymbol{\beta}^{\top}\mathbf{1}_{m}=1,\;{\boldsymbol{\beta}}_{p}\geq 0,
\end{split}}
\end{equation}
where $\mathbf{K}_{p}$ is the given $p$-th view's kernel matrix.

The clustering solution is obtained by first normalizing all rows of $\mathbf{H}$ to be on the unit sphere and then performing classical $k$-means clustering on this normalized matrix \cite{scholkopf1998nonlinear}. Along with this line, many variants of MKKM have been proposed in the literature. The work in \cite{Liu2016Multiple} proposes a multiple kernel $\mathit{k}$-means clustering algorithm with matrix-induced regularization to reduce the redundancy and enhance the diversity of the pre-defined kernels. Furthermore, local kernel alignment criterion has been applied to multiple kernel learning to enhance the clustering performance in \cite{Li2016Multiple}.

Although demonstrating promising performance, we observe that most of the existing multiple kernel clustering methods extend the Eq. (\ref{LFA 1}) into multi-view by assuming that $\mathbf{K}_{\boldsymbol{\beta}} = \sum\nolimits_{p=1}^m {\boldsymbol{\beta}}_p^2 \mathbf{K}_p$ and learn an optimal kernel to perform kernel $k$-means clustering. In contrast, few of them utilize multiple partition information to improve clustering quality. Moreover, According to \cite{liu2018late}, this category causes intensive computational complexity and over-complicated optimization increasing the risk of trapped into low-quality local minimum. In the next section, we propose \emph{Late Fusion Multi-view Clustering via Global and Local Alignment Maximization} to address these issues.

\section{Late Fusion Multi-view Clustering via Global and Local Alignment Maximization}\label{proposed algorithm}

\subsection{LF-MVC-GAM}\label{MVC-LFA}

 According to \cite{tao2017ensemble}, multi-view clustering holds the assumption that various clustering structures among different views should be consistent between each other. In the following, we manage to fuse multiple kernel information in partition level through finding consensus partition from multiple clustering results of views.

Firstly, through each single-view kernel $k$-means, the partition matrix $\left\{\mathbf{H}_p\right\}_{p=1}^m$ can be individually obtained. Then we need to define similarity measures between different clustering partitions. Orthogonal to other machine learning tasks, the cluster partition for each view is not unique. In general, for each partition matrix $\mathbf{H}_i \in \mathbb{R}^{n \times k}$, there are in fact $k!$ ($k$ factorial) equivalent partitions through permutations. To circumvent this obstacle, we can use a permutation matrix $\mathbf{W}_i$ for each individual partition $\mathbf{H}_i$ to represent the similar partition. It is not difficult to show that the partition is invariant with respect to permutations.

We then can conduct multiple kernel clustering by performing $k$-means clustering on the combined partition matrix $\sum\nolimits_{p=1}^m {\boldsymbol{\beta}}_p \mathbf{H}_p\mathbf{W}_p$ and get the final consensus clustering result $\mathbf{F}$, which we refer as late fusion. This idea can be mathematically formulated by replacing $\mathbf{X}$ with $ \sum\nolimits_{p=1}^m {\boldsymbol{\beta}}_p \mathbf{H}_p \mathbf{W}_p$ in Eq. (\ref{LFA 1}),
\begin{equation}\label{LFA new}
\begin{split}
\min\limits_{\mathbf{F},\left\{\mathbf{W}_p \right\}_{p=1}^{m}}\;&\;\mathrm{Tr}(\mathbf{B} \mathbf{B}^{\top}) - \mathrm{Tr} ( {\mathbf{F}}^{\top} \mathbf{B} \mathbf{B}^{\top} \mathbf{F}),\\
s.t.   &\; \mathbf{F}\in\mathbb{R}^{\mathit{n} \times \mathit{k}}, {\mathbf{F}}^{\top}\mathbf{F} = \mathbf{I}_{k}, \mathbf{B} = \sum\nolimits_{p=1}^m {\boldsymbol{\beta}}_p \mathbf{H}_p \mathbf{W}_p,
\end{split}
\end{equation}
where $\left\{\mathbf{H}_p \right\}_{p=1}^{m}$ are a set of base partitions obtained by single-view kernel $k$-means on the each kernel matrix $\mathbf{K}_p$,$\left\{\mathbf{W}_p \right\}_{p=1}^{m}$ are relaxed into orthogonal  matrices. ${\boldsymbol{\beta}} \in \mathbb{R}^{m}$ is the view-specific coefficient with non-negative and $\ell_2$ norm constraint.

Although Eq. (\ref{LFA new}) fulfills the multiple kernel $k$-means clustering with partition level fusion, it is still troublesome and costive to directly solve it with large-scale datasets for the reason that the optimal $\mathbf{F}$ for Eq. (\ref{LFA new}) should be taken by the $k$ eigenvectors that correspond to the $k$ largest eigenvalues of $\mathbf{B} \mathbf{B}^{\top} = ( {\sum\nolimits_{p=1}^m {\boldsymbol{\beta}}_p \mathbf{H}_p \mathbf{W}_p)}^{\top} ( \sum\nolimits_{p=1}^m {\boldsymbol{\beta}}_p \mathbf{H}_p \mathbf{W}_p) \in \mathbb{R}^{n \times n}$. Hence the time complexity to solve Eq. (\ref{LFA new}) still needs $\mathcal{O}(n^3)$, which makes it difficult for real large-scale data.

In the following, we firstly show our theoretical result on the connection between the Eq. (\ref{LFA new}) and the proposed late fusion alignment, as shown in Theorem \ref{theorem1}. Before that, the following Lemma \ref{proposition1} and \ref{proposition2} are used.

\begin{lemma}\label{proposition1}
For any given matrix $\mathbf{P}\in \mathbb{R}^{k\times k}$, the inequality $\mathrm{Tr}^2 (\mathbf{P}) \leq k \cdot \mathrm{Tr}({\mathbf{P}}^{\top}\mathbf{P})$ always holds.
\end{lemma}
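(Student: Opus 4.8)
The plan is to reduce the inequality to an entrywise statement and then invoke the Cauchy--Schwarz inequality. First I would write the two traces explicitly in terms of the entries of $\mathbf{P}$: the left-hand side is governed by the diagonal, $\mathrm{Tr}(\mathbf{P}) = \sum_{i=1}^{k} P_{ii}$, while the right-hand side is the squared Frobenius norm, $\mathrm{Tr}(\mathbf{P}^{\top}\mathbf{P}) = \sum_{i,j=1}^{k} P_{ij}^{2}$. Recast in this language, what must be shown is $\big(\sum_{i=1}^{k} P_{ii}\big)^{2} \le k \sum_{i,j=1}^{k} P_{ij}^{2}$, so the entire content of the lemma is an inequality about the $k$ diagonal numbers $P_{11},\dots,P_{kk}$ together with the nonnegative off-diagonal contributions.

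The key step is to apply Cauchy--Schwarz to the diagonal entries viewed against the constant vector $\mathbf{1}_{k}$: this gives $\big(\sum_{i=1}^{k} P_{ii}\big)^{2} = \big(\sum_{i=1}^{k} 1 \cdot P_{ii}\big)^{2} \le \big(\sum_{i=1}^{k} 1\big)\big(\sum_{i=1}^{k} P_{ii}^{2}\big) = k \sum_{i=1}^{k} P_{ii}^{2}$. It then remains only to relate the sum of squared diagonal entries to the full Frobenius norm, which is immediate: since every $P_{ij}^{2} \ge 0$, discarding the off-diagonal terms yields $\sum_{i=1}^{k} P_{ii}^{2} \le \sum_{i,j=1}^{k} P_{ij}^{2} = \mathrm{Tr}(\mathbf{P}^{\top}\mathbf{P})$. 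Chaining the two bounds produces $\mathrm{Tr}^{2}(\mathbf{P}) \le k\,\mathrm{Tr}(\mathbf{P}^{\top}\mathbf{P})$, as required.

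I do not anticipate any genuine obstacle here; the result is essentially a one-line consequence of Cauchy--Schwarz, and the only modeling choice is whether to phrase the central step via the all-ones vector or, equivalently, via the quadratic-mean/arithmetic-mean inequality applied to the $k$ diagonal entries. If a tightness remark were wanted, I would observe that equality holds exactly when $\mathbf{P}$ is a scalar multiple of $\mathbf{I}_{k}$: equality in Cauchy--Schwarz forces the diagonal to be constant, and equality in the off-diagonal bound forces all off-diagonal entries to vanish.
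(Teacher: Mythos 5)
Your proof is correct, and it takes a genuinely different — and in fact safer — route than the paper's. The paper argues through singular values: it writes $\mathrm{Tr}(\mathbf{P}) = \sum_{i=1}^{k}\sigma_i$ and $\mathrm{Tr}(\mathbf{P}^{\top}\mathbf{P}) = \sum_{i=1}^{k}\sigma_i^2$, then applies Cauchy--Schwarz to the vector of singular values. The second identity is fine, but the first is false for a general square matrix: the trace is the sum of the \emph{eigenvalues}, not of the singular values (the $2\times 2$ matrix whose only nonzero entry is $P_{12}=1$ has trace $0$, yet its singular values sum to $1$). The paper's argument can be repaired by replacing that equality with the standard bound $|\mathrm{Tr}(\mathbf{P})| \leq \sum_{i}\sigma_i$, after which the same Cauchy--Schwarz step closes the proof, but as written it has a gap. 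Your entrywise version sidesteps spectral theory entirely: Cauchy--Schwarz on the diagonal against the all-ones vector gives $\mathrm{Tr}^2(\mathbf{P}) \leq k\sum_{i}P_{ii}^2$, and discarding the nonnegative off-diagonal terms gives $\sum_{i}P_{ii}^2 \leq \sum_{i,j}P_{ij}^2 = \mathrm{Tr}(\mathbf{P}^{\top}\mathbf{P})$. Both routes land on the same final inequality, so the spectral detour buys nothing here; your argument is the more elementary and the more airtight of the two, and your equality characterization ($\mathbf{P}$ a scalar multiple of $\mathbf{I}_k$) is also correct.
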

\begin{lemma}\label{proposition2}
$\mathrm{Tr}(\mathbf{B} \mathbf{B}^{\top}) \leq m^2 k$.
\end{lemma}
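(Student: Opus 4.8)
The plan is to read $\mathrm{Tr}(\mathbf{B} \mathbf{B}^{\top})$ as the squared Frobenius norm $\|\mathbf{B}\|_F^2$ and then control it by the triangle inequality, exploiting that both the base partitions $\mathbf{H}_p$ and the (relaxed) permutation matrices $\mathbf{W}_p$ carry column-orthogonality constraints. First I would substitute $\mathbf{B} = \sum_{p=1}^m {\boldsymbol{\beta}}_p \mathbf{H}_p \mathbf{W}_p$ and use subadditivity of the Frobenius norm; since the coefficients ${\boldsymbol{\beta}}_p$ are non-negative they factor straight out, giving $\|\mathbf{B}\|_F \leq \sum_{p=1}^m {\boldsymbol{\beta}}_p \, \|\mathbf{H}_p \mathbf{W}_p\|_F$.

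The key computation is then the norm of a single summand. Because $\mathbf{H}_p^{\top}\mathbf{H}_p = \mathbf{I}_k$ and (after relaxation) $\mathbf{W}_p^{\top}\mathbf{W}_p = \mathbf{I}_k$, I obtain $\|\mathbf{H}_p \mathbf{W}_p\|_F^2 = \mathrm{Tr}(\mathbf{W}_p^{\top}\mathbf{H}_p^{\top}\mathbf{H}_p\mathbf{W}_p) = \mathrm{Tr}(\mathbf{I}_k) = k$, so every term contributes exactly $\sqrt{k}$. To close the argument I bound the coefficient sum: the $\ell_2$ constraint $\|{\boldsymbol{\beta}}\|_2 = 1$ together with non-negativity forces ${\boldsymbol{\beta}}_p \leq 1$ for each $p$, whence $\sum_{p=1}^m {\boldsymbol{\beta}}_p \leq m$. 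Combining, $\|\mathbf{B}\|_F \leq \sqrt{k}\,\sum_{p=1}^m {\boldsymbol{\beta}}_p \leq m\sqrt{k}$, and squaring yields $\mathrm{Tr}(\mathbf{B}\mathbf{B}^{\top}) \leq m^2 k$.

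There is no genuinely hard step here: the estimate is deliberately loose, since a Cauchy–Schwarz bound $\sum_p {\boldsymbol{\beta}}_p \leq \sqrt{m}\,\|{\boldsymbol{\beta}}\|_2 = \sqrt{m}$ would even sharpen the conclusion to $\mathrm{Tr}(\mathbf{B}\mathbf{B}^{\top}) \leq mk$. The only point needing care is the orthogonality of $\mathbf{W}_p$: the $\mathbf{W}_p$ are true permutation matrices only in the unrelaxed problem, but under the relaxation used in Eq.~(\ref{LFA new}) they satisfy $\mathbf{W}_p^{\top}\mathbf{W}_p = \mathbf{I}_k$, which is precisely the identity the Frobenius-norm computation relies on. (Lemma~\ref{proposition1} is not needed for this bound; it is the trace–Cauchy–Schwarz inequality reserved for the subsequent theorem.)
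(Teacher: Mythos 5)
Your proof is correct, but it takes a slightly different route from the paper's. The paper expands the quadratic form directly, writing $\mathrm{Tr}(\mathbf{B}\mathbf{B}^{\top})=\sum_{p,q=1}^{m}{\boldsymbol{\beta}}_p{\boldsymbol{\beta}}_q\,\mathrm{Tr}\!\left[(\mathbf{H}_p\mathbf{W}_p)^{\top}(\mathbf{H}_q\mathbf{W}_q)\right]$, dropping the coefficients via ${\boldsymbol{\beta}}_p{\boldsymbol{\beta}}_q\leq 1$, and then bounding each of the $m^2$ cross terms by the arithmetic-mean inequality $\mathrm{Tr}(\mathbf{A}^{\top}\mathbf{B})\leq\frac{1}{2}\left(\mathrm{Tr}(\mathbf{A}^{\top}\mathbf{A})+\mathrm{Tr}(\mathbf{B}^{\top}\mathbf{B})\right)=k$. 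You instead apply the triangle inequality to $\|\mathbf{B}\|_F$ and use $\|\mathbf{H}_p\mathbf{W}_p\|_F^2=k$ together with $\sum_p{\boldsymbol{\beta}}_p\leq m$; the two arguments rest on the same two facts (column-orthogonality of $\mathbf{H}_p$ and $\mathbf{W}_p$, and boundedness of ${\boldsymbol{\beta}}$), so they are close cousins. Your version has two small advantages: it sidesteps the paper's term-by-term dropping of ${\boldsymbol{\beta}}_p{\boldsymbol{\beta}}_q$, which is delicate when a cross trace $\mathrm{Tr}[(\mathbf{H}_p\mathbf{W}_p)^{\top}(\mathbf{H}_q\mathbf{W}_q)]$ happens to be negative, and your closing remark that Cauchy--Schwarz sharpens the bound to $mk$ is a genuine (if unneeded) improvement over the stated $m^2k$. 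One caveat worth keeping explicit: the identity $\|\mathbf{H}_p\mathbf{W}_p\|_F^2=k$ uses $\mathbf{H}_p^{\top}\mathbf{H}_p=\mathbf{I}_k$, i.e.\ it is the \emph{columns} of $\mathbf{H}_p$ that are orthonormal; your chain $\mathrm{Tr}(\mathbf{W}_p^{\top}\mathbf{H}_p^{\top}\mathbf{H}_p\mathbf{W}_p)=\mathrm{Tr}(\mathbf{W}_p^{\top}\mathbf{W}_p)=k$ is exactly right for the $n\times k$ partitions used here, and you correctly flag that the orthogonality of $\mathbf{W}_p$ is the relaxed constraint rather than a literal permutation property.
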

The detailed proof of Lemma \ref{proposition1} and \ref{proposition2} can be found in the appendix.

\begin{lemma}\label{lemma3}
The optimal solution of $\mathbf{F}$ to maximize $\mathrm{Tr} ({\mathbf{F}}^{\top} \mathbf{B})$. is also a solution of the global optimum  of Eq. (\ref{LFA new}). 
\end{lemma}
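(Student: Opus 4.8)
The plan is to exploit the low-rank structure of $\mathbf{B}$: since $\mathbf{B} \in \mathbb{R}^{n \times k}$ has $\mathrm{rank}(\mathbf{B}) \le k$, the entire spectrum of $\mathbf{B}\mathbf{B}^{\top}$ is concentrated on $\mathrm{range}(\mathbf{B})$, and any orthonormal basis of that subspace is already a global optimum of Eq. (\ref{LFA new}). First I would fix $\{\mathbf{W}_p\}_{p=1}^{m}$ (equivalently, fix $\mathbf{B}$). Because $\mathrm{Tr}(\mathbf{B}\mathbf{B}^{\top})$ is constant in $\mathbf{F}$, minimizing Eq. (\ref{LFA new}) over $\mathbf{F}$ is equivalent to maximizing $\mathrm{Tr}(\mathbf{F}^{\top}\mathbf{B}\mathbf{B}^{\top}\mathbf{F})$ subject to $\mathbf{F}^{\top}\mathbf{F} = \mathbf{I}_k$. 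By the standard Ky Fan / Rayleigh--Ritz bound, this maximum equals the sum of the $k$ largest eigenvalues of $\mathbf{B}\mathbf{B}^{\top}$; since $\mathrm{rank}(\mathbf{B}) \le k$, this sum is exactly $\mathrm{Tr}(\mathbf{B}\mathbf{B}^{\top})$. Hence the optimal value of Eq. (\ref{LFA new}) is $0$, attained by any $\mathbf{F}$ whose orthonormal columns span $\mathrm{range}(\mathbf{B})$.

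Next I would introduce the thin SVD $\mathbf{B} = \mathbf{U}\boldsymbol{\Sigma}\mathbf{V}^{\top}$ with $\mathbf{U} \in \mathbb{R}^{n \times k}$, $\mathbf{U}^{\top}\mathbf{U} = \mathbf{I}_k$, diagonal $\boldsymbol{\Sigma} = \mathrm{diag}(\sigma_1,\ldots,\sigma_k) \succeq 0$, and orthogonal $\mathbf{V} \in \mathbb{R}^{k \times k}$, and analyze the alignment objective directly. Writing $\mathrm{Tr}(\mathbf{F}^{\top}\mathbf{B}) = \mathrm{Tr}(\boldsymbol{\Sigma}\,\mathbf{V}^{\top}\mathbf{F}^{\top}\mathbf{U}) = \sum_{i} \sigma_i (\mathbf{V}^{\top}\mathbf{F}^{\top}\mathbf{U})_{ii}$ and noting that $\mathbf{V}^{\top}\mathbf{F}^{\top}\mathbf{U}$ is a product of matrices with orthonormal columns (so its operator norm, and thus every diagonal entry, is at most $1$), I obtain $\mathrm{Tr}(\mathbf{F}^{\top}\mathbf{B}) \le \sum_i \sigma_i$. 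I would then verify that $\mathbf{F}^{\star} = \mathbf{U}\mathbf{V}^{\top}$ attains this bound: indeed $\mathbf{V}^{\top}\mathbf{F}^{\star\top}\mathbf{U} = \mathbf{I}_k$, so all diagonal entries equal $1$ and the objective equals $\sum_i \sigma_i$. Thus $\mathbf{F}^{\star}$ is an alignment maximizer.

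The decisive step is to connect the two optimizers. I would check that $\mathbf{F}^{\star}$ is feasible, $\mathbf{F}^{\star\top}\mathbf{F}^{\star} = \mathbf{V}\mathbf{U}^{\top}\mathbf{U}\mathbf{V}^{\top} = \mathbf{I}_k$, and that $\mathrm{range}(\mathbf{F}^{\star}) = \mathrm{range}(\mathbf{U}) = \mathrm{range}(\mathbf{B})$ because $\mathbf{V}^{\top}$ is invertible. By the first paragraph, any such $\mathbf{F}$ attains the global minimum of Eq. (\ref{LFA new}); a direct substitution $\mathbf{F}^{\star\top}\mathbf{B}\mathbf{B}^{\top}\mathbf{F}^{\star} = \mathbf{V}\boldsymbol{\Sigma}^{2}\mathbf{V}^{\top}$ confirms $\mathrm{Tr}(\mathbf{F}^{\star\top}\mathbf{B}\mathbf{B}^{\top}\mathbf{F}^{\star}) = \mathrm{Tr}(\mathbf{B}\mathbf{B}^{\top})$, so the objective value is $0$. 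This shows the alignment maximizer is also a global optimum of Eq. (\ref{LFA new}), and since the argument holds for every fixed $\mathbf{B}$, it holds in particular at the configuration attaining the global optimum.

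I expect the main obstacle to be making the identification of column spaces airtight and pinpointing exactly where $\mathrm{rank}(\mathbf{B}) \le k$ is used: it is precisely this low-rank structure (a consequence of $\mathbf{B}$ being $n \times k$ with $k \ll n$) that collapses the top-$k$ eigenspace of $\mathbf{B}\mathbf{B}^{\top}$ onto $\mathrm{range}(\mathbf{B})$ and forces the two objectives to share an optimizer; without it, the nuclear-norm maximizer $\mathbf{U}\mathbf{V}^{\top}$ would generally not maximize $\mathrm{Tr}(\mathbf{F}^{\top}\mathbf{B}\mathbf{B}^{\top}\mathbf{F})$. A secondary care point is handling degeneracies, namely repeated singular values or zero singular values when $\mathbf{B}$ is rank-deficient, which affect uniqueness of the maximizer but not the claimed optimality.
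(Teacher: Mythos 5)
Your proposal is correct and follows essentially the same route as the paper's proof: take the SVD of $\mathbf{B}$, identify the Procrustes maximizer $\mathbf{U}\mathbf{V}^{\top}$ of $\mathrm{Tr}(\mathbf{F}^{\top}\mathbf{B})$, and observe that its columns span the top-$k$ eigenspace of $\mathbf{B}\mathbf{B}^{\top}$, which characterizes the global optima of Eq.~(\ref{LFA new}). Your version is in fact tighter than the paper's, since you make explicit the key fact that $\mathrm{rank}(\mathbf{B})\leq k$ forces the optimal value of Eq.~(\ref{LFA new}) over $\mathbf{F}$ to be exactly zero and verify the attainment by direct substitution, both of which the paper leaves implicit.
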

\begin{proof}
	For $\mathrm{Tr} ({\mathbf{F}}^{\top} \mathbf{B})$, we can obtain the optimum solution with SVD. Suppose that the matrix $\mathbf{B}$ has the  rank-$k$ truncated singular value decomposition form as $\mathbf{B} = {\mathbf{S} \mathbf{\Sigma} \mathbf{V}^{\top}}$, where ${\mathbf{S}_{k}} \in \mathbb{R}^{n \times n},{\mathbf{\Sigma}_{k}} \in \mathbb{R}^{n \times k},{\mathbf{V}_{k}} \in \mathbb{R}^{k \times k}$. Maximizing $\mathrm{Tr} ({\mathbf{F}}^{\top} \mathbf{B})$ has a closed-form solution that $\mathbf{F}_{3} = \mathbf{S}_{k} \mathbf{V}^{\top}$, where $\mathbf{S}_{k}$ denotes the $k$ left singular vector corresponding to $k$-largest singular values. The detailed deviation can found in our Theorem 2 in the manuscript. Further, we can easily obtain that the solution of $\Large{\textcircled{\small{1}}}$ is the product of the $k$-largest eigenvectors of $\mathbf{B} \mathbf{B}^{\top}$ and arbitrary rank-$k$ orthogonal square matrix. The the $k$-largest eigenvectors of $\mathbf{B} \mathbf{B}^{\top}$ is also the rank-$k$ left singular vector of $\mathbf{B}$. Therefore, it is not difficult to verify that the solution of $\Large{\textcircled{\small{3}}}$ can also reach the global optimum  of $\Large{\textcircled{\small{1}}}$. This completes the proof.
\end{proof}

\begin{theorem}\label{theorem1}
Based on Lemma \ref{proposition1} and \ref{lemma3}, minimizing Eq. (\ref{LFA new}) is conceptually equivalent to maximizing $\mathrm{Tr} ({\mathbf{F}}^{\top} \mathbf{B})$.
\end{theorem}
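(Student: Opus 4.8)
The plan is to turn the minimization in Eq.~(\ref{LFA new}) into the maximization of $\mathrm{Tr}(\mathbf{F}^{\top}\mathbf{B})$ by bounding the quadratic term $\mathrm{Tr}(\mathbf{F}^{\top}\mathbf{B}\mathbf{B}^{\top}\mathbf{F})$ from below in terms of $\mathrm{Tr}(\mathbf{F}^{\top}\mathbf{B})$, and then to pin down that the two problems attain their optima at the same $\mathbf{F}$ via Lemma~\ref{lemma3}. The first step is to apply Lemma~\ref{proposition1} with the choice $\mathbf{P} = \mathbf{F}^{\top}\mathbf{B}\in\mathbb{R}^{k\times k}$. Since the cyclic property of the trace gives $\mathrm{Tr}(\mathbf{P}^{\top}\mathbf{P}) = \mathrm{Tr}(\mathbf{B}^{\top}\mathbf{F}\mathbf{F}^{\top}\mathbf{B}) = \mathrm{Tr}(\mathbf{F}^{\top}\mathbf{B}\mathbf{B}^{\top}\mathbf{F})$, Lemma~\ref{proposition1} immediately yields the key inequality
\[
\mathrm{Tr}^2(\mathbf{F}^{\top}\mathbf{B}) \leq k\cdot \mathrm{Tr}(\mathbf{F}^{\top}\mathbf{B}\mathbf{B}^{\top}\mathbf{F}),
\]
equivalently $\mathrm{Tr}(\mathbf{F}^{\top}\mathbf{B}\mathbf{B}^{\top}\mathbf{F}) \geq \tfrac{1}{k}\mathrm{Tr}^2(\mathbf{F}^{\top}\mathbf{B})$. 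This is exactly what links the alignment quantity we wish to maximize with the quadratic term that appears in the objective of Eq.~(\ref{LFA new}).

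Next I would feed this lower bound, together with the uniform bound $\mathrm{Tr}(\mathbf{B}\mathbf{B}^{\top}) \leq m^2 k$ supplied by Lemma~\ref{proposition2}, into the objective of Eq.~(\ref{LFA new}), obtaining
\[
\mathrm{Tr}(\mathbf{B}\mathbf{B}^{\top}) - \mathrm{Tr}(\mathbf{F}^{\top}\mathbf{B}\mathbf{B}^{\top}\mathbf{F}) \leq m^2 k - \tfrac{1}{k}\mathrm{Tr}^2(\mathbf{F}^{\top}\mathbf{B}).
\]
Because $m^2 k$ is a constant, driving this upper bound down is the same as enlarging $\mathrm{Tr}^2(\mathbf{F}^{\top}\mathbf{B})$, and hence — taking the nonnegative branch, which holds at any optimizer — enlarging $\mathrm{Tr}(\mathbf{F}^{\top}\mathbf{B})$ itself. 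Thus, over the feasible set of $\mathbf{F}$ and $\left\{\mathbf{W}_p\right\}_{p=1}^{m}$, decreasing the objective of Eq.~(\ref{LFA new}) and increasing $\mathrm{Tr}(\mathbf{F}^{\top}\mathbf{B})$ push in the same direction.

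Finally I would invoke Lemma~\ref{lemma3} to upgrade this same-direction observation into an exact statement. The maximizer of $\mathrm{Tr}(\mathbf{F}^{\top}\mathbf{B})$ is the product of the top-$k$ left singular vectors of $\mathbf{B}$ with $\mathbf{V}^{\top}$, which lies inside the set of global maximizers of $\mathrm{Tr}(\mathbf{F}^{\top}\mathbf{B}\mathbf{B}^{\top}\mathbf{F})$ (the span of those same top-$k$ left singular vectors, closed under right multiplication by orthogonal $k\times k$ matrices), and is therefore a global optimum of Eq.~(\ref{LFA new}). Consequently the two problems attain their optima at a common $\mathbf{F}$, which establishes the claimed conceptual equivalence.

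I expect the main obstacle to be precisely this last gluing step. The inequality chain only produces an \emph{upper} bound on the objective of Eq.~(\ref{LFA new}), and minimizing an upper bound does not, on its own, minimize the original objective; what rescues the argument is Lemma~\ref{lemma3}, which forces the two solution sets to coincide so that the bound is effectively tight at the relevant optimizer. Making the phrase ``conceptually equivalent'' rigorous — showing that the simpler alignment criterion recovers a genuine global optimum of the harder trace-difference problem rather than merely a surrogate — is where the real care is required.
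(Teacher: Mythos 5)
Your proposal follows essentially the same route as the paper's proof: apply Lemma~\ref{proposition1} with $\mathbf{P}=\mathbf{F}^{\top}\mathbf{B}$, combine with Lemma~\ref{proposition2} to obtain the upper bound $m^2k - \tfrac{1}{k}\mathrm{Tr}^2(\mathbf{F}^{\top}\mathbf{B})$, and then invoke Lemma~\ref{lemma3} to conclude that the maximizer of $\mathrm{Tr}(\mathbf{F}^{\top}\mathbf{B})$ is a global optimum of Eq.~(\ref{LFA new}). Your closing remark --- that the upper-bound chain alone is only a surrogate argument and the real content lies in Lemma~\ref{lemma3} tying the two solution sets together --- is an accurate and slightly more candid account of the same logic the paper uses.
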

\begin{proof}
By taking $\mathbf{P} = {\mathbf{F}}^{ \top} \mathbf{B}$ in Lemma \ref{proposition1}, we can see that $\mathrm{Tr}^2 ({\mathbf{F}}^{\top} \mathbf{B}) \leq k\mathrm{Tr}({\mathbf{F}}^{\top} \mathbf{B} \mathbf{B}^{\top}{\mathbf{F}} )$.
As a result, $\mathrm{Tr}(\mathbf{B} \mathbf{B}^{\top}) - \mathrm{Tr} ( {\mathbf{F}}^{\top} \mathbf{B} \mathbf{B}^{\top} \mathbf{F}) \leq m^2k - \mathrm{Tr} ( {\mathbf{F}}^{\top} \mathbf{B} \mathbf{B}^{\top} \mathbf{F}) \leq  m^2k - \frac{1}{k} \mathrm{Tr}^2 ({\mathbf{F}}^{\top}\mathbf{B})$. Based on this, $m^2k - \frac{1}{k} \mathrm{Tr}^2 ({\mathbf{F}}^{\top} \mathbf{B})$ is the upper bound for Eq. (\ref{LFA new}). Therefore, if we want to minimize Eq. (\ref{LFA new}), we need to maximize $\mathrm{Tr} ({\mathbf{F}}^{\top} \mathbf{B})$ as possible. Further, in our case, we theoretically prove that the solution of maximizing $\mathrm{Tr} ({\mathbf{F}}^{\top} \mathbf{B})$ is a global optimum for problem Eq. (\ref{LFA new}), as stated in Lemma \ref{lemma3}. Also, it is not difficult to verify that the solutions share the same clustering structure, leading to the same clustering results. In addition, solving $\mathrm{Tr} ({\mathbf{F}}^{\top} \mathbf{B})$ is proved with $\mathcal{O}(n)$ time complexity while Eq. (\ref{LFA new}) is $\mathcal{O}(n^3)$, making it more efficient in large-scale data clustering tasks.
\end{proof}

Based on Theorem \ref{theorem1}, we derive a simple but effective clustering loss function for clustering maximizing $\mathrm{Tr} ({\mathbf{F}}^{\top} \mathbf{B})$ for replacement in Eq. (\ref{LFA new}) where $\mathbf{B} = \sum\nolimits_{p=1}^m \boldsymbol{{\beta}}_p \mathbf{H}_p \mathbf{W}_p$. Compared with minimizing Eq. (\ref{LFA new}), this new formulation is much easier to optimize since it only requires the singular value decomposition of $\mathbf{B} \in \mathcal{R}^{n \times k}$ in contrast to $\mathbf{B}\mathbf{B}^{\top}\in \mathcal{R}^{n \times n}$ so that significantly reduces the time-complexity and simplifies the optimization procedure.

Based on the above discussion, we obtain the objective function of our LF-MVC-GAM as follows,
\begin{equation}\label{LFA 3}
\begin{split}
\; \max\limits_{\mathbf{F},\left\{\mathbf{W}_p\right\}_{p=1}^m,\boldsymbol{\beta}} &\; \mathrm{Tr} ({\mathbf{F}}^{ \top} \sum\nolimits_{p=1}^m {{\beta}}_p \mathbf{H}_p \mathbf{W}_p) +  \lambda \mathrm{Tr} ({\mathbf{F}}^{\top} \mathbf{M}),\\
\;&\; s.t.\;{\mathbf{F}}^{\top}\mathbf{F} = \mathbf{I}_k, \mathbf{{W}}_p^{\top}\mathbf{W}_p = \mathbf{I}_k,\\
\;&\; {\Vert {\boldsymbol{\beta}} \Vert}_2 = 1, {\boldsymbol{\beta}}_p \geq 0,\;\forall p,
\end{split}
\end{equation}
where $\mathrm{Tr}({\mathbf{F}}^{\top}\mathbf{M})$ is a regularization on the consensus partition. It is required that the consensus clustering matrix $\mathbf{F}$ shall reside in the neighborhood of a pre-specified $\mathbf{M}$.$\lambda$ is a hyper-parameter to trade-off the loss of partition alignment and the regularization term.This can be fulfilled by minimizing $\|\mathbf{F}-\mathbf{M}\|_{\mathrm{F}}^2$ to guide the learning of $\mathbf{F}$, where $\mathbf{M}$ could be some prior knowledge about the clusters. In our paper, we adopt $\mathbf{M}$ as the partition matrix obtained form average combined kernel $k$-means. Note that minimizing $\|\mathbf{F}-\mathbf{M}\|_{\mathrm{F}}^2$ is equivalent to maximizing $\mathrm{Tr}(\mathbf{F}^{\top}\mathbf{M})$.

Compared with existing multi-view clustering approaches, Eq. (\ref{LFA 3}) enjoys several strengths. (1) \textit{more natural to integrate partition level information.} The early similarity stage might fail to obtain the optimal clustering result since some information might get occupied  during optimization process. For instance, many real-world data are often complicated due to noise or outliers, which leads to a poor quality kernel. Hence it makes more natural and reasonable to fuse multiple views' information in partition level as each partition will reach an agreement on consensus partition. (2) \textit{lower computation complexity.} Contrary to the previous methods combing the optimal kernel $\mathbf{K}_{\boldsymbol{\beta}}$ with the size of $n \times n$, we unifies a set of transformed partitions into consensus partition $\mathbf{F}$ with the size of $n \times k$ which significantly decrease the computation complexity from $\mathcal{O}(n^3)$ to $\mathcal{O}(nk^2)$ per iteration. (3) \textit{more robust to local minimums and improve clustering performance.} By adding the third regularizing term to guide the consensus clustering indicator matrix $\mathbf{F}$, the resultant algorithms can effectively avoid low-quality local optimums and achieve more satisfactory clustering performance. A three-step iterative algorithm with proved convergence is designed to solve the optimization problem in Eq. (\ref{LFA 3}). Interested readers can refer to \cite{wang2019mvclfa} for the detail.

\subsection{LF-MVC-LAM}

\subsubsection{The Formulation of LF-MVC-LAM}

The proposed LF-MVC-GAM in section \ref{MVC-LFA} globally aligns the consensus partition $\mathbf{F}$ with the individual transformed partition and achieves considerably improved clustering. As can be seen from Eq. (\ref{LFA 3}), it is crucial to select high-quality base partitions $\left\{\mathbf{H}_p\right\}_{p=1}^m$ and further guide the learning of consensus partition $\mathbf{F}$. However, according to the analysis in \cite{gonen2008localized,gonen2014localized,Li2016Multiple}, globally aligning the consensus partition with the individual transformed partition would ignore the local intrinsic structures in high-dimensional partition space and therefore deteriorate the clustering performance. To overcome this issue and further improve clustering performance, we introduce a localized late fusion alignment variant to focus on local sample pairs that shall be stayed closer, and alleviate unreliable similarity evaluation for farther sample pairs by following \cite{Li2016Multiple}. By integrating the local alignment of $\left\{\mathbf{H}_p\right\}_{p=1}^m$ for each sample $\mathbf{x}_i$ into Eq. (\ref{LFA 3}), we obtain the following objective,
\begin{equation}\label{local-LFA}
\begin{split}
\; \max\limits_{\mathbf{F},\left\{\mathbf{W}_p\right\}_{p=1}^m,\boldsymbol{\beta}}  &\; \sum_{i=1}^n \mathrm{Tr} ({\mathbf{F}}^{\top} \sum\nolimits_{p=1}^m {\boldsymbol{\beta}}_p \tilde{\mathbf{H}}_p^{(i)}\mathbf{W}_p) +  \lambda \mathrm{Tr} ({\mathbf{F}}^{\top} \tilde{\mathbf{M}}),\\
\;&\; s.t.\;{\mathbf{F}}^{\top}\mathbf{F} = \mathbf{I}_k, \mathbf{{W}}_p^{\top}\mathbf{W}_p = \mathbf{I}_k,\\
\;&\; {\Vert {\boldsymbol{\beta}} \Vert}_2 = 1, {\boldsymbol{\beta}}_p \geq 0,\;\forall p,\\
\;&\;\tilde{\mathbf{H}}_p^{(i)} = {\mathbf{A}^{(i)}_{p}}^{\top}\mathbf{H}_p, \tilde{\mathbf{M}} = {\mathbf{A}^{(i)}}^{\top} \mathbf{M},
\end{split}
\end{equation}
where $\lambda$ is a hyerparameter to trade-off the loss of local partition alignment and the regularization term on ideal consensus partition $\mathbf{F}$.  $\tilde{\mathbf{H}}_p^{(i)} = {\mathbf{A}^{(i)}_{p}}^{\top}\mathbf{H}_p$ where ${\mathbf{A}^{(i)}_{p}} \in {\left\{0,1\right\}}^{n\times n}$ is a matrix indicating the $\tau$-nearest neighbors of $i$-th sample.

Compared with LF-MVC-GAM, Eq. (\ref{local-LFA}) improves multi-view clustering by localized alignment and inherits the nice properties of LF-MVC-GAM. Apart from that, the local alignment brought in Eq. (\ref{local-LFA}) well preserves the local intrinsic clustering structure in multiple partition spaces and abandons inaccurate similar measures for distant pair samples. Further, by keeping local similarity structure and incorporating partition level information, high-quality local partitions are helpful for well utilizing the consensus partition with local information preserving, which are desired to improve the clustering performance.

\subsubsection{Alternate Optimization of LF-MVC-LAM}

In this section, we design an effective three-step alternate algorithm with proved convergence, where each step could be easily solved by the existing off-the-shelf packages.

\paragraph{\textbf{Optimizing $\mathbf{F}$ with fixed $\left\{\mathbf{W}_p\right\}_{p=1}^m$ and $\boldsymbol{\beta}$}} With $\left\{\mathbf{W}_p\right\}_{p=1}^m$ and $\boldsymbol{\beta}$ being fixed, the optimization Eq. ({\ref{local-LFA}}) could be rewritten as follows,
\begin{equation}
\label{optimization H}
\begin{split}
\max\limits_{\mathbf{F}} \mathrm{Tr}({\mathbf{F}}^{\top}\mathbf{U})  ~~
s.t. ~~\mathbf{F}^{ \top}\mathbf{F} = \mathbf{I}_k,
\end{split}
\end{equation}
where $\mathbf{U} = \sum_{i=1}^n \sum_{p=1}^m{\boldsymbol{\beta}}_p \tilde{\mathbf{H}}_p^{(i)}\mathbf{W}_p + \lambda \tilde{\mathbf{M}}$. In machine learning and computer vision community, Eq. (\ref{optimization H}) is called Orthogonal Procrustes Analysis which has been well studied in  literature \cite{DBLP:books/sp/HastieFT01,DBLP:conf/kdd/NieTL18}. Its optimum in stefiel manifold has also been provided in \cite{DBLP:books/sp/HastieFT01,DBLP:conf/kdd/NieTL18}. In Theorem \ref{optimization-H-theorem}, we also offer an alternative proof for the optimum solving the respective subproblem.

The problem in Eq. (\ref{optimization H}) could be easily solved by taking the singular value decomposition (SVD) of the given matrix $\mathbf{U}$. Here the following Theorem gives a closed-form solution for the problem in Eq. (\ref{optimization H}).

\begin{theorem}\label{optimization-H-theorem}
Suppose that the matrix $\mathbf{U}$ in Eq. (\ref{optimization H}) has the  rank-$k$ truncated singular value decomposition form as $\mathbf{U} = {\mathbf{S}_{k} \mathbf{\Sigma}_{k} \mathbf{V}_{k}^{\top}}$, where ${\mathbf{S}_{k}} \in \mathbb{R}^{n \times k},{\mathbf{\Sigma}_{k}} \in \mathbb{R}^{k \times k},{\mathbf{V}_{k}} \in \mathbb{R}^{k \times k}$. The optimization problem in Eq. (\ref{optimization H}) has a closed-form solution as follows,
\begin{equation}
\label{equation H}
\mathbf{F} = \mathbf{S}_{k} \mathbf{V}_{k}^{\top},
\end{equation}
\end{theorem}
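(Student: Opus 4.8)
The plan is to prove that $\mathbf{F} = \mathbf{S}_k \mathbf{V}_k^{\top}$ maximizes $\mathrm{Tr}(\mathbf{F}^{\top}\mathbf{U})$ subject to the orthogonality constraint $\mathbf{F}^{\top}\mathbf{F} = \mathbf{I}_k$. First I would substitute the truncated SVD $\mathbf{U} = \mathbf{S}_k \mathbf{\Sigma}_k \mathbf{V}_k^{\top}$ into the objective to obtain $\mathrm{Tr}(\mathbf{F}^{\top}\mathbf{U}) = \mathrm{Tr}(\mathbf{F}^{\top}\mathbf{S}_k \mathbf{\Sigma}_k \mathbf{V}_k^{\top})$. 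Using the cyclic invariance of the trace, this can be rewritten as $\mathrm{Tr}(\mathbf{\Sigma}_k \mathbf{V}_k^{\top}\mathbf{F}^{\top}\mathbf{S}_k) = \mathrm{Tr}(\mathbf{\Sigma}_k \mathbf{Z})$, where I define the auxiliary matrix $\mathbf{Z} = \mathbf{V}_k^{\top}\mathbf{F}^{\top}\mathbf{S}_k \in \mathbb{R}^{k \times k}$. The strategy is then to bound this quantity from above by a fixed constant that depends only on the singular values, and to exhibit the claimed $\mathbf{F}$ as achieving it.

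The key step is to verify that $\mathbf{Z}$ has bounded entries, in particular that each diagonal entry satisfies $Z_{jj} \leq 1$. I would argue this by observing that the columns of $\mathbf{S}_k$ and $\mathbf{V}_k$ are orthonormal and $\mathbf{F}$ is column-orthonormal, so $\mathbf{Z}$ is a product of matrices each of which has operator norm at most one; hence $\mathbf{Z}$ itself has operator norm (largest singular value) at most one, which forces every diagonal entry to obey $|Z_{jj}| \leq 1$. Since $\mathbf{\Sigma}_k$ is diagonal with nonnegative singular values $\sigma_1, \dots, \sigma_k$, it follows that
\begin{equation}
\mathrm{Tr}(\mathbf{\Sigma}_k \mathbf{Z}) = \sum_{j=1}^k \sigma_j Z_{jj} \leq \sum_{j=1}^k \sigma_j,
\end{equation}
giving a uniform upper bound on the objective over all feasible $\mathbf{F}$.

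Finally I would show the bound is attained by the proposed solution. Substituting $\mathbf{F} = \mathbf{S}_k \mathbf{V}_k^{\top}$ directly, the objective becomes $\mathrm{Tr}(\mathbf{V}_k \mathbf{S}_k^{\top}\mathbf{S}_k \mathbf{\Sigma}_k \mathbf{V}_k^{\top})$; using $\mathbf{S}_k^{\top}\mathbf{S}_k = \mathbf{I}_k$ and the cyclic property together with $\mathbf{V}_k^{\top}\mathbf{V}_k = \mathbf{I}_k$ collapses this to $\mathrm{Tr}(\mathbf{\Sigma}_k) = \sum_{j=1}^k \sigma_j$, matching the upper bound. I would also briefly check that this candidate is feasible, i.e. that $\mathbf{F}^{\top}\mathbf{F} = \mathbf{V}_k \mathbf{S}_k^{\top}\mathbf{S}_k \mathbf{V}_k^{\top} = \mathbf{V}_k \mathbf{V}_k^{\top} = \mathbf{I}_k$ (the latter holding because $\mathbf{V}_k$ is a $k\times k$ orthogonal matrix), so the maximizer lies in the admissible set.

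The main obstacle, and the part requiring the most care, is the inequality $Z_{jj} \leq 1$. This is really a von Neumann / Ky Fan type trace bound in disguise, and the cleanest justification is to note that a diagonal entry of an orthonormal-bounded product is an inner product of two unit-normalized vectors and hence bounded by one via Cauchy--Schwarz; I would make sure the argument correctly handles that $\mathbf{F}$ is rectangular ($n \times k$ with $n \geq k$) rather than square, so that the relevant contraction estimate still applies. The remaining manipulations are routine trace algebra.
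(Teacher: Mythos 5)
Your proposal is correct and follows essentially the same route as the paper's proof: substitute the SVD of $\mathbf{U}$, use cyclic invariance of the trace to reduce the objective to $\mathrm{Tr}(\boldsymbol{\Sigma}_k \mathbf{Z})$ with $\mathbf{Z}=\mathbf{V}_k^{\top}\mathbf{F}^{\top}\mathbf{S}_k$, bound this by $\sum_{j=1}^{k}\sigma_j$, and identify $\mathbf{F}=\mathbf{S}_k\mathbf{V}_k^{\top}$ as the maximizer. The one place you genuinely improve on the paper is the justification of the central bound: the paper asserts $\mathbf{Q}\mathbf{Q}^{\top}=\mathbf{V}^{\top}\mathbf{F}^{\top}\mathbf{S}\mathbf{S}^{\top}\mathbf{F}\mathbf{V}=\mathbf{I}_k$, which is only valid for the full SVD, since for the truncated factor $\mathbf{S}_k\in\mathbb{R}^{n\times k}$ the product $\mathbf{S}_k\mathbf{S}_k^{\top}$ is a projection rather than $\mathbf{I}_n$, so in general one only gets $\mathbf{Z}\mathbf{Z}^{\top}\preceq\mathbf{I}_k$; your operator-norm argument (a product of contractions is a contraction, hence $|Z_{jj}|\leq 1$) is exactly the correct repair and is all that the trace bound needs. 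You also explicitly verify feasibility and attainment of the bound at $\mathbf{F}=\mathbf{S}_k\mathbf{V}_k^{\top}$, which the paper leaves implicit; this closes the argument cleanly.
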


\begin{proof}
By taking the the singular value decomposition that $\mathbf{U} ={\mathbf{S}\boldsymbol{\Sigma} \mathbf{V}^{ \top}}$, the Eq. (\ref{optimization H}) could be rewritten as,
\begin{equation}
\label{stefield}
\mathrm{Tr}(\mathbf{F}^{\top}\mathbf{S}\boldsymbol{\Sigma}\mathbf{V}^{\top}) = \mathrm{Tr} (\mathbf{V}^{\top}\mathbf{F}^{\top} \mathbf{S} \boldsymbol{\Sigma}) = \mathrm{Tr}(\mathbf{Q} \boldsymbol{\Sigma}),
\end{equation}
where $\mathbf{Q} = \mathbf{V}^{\top} \mathbf{{F}^{\top}} \mathbf{S}$, we have $\mathbf{Q} \mathbf{Q}^{\top} = \mathbf{V}^{ \top} \mathbf{{F}^{\top}} \mathbf{S} \mathbf{S}^{\top} \mathbf{F} \mathbf{V} = \mathbf{I}_k$. Together with Theorem \ref{Q-singular}, we can obtain that $\mathrm{Tr} (\mathbf{V}^{ \top} \mathbf{{F}^{\top}} {\mathbf{S} \boldsymbol{\Sigma}}) = \mathrm{Tr}(\mathbf{Q} \boldsymbol{\Sigma}) \leq \sum_{i=1}^{k} \sigma_{i}$. Hence in order to maximize the value of Eq. (\ref{optimization H}), the solution should be given as Eq. (\ref{equation H}).
\end{proof}

\begin{theorem}\label{Q-singular}
For a given matrix $\mathbf{U}$ in Eq. (\ref{optimization H}), all the singular values ${\left\{\sigma_i\right\}}_{i=1}^k$ of $\mathbf{Q}$ in Eq. (\ref{stefield}) are nonnegative.
\end{theorem}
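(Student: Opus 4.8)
The plan is to read the statement through the role it plays in the proof of Theorem~\ref{optimization-H-theorem}: the quantities $\{\sigma_i\}_{i=1}^k$ are the diagonal entries of $\boldsymbol{\Sigma}$ in the truncated decomposition $\mathbf{U} = \mathbf{S}\boldsymbol{\Sigma}\mathbf{V}^{\top}$, and they must be nonnegative so that the inequality $\mathrm{Tr}(\mathbf{Q}\boldsymbol{\Sigma}) \leq \sum_{i=1}^k \sigma_i$ is legitimate once the diagonal of $\mathbf{Q}$ has been controlled. First I would fall back on the definition of the singular value decomposition itself: the $\sigma_i$ are the square roots of the eigenvalues of the symmetric positive semidefinite matrix $\mathbf{U}^{\top}\mathbf{U}$, hence every $\sigma_i \geq 0$ by construction. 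This short route already supplies exactly the nonnegativity that the calling proof invokes, so I would state it first.

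To give the statement the substance that its placement suggests, I would then verify that $\mathbf{Q} = \mathbf{V}^{\top}\mathbf{F}^{\top}\mathbf{S}$ is itself (sub-)orthogonal, so that its own singular values are well defined and nonnegative. The key step is the computation $\mathbf{Q}\mathbf{Q}^{\top} = \mathbf{V}^{\top}\mathbf{F}^{\top}\mathbf{S}\mathbf{S}^{\top}\mathbf{F}\mathbf{V}$ already recorded in the preceding proof, combined with the orthonormality of the three factors: $\mathbf{V}^{\top}\mathbf{V} = \mathbf{I}_k$, $\mathbf{F}^{\top}\mathbf{F} = \mathbf{I}_k$ from the feasibility constraint of Eq.~(\ref{optimization H}), and $\mathbf{S}^{\top}\mathbf{S} = \mathbf{I}_k$ from the SVD. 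Once $\mathbf{Q}\mathbf{Q}^{\top} = \mathbf{I}_k$ is in hand, every eigenvalue of the positive semidefinite matrix $\mathbf{Q}\mathbf{Q}^{\top}$ equals $1$, so each singular value of $\mathbf{Q}$ equals $1$ and is in particular nonnegative; this is precisely what forces $|Q_{ii}| \leq 1$ and thereby closes the chain $\mathrm{Tr}(\mathbf{Q}\boldsymbol{\Sigma}) = \sum_i Q_{ii}\sigma_i \leq \sum_i \sigma_i$.

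The main obstacle I anticipate is the dimensional bookkeeping hidden in the factor $\mathbf{S}\mathbf{S}^{\top}$: for the cancellation $\mathbf{F}^{\top}\mathbf{S}\mathbf{S}^{\top}\mathbf{F} = \mathbf{I}_k$ to be valid, one cannot treat $\mathbf{S}$ as the truncated $n\times k$ factor, for which $\mathbf{S}\mathbf{S}^{\top}$ is only a rank-$k$ projector on $\mathbb{R}^n$ rather than the identity, unless the column space of $\mathbf{F}$ lies inside that of $\mathbf{S}$. I would therefore phrase the orthogonality argument either with the full orthogonal factor $\mathbf{S}\in\mathbb{R}^{n\times n}$ satisfying $\mathbf{S}\mathbf{S}^{\top} = \mathbf{I}_n$, or by restricting attention to feasible $\mathbf{F}$ of the form $\mathbf{S}_k\mathbf{R}$ with $\mathbf{R}$ orthogonal, which is exactly the regime relevant to the optimum. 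If the intended reading is instead merely that the diagonal entries of $\boldsymbol{\Sigma}$ are nonnegative, then no such subtlety arises and the claim is immediate from the SVD convention; in either case I would keep the short definitional argument up front and present the orthogonality computation as the explanation for why this seemingly weak statement is precisely what the surrounding proof requires.
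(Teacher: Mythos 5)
Your proposal is correct, and it takes a genuinely different (and more careful) route than the paper. The paper's own proof never touches $\mathbf{Q}$ at all: it decomposes $\mathbf{U} = \sum_{i=1}^n\sum_{p=1}^m \boldsymbol{\beta}_p\tilde{\mathbf{H}}_p^{(i)}\mathbf{W}_p + \lambda\tilde{\mathbf{M}}$ and argues term by term that each summand has nonnegative singular values (the $\tilde{\mathbf{H}}_p^{(i)}$ trivially, the rotation $\mathbf{W}_p$ preserving this, and $\tilde{\mathbf{M}}$ because $\mathbf{M}$ comes from a PSD kernel). As you implicitly observe, that argument is essentially content-free, since singular values of any real matrix are nonnegative by the very definition of the SVD; it also does not explain why a sum of matrices with nonnegative singular values would inherit the property, nor does it deliver the fact actually used in the proof of Theorem~\ref{optimization-H-theorem}, namely that $\mathrm{Tr}(\mathbf{Q}\boldsymbol{\Sigma}) \leq \sum_{i=1}^k\sigma_i$. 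Your version supplies both missing pieces: the definitional observation that the $\sigma_i$ are square roots of eigenvalues of $\mathbf{U}^{\top}\mathbf{U}$, and the computation $\mathbf{Q}\mathbf{Q}^{\top} = \mathbf{I}_k$ that bounds the diagonal entries of $\mathbf{Q}$ and closes the trace inequality. Your flag about the factor $\mathbf{S}\mathbf{S}^{\top}$ is also a genuine catch: with the truncated $n\times k$ factor, $\mathbf{S}\mathbf{S}^{\top}$ is only a rank-$k$ projector, so the cancellation in $\mathbf{V}^{\top}\mathbf{F}^{\top}\mathbf{S}\mathbf{S}^{\top}\mathbf{F}\mathbf{V} = \mathbf{I}_k$ asserted in the paper requires either the full orthogonal $\mathbf{S}\in\mathbb{R}^{n\times n}$ or the column space of $\mathbf{F}$ to lie in that of $\mathbf{S}_k$ — a hypothesis the paper silently assumes. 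In short, your argument proves what the surrounding proof actually needs; the paper's proves something weaker by a route that does not quite cohere.
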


\begin{proof}
From Eq. (\ref{optimization H}), we can obtain that $\mathbf{U} = \sum_{i=1}^n \sum_{p=1}^m {\boldsymbol{\beta}}_p \tilde{\mathbf{H}}_p^{(i)} \mathbf{W}_p + \lambda \tilde{\mathbf{M}}$. Further it is obvious that the singular values of each $ \tilde{\mathbf{H}}_p^{(i)} $ are non-negative, with the rotation matrix $\mathbf{W}_p$, the singular values of $\tilde{\mathbf{H}}_p^{(i)}\mathbf{W}_p$ are still kept non-negative. Moreover, the singular values of $\tilde{\mathbf{M}}$ are also non-negative due to the $\mathbf{M}$ is formed by a PSD kernel matrix. Therefore the Theorem \ref{Q-singular} is proved.
\end{proof}

\paragraph{\textbf{Optimizing $\left\{\mathbf{W}_p\right\}_{p=1}^m$ with fixed $\mathbf{F}$ and $\boldsymbol{\beta}$}}

With $\mathbf{F}$ and $\boldsymbol{\beta}$ being fixed, for each single $\mathbf{W}_p$, the optimization problem in Eq. ({\ref{local-LFA}}) is equivalent to Eq. ({\ref{optimization Wp}}) as follows,
\begin{equation}\label{optimization Wp}
\begin{split}
\max\limits_{\mathbf{W}_p} \;&\;\mathrm{Tr}(\mathbf{W}_p^{\top}\mathbf{L})\;\;s.t. \; \mathbf{W}_p^{ \top}\mathbf{W}_p = \mathbf{I}_k,
\end{split}
\end{equation}
where $\mathbf{L} = \sum_{i=1}^n {\boldsymbol{\beta}}_p \left(\tilde{\mathbf{H}}_p^{(i)}\right)^{\top} \mathbf{F}$. And this problem in Eq. (\ref{optimization Wp}) could be easily solved by taking the singular value decomposition (SVD) of the given matrix $\mathbf{L}$. Like the closed-form expressed in Theorem \ref{optimization-H-theorem}, if the matrix $\mathbf{V}$ has the singular value decomposition form as $\mathbf{L} = {\mathbf{S} \boldsymbol{\Sigma} \mathbf{G}^{\top}}$, the optimization in Eq. (\ref{optimization Wp}) has a closed-form solution as $\mathbf{W}_{p} = \mathbf{S} \mathbf{G}^{\top}$. Hence we optimize one $\mathbf{W}_p$ with other $\mathbf{W}_{i \not = p}$ fixed at each iteration. As a result, we can obtain a set of optimized $\left\{{\mathbf{W}_p}\right\}_{p=1}^m$.

\paragraph{\textbf{Optimizing $\boldsymbol{\beta}$ with fixed $\mathbf{F}$ and $\left\{\mathbf{W}_p\right\}_{p=1}^m$}}

With $\mathbf{F}$ and $\left\{\mathbf{W}_p\right\}_{p=1}^m$ being fixed, the optimization problem in Eq. ({\ref{local-LFA}}) is equivalent to the optimization problem as follows,
\begin{equation}\label{optimization gamma1}
\begin{split}
\max_{\boldsymbol{\beta}} \sum\nolimits_{p=1}^{m} {\boldsymbol{\beta}}_p \boldsymbol{{\delta}}_p,  ~~
s.t. {\Vert {\boldsymbol{\beta}} \Vert}_2 = 1, {\boldsymbol{\beta}}_p \geq 0,\;\forall p,
\end{split}
\end{equation}
where $\boldsymbol{{\delta}}_p = \sum_{i=1}^n \mathrm{Tr}({\mathbf{F}}^{\top}\tilde{\mathbf{H}}_p^{(i)}\mathbf{W}_p)$.
According to \cite{wang2019mvclfa}, the optimization in Eq. (\ref{optimization gamma1}) admits an closed-form solution as follows,
\begin{equation}
\label{optimization beta}
{\boldsymbol{\beta}}_p =  \boldsymbol{{\delta}}_p/\sqrt{\sum\nolimits_{p=1}^m {{\boldsymbol{\delta}}_p^2}}.
\end{equation}

In sum, our algorithm for solving Eq. (\ref{local-LFA}) is outlined in Algorithm \ref{Algorithm-proposed2}, where the  $\text{obj}^{(t)}$ denotes the objective value at the $t$-th iteration. The following Theorem \ref{convergence} shows our algorithm is guaranteed to converge. Also, as shown in the experimental study, LF-MVC-LAM usually converge in less than $10$ iterations.

\alglanguage{pseudocode}
\begin{algorithm}[!htbp]
{\caption{ Proposed LF-MVC-LAM}\label{Algorithm-proposed2}
\begin{algorithmic}[1]
\State \textbf{Input}: $\left\{\mathbf{K}_p\right\}_{p=1}^m,\,k,\,\lambda $ and $\epsilon_{0}$.
\State \textbf{Output}: Consensus partition $\mathbf{F} $ and $\boldsymbol{\beta}$.
\State Initialize $\left\{\mathbf{W}_p\right\}_{p=1}^m = \mathbf{I}_k,\boldsymbol{\beta} = \frac {1}{\sqrt m}$ and $t=1$.
\State Calculate the neighborhood matrix $\mathbf{A}^{(i)}_{p}$ for the $i$-th sample in $p$-th view by $\mathbf{K}_p$ and  $\mathbf{\tilde{M}}$ by local kernel $k$-means with average kernel.
\State Calculate the local partition matrices $\tilde{\mathbf{H}_p^{(i)}}$ for each sample $x_i$.
\Repeat
   \State Update $\mathbf{F}$ by solving Eq. (\ref{optimization H}) with fixed $\left\{\mathbf{W}_p\right\}_{p=1}^m$ and $\boldsymbol{\beta}$.
   \State Update $\left\{\mathbf{W}_p\right\}_{p=1}^m$ with fixed $\mathbf{F}$ and $\boldsymbol{\beta}$ by solving Eq. (\ref{optimization Wp}).
   \State Update $\boldsymbol{\beta}$ by solving Eq. (\ref{optimization beta}) with fixed $\mathbf{F}$ and $\left\{\mathbf{W}_p\right\}_{p=1}^m$.
   \State $t = t+1$.
\Until{$\Big(\text{obj}^{(t-1)}-\text{obj}^{(t)}\Big)/\text{obj}^{(t)}\leq\epsilon_{0}$}
\State Perform Floyd algorithm on $\mathbf{F}$ to get the final clustering result.
\end{algorithmic}}
\end{algorithm}

\begin{theorem}
\label{convergence}
The proposed optimization function for  Algorithm \ref{Algorithm-proposed2} is proved to be upper-bounded.
\end{theorem}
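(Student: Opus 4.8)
The plan is to prove convergence in the standard way for block-coordinate (alternating) maximization: establish that the objective sequence $\{\mathrm{obj}^{(t)}\}$ is monotonically non-decreasing and simultaneously bounded above, and then invoke the monotone convergence theorem. The statement to be proved — that the objective is upper-bounded — is the second of these two ingredients, but I would present both so that the convergence claim announced just before the theorem is fully justified.

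First I would argue monotonicity. Each of the three update steps in Algorithm \ref{Algorithm-proposed2} solves its subproblem to global optimality with a closed form: the $\mathbf{F}$-step is the Orthogonal Procrustes problem of Eq. (\ref{optimization H}) whose optimum is given by Theorem \ref{optimization-H-theorem}, the $\mathbf{W}_p$-step is the analogous Procrustes problem of Eq. (\ref{optimization Wp}) solved by SVD, and the $\boldsymbol{\beta}$-step has the closed-form maximizer of Eq. (\ref{optimization beta}). Since fixing two blocks and maximizing exactly over the remaining one cannot decrease the objective, we obtain $\mathrm{obj}^{(t)} \le \mathrm{obj}^{(t+1)}$ for every $t$, so the sequence is monotonically non-decreasing.

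Next I would establish the upper bound, which is the substantive part of the statement. Writing the objective at any feasible point as $\mathrm{Tr}(\mathbf{F}^\top \mathbf{U})$ with $\mathbf{U} = \sum_{i=1}^n \sum_{p=1}^m \boldsymbol{\beta}_p \tilde{\mathbf{H}}_p^{(i)} \mathbf{W}_p + \lambda \tilde{\mathbf{M}}$ as in Eq. (\ref{optimization H}), the argument of Theorem \ref{optimization-H-theorem} (equivalently von Neumann's trace inequality over the Stiefel constraint $\mathbf{F}^\top \mathbf{F} = \mathbf{I}_k$) gives $\mathrm{Tr}(\mathbf{F}^\top \mathbf{U}) \le \sum_{j=1}^k \sigma_j(\mathbf{U})$, the sum of the $k$ largest singular values. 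I would then control this Ky Fan $k$-norm of $\mathbf{U}$ with the triangle inequality, $\sum_{j=1}^k \sigma_j(\mathbf{U}) \le \sum_{i=1}^n \sum_{p=1}^m \boldsymbol{\beta}_p \sum_{j=1}^k \sigma_j(\tilde{\mathbf{H}}_p^{(i)} \mathbf{W}_p) + \lambda \sum_{j=1}^k \sigma_j(\tilde{\mathbf{M}})$, which is valid since $\boldsymbol{\beta}_p \ge 0$. Because right-multiplication by the orthogonal $\mathbf{W}_p$ preserves singular values, $\sigma_j(\tilde{\mathbf{H}}_p^{(i)} \mathbf{W}_p) = \sigma_j(\tilde{\mathbf{H}}_p^{(i)})$; and since $\tilde{\mathbf{H}}_p^{(i)} = (\mathbf{A}_p^{(i)})^\top \mathbf{H}_p$ is built from the fixed orthonormal base partition $\mathbf{H}_p$ and the fixed $0$-$1$ neighbor-indicator $\mathbf{A}_p^{(i)}$, its singular values are fixed finite quantities, while $\|\boldsymbol{\beta}\|_2 = 1$ keeps the coefficients bounded. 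Hence every term is a constant independent of the iterate, and the objective is bounded above by a finite constant (explicitly, one may further use Cauchy–Schwarz, $\sum_{j=1}^k \sigma_j(\tilde{\mathbf{H}}_p^{(i)}) \le \sqrt{k}\,\|\tilde{\mathbf{H}}_p^{(i)}\|$, if a closed number in terms of $k$, $m$ and $\tau$ is wanted). Combining monotonicity with this uniform upper bound yields convergence of $\{\mathrm{obj}^{(t)}\}$.

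The main obstacle I anticipate is controlling the localized term $\sum_{i=1}^n \sum_{p=1}^m \boldsymbol{\beta}_p \tilde{\mathbf{H}}_p^{(i)} \mathbf{W}_p$ uniformly: unlike the global case of Eq. (\ref{LFA 3}), this sum ranges over all $n$ neighborhood-selected partitions, so one must verify that the accumulated Ky Fan norm stays finite and, ideally, produce an explicit constant. This reduces to bounding $\|\tilde{\mathbf{H}}_p^{(i)}\|$, which follows from $\|\mathbf{A}_p^{(i)}\|$ being controlled by the neighborhood size $\tau$ together with $\mathbf{H}_p$ having orthonormal columns; making that bound clean rather than merely finite is the only delicate point, and in any case the compactness of the feasible set (a product of Stiefel manifolds with the bounded portion of the unit sphere) already guarantees that the continuous objective attains a maximum and is therefore bounded above even without an explicit constant.
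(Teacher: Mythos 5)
Your proof is correct, but it reaches the upper bound by a genuinely different route from the paper. The paper's argument stays entirely at the level of traces: it repeatedly applies the elementary inequality $\mathrm{Tr}(\mathbf{A}^{\top}\mathbf{B}) \le \tfrac{1}{2}\left(\mathrm{Tr}(\mathbf{A}^{\top}\mathbf{A}) + \mathrm{Tr}(\mathbf{B}^{\top}\mathbf{B})\right)$ together with ${\mathbf{F}}^{\top}\mathbf{F}=\mathbf{I}_k$, $\mathbf{W}_p^{\top}\mathbf{W}_p=\mathbf{I}_k$, ${\boldsymbol{\beta}}_p\le 1$ and the orthonormality of the base partitions, bounding each per-sample alignment term by $\tfrac{k}{2}(m^2+1)$ and the regularization term by $k$, which yields an explicit numerical constant. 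You instead pass through von Neumann's trace inequality on the Stiefel manifold, $\mathrm{Tr}(\mathbf{F}^{\top}\mathbf{U})\le\sum_{j=1}^{k}\sigma_j(\mathbf{U})$, then control the Ky Fan $k$-norm of $\mathbf{U}$ via its triangle inequality (valid since ${\boldsymbol{\beta}}_p\ge 0$) and the orthogonal invariance of singular values, with compactness of the feasible set as a clean fallback. Both arguments are valid; yours is more conceptual --- boundedness follows essentially for free from continuity on a compact set and would survive any change of regularizer --- while the paper's is more elementary and hands you a concrete constant without ever mentioning singular values. Your monotonicity discussion is not part of this theorem's statement, but it matches the paper's separate convergence argument in the Discussion section. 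The one point worth tightening is the ``delicate'' step you flag at the end: since $\tilde{\mathbf{H}}_p^{(i)}=({\mathbf{A}^{(i)}_{p}})^{\top}\mathbf{H}_p$ with ${\mathbf{A}^{(i)}_{p}}$ a $0$--$1$ neighbor indicator and $\mathbf{H}_p^{\top}\mathbf{H}_p=\mathbf{I}_k$, one has $\mathrm{Tr}\bigl[(\tilde{\mathbf{H}}_p^{(i)})^{\top}\tilde{\mathbf{H}}_p^{(i)}\bigr]\le \mathrm{Tr}\bigl[\mathbf{H}_p^{\top}\mathbf{H}_p\bigr]=k$, so $\sum_{j=1}^{k}\sigma_j(\tilde{\mathbf{H}}_p^{(i)})\le\sqrt{k}\cdot\sqrt{k}=k$ and your accumulated bound becomes the explicit $nmk+\lambda k$ (versus the paper's $\tfrac{nk}{2}(m^2+1)+\lambda k$), closing the only gap you anticipated.
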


\begin{proof}
Note that for $ \forall p,q,i,\,\mathrm{Tr}[{({\boldsymbol{\beta}}_{p}\tilde{\mathbf{H}}_p^{(i)}\mathbf{W}_p)}^{\top} ({\boldsymbol{\beta}}_{q}\tilde{\mathbf{H}}_q^{(i)}\mathbf{W}_q)]  \leq  \mathrm{Tr}[{(\tilde{\mathbf{H}}_p^{(i)}\mathbf{W}_p)}^{\top} (\tilde{\mathbf{H}}_q^{(i)} \mathbf{W}_q)] \leq \frac{1}{2} (\mathrm{Tr} [{(\mathbf{H}_p\mathbf{W}_p)}^{\top} {(\mathbf{H}_p\mathbf{W}_p)}]  + \mathrm{Tr} [{(\mathbf{H}_q\mathbf{W}_q)}^{\top} {(\mathbf{H}_q\mathbf{W}_q)}]) = k$. As a result, we could derive the upper bound of the optimization goal in Eq. ({\ref{local-LFA}}). We can obtain that $\forall i, \mathrm{Tr} ({\mathbf{F}}^{\top}\sum_{p=1}^m {\boldsymbol{\beta}}_p \tilde{\mathbf{H}}_p^{(i)} \mathbf{W}_p)  \leq \frac{1}{2} (\mathrm{Tr} [{\mathbf{F}}^{\top} {\mathbf{F}}]  + \mathrm{Tr} [{(\sum_{p=1}^m {\boldsymbol{\beta}}_p \tilde{\mathbf{H}}_p^{(i)} \mathbf{W}_p)}^{\top} {(\sum_{p=1}^m {\boldsymbol{\beta}}_p \tilde{\mathbf{H}}_p^{(i)} \mathbf{W}_p)}]) \! \leq \! \frac{1}{2} (\mathrm{Tr} [{\mathbf{F}}^{\top} {\mathbf{F}}]  + \mathrm{Tr} (\sum_{p,q=1}^m {({\boldsymbol{\beta}}_p\mathbf{H}_p\mathbf{W}_p)}^{\top} ({\boldsymbol{\beta}}_p\mathbf{H}_q\mathbf{W}_q))) \leq \frac{k}{2} (m^2+1)$. Meanwhile, the $({\mathbf{F}}^{\top} \tilde{\mathbf{M}}) \leq  \frac{1}{2} (\mathrm{Tr} [{\mathbf{F}}^{\top} {\mathbf{F}}]  + \mathrm{Tr} [{\tilde{\mathbf{M}}}^{\top} {\tilde{\mathbf{M}}}]) = k$. Consequently, the whole optimization function is upper bounded. This completes the proof. 
\end{proof}

\subsection{Discussion and Extensions}

We end up this section with analyzing the convergence, computational complexities and potential extensions of our proposed LF-MVC-GAM and LF-MVC-LAM.

\textit{Convergence}: As can be seen form Eq. (\ref{local-LFA}),  the whole function is not jointly convex when all variables are considered simultaneously.
Instead, we propose an alternate optimization algorithm to optimize each variable with the other three variables been fixed. Let we define $\mathbf{F}^{(t)}, {\left\{\mathbf{W}_{p}^{(t)}\right\}_{p=1}^{m}}, \boldsymbol{\beta}^{(t)}$ be the solution at the $t$-iteration.\\
i)Optimizing $\mathbf{F}$ with fixed $\left\{\mathbf{W}_{p}\right\}_{p=1}^{m}$ and $\boldsymbol{\beta}$. Given $\boldsymbol{\beta}^{(t)} and \left\{\mathbf{W}_{p}^{(t)}\right\}_{p=1}^{m}$, the optimum of $\mathbf{F}$ can be analytically obtained by Theorem 2. The detailed derivation can be found in the manuscript. Suppose the obtained optimal solution be $\mathbf{F}^{(t+1)}$. We have\\ 
\begin{equation}
\label{1}
\resizebox{0.5\textwidth}{!}{$
\begin{split}
\mathcal{J}\left(\mathbf{F}^{(t+1)}, {\left\{\mathbf{W}_{p}^{(t)}\right\}_{p=1}^{m}}, \boldsymbol{\beta}^{(t)} \right) \geq \mathcal{J}\left(\mathbf{F}^{(t)}, {\left\{\mathbf{W}_{p}^{(t)}\right\}_{p=1}^{m}}, \boldsymbol{\beta}^{(t)} \right).
\end{split}$}       
\end{equation}\\
ii)Optimizing $\left\{\mathbf{W}_{p}\right\}_{p=1}^{m}$ with fixed $\mathbf{F}$ and $\boldsymbol{\beta}$. Given $\mathbf{F}^{(t+1)}, \boldsymbol{\beta}^{(t)}$ , the optimization respecting to  $\left\{\mathbf{W}_{p}\right\}_{p=1}^{m}$ can be analytically obtained. Suppose the obtained optimal solution be $\left\{\mathbf{W}_{p}^{(t+1)}\right\}_{p=1}^{m}$. We have 
\begin{equation}
\label{2}
\resizebox{0.5\textwidth}{!}{$
\begin{split}
\mathcal{J}\left(\mathbf{F}^{(t+1)}, {\left\{\mathbf{W}_{p}^{(t+1)}\right\}_{p=1}^{m}}, \boldsymbol{\beta}^{(t)} \right) \geq \mathcal{J}\left(\mathbf{F}^{(t+1)}, {\left\{\mathbf{W}_{p}^{(t)}\right\}_{p=1}^{m}}, \boldsymbol{\beta}^{(t)} \right).
\end{split}$}
\end{equation}\\
iii)Optimizing $\boldsymbol{\beta}$ with fixed $\left\{\mathbf{W}_{p}\right\}_{p=1}^{m}$ and $\mathbf{F}$. Given ${\left\{\mathbf{W}_{p}^{(t+1)}\right\}_{p=1}^{m}}, \mathbf{F}^{(t+1)}$ , the optimization respecting to  $\boldsymbol{\beta}$ can be optimally solved with close-formed solutions. Suppose the obtained optimal solution be $\boldsymbol{\beta}^{(t+1)}$. We have 
\begin{equation}
\label{3}
\resizebox{0.5\textwidth}{!}{$
\begin{aligned}
\mathcal{J}\left(\mathbf{F}^{(t+1)}, {\left\{\mathbf{W}_{p}^{(t+1)}\right\}_{p=1}^{m}}, \boldsymbol{\beta}^{(t+1)} \right) \geq \mathcal{J}\left(\mathbf{F}^{(t+1)}, {\left\{\mathbf{W}_{p}^{(t+1)}\right\}_{p=1}^{m}}, \boldsymbol{\beta}^{(t)} \right).
\end{aligned}$}     
\end{equation}
\\
Together with Eq. (\ref{1}), (\ref{2}) and (\ref{3}), we have that
\begin{equation}
\label{5}
\resizebox{0.5\textwidth}{!}{$
\begin{split}
\mathcal{J}\left(\mathbf{F}^{(t+1)}, {\left\{\mathbf{W}_{p}^{(t+1)}\right\}_{p=1}^{m}}, \boldsymbol{\beta}^{(t+1)} \right) \geq \mathcal{J}\left(\mathbf{F}^{(t)}, {\left\{\mathbf{W}_{p}^{(t)}\right\}_{p=1}^{m}}, \boldsymbol{\beta}^{(t)} \right).
\end{split}$}       
\end{equation}
which indicates that the objective function of our algorithm in Eq. (\ref{local-LFA}) monotonically increases with the increase of iterations. Also, the objective function in Eq. (\ref{local-LFA}) is upper-bounded by the provided Theorem \ref{convergence}. As a result, the proposed algorithm can be verified to converge to a local minimum according to \cite{DBLP:journals/npsc/BezdekH03}.

\textit{Computational Complexity}: With the optimization process outlined in Algorithm \ref{Algorithm-proposed2}, the total time complexity consists of three parts referring to the alternate steps. The first step of algorithm \ref{Algorithm-proposed2}, mentioned in Eq. (\ref{optimization H}), actually needs an singular value decomposition(SVD) of a matrix with the size of $n \times k$ and therefore needs $\mathcal{O}(nk^2)$ (since $k \ll n$ ). In the second step, similar with the first step, we need to solve $m$ subproblems with SVD on $\mathbf{L}$ in Eq. (\ref{optimization Wp}). Hence the time complexity is $\mathcal{O} (mnk^2)$. As for the third step, the time complexity for calculating $\boldsymbol{\delta}$ needs $\mathcal{O}(mnk^2)$. The whole time complexity of our proposed algorithms are $\mathcal{O} (mnk^2 + nk^2)$ per iteration. This implies that our algorithm has a linearly growing complexity with the number of samples, making it efficiently to handle large-scale tasks comparing to the state-of-the-art multiple-kernel clustering algorithms.

\textit{Regularization on $\mathbf{F}$}: The regularization term on consensus partition $\mathbf{F}$ is essential to improve the late fusion clustering performance. In this work, we regularize $\mathbf{F}$ by assuming that it lies in the neighborhood of average partition space $\mathbf{M}$ or local variant $\tilde{\mathbf{M}}$.  Other approaches to generate regularization term can also be designed to further improve the clustering performance. Apart from that, many task related prior properties such as low-rank and sparse can be jointed into the optimization objective.

\textit{Extensions}: LF-MVC-GAM and LF-MVC-LAM can be easily extended with the following directions. Firstly, late fusion alignment could be further improved by capturing the noises or low-quality partitions existing in basic partitions. For example, we could integrate the basic partitions  $\left\{\mathbf{H}_p\right\}_{p=1}^m $ into the optimization procedure to capture more advanced base partitions. By doing so, the high-quality basic partitions are further used to guide the generation of consensus partition. Secondly, we could apply more similarity-based clustering methods to generate basic partitions. Exploring other generating methods and evaluating their clustering performance will be an interesting future work.

\section{Generalization Error Bound Analysis of LF-MVC-GAM and LF-MVC-LAM}\label{Generalization}

In this section, we derive the generalization bounds of the proposed algorithms via exploiting the reconstruction error. Given an input space $\mathcal{X}$, $n$ samples $\left\{\mathbf{x}_i\right\}_{i=1}^n$ drawn i.i.d. from an unknown sampling distribution $\mu$. Then the label space is $\mathcal{Y}$ and $\mathcal{G}$ is a group of mapping functions which each function $g: \mathcal{X} \rightarrow \mathcal{Y}$. Recalling that our proposed formulation for LF-MVC-GAM is given in Eq. (\ref{LFA 3}).

Let $\mathbf{C} = [\mathbf{c}_1,\cdots,\mathbf{c}_k]$ be the learned center matrix composed of the $k$ centroids, and $\boldsymbol{\beta},\,\{\mathbf{W}_{p}\}_{p=1}^{m}$ the learned kernel weights and permutation matrices by the proposed LF-MVC-GAM and LF-MVC-LAM. The effective LF-MVC-GAM and LF-MVC-LAM should make the following empirical error small \cite{maurer2008generalization,liu2016dimensionality},
\begin{equation}\label{eq:Generalization1}
{
1-\frac{1}{n} \sum_{i=1}^{n}\left[\max\limits_{ \mathbf{y}\in\{\mathbf{e}_1,\cdots,\mathbf{e}_k\}}
\langle h(\mathbf{x}_i),\mathbf{C}\mathbf{y}\rangle\right],
}
\end{equation}
where $i$ denotes the index of the sample, $h(\mathbf{x}_i)=\sum_{p=1}^{m}{\boldsymbol{\beta}}_p\mathbf{W}_{p}^{\top}h_p(\mathbf{x}_{i}^{(p)}) +  \lambda \mathbf{M}_{i,:}^{\top}$, and $\mathbf{e}_1,\ldots,\mathbf{e}_k$ form the orthogonal bases of $\mathbb{R}^k$. $h_p(\mathbf{x}_{i})$ denotes the $i$-th row of $\mathbf{H}_p$ representing the $i$-th sample's representation in $p$-th view and $\mathbf{M}_{i,:}$ denotes the $i$-th row of the regularization matrix $\mathbf{M}$. The operator $\langle \mathbf{A},\mathbf{B}\rangle$ denotes  $\mathrm{Tr} (\mathbf{A}^{\top} \mathbf{B})$.

We define the function class $\mathcal{F}$ for LF-MVC-GAM first:
\begin{equation}\label{eq:Generalization2}
{
\begin{split}
& \mathcal{F}= \Big\{g:\;\mathbf{x}\mapsto 1 - \max\limits_{\mathbf{y}\in\{\mathbf{e}_1,\cdots,\mathbf{e}_k\}}
\left\langle \sum\limits_{p=1}^{m}{\boldsymbol{\beta}}_p\mathbf{W}_{p}^{\top}h_p(\mathbf{x}^{(p)}),\, \mathbf{C}\mathbf{y}\right\rangle \\ &{\Big|}{\mathbf{F}}^{\top}\mathbf{F} = \mathbf{I}_k,\,\mathbf{{W}}_p^{\top}\mathbf{W}_p = \mathbf{I}_k,\,{\Vert {\boldsymbol{\beta}} \Vert}_2 = 1,\,{\boldsymbol{\beta}}_p \geq 0,  \forall p,\forall \mathbf{x}\in\mathcal{X}\Big\},
\end{split}
}
\end{equation}
where $\mathbf{x} $ is a multi-view data sample feature containing $m$ views $\left[\mathbf{x}^{(1)},\mathbf{x}^{(2)}, \cdots, \mathbf{x}^{(m)} \right]$ and $\mathbf{C}$ is the obtained cluster centroids by performing Floyd algorithm on $\mathbf{F}$\cite{DBLP:journals/prl/Jain10}.

We have the following theorem on the generalization error bound of our proposed algorithms according to \cite{mohri2018foundations}.
\begin{theorem}\label{1}
Let $\mathcal{F}$ be a family of functions class learned by LF-MVC-GAM and LF-MVC-LAM mapping on $\mathcal{X}$.
For any $\delta>0$, with probability at least $1-\delta$, the following holds for all $g\in \mathcal{F}$:
\begin{equation}
\begin{split}
 \mathbb{E}\left[g({\mathbf{x}})\right]-\frac{1}{n}\sum\nolimits_{i=1}^{n}g({\mathbf{x}}_i)\leq \frac{\sqrt{\pi/2}k}{\sqrt{n}} + \left( 8m \right) \sqrt{\frac{\log{1/\delta}}{2n}}.
\end{split}
\end{equation}
\end{theorem}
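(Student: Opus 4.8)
The plan is to establish the bound by the standard two-step Rademacher-complexity argument for uniform convergence: first control the uniform deviation $\sup_{g\in\Fcal}\big(\EE[g(\x)]-\frac1n\sum_i g(\x_i)\big)$ by its expectation plus a McDiarmid (bounded-differences) concentration term, and then bound that expectation via symmetrization by the Rademacher complexity $\Rcal_n(\Fcal)=\EE_{\bsigma}\big[\sup_{g\in\Fcal}\frac1n\sum_{i=1}^n\sigma_i g(\x_i)\big]$. Concretely, I would invoke the master inequality of \cite{mohri2018foundations}: with probability at least $1-\delta$, $\EE[g]-\frac1n\sum_i g(\x_i)\le 2\Rcal_n(\Fcal)+c\sqrt{\log(1/\delta)/(2n)}$ for every $g\in\Fcal$, where $c$ is the width of the range of the functions in $\Fcal$. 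It then remains to (i) bound $c$ and (ii) bound $\Rcal_n(\Fcal)$, which respectively produce the second and first terms of the claimed inequality.

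For step (i), I would bound the range of $g(\x)=1-\max_{\y}\langle h(\x),\C\y\rangle$. Using $\mathbf{W}_p^{\top}\mathbf{W}_p=\mathbf{I}_k$ (so each $\mathbf{W}_p$ is norm preserving), $\|\bbeta\|_2=1$ with $\bbeta_p\ge0$ (hence $\sum_p\bbeta_p\le\sqrt m$ by Cauchy--Schwarz), and the fact that each $\mathbf{H}_p$ has orthonormal columns and therefore rows of norm at most one, I obtain $\|h(\x)\|\le\sqrt m$; the centroids $\mathbf c_j$ are convex combinations of the $h(\x_i)$ and hence obey the same bound. Consequently $|\langle h(\x),\C\y\rangle|$, and after accounting for the regularization contribution $\lambda\mathbf M_{i,:}$ the full quantity, is bounded by a constant multiple of $m$; tracking the constants yields the range $c=8m$ and thus the concentration term $8m\sqrt{\log(1/\delta)/(2n)}$.

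For step (ii), I would first discard the additive constant $1$, which contributes nothing to $\Rcal_n(\Fcal)$ since $\EE_{\bsigma}[\sigma_i]=0$, and reduce to controlling $\max_{1\le j\le k}\langle h(\x),\mathbf c_j\rangle$. I would handle the maximum with the standard inequality $\Rcal_n(\max_j\mathcal F_j)\le\sum_{j=1}^k\Rcal_n(\mathcal F_j)$, reducing the problem to $k$ single-direction classes $\mathcal F_j=\{\x\mapsto\langle h(\x),\mathbf c_j\rangle\}$. Each such class is a bounded linear functional of the aggregated partition features, so its complexity is $\Ocal{1/\sqrt n}$ after using the norm bounds from step (i); passing from the Rademacher to the Gaussian average (equivalently, Khintchine-type control of $\EE_{\bsigma}\|\sum_i\sigma_i h(\x_i)\|$) introduces the factor $\sqrt{\pi/2}$. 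Summing over the $k$ directions then reproduces the first term $\sqrt{\pi/2}\,k/\sqrt n$ once the factor $2$ from the master inequality is absorbed.

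The hardest part is this Rademacher bound in step (ii): correctly decoupling the learned centroids $\C$, view weights $\bbeta$ and permutations $\{\mathbf{W}_p\}$ from the random signs, keeping the dependence on $m$ out of the complexity term (it should enter only through the norm bound, which cancels against the centroid normalization), and justifying the $\sqrt{\pi/2}$ constant through the Rademacher--Gaussian comparison. For the LF-MVC-LAM variant I would observe that replacing $\mathbf{H}_p$ and $\mathbf M$ by their neighborhood-aggregated versions $\tilde{\mathbf H}_p^{(i)}={\mathbf A^{(i)}_p}^{\top}\mathbf{H}_p$ and $\tilde{\mathbf M}$ acts only through the $0/1$ selection matrices $\mathbf A^{(i)}_p$, which preserve every norm bound used above, so the identical argument applies and the same bound holds. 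Combining steps (i) and (ii) in the master inequality completes the proof.
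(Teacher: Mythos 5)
Your overall architecture --- master Rademacher bound from \cite{mohri2018foundations}, a range bound of width $8m$ for the concentration term, and a complexity bound producing $\sqrt{\pi/2}\,k/\sqrt{n}$ --- matches the paper, and your step (i) is essentially the paper's argument verbatim (Cauchy--Schwarz giving $\sum_p {\boldsymbol{\beta}}_p\le\sqrt{m}$, row norms of the orthonormal $\mathbf{H}_p$ bounded by one, centroids as convex combinations of the $h(\mathbf{x}_j)$). Where you genuinely diverge is step (ii). The paper does not decompose the maximum via the subadditivity rule $\mathfrak{R}_n(\max_j\mathcal{F}_j)\le\sum_j\mathfrak{R}_n(\mathcal{F}_j)$; instead it passes to the Gaussian complexity (which is the sole source of the $\sqrt{\pi/2}$), linearizes the maximum over $\mathbf{y}\in\{\mathbf{e}_1,\dots,\mathbf{e}_k\}$ by a Slepian comparison between the process $\Omega_g=\sum_i z_i\,(1-\max_{\mathbf{y}}\langle h(\mathbf{x}_i),\mathbf{C}\mathbf{y}\rangle)$ and the surrogate $\Xi_g=\sum_{i,l}z_{il}\,f^{\top}(\mathbf{x}_i)\mathbf{C}\mathbf{e}_l$ with $nk$ independent Gaussians, and then bounds $\mathbb{E}\sup_g\Xi_g\le\mathbb{E}\big[\sum_{l=1}^{k}|\sum_{i}z_{il}|\big]\le k\sqrt{n}$. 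Your route is a legitimate alternative in structure --- the max-subadditivity lemma is a standard tool for exactly this kind of clustering/multi-class bound --- but it is not the paper's route.

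The concrete gap is your claim that the $m$-dependence ``cancels against the centroid normalization.'' It does not: averaging the vectors $h(\mathbf{x}_j)$ over a cluster keeps $\|\mathbf{c}_j\|$ at $\mathcal{O}(\sqrt{m})$, it does not shrink it, so the per-direction linear class $\mathcal{F}_j=\{\mathbf{x}\mapsto\langle h(\mathbf{x}),\mathbf{c}_j\rangle\}$ has Rademacher complexity bounded by $\|h\|\,\|\mathbf{c}_j\|/\sqrt{n}=\mathcal{O}(m/\sqrt{n})$ under the norm bounds you yourself establish in step (i). Summing over the $k$ directions then yields $\mathcal{O}(mk/\sqrt{n})$, not the claimed $\sqrt{\pi/2}\,k/\sqrt{n}$, and no step of your argument removes that factor. (The paper avoids this only because its final inequality $\mathbb{E}\sup_g\Xi_g\le\mathbb{E}\big[\sum_l|\sum_i z_{il}|\big]$ implicitly asserts $|f^{\top}(\mathbf{x}_i)\mathbf{C}\mathbf{e}_l|\le 1$, which is itself stronger than what its own range analysis delivers; but that is the paper's shortcut, not one available to you by the decomposition you chose.) Relatedly, your appeal to the Rademacher--Gaussian comparison solely to ``introduce the factor $\sqrt{\pi/2}$'' is reverse-engineering the constant: in your decomposition the natural bound on $\mathbb{E}_{\boldsymbol{\sigma}}\|\sum_i\sigma_i h(\mathbf{x}_i)\|$ comes from Jensen and produces no such factor, whereas in the paper the $\sqrt{\pi/2}$ arises necessarily because the whole complexity computation is carried out on the Gaussian side so that Slepian's lemma applies. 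You would need either to adopt the paper's Gaussian-process comparison or to supply an additional argument controlling $\|\mathbf{c}_j\|$ before your version of step (ii) closes.
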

Since $n \gg k$ and $n \gg m$ in most cases, the Theorem \ref{1} implies the generalization bounds of LF-MVC-GAM and LF-MVC-LAM are $\mathcal{O} (\frac{1} {\sqrt{n}})$. The upper bound of the empirical error goes into $0$ when $n$ goes into infinity. This clearly verifies the good generalization ability of the proposed algorithms. The detailed proof is provided in the supplemental material due to space limit.

\section{Experiments}\label{experiments}

In this section, we evaluate the effectiveness and efficiency of the proposed LF-MVC-GAM and LF-MVC-LAM for eighteen widely used multiple kernel datasets from the aspects of clustering performance, computational efficiency and convergence.

\subsection{Benchmark Datasets}

The proposed algorithms are experimentally evaluated on twelve widely used MKL benchmark data sets shown in Table \ref{datasets}. They are {AR10P,\footnote{\footnotesize{\texttt{http://featureselection.asu.edu/old/datasets.php}}}} {Oxford Flower17\footnote{\footnotesize{\texttt{http://www.robots.ox.ac.uk/\~{}vgg/data/flowers/}}}}, {Caltech102\footnote{\footnotesize{\texttt{http://files.is.tue.mpg.de/pgehler/projects/iccv09/}}}}, {Columbia Consumer Video (CCV)\footnote{\footnotesize{\texttt{http://www.ee.columbia.edu/ln/dvmm/CCV/}}}}, {YALE Face\footnote{\footnotesize{\texttt{www.cs.yale.edu/cvc/projects/yalefaces/yalefaces.html}}}}, {Plant, Mfeat\footnote{\footnotesize{\texttt{http://mkl.ucsd.edu/dataset/}}}} and {Caltech102\footnote{\footnotesize{\texttt{http://www.vision.caltech.edu/archive.html}}}}. For these datasets, all kernel matrices are pre-computed and can be publicly downloaded from the above websites. Further, followed by \cite{liu2018late}, Caltech102-$5$ means the number of samples belonging to each cluster is $5$, and so on. All the datasets can be available at our repository.

Moreover, we have also conducted experiments on several multi-view datasets including MSRC-v1, Caltech102-7, Caltech101-20 and ALOI. MSRC-v1 \footnote{\footnotesize{\texttt{https://www.microsoft.com/en-us/research/project/\\image-understanding}}} contains 210 images of 7 classes which are tree, building, airplane, cow, face, car, and bicycle. Four kinds of features are adopted in the test. They are namely 24-dimension CM feature, 576-dimension HOG feature, 256-dimension LBP feature and 254-dimension CENT feature. Then, Caltech101 \footnote{\footnotesize{\texttt{http://www.vision.caltech.edu/Image\_Datasets/\\ Caltech101/}}} contains 101 categories of images, where most categories have about 50 images. Following the previous work [65], two subsets, namely Caltech101-7 and Caltech101-20, are selected for the experiment. Caltech101-7 consists of a total of 1474 images belonging to 7 different object categories while Caltech101-20 consists of a total of 2386 images belonging to 20 different object categories. Three features are adopted to generate 3 views. They are 512-dimension GIST feature, 1984-dimension HOG feature and 928-dimension LBP feature. Another dataset is Amsterdam Library of Object Images (ALOI) \footnote{\footnotesize{\texttt{http://elki.dbs.ifi.lmu.de/wiki/DataSets/MultiView}}}. It is an image dataset consisting of 110250 images of 1000 small objects, taken under various light conditions and rotation angles. We select a subset containing 8640 images of 80 small objects for the experiments,including 125-dimension RGB feature, 77-dimension colorsim feature and 13-dimension haralick feature to generate 3 views. Additionally, we also conduct comprehensive experiments on large-scale multiple kernel datasets. For the MNIST dataset, it is a classic handwritten digits dataset with 60,000 samples. To construct multi-view description for the samples,
we adopt 3 classic ImageNet pre-trained deep neural networks, i.e. VGG, DenseNet121 and ResNet101 to extract features. With the extracted features, we finally construct 3 linear kernels for the dataset.

\begin{table}[!t]
\begin{center}
{
\caption{{Mutiple Kernel Datasets used in our experiments.}}\label{datasets}
\begin{tabular}{c||c|c|c}
\toprule
Dataset         & \#Samples   & \#Views & \#Classes\\
\midrule
\hline
AR10P           & $130$     & $6$          & $10$\\
\hline
YALE            &  $165$    & $5$           & $15$\\
\hline
Plant           & $940$     & $69$          & $4$\\
\hline
Mfeat          & $2000$     & $12$          & $10$\\
\hline
CCV             &  $6773$    & $3$           & $20$\\
\hline
Flower17        &  $1360$    & $7$           & $17$\\
\hline
Caltech102-5    &  $510$     & $48$          & $102$\\
\hline
Caltech102-10   &  $1020$    & $48$          & $102$\\
\hline
Caltech102-15   &  $1530$    & $48$          & $102$\\
\hline
Caltech102-20   &  $2040$    & $48$          & $102$\\
\hline
Caltech102-25   &  $2550$    & $48$          & $102$\\
\hline
Caltech102-30   &  $3060$    & $48$          & $102$\\
\hline
MSRC-V1         &  $210$     &  $4$     &  $7$ \\
\hline
Caltech-7        &  $1474$     &  $3$     &  $7$ \\
\hline
Caltech-20        &  $2386$     &  $3$     &  $20$ \\
\hline
ALOI         &  $8640$     &  $3$     &  $80$ \\
\bottomrule
\end{tabular}
}
\end{center}
\end{table}

\begin{table}[]
\begin{center}
{
\caption{Large-scale Mutiple Kernel Datasets used in our experiments.}\label{datasets2}
\begin{tabular}{c||c|c|c|c}
\midrule
Dataset & Samples & Kernels & Clusters & Storage Size \\
\hline
Reuters  & 18758   & 5       & 6        & 12GB         \\
\hline
Mnist    & 60000   & 3       & 10       & 40GB  \\
\hline       
\end{tabular}
}
\end{center}
\end{table}

\begin{table*}[!htbp]
\begin{center}
{
\centering
\caption{ACC comparison of different multiple kernel clustering algorithms on twelve benchmark data sets.The best result is highlighted with underlines. Boldface means no statistical difference from the best one and '-' means the out-of-memory failure.}
\label{ACC result}
\resizebox{\textwidth}{!}{
\begin{tabular}{|c|c|c|c|c|c|c|c|c|c|c|c|}
\hline
\multirow{2}{*}{Datasets} & \multirow{2}{*}{A-MKKM} & \multirow{2}{*}{SB-KKM} & {MKKM} & {CRSC} & {RMKKM} & {RMSC} & {LMKKM} & {MKKM-MR} & {LKAM}   & LF-MVC-GAM & LF-MVC-LAM  \\
\cline{4-10,11-12}
 &  &  &\cite{huang2012multiple}  &\cite{kumar2011co}  &\cite{du2015robust}  &\cite{xia2014robust}  &\cite{gonen2014localized}  &\cite{Liu2016Multiple}  &\cite{Li2016Multiple}  & \multicolumn{2}{c|}{Proposed} \\
\midrule
\multicolumn{12}{|c|}{ACC$(\%)$}    \\ \hline
AR10P & 38.46  & 43.08  & 40.00  & 32.31  & 30.77  & 30.77  & 40.77  & 39.23  & 27.69  & 43.85 & \underline{$\mathbf{53.08}$}   \\ \hline
YALE & 52.12  & 56.97  & 52.12  & 52.36  & 56.36  & 58.03  & 53.33  & 58.00  & 46.67  & $\mathbf{58.55}$  & \underline{$\mathbf{60.61}$}     \\ \hline
Plant & 60.21  & 51.91  & 56.38  & 60.21  & 55.00  & 53.62  & - & 52.55  & 50.32  & 62.66  & \underline{$\mathbf{64.79}$}    \\ \hline
Mfeat & 94.20  & 86.00  & 66.75  & 77.19  & 73.70  & 94.60  & 94.90  & 92.55  & \underline{$\mathbf{96.65}$}      & $\mathbf{95.80}$  & $\mathbf{95.90}$  \\ \hline
Flower17 & 51.03  & 42.06  & 45.37  & 46.02  & 53.38  & 51.10  & 48.97  & 58.82  & 57.87  & $\mathbf{60.29}$ & \underline{$\mathbf{62.35}$}  \\ \hline
CCV & 19.98  & 20.23  & 18.29  & 19.98  & 16.76  & 16.29  & 20.17  & 20.86  & 18.35  & $\mathbf{26.89}$  & \underline{$\mathbf{29.79}$}  \\ \hline
Caltech102-5 & 36.67  & 36.86  & 28.63  & 31.52  & 32.75  & 33.73  & 37.24  & 37.04  & 32.16  & $\mathbf{39.41}$  & \underline{$\mathbf{41.37}$} \\ \hline
Caltech102-10 & 30.88  & 32.71 & 22.75  & 29.00  & 26.67  & 29.80  & 31.96  & 31.73  & 28.33  & $\mathbf{35.00}$  & \underline{$\mathbf{35.59}$}   \\ \hline
Caltech102-15 & 29.11  & 30.44  & 20.39  & 27.73  & 24.90  & 25.49  & - & 32.29  & 27.32  & 33.86 & \underline{$\mathbf{36.61}$} \\ \hline
Caltech102-20 & 28.20  & 29.73  & 18.73  & 27.34  & 24.51  & 23.87  & - & 32.55 & 25.88  & $\mathbf{34.56}$  & \underline{$\mathbf{35.90}$}  \\ \hline
Caltech102-25 & 26.41  & 27.12  & 16.63  & 27.00  & 21.92  & 24.08  & - & 30.12  & 26.16  & $\mathbf{31.96}$  & \underline{$\mathbf{35.61}$}  \\ \hline
Caltech102-30 & 25.91  & 27.29  & 16.31  & 26.51  & 21.41  & 22.58  & - & 30.31  & 24.54  & $\mathbf{32.42}$  & \underline{$\mathbf{34.17}$}    \\ \hline

\end{tabular}}
}
\end{center}
\end{table*}

\begin{table*}[!htbp]
\begin{center}
{
\centering
\caption{NMI comparison of different multiple kernel clustering algorithms on twelve benchmark data sets.The best result is highlighted with underlines. Boldface means no statistical difference from the best one and '-' means the out-of-memory failure.}
\label{NMI result}
\resizebox{\textwidth}{!}{
\begin{tabular}{|c|c|c|c|c|c|c|c|c|c|c|c|}
\hline
\multirow{2}{*}{Datasets} & \multirow{2}{*}{A-MKKM} & \multirow{2}{*}{SB-KKM} & {MKKM} & {CRSC} & {RMKKM} & {RMSC} & {LMKKM} & {MKKM-MR} & {LKAM} & LF-MVC-GAM & LF-MVC-LAM \\
\cline{4-10,11-12}
 &  &  &\cite{huang2012multiple}  &\cite{kumar2011co}  &\cite{du2015robust}  &\cite{xia2014robust}  &\cite{gonen2014localized}  &\cite{Liu2016Multiple}  &\cite{Li2016Multiple}  & \multicolumn{2}{c|}{Proposed} \\
\midrule
\multicolumn{12}{|c|}{NMI$(\%)$}    \\ \hline
AR10P & 37.27  & 42.61  & 39.53  & 33.32  & 26.62  & 27.87  & 41.67  & 40.11  & 24.72  & 44.42  & \underline{$\mathbf{53.11}$}  \\ \hline
YALE & 57.72  & 58.42  & 54.16  & 54.65  & $\mathbf{59.32}$  & 57.58  & 56.60  & 58.87  & 53.51  & $\mathbf{59.86}$  & \underline{$\mathbf{60.50}$}  \\ \hline
Plant & 25.54  & 17.19  & 20.02  & 25.54  & 19.43  & 23.18  & -  & 21.65  & 21.46  & $\mathbf{27.91}$  & \underline{$\mathbf{30.94}$} \\ \hline
Mfeat & 89.83  & 75.79  & 60.84  & 70.16  & 73.05  & 90.64  & 89.68  & 85.90  & \underline{$\mathbf{92.70}$}  & 90.92  & $\mathbf{91.25}$  \\ \hline
Flower17 & 50.19  & 45.14  & 45.35  & 45.69  & 52.56  & 54.39  & 47.79  & 57.05  & 56.06  & \underline{$\mathbf{59.79}$}  & $\mathbf{59.39}$  \\ \hline
CCV & 17.06  & 17.84  & 15.04  & 17.06  & 12.42  & 13.77  & 16.86  & 18.71  & 16.52  & $\mathbf{20.48}$  & \underline{$\mathbf{22.10}$}  \\ \hline
Caltech102-5 & 68.64  & 70.55  & 65.97  & 66.34  & 66.76  & 68.93  & 71.28  & 71.08  & 67.18  & $\mathbf{72.10}$  & \underline{$\mathbf{72.85}$}  \\ \hline
Caltech102-10 & 59.77  & 63.22  & 55.80  & 59.03  & 57.28  & 59.86  & 61.86  & 62.76  & 58.51  & $\mathbf{63.47}$  & \underline{$\mathbf{64.34}$}  \\ \hline
Caltech102-15 & 53.66  & 56.62  & 49.27  & 54.48 & 52.04  & 54.57  & - & 58.25  & 55.20  & $\mathbf{59.73}$  & \underline{$\mathbf{61.05}$}  \\ \hline
Caltech102-20 & 53.19  & 53.55  & 45.61  & 52.01  & 48.66  & 50.34  & - & 55.06  & 51.42  & $\mathbf{57.12}$  & \underline{$\mathbf{57.47}$}  \\ \hline
Caltech102-25 & 49.92  & 50.84  & 41.86  & 49.99  & 45.53  & 48.35  & - & 52.44 & 50.12  & $\mathbf{53.98}$  & \underline{$\mathbf{54.95}$}  \\ \hline
Caltech102-30 & 49.31  & 50.85  & 39.92  & 48.25  & 43.72  & 46.04  & -  & 51.55  & 47.39  & $\mathbf{53.33}$  & \underline{$\mathbf{53.49}$}  \\ \hline

\end{tabular}}
}
\end{center}
\end{table*}

\subsection{Compared MKC Algorithms and Experimental Setting}

In the experiments, LF-MVC-GAM and LF-MVC-LAM are compared with the following state-of-the-art multi-view clustering methods.
\begin{enumerate}
\item {\bf Average Multiple Kernel $k$-means (A-MKKM)}: All kernels are averagely weighted to conduct the optimal kernel, which is used as the input of kernel $k$-means algorithm.
\item {\bf Single Best Kernel $k$-means (SB-KKM)}: Kernel $k$-means is performed on each single kernel and the best result is outputted.
\item {\bf Multiple Kernel $k$-means (MKKM)} \cite{huang2012multiple}: The algorithm alternatively performs kernel $k$-means and updates the kernel coefficients.
\item {\textbf{Co-regularized Spectral Clustering (CRSC)}} \cite{kumar2011co}: CRSC provides a co-regularization way to perform spectral clustering on multiple views.
\item {\bf Robust Multiple Kernel $k$-means using $\ell_{2,1}$ norm (RMKKM)} \cite{du2015robust}: RMKKM simultaneously finds the best clustering label, the cluster membership and the optimal combination of multiple kernels by adding $\ell_{2,1}$ norm.
\item {\bf Robust Multi-view Spectral Clustering (RMSC)} \cite{xia2014robust}: RMSC constructs a transition probability matrix from each single view, and then use recover a shared low-rank transition probability matrix as an input to the standard Markov chain for clustering.
\item {\bf Localized Multiple Kernel $k$-means (LMKKM)} \cite{gonen2014localized}: LMMKM combines the base kernels by sample-adaptive weights.
\item {\bf Multiple kernel $k$-means with matrix-induced regularization (MKKM-MR)} \cite{Liu2016Multiple}: The algorithm applies the multiple kernel $k$-means clustering with a matrix-induced regularization to reduce the redundancy and enhance the diversity of the kernels.
\item {\bf Multiple kernel clustering with local kernel alignment maximization (LKAM)} \cite{Li2016Multiple}: The algorithm maximizes the local kernel with multiple kernel clustering and focuses on closer sample pairs that they shall stay together.
\item {\bf Latent multi-view subspace clustering (LMSC)}\cite{zhang2017latent}:LMSC learns latent data representation and simultaneously explores the underlying complementary information among multiple views in a self-representation learning framework.
\item {\bf Graph learning for multiview clustering (MVGL)}\cite{zhan2017graph}:MVGL is a two-step multi-view clustering method which fuses the view-specific proximity matrix into the consensus proximity similarity matrix .
\item {\bf Graph-based multi-view clustering (GMC)}\cite{wang2019gmc}: GMC adaptively produces the consensus proximity similarity matrix by fusing multiple grpahs based on original graphs.
\item {\bf Multi-view spectral clustering via multi-view weighted consensus and matrix-decomposition based discretization(MvWCMD)}\cite{chen2019multi}:MvWCMD performs the consensus proximity matrix learning and
discrete cluster label learning simultaneously.
\end{enumerate}

\begin{table*}[]
\caption{ACC comparison of different Graph-based multi-view clustering algorithms on twelve benchmark data sets.The best result is highlighted with format mean(std).}
\label{mgc acc result}
\resizebox{\textwidth}{!}{
\begin{tabular}{c|cccccccc}
\midrule
Dataset       & SC            & Co-reg        & LMSC          & MVGL          & MvWCMD        & GMC          & LF-MVC-GAM & LF-MVC-LAM               \\
\hline
MSRC-v1       & 0.699 (0.007) & 0.543 (0.046) & 0.678 (0.062) & 0.719 (0.000) & 0.722 (0.035) & 0.748 (0.000) & 0.832 (0.000) & \textbf{0.843 (0.000)} \\
Caltech101-7  & 0.489 (0.001) & 0.586 (0.031) & 0.562 (0.038) & 0.569 (0.000) & 0.717 (0.063) & 0.804 (0.000) & 0.820 (0.000) & \textbf{0.851 (0.000)} \\
Caltech101-20 & 0.407 (0.006) & 0.497 (0.031) & 0.503 (0.008) & 0.634 (0.000) & 0.568 (0.048) & 0.673 (0.000) & 0.744 (0.000) & \textbf{0.774 (0.000)} \\
ALOI          & 0.627 (0.017) & 0.623 (0.017) & 0.691 (0.003) & 0.631 (0.000) & 0.520 (0.006) & 0.664 (0.000) & 0.659 (0.000) & \textbf{0.704 (0.000)} \\ 
\hline       
\end{tabular}}
\end{table*}

\begin{table*}[]
\caption{NMI comparison of different Graph-based multi-view clustering algorithms on twelve benchmark data sets.The best result is highlighted with format mean(std).}
\label{mgc nmi result}
\resizebox{\textwidth}{!}{
\begin{tabular}{c|cccccccc}
\midrule
Dataset       & SC            & Co-reg        & LMSC          & MVGL          & MvWCMD        & GMC           & LF-MVC-GAM & LF-MVC-LAM        \\
\hline
MSRC-v1       & 0.552 (0.010) & 0.445 (0.042) & 0.575 (0.047) & 0.635 (0.000) & 0.690 (0.026) & 0.693 (0.000) & 0.729 (0.000) & \textbf{0.773 (0.000)} \\
Caltech101-7  & 0.347 (0.000) & 0.489 (0.011) & 0.480 (0.023) & 0.489 (0.000) & 0.552 (0.017) & 0.647 (0.000) & 0.528 (0.000) & \textbf{0.781 (0.000)} \\
Caltech101-20 & 0.497 (0.006) & 0.531 (0.007) & 0.568 (0.004) & 0.595 (0.000) & 0.557 (0.043) & 0.567 (0.000) & 0.584 (0.000) & \textbf{0.621 (0.000)} \\
ALOI          & 0.779 (0.004) & 0.775 (0.010) & 0.808 (0.006) & 0.680 (0.000) & 0.626 (0.002) & 0.728 (0.000) & 0.771 (0.000) & \textbf{0.790 (0.000)} \\       
\hline
\end{tabular}}
\end{table*}

\begin{table*}[]
\caption{Time-consuming comparison (in seconds) of different Graph-based multi-view clustering algorithms on benchmarks.}
\resizebox{\textwidth}{!}{
\label{mgc time result}
\begin{tabular}{c|cccccccccccccc}
\midrule
              & \multicolumn{2}{c}{Co-reg} & \multicolumn{2}{c}{LMSC} & \multicolumn{2}{c}{MVGL} & \multicolumn{2}{c}{MvWCMD} & \multicolumn{2}{c}{GMC} & \multicolumn{2}{c}{LF-MVC-GAM}     & \multicolumn{2}{c}{LF-MVC-LAM}  \\
\hline
Dataset       & Time        & Speed Up     & Time       & Speed Up    & Time        & Speed Up   & Time        & Speed Up     & Time       & Speed Up   & Time             & Speed Up     & Time            & Speed Up    \\
\hline
MSRC-v1       & 0.572       & 4$\times$            & 4.513      & 0.49$\times$        & 2.206       & 1$\times$          & 2.194       & 0.99$\times$         & 2.655      & 0.83$\times$      & \textbf{0.208}   & \textbf{11$\times$}  & \textbf{0.266}  & \textbf{8$\times$}  \\
Caltech101-7  & 6.395       & 16$\times$           & 326.244    & 0.32$\times$        & 105.236     & 1$\times$          & 173.2       & 1.65$\times$         & 7.414      & 14.19$\times$      & \textbf{2.489}   & \textbf{42$\times$}  & \textbf{3.503}  & \textbf{30$\times$} \\
Caltech101-20 & 18.949      & 44$\times$           & 324.744    & 2.58$\times$        & 837.33      & 1$\times$ & 1375.2      & 1.64$\times$         & 32.031     & 26.14$\times$      & \textbf{5.877}   & \textbf{142$\times$} & \textbf{12.412} & \textbf{67$\times$} \\
ALOI          & 626.846     & 33$\times$           & 5144834    & 0.004$\times$        & 20885.28    & 1$\times$ & 6639.48     & 0.32$\times$         & 1027.62    & 20.32$\times$      & \textbf{350.362} & \textbf{60$\times$}  & \textbf{406.74} & \textbf{51$\times$}\\
\hline
\end{tabular}}
\end{table*}

For all the above mentioned algorithms, we have downloaded their public Matlab code implementations from original websites\footnote{\footnotesize{\texttt{https://github.com/xinwangliu}}}. Our Matlab codes for LF-MVC-GAM and LF-MVC-LAM are available at here \footnote{\footnotesize{\texttt{https://github.com/wangsiwei2010/\\ latefusionalignment}}}.

The hyper-parameters are set according to the suggestions of the corresponding literature. For the proposed algorithms  LF-MVC-GAM and LF-MVC-LAM, the trade-off parameter $\lambda$ is chosen from $\left[2^{-5}, 2^{-4}, \cdots, 2^{5}\right] $ by grid search. Especially, to all the compared spectral clustering algorithms, the optimal neighbor numbers are carefully searched in the range of $[0.1,0.2,\cdots,0.9]*n$ , where $n$ is the sample number in restive dataset. In all our experiments, all base kernels are first centered and then normalized so that for all sample $x_i$ and each view $p$, we have $K_p(x_i,x_i) = 1$ by following {\cite{cortes2012algorithms}}. For all data sets, it is assumed that the true number of clusters is known in advance and set as the true number of classes. The widely used clustering accuracy (ACC), normalized mutual information (NMI) and purity are applied to evaluate the clustering performance. For all algorithms, we repeat each experiment for $50$ times with random initialization to reduce the affect of randomness caused by $k$-means, and report the best result. All our experiments are conducted on a desktop computer with a 2.5GHz Intel Platinum 8269CY CPU and 48GB RAM, MATLAB 2019b (64bit).

\begin{table*}[!htbp]
\begin{center}
\caption{Clustering performance comparison between the state-of-the-art algorithms on large-scale datasets. In this table, ACC, NMI, purity of different clustering algorithms benchmark datasets are reported and '-' means the out-of-memory failure.  }\label{Table large result}
\resizebox{1\textwidth}{!}{
\begin{tabular}{c|c|c|c|c|c|c|c}
\hline
{Datasets}  & {A-MKKM} & {SB-KKM} & {CRSC} & {MKAM} & {MKKM-MR} & {LF-MVC-GAM } & {LF-MVC-LAM }  \\
\hline
\hline
\multicolumn{8}{c}{ACC ($\%$)}\\
\hline
\hline
Reuters  &45.00	&47.24	&-	&42.37	&45.70	&49.16	&\textbf{51.24}	 \\
\hline
MNIST	  &77.33	&77.89	&-	    &-	    &-	    &80.58	&\textbf{82.85}	 \\
\hline
\hline
\multicolumn{8}{c}{NMI(\%) }\\
\hline
\hline
Reuters  &27.32	&26.42	&-	&24.56	&27.64	&29.62	&\textbf{31.38}	\\
\hline
MNIST	  &74.28	&76.50  &-	    &-	    &-	    &78.47	&\textbf{80.87}	\\
\hline
\hline
\multicolumn{8}{c}{Purity ($\%$)}\\
\hline
\hline
Reuters   &65.48	&67.36	&-	&62.36	&65.75	&68.32	&\textbf{70.81}	\\
\hline
MNIST	   &76.53	&74.63	&-	    &-	    &-	    &82.65	&\textbf{85.86}	\\
\hline
\end{tabular}}
\end{center}
\end{table*}

\subsection{Experiments Results}

Table \ref{ACC result} presents the ACC comparison of the above algorithms on the twelve benchmark datasets. The best result is highlighted with underlines. Boldface means no statistical difference from the best one and '-' means the out-of-memory failure. Based on the results, we have the following observations:
\begin{itemize}
\item  LF-MVC-GAM and LF-MVC-LAM show clear advantages over other multi-view clustering baselines, with 10 best and 1 second best results out of the total 12 data sets; in particular, the margins for the three data sets: AR10P, Plant and CCV are very impressive.  These results verify the effectiveness of the proposed late fusion alignment.
\item Comparing with the LKAM(\cite{Li2016Multiple}), the proposed LF-MVC-LAM consistently further improves the clustering performance and achieves better results among the benchmark datasets. Both of them adopt the similarity measure in a local way. The clustering results clearly demonstrate the effectiveness of fusing multi-view information in partition level.
\item As can be seen, the regularization term constraints the consensus partition $\mathbf{F}$ in our algorithms to approximate the average partition. However, as the results show, our proposed algorithms consistently outperform the average kernel $k$-means. These verify that the late fusion alignment part can excellently  guide the learning of $\mathbf{F}$ and achieve much better performance than A-MKKM.
\end{itemize}

We also report the NMI in Table \ref{NMI result}. Again, we observe that the proposed algorithm significantly outperforms other early-fusion multi-kernel algorithms. These results are consistent with our observations in Table \ref{ACC result}. Results of purity are shown in the appendix due to the space limit.

In summary, the above experimental results have well demonstrated the effectiveness of our proposed LF-MVC-GAM and LF-MVC-LAM comparing to other state-of-the-art methods. We attribute the superiority of proposed algorithms as two aspects:
\romannumeral1) Both LF-MVC-GAM and LF-MVC-LAM employ joint fusion to update weighted basic partitions and the consensus one and get slightly higher performance than other methods. To be specific, when a higher quality consensus partition is obtained, we could further make full use of the high-quality partition to guide the weighted basics ones and improve the performance.
\romannumeral2) Compared with the existing early-fusion methods, the proposed LF-MVC-GAM and LF-MVC-LAM fuse multiple kernel information in the partition level, which demonstrates the benefits of fusing high-level information. These two factors contribute to significant improvements on clustering performance.

\begin{figure}[!htbp]
\centering
\subfloat{\includegraphics[width = 0.5\textwidth]{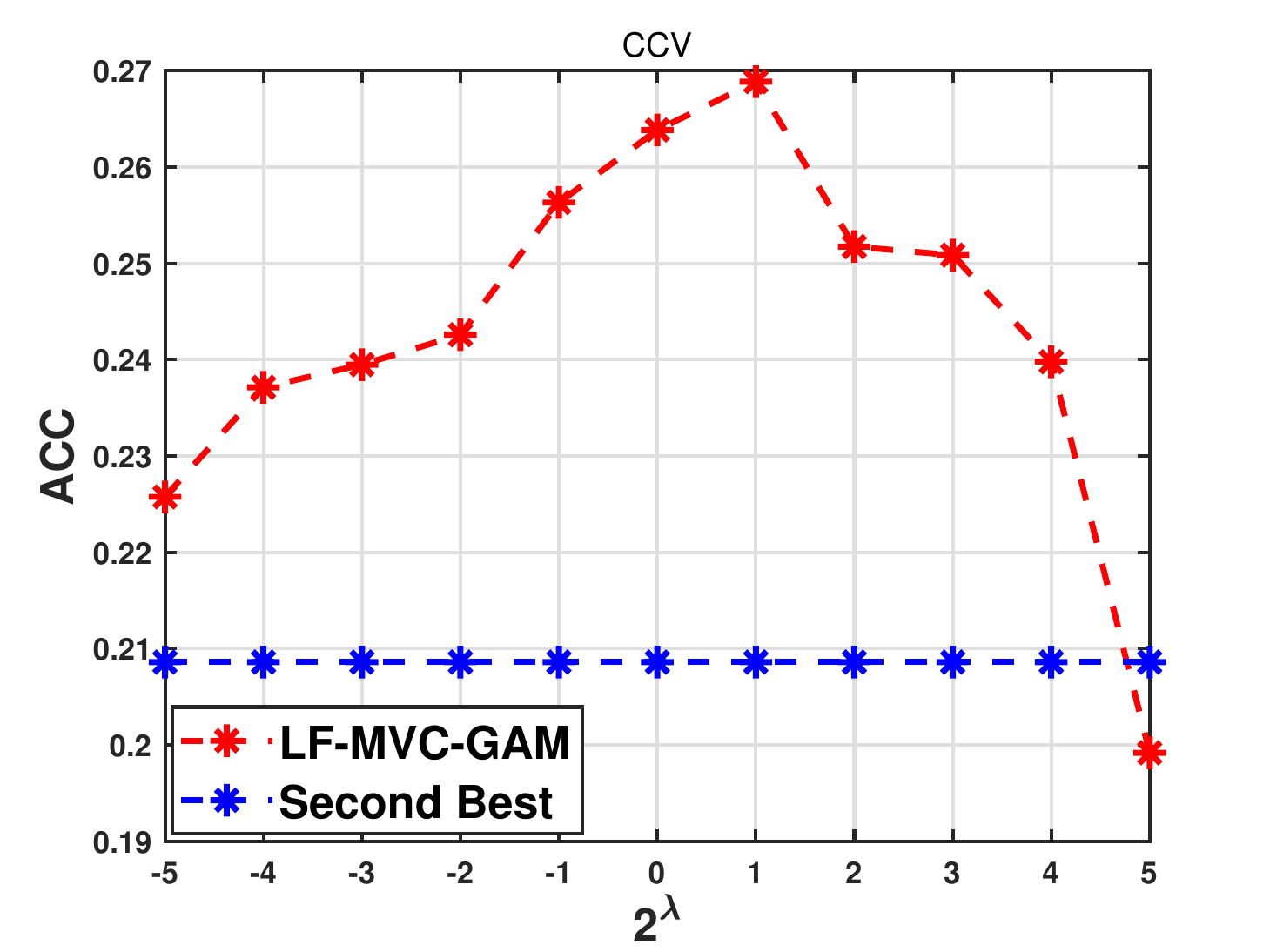}\label{ccv_mvc_lfa_sentivity}}%
\subfloat{\includegraphics[width = 0.5\textwidth]{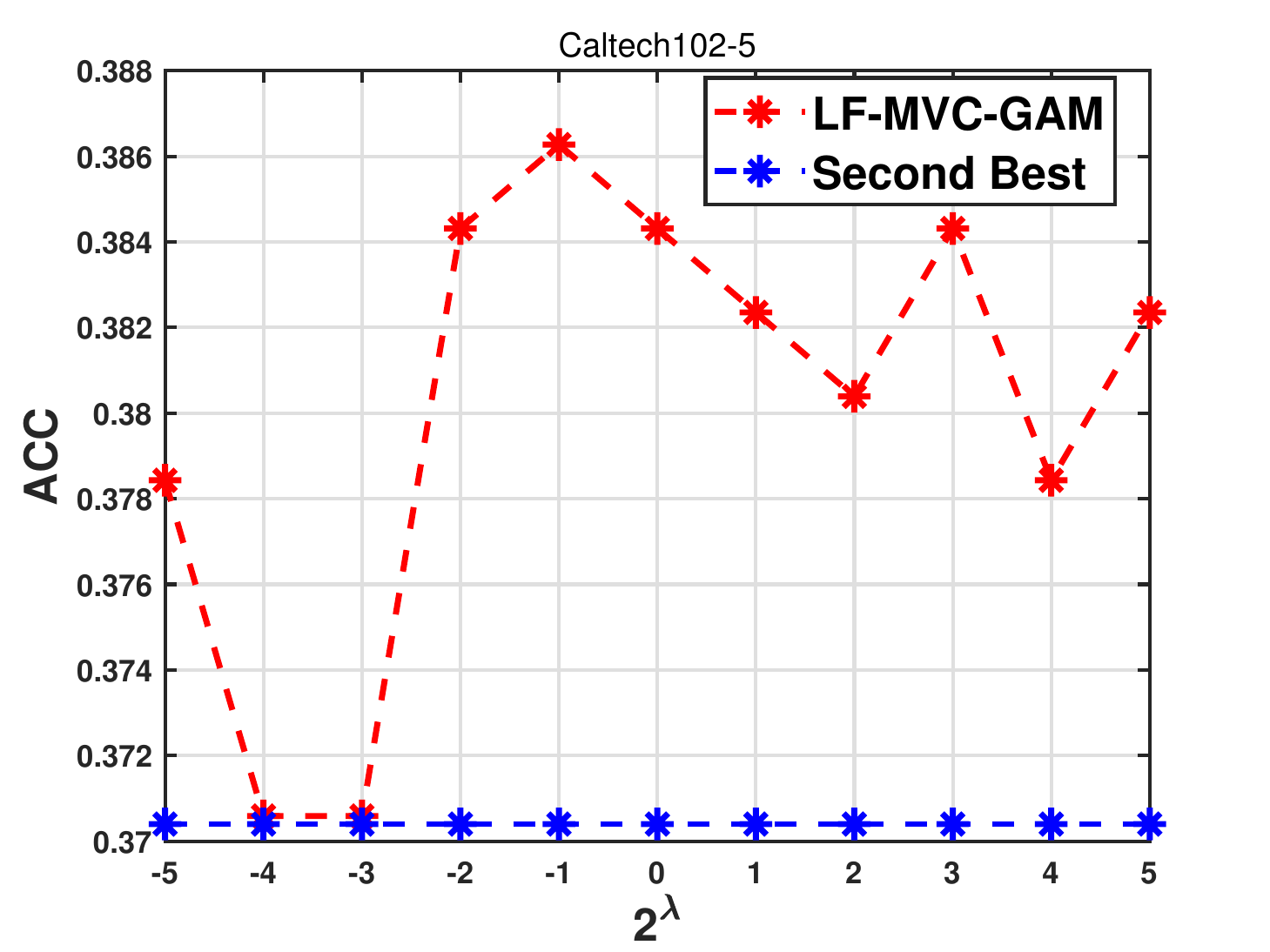}\label{caltech5_mvc_lfa_sentivity}}\\
\caption{The sensitivity of LF-MVC-GAM with the variation of $\lambda$ on CCV and Caltech102-5. More experimental results can be found in the appendix.}\label{MVC_LFA_SensitivityFig}
\end{figure}

\begin{figure*}[!htbp]
\centering
\subfloat[Mfeat $1^{st}$ iteration]{\includegraphics[width = 0.25\textwidth]{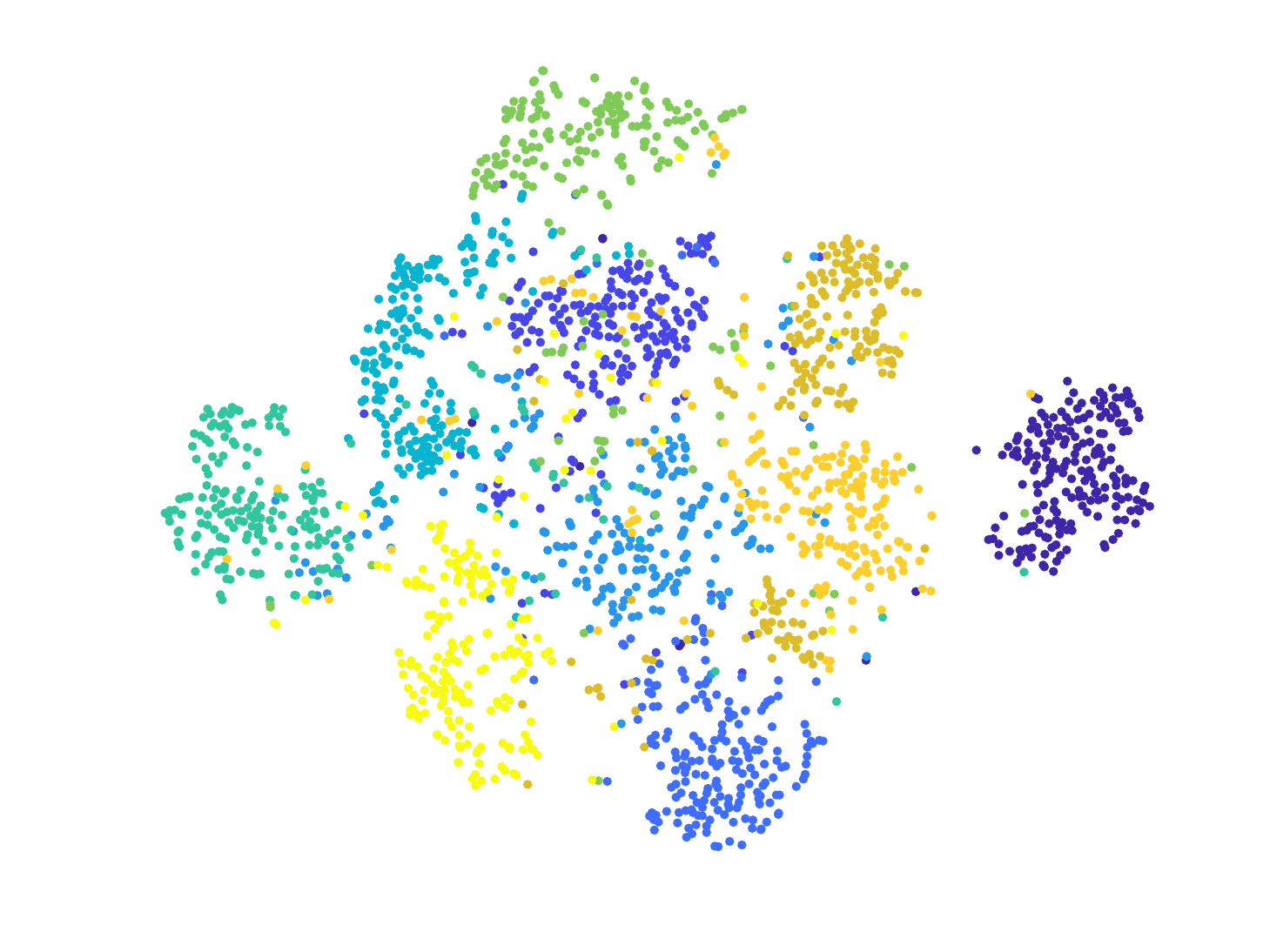}\label{evolution_1}}
\subfloat[Mfeat $5^{th}$ iteration]{\includegraphics[width = 0.25\textwidth]{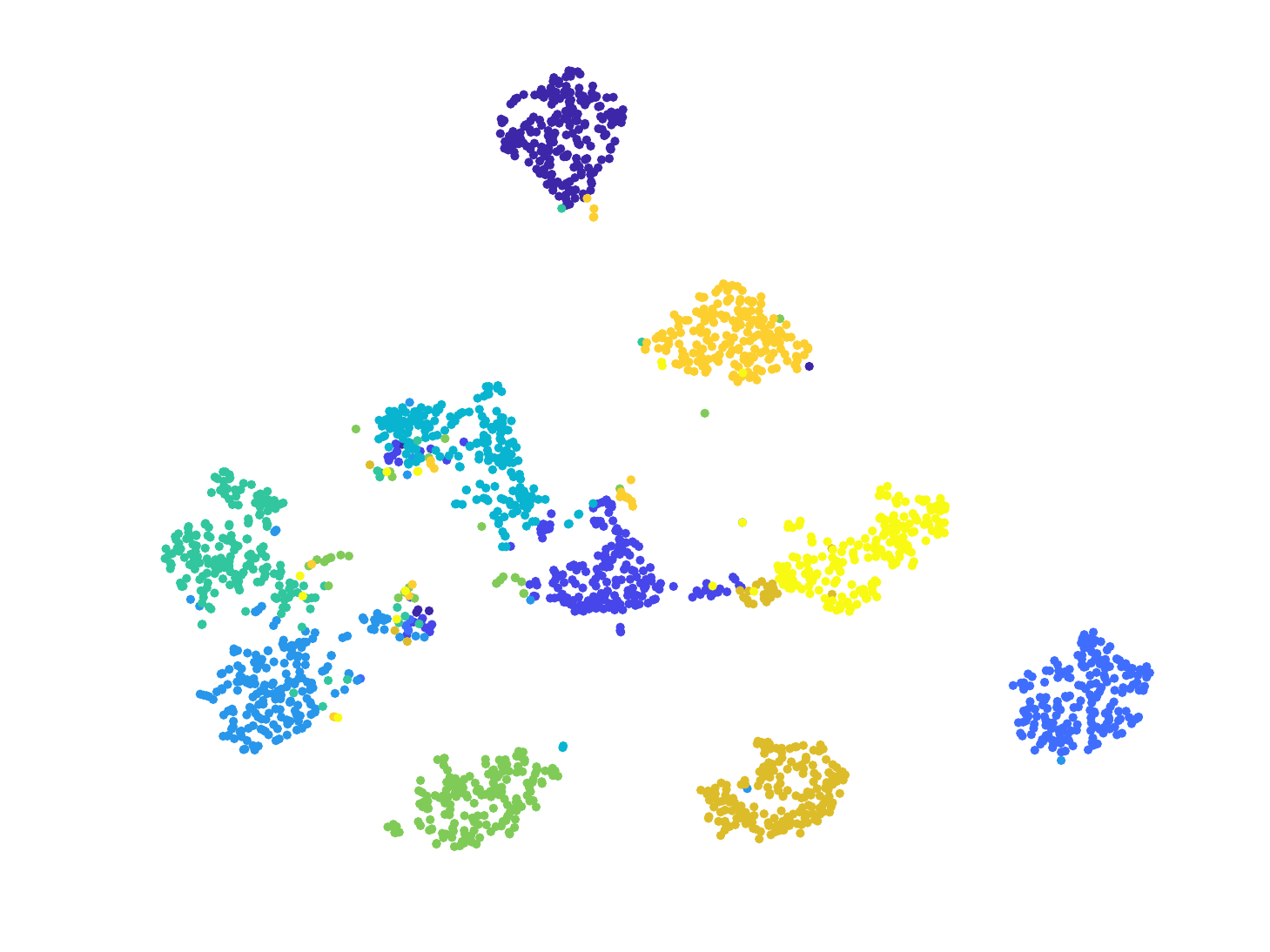}\label{evolution_5}}
\subfloat[Mfeat $10^{th}$ iteration]{\includegraphics[width = 0.25\textwidth]{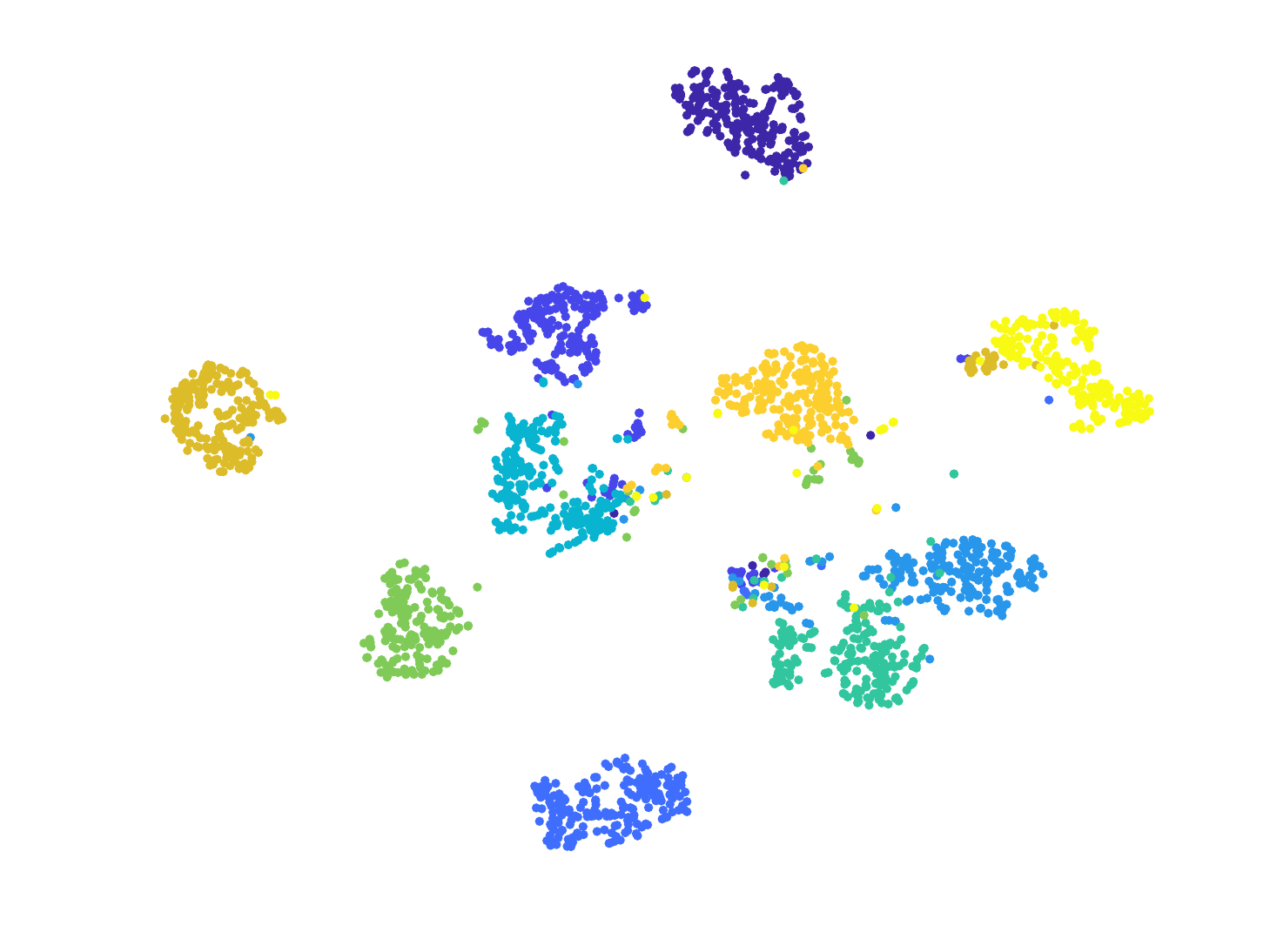}\label{evolution_10}}
\subfloat[Mfeat $20^{th}$ iteration]{\includegraphics[width = 0.25\textwidth]{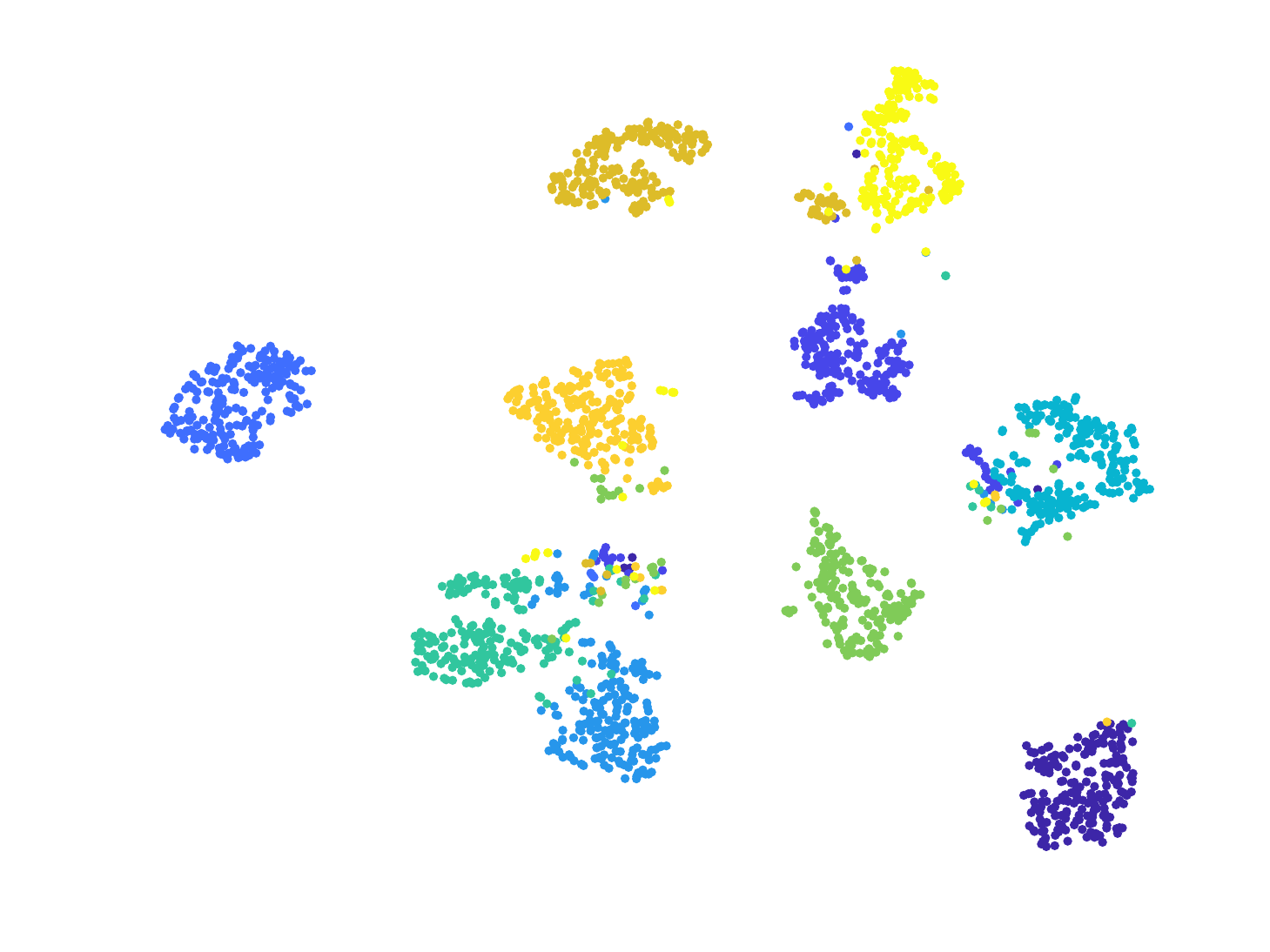}\label{evolution_20}}\\[-10pt]
\subfloat[Plant $1^{st}$ iteration]{\includegraphics[width = 0.25\textwidth]{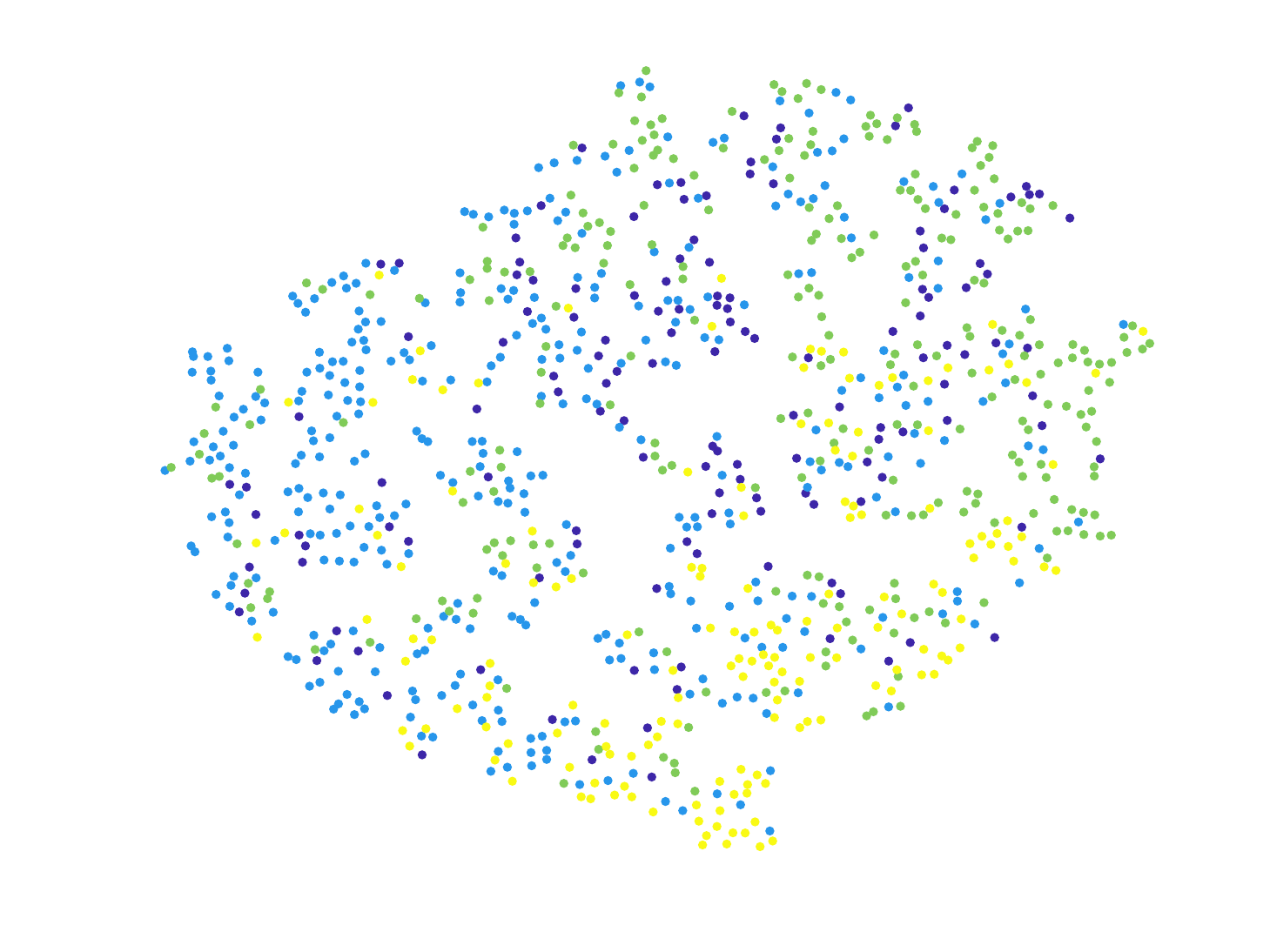}\label{evolution_20}}
\subfloat[Plant $5^{th}$ iteration]{\includegraphics[width = 0.25\textwidth]{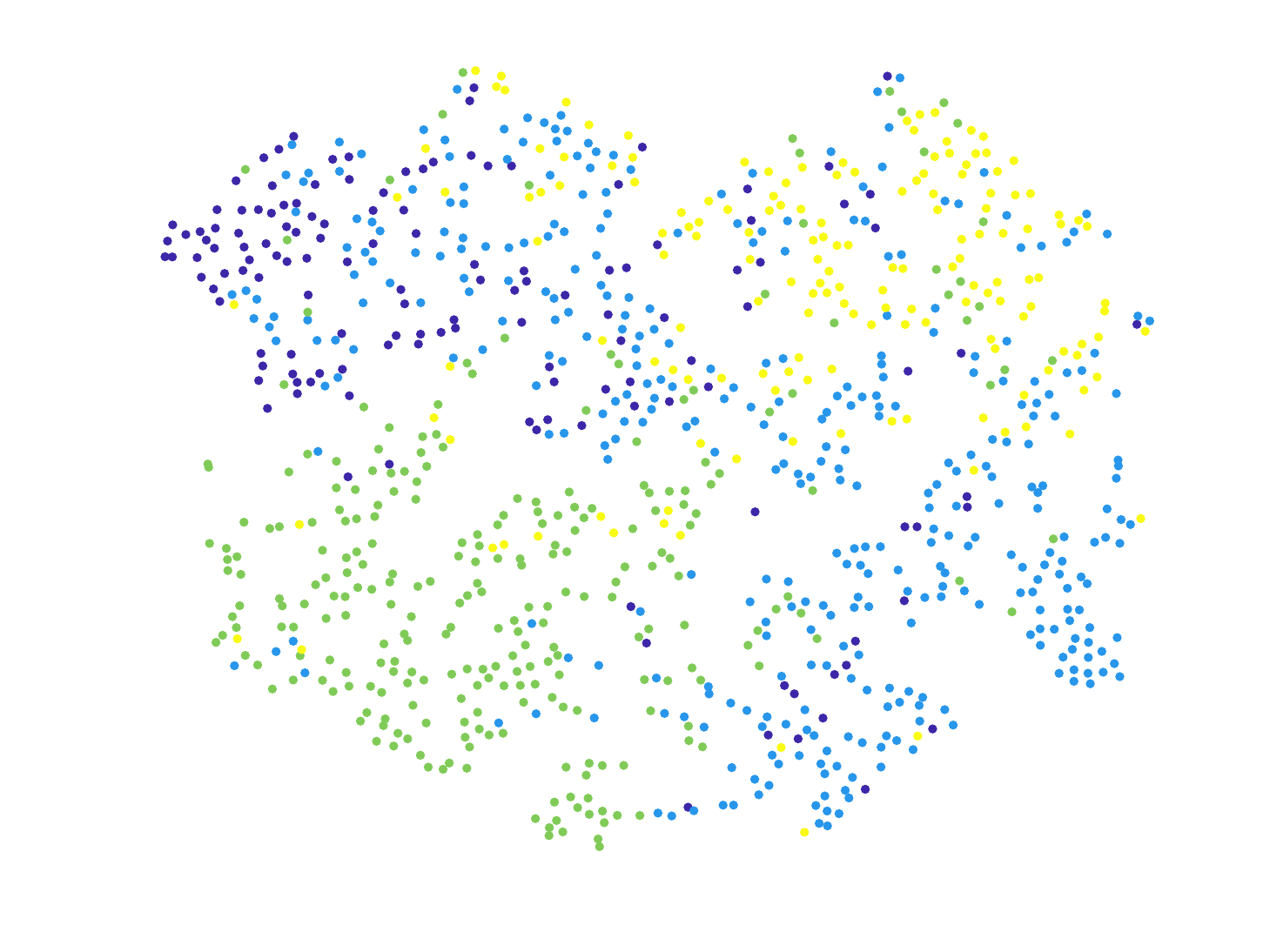}\label{evolution_20}}
\subfloat[Plant $10^{th}$ iteration]{\includegraphics[width = 0.25\textwidth]{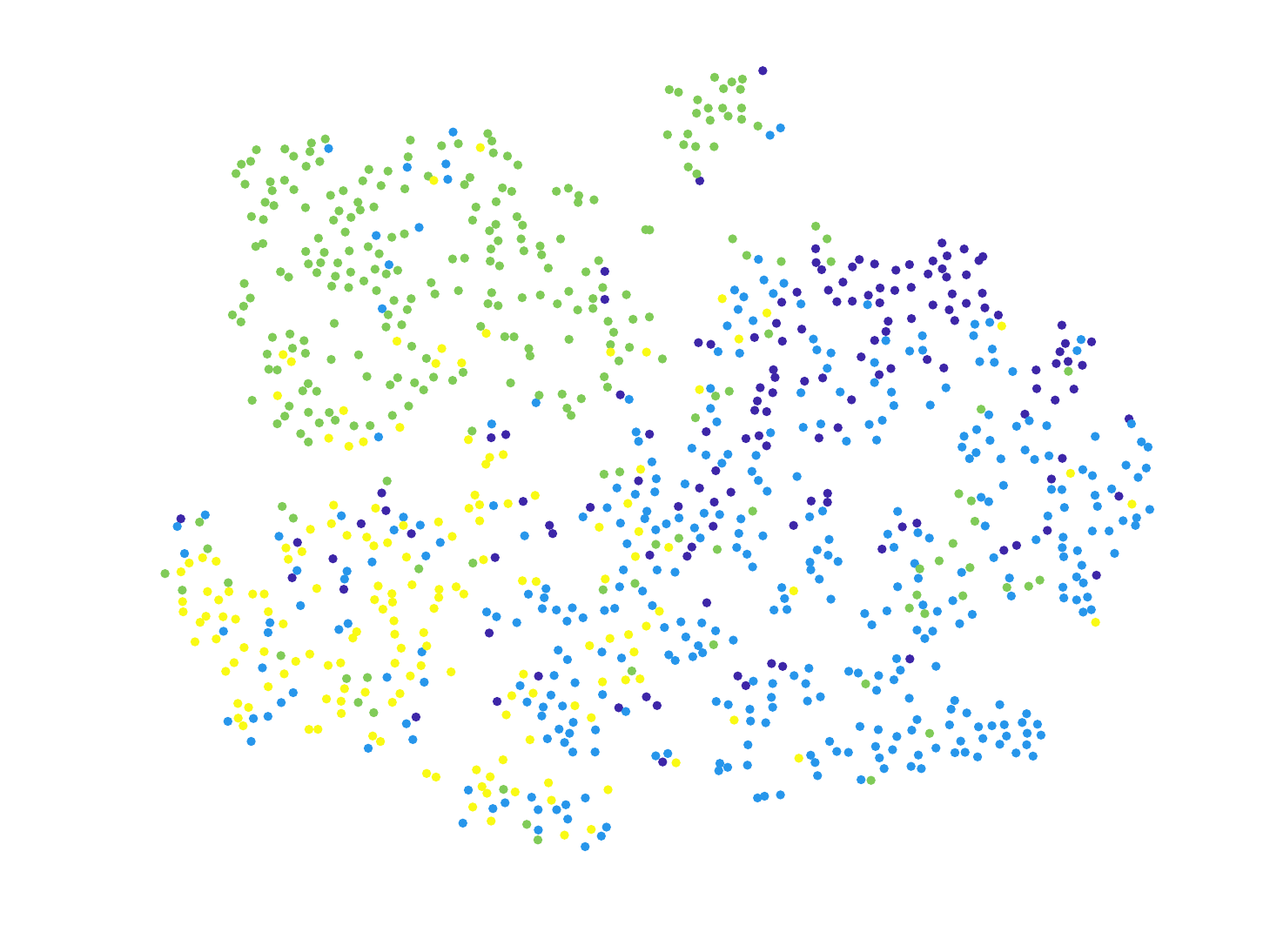}\label{evolution_20}}
\subfloat[Plant $20^{th}$ iteration]{\includegraphics[width = 0.25\textwidth]{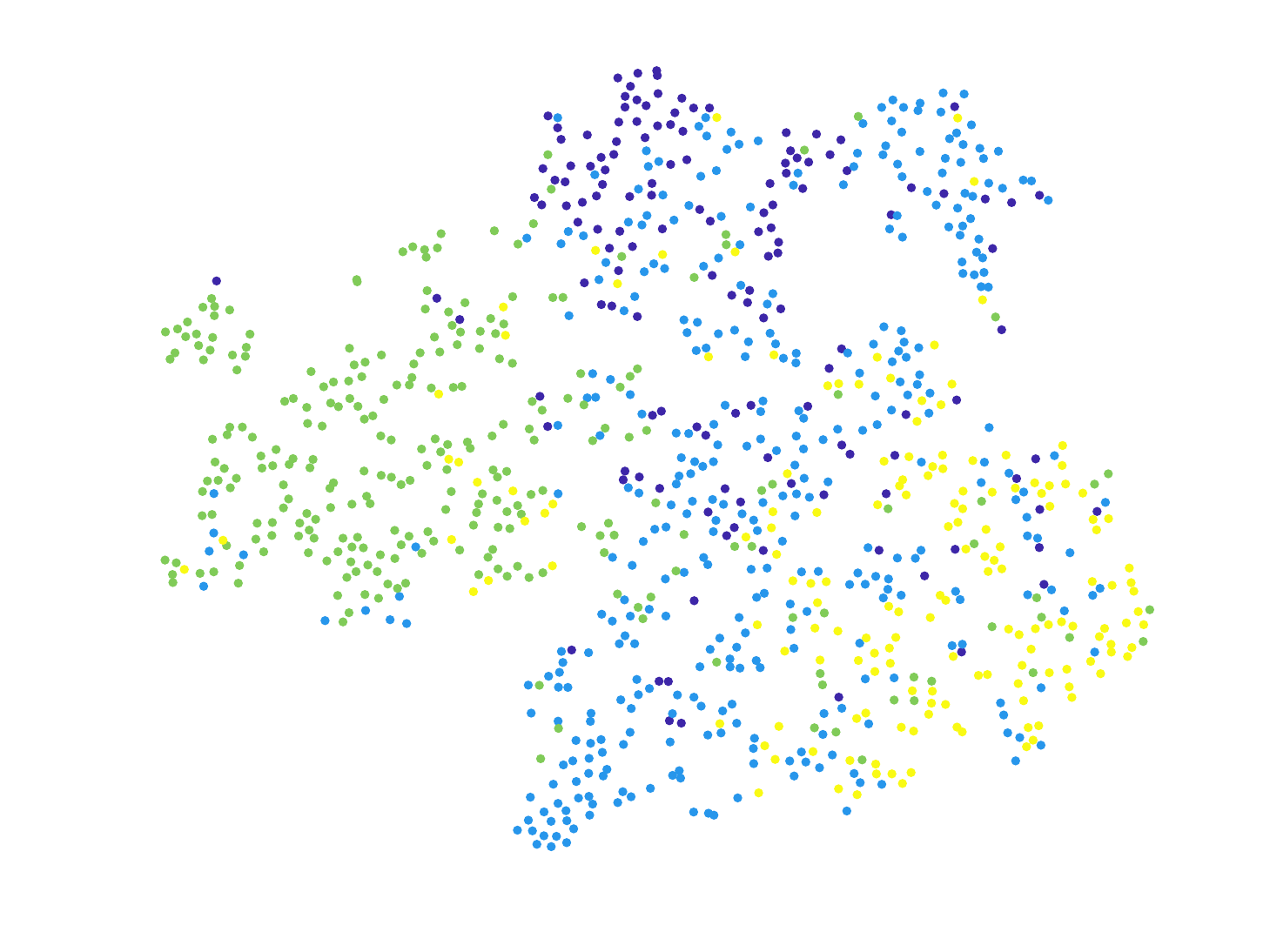}\label{evolution_20}}
\caption{Clustering structure visualization through the t-SNE algorithm \cite{maaten2008visualizing}.In these figures, the first and the second rows are corresponding to the performance evolution of  the $1^{st}$,$5^{th}$,$10^{th}$ and $20^{th}$ iteration on the Mfeat and Plant datasets respectively.}\label{evolution}
\end{figure*}

\subsection{Running Time Comparison}

\begin{figure}[!htbp]
\begin{center}
{
\centering
\subfloat{{\includegraphics[width=0.75\textwidth]{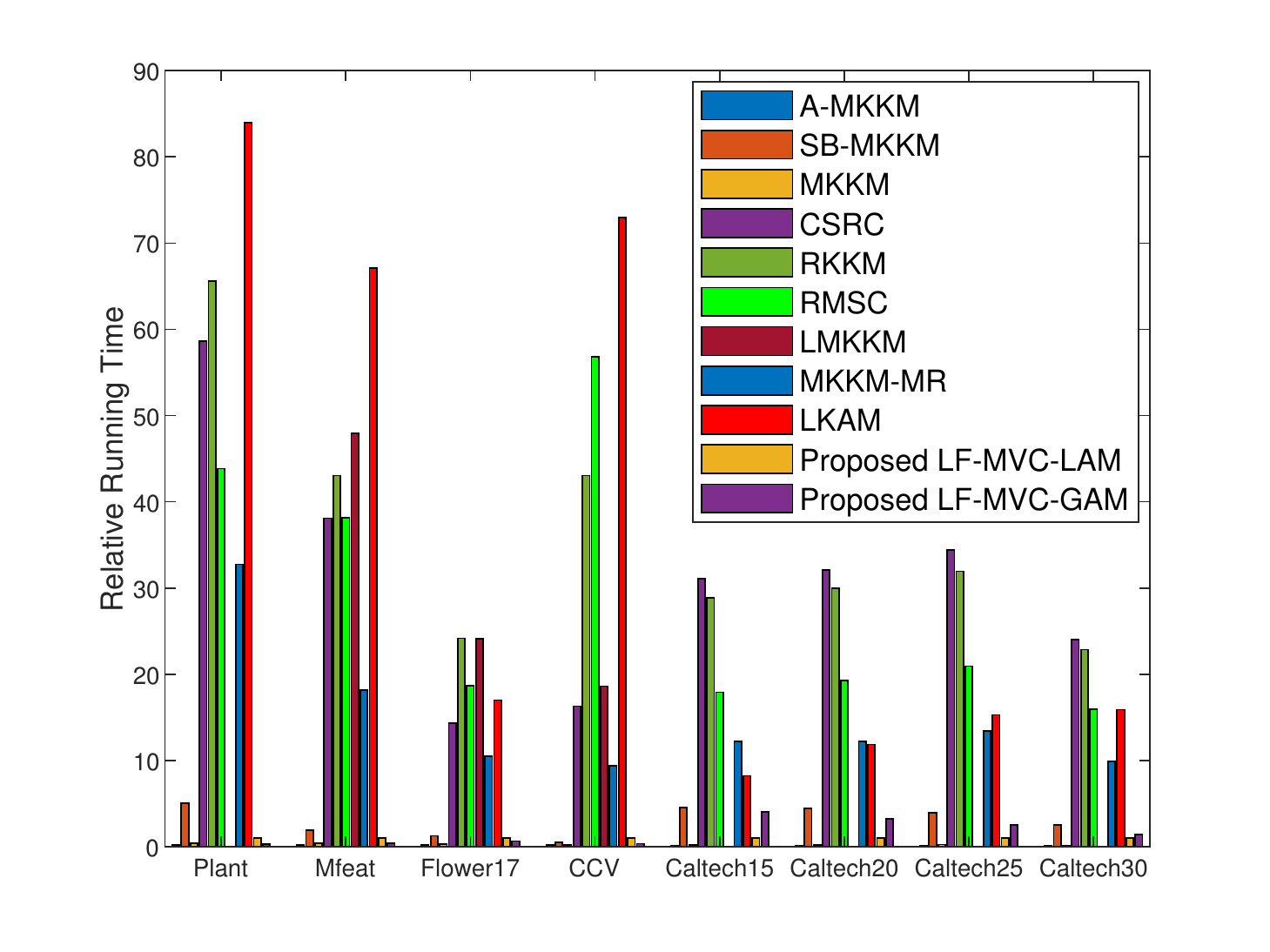}} }
\caption{ The relative running time of the compared algorithms on the benchmark datasets. More results are omitted due to space limit and are provided in supplementary materials.}\label{figure_time1}
}
\end{center}
\end{figure}

To compare the computational efficiency of the proposed algorithms, we record the running time of various algorithms on these benchmark datasets and report them in Figure \ref{figure_time1} and Table \ref{mgc time result}. To avoid the different sizes of magnitude for the running time, we set the the running time of proper algorithm as 1 in the Figure \ref{figure_time1}. The relative running time means the speed up times for various compared algorithms in the same dataset, the same as Table \ref{mgc time result}. As can be seen, LF-MVC-GAM and LF-MVC-LAM  have much shorter running time on all datasets comparing to the-state-of-art multi-view methods (MKKM, CSRC, RKKM, LMKKM, MKKM-MR, LKAM), demonstrating the computational efficiency of the proposed method. As theoretically demonstrated, LF-MVC-GAM and LF-MVC-LAM reduce the time complexity from $\mathcal{O}(n)^3$ to $\mathcal{O}(n)$ per iteration and avoid complicated optimization procedure.  In sum, both the theoretical and the experimental results have well demonstrated the computational advantage of proposed algorithms, making them efficient to handle with multi-view clustering. 

\subsection{Handling with large-scale datasets}
To further demonstrate the effectiveness and efficiency of our proposed method, we evaluate the clustering performance when facing with large-scale datasets. The experimental results are shown in Table \ref{Table large result} and appendix.

From these results, we have the following observations:
\begin{enumerate}
\item LF-MVC-GAM and LF-MVC-LAM consistently outperform other multiple-kernel clustering competitors by a large margin, in terms of ACC, NMI and Purity. These results clearly show the advantages over late fusion framework.
\item In terms of computational efficiency, our proposed methods enjoy significant acceleration in time consuming. The experimental results have well demonstrated the computational advantage of proposed algorithms, making them efficient to handle with large-scale datasets.
\end{enumerate}

\subsection{Visualization of the evolution of $\mathbf{F}$}

To demonstrate the effective of the consensus partition matrix $\mathbf{F}$, specifically, we evaluate the ACC of consensus partition $\mathbf{F}$ learned at each iteration, as shown in Figure \ref{evolution}. We conduct the t-SNE algorithm \cite{maaten2008visualizing} on the consensus partition matrix $\mathbf{F}$ with different iterations, namely, $1^{st}$,$5^{th}$,$10^{th}$ and $20^{th}$ iteration. As the experimental results on Figure \ref{LMVC-LFA convergence} shows, our algorithms quickly converge to a local minimum with less than 20 iterations.

Two examples of the evolution of consensus partition matrix $\mathbf{F}$ on Mfeat and Plant are demonstrated in Figure \ref{evolution}. As Figure \ref{evolution} shows, with the increasing number of iterations, the clustering structures of data become more significant and clearer than the old ones. These results clearly demonstrate the effectiveness of the learned consensus matrix $\mathbf{F}$ for clustering.

\subsection{Parameter Sensitivity Analysis}

\subsubsection{Parameter Sensitivity of LF-MVC-GAM}

In this section, we analyze the impact of the regularization hyper-meter $\lambda$ in LF-MVC-GAM to the clustering performance. We chose the parameter $\lambda$  ranging from $\left[2^{-5}, 2^{-4}, \cdots, 2^{5}\right] $ by grid search, which indicates the trade-off between late fusion alignment and the regularization term. Figure \ref{MVC_LFA_SensitivityFig} plots the clustering performance by varying $\lambda$ in a large range on CCV and Caltech102-5. The clustering performance of the second best algorithm on the respectively dataset is also provided for reference.

We have the following observations from Figure \ref{MVC_LFA_SensitivityFig}: i) The clustering performance of LF-MVC-GAM firstly increases to a high value and generally maintains it up to slight variation with the increasing value of $\lambda$. ii) Comparing to the second best algorithm, our proposed LF-MVC-GAM demonstrates stable performance across a wide range of $\lambda$.

\subsubsection{Parameter Sensitivity of LF-MVC-LAM}

Due to the experimental results of LF-MVC-LAM, we observe that: i) all the two hyper-parameters are effective in improving the clustering performance; ii) LF-MVC-LAM is practically stable against the these parameters that it achieves competitive performance in a wide range of parameter settings; iii) the acc first increases to a high value and generally maintains it up to slight variation with the increasing value of $\lambda$. LF-MVC-LAM demonstrates stable performance across a wide range of $\lambda$. iiii) the proposed algorithm is relatively sensitive to the neighbor numbers $\tau$ which relatively reflects the inner data structure of data themselves. However, it still outperforms the second-best algorithm in most of the benchmark datasets. The results can be found in the appendix due to limitation of space.

\begin{figure*}[!htbp]
\centering
\subfloat[Caltech102-5]{\includegraphics[width = 0.225\textwidth]{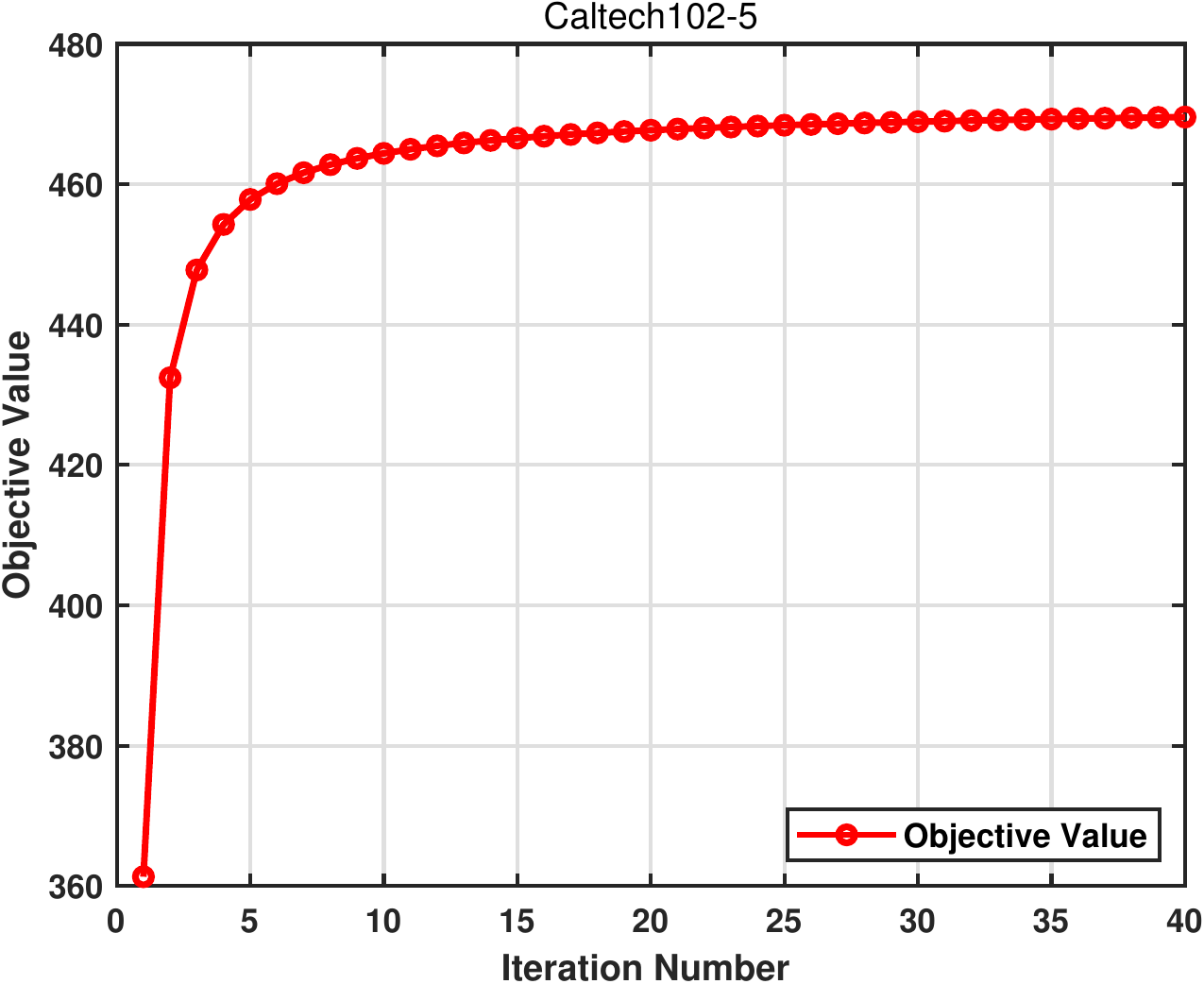}\label{flower17_paraMRLambdaACC}}
\subfloat[Caltech102-10]{\includegraphics[width = 0.225\textwidth]{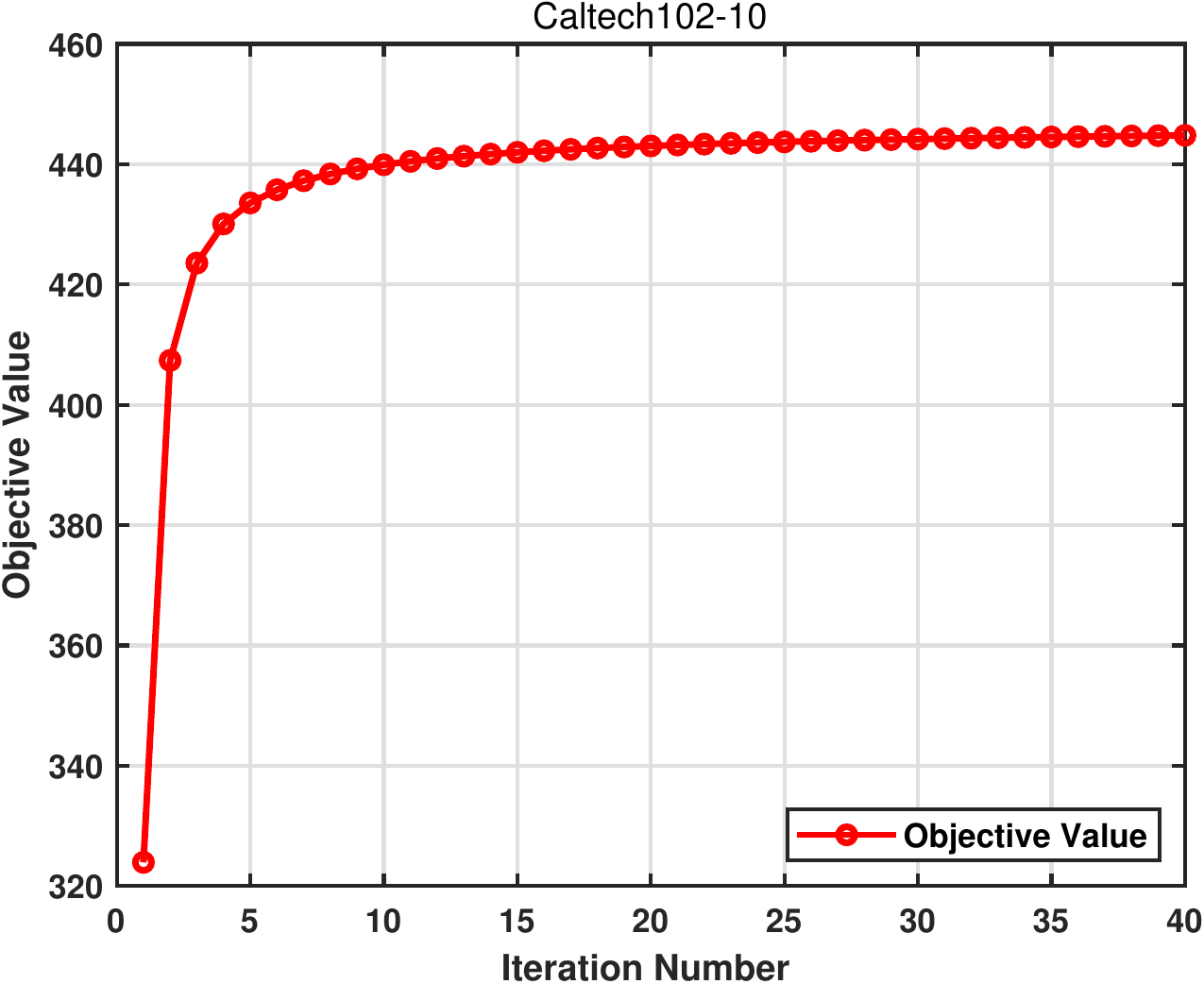}\label{flower17_paraMRLambdaACC}}
\subfloat[Caltech102-15]{\includegraphics[width = 0.225\textwidth]{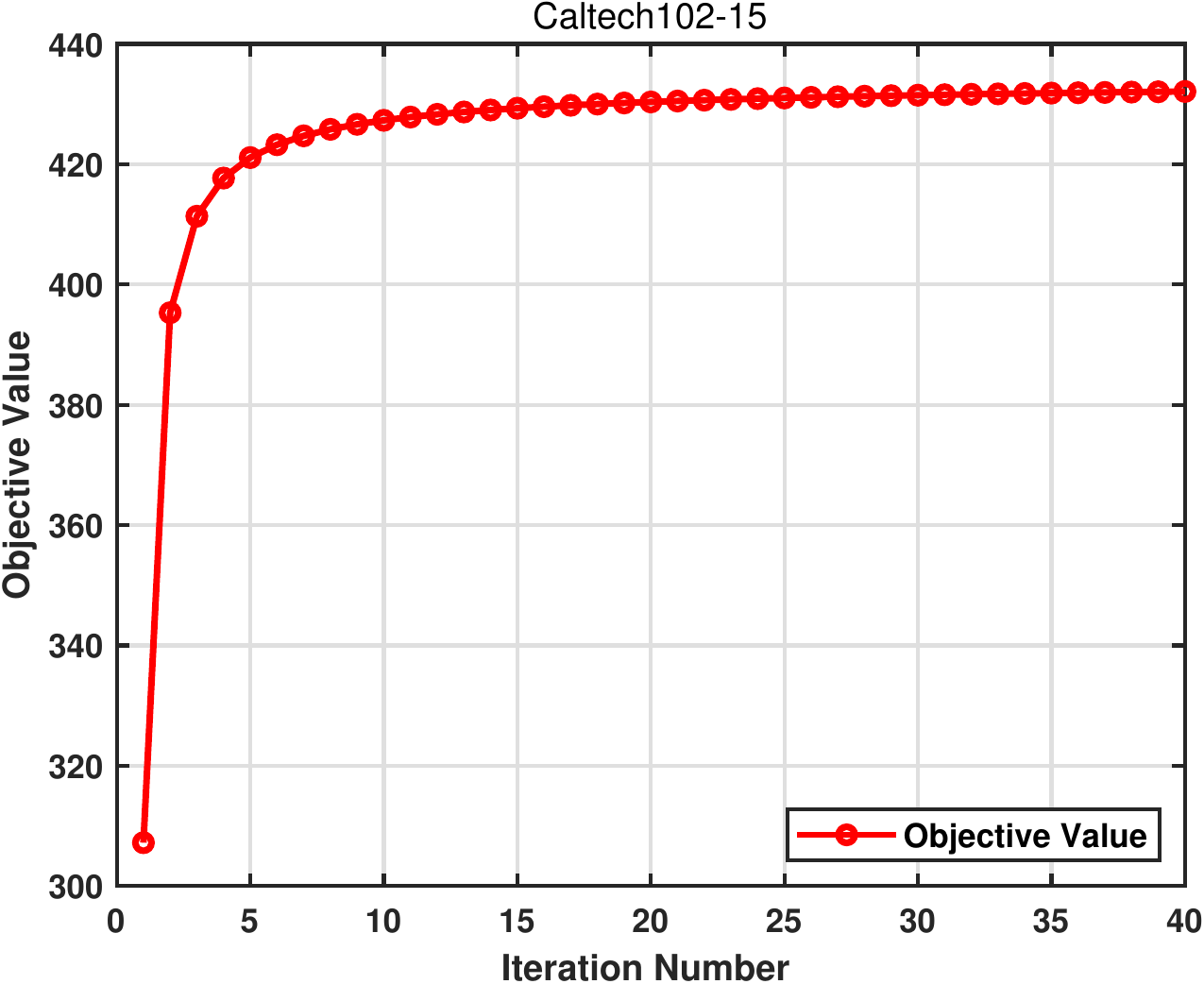}\label{flower17_paraMRLambdaNMI}}
\subfloat[Caltech10-20]{\includegraphics[width = 0.225\textwidth]{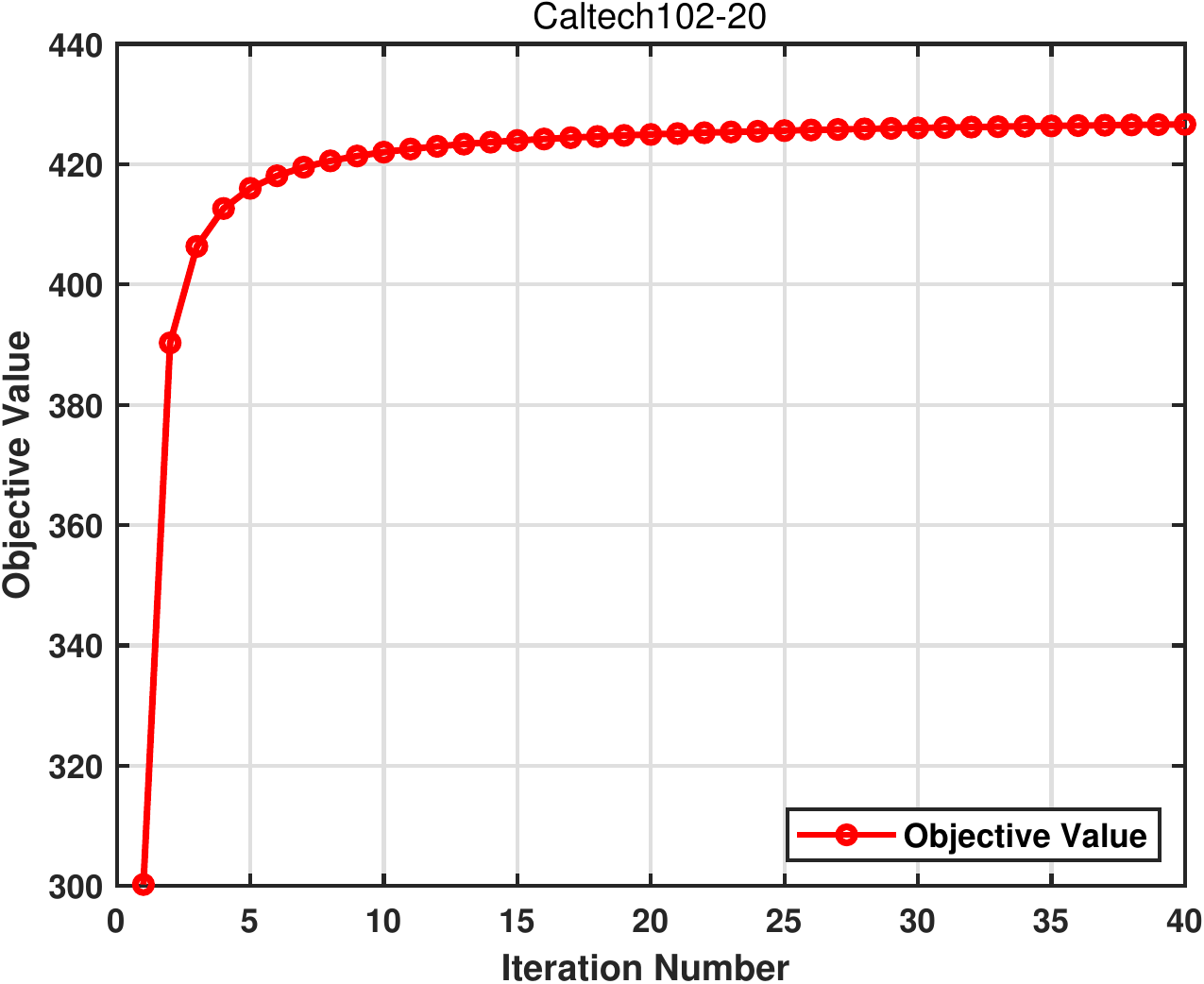}\label{flower17_paraMRLambdaNMI}}\\
\caption{{The convergence of the proposed LF-MVC-GAM on four Caltech102 datasets. Other datasets' results are omitted due to space limit and are provided in supplementary materials.}}\label{MVC-LFA convergence}
\end{figure*}

\begin{figure*}[!htbp]
\centering
\subfloat[Caltech102-5]{\includegraphics[width = 0.225\textwidth]{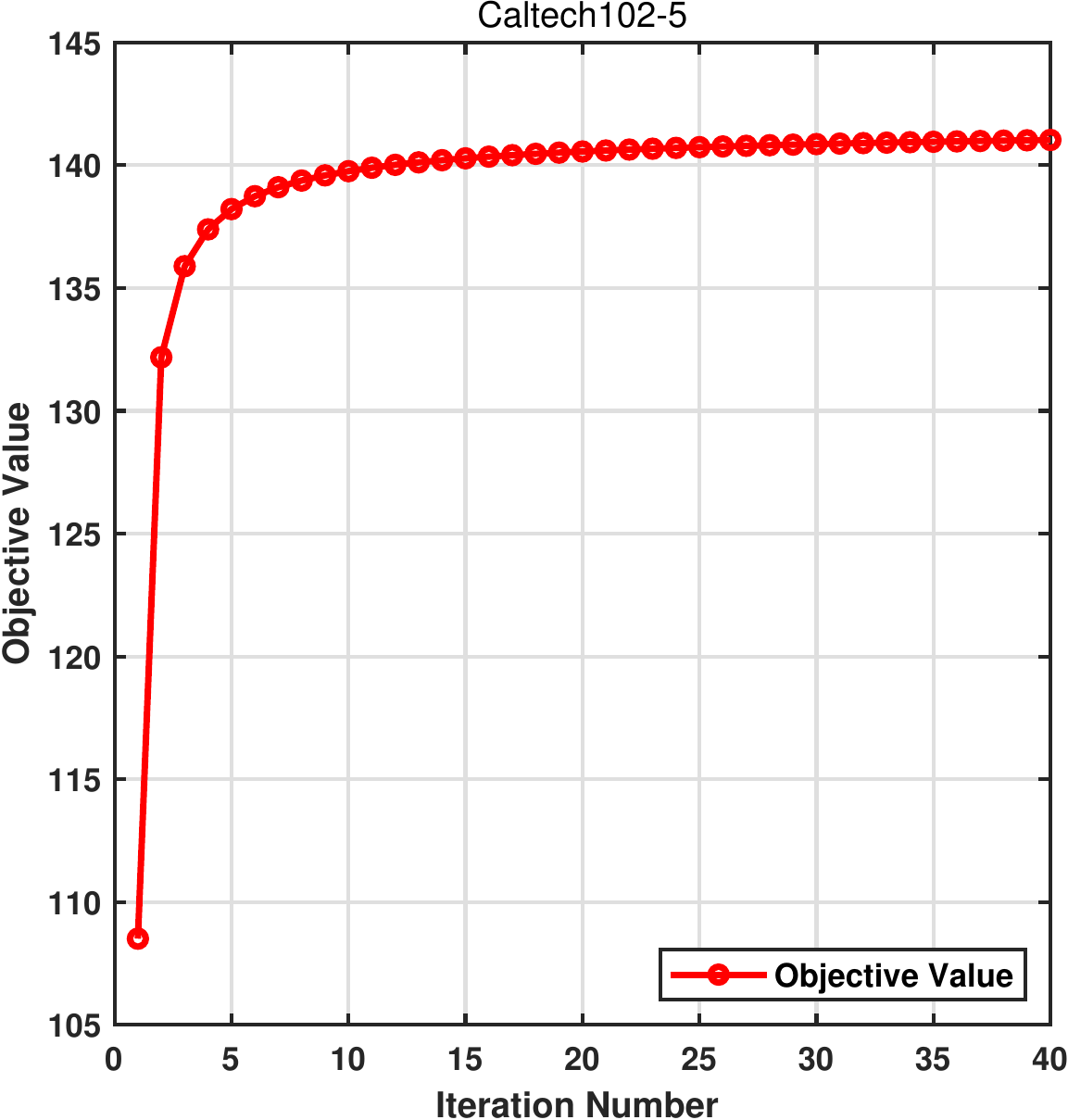}\label{flower17_paraMRLambdaACC}}
\subfloat[Caltech102-10]{\includegraphics[width = 0.225\textwidth]{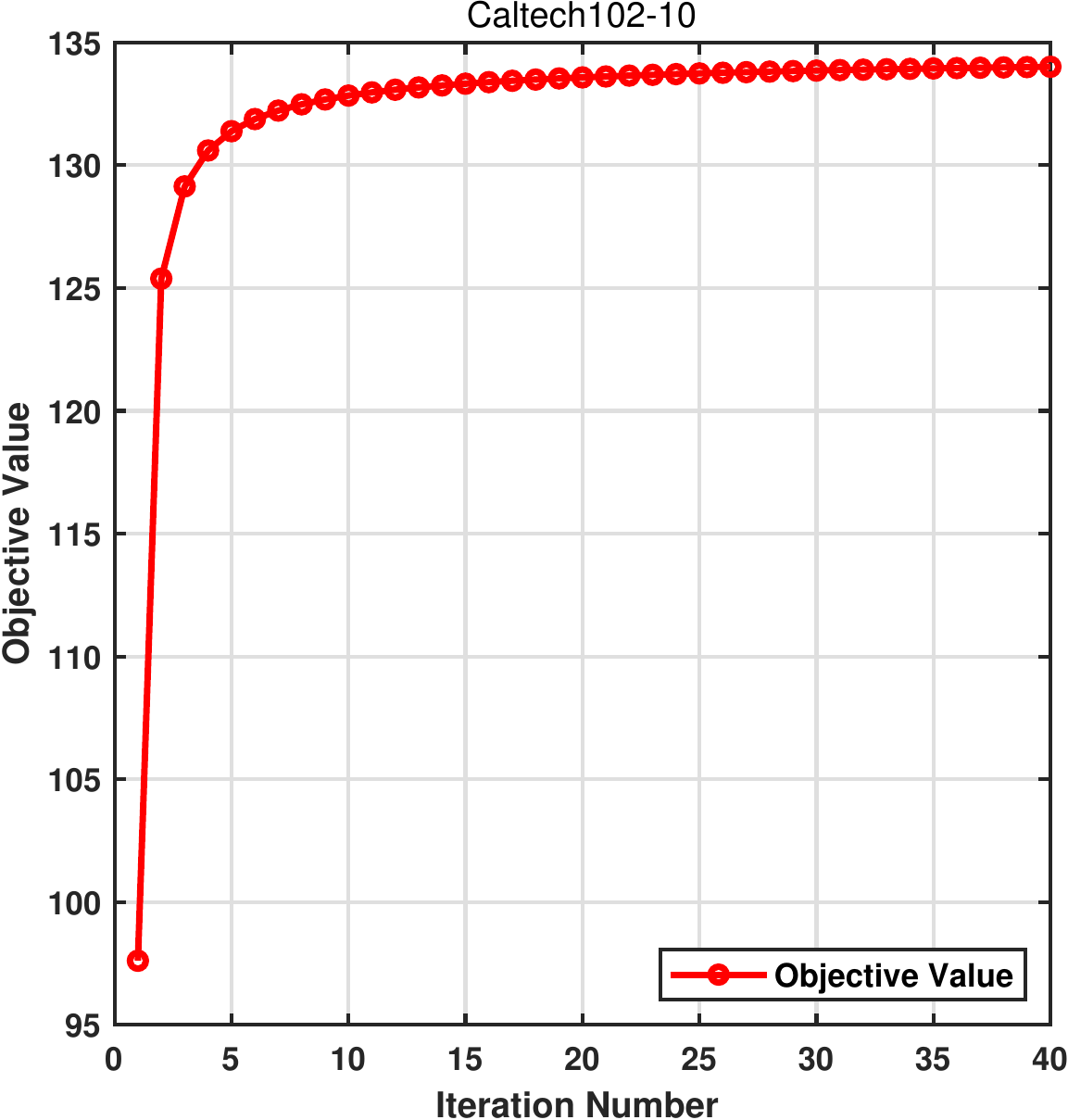}\label{flower17_paraMRLambdaACC}}
\subfloat[Caltech102-15]{\includegraphics[width = 0.225\textwidth]{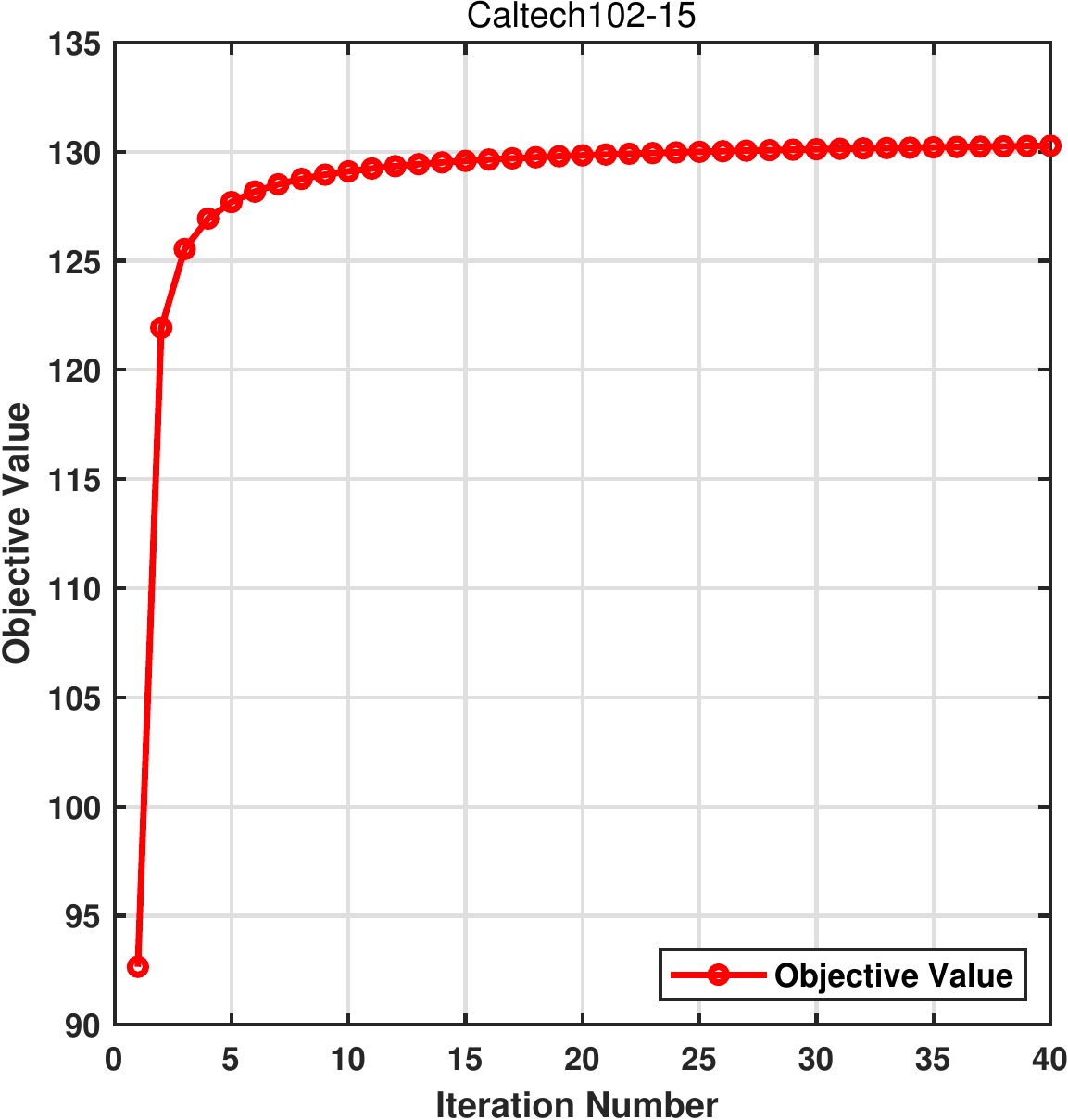}\label{flower17_paraMRLambdaNMI}}
\subfloat[Caltech10-20]{\includegraphics[width = 0.225\textwidth]{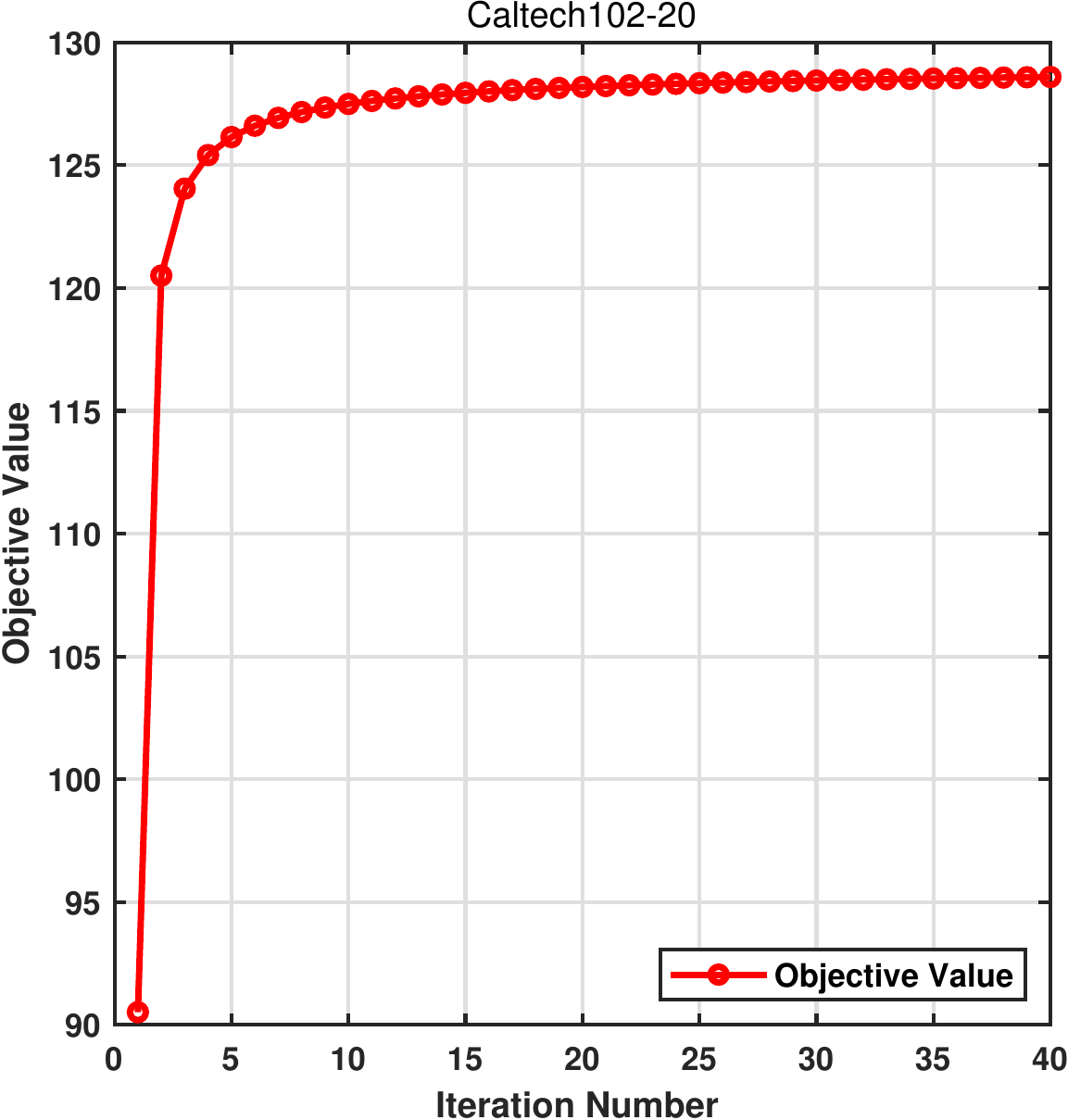}\label{flower17_paraMRLambdaNMI}}\\
\caption{{The convergence of the proposed LF-MVC-LAM on four Caltech102 datasets. Other datasets' results are omitted due to space limit and are provided in supplementary materials.}}\label{LMVC-LFA convergence}
\end{figure*}

\subsection{Convergence of the Proposed Algorithms}

Our algorithms are theoretically guaranteed to converge to a local minimum according to \cite{Bezdek2003}.  The examples of the evolution of the objective value on the experimental results are shown in Figures \ref{MVC-LFA convergence} and \ref{LMVC-LFA convergence}. In the above experiments, we observe that the objective value of our algorithms do monotonically increase at each iteration and that they usually converge in less than $15$ iterations. These results clearly verifies our proposed algorithms' convergences.

\section{Conclusion}\label{conclusion}

In this article, we propose two simple but effective methods LF-MVC-GAM and LF-MVC-LAM, which maximize the alignment between the consensus partition matrix and the weighted base partition matrices with orthogonal transformation and its local version. We theoretically uncover the connection between the existing $k$-means and the proposed alignment. Based on this, we derive a simple novel optimization framework for multi-view clustering, which significantly reduces the computational complexity and simplifies the optimization procedure. Two iterative algorithms are developed to efficiently solve the resultant optimization. By the virtue of it, the proposed algorithms show clearly superior clustering performance with significant reduction in computational cost on benchmark datasets, in comparison with state-of-the-art methods. In the future, we will explore to capture the noises or low-quality partition existing in basic partitions and further improve the fusion method.

\section*{Acknowledgments}
This work was supported by the National Key R\&D Program of China 2020AAA0107100, the Natural Science Foundation of China (project no. 61922088, 61773392 and 61976196), and Outstanding Talents of  ``Ten Thousand Talents Plan'' in Zhejiang Province (project no. 2018R51001).

\bibliographystyle{unsrt}  
\bibliography{references}  

\begin{thebibliography}{10}

\bibitem{zhan2018multiview}
Kun Zhan, Feiping Nie, Jing Wang, and Yi~Yang.
\newblock Multiview consensus graph clustering.
\newblock {\em IEEE Transactions on Image Processing}, 28(3):1261--1270, 2018.

\bibitem{nie2017auto}
Feiping Nie, Guohao Cai, Jing Li, and Xuelong Li.
\newblock Auto-weighted multi-view learning for image clustering and
  semi-supervised classification.
\newblock {\em IEEE Transactions on Image Processing}, 27(3):1501--1511, 2017.

\bibitem{liang2020multi}
Weixuan Liang, Sihang Zhou, Jian Xiong, Xinwang Liu, Siwei Wang, En~Zhu,
  Zhiping Cai, and Xin Xu.
\newblock Multi-view spectral clustering with high-order optimal neighborhood
  laplacian matrix.
\newblock {\em IEEE Transactions on Knowledge and Data Engineering}, 2020.

\bibitem{liu2020optimal}
Jiyuan Liu, Xinwang Liu, Jian Xiong, Qing Liao, Sihang Zhou, Siwei Wang, and
  Yuexiang Yang.
\newblock Optimal neighborhood multiple kernel clustering with adaptive local
  kernels.
\newblock {\em IEEE Transactions on Knowledge and Data Engineering}, 2020.

\bibitem{peng2018structured}
Xi~Peng, Jiashi Feng, Shijie Xiao, Wei-Yun Yau, Joey~Tianyi Zhou, and Songfan
  Yang.
\newblock Structured autoencoders for subspace clustering.
\newblock {\em IEEE Transactions on Image Processing}, 27(10):5076--5086, 2018.

\bibitem{gonen2008localized}
Mehmet G{\"o}nen and Ethem Alpaydin.
\newblock Localized multiple kernel learning.
\newblock In {\em Proceedings of the 25th international conference on Machine
  learning}, pages 352--359. ACM, 2008.

\bibitem{chaudhuri2009multi}
Kamalika Chaudhuri, Sham~M Kakade, Karen Livescu, and Karthik Sridharan.
\newblock Multi-view clustering via canonical correlation analysis.
\newblock In {\em Proceedings of the 26th annual international conference on
  machine learning}, pages 129--136, 2009.

\bibitem{deng2020multi}
Zhaohong Deng, Ruixiu Liu, Peng Xu, Kup-Sze Choi, Wei Zhang, Xiaobin Tian,
  Te~Zhang, Ling Liang, Bin Qin, and Shitong Wang.
\newblock Multi-view clustering with the cooperation of visible and hidden
  views.
\newblock {\em IEEE Transactions on Knowledge and Data Engineering}, 2020.

\bibitem{han2020multi}
Junwei Han, Jinglin Xu, Feiping Nie, and Xuelong Li.
\newblock Multi-view k-means clustering with adaptive sparse memberships and
  weight allocation.
\newblock {\em IEEE Transactions on Knowledge and Data Engineering}, 2020.

\bibitem{huang2012multiple}
Hsin-Chien Huang, Yung-Yu Chuang, and Chu-Song Chen.
\newblock Multiple kernel fuzzy clustering.
\newblock {\em IEEE Transactions on Fuzzy Systems}, 20(1):120--134, 2012.

\bibitem{liu2013efficient}
Xinwang Liu, Lei Wang, Jianping Yin, En~Zhu, and Jian Zhang.
\newblock An efficient approach to integrating radius information into multiple
  kernel learning.
\newblock {\em IEEE transactions on cybernetics}, 43(2):557--569, 2013.

\bibitem{gonen2014localized}
Mehmet G{\"o}nen and Adam~A Margolin.
\newblock Localized data fusion for kernel k-means clustering with application
  to cancer biology.
\newblock In {\em Advances in Neural Information Processing Systems}, pages
  1305--1313, 2014.

\bibitem{Liu2016Multiple}
Xinwang Liu, Yong Dou, Jianping Yin, Lei Wang, and En~Zhu.
\newblock Multiple kernel k -means clustering with matrix-induced
  regularization.
\newblock In {\em Thirtieth AAAI Conference on Artificial Intelligence}, pages
  1888--1894, 2016.

\bibitem{Li2016Multiple}
Miaomiao Li, Xinwang Liu, Lei Wang, Yong Dou, Jianping Yin, and En~Zhu.
\newblock Multiple kernel clustering with local kernel alignment maximization.
\newblock In {\em International Joint Conference on Artificial Intelligence},
  pages 1704--1710, 2016.

\bibitem{li2020multi}
Xuelong Li, Han Zhang, Rong Wang, and Feiping Nie.
\newblock Multi-view clustering: A scalable and parameter-free bipartite graph
  fusion method.
\newblock {\em IEEE Transactions on Pattern Analysis and Machine Intelligence},
  2020.

\bibitem{liu2020multi}
Bao-Yu Liu, Ling Huang, Chang-Dong Wang, Jian-Huang Lai, and Philip Yu.
\newblock Multi-view consensus proximity learning for clustering.
\newblock {\em IEEE Transactions on Knowledge and Data Engineering}, 2020.

\bibitem{liu2017optimal}
Xinwang Liu, Sihang Zhou, Yueqing Wang, Miaomiao Li, Yong Dou, En~Zhu, and
  Jianping Yin.
\newblock Optimal neighborhood kernel clustering with multiple kernels.
\newblock In {\em Thirty-First AAAI Conference on Artificial Intelligence},
  2017.

\bibitem{liu2018late}
Xinwang Liu, Xinzhong Zhu, Miaomiao Li, Lei Wang, Chang Tang, Jianping Yin,
  Dinggang Shen, Huaimin Wang, and Wen Gao.
\newblock Late fusion incomplete multi-view clustering.
\newblock {\em IEEE transactions on pattern analysis and machine intelligence},
  2018.

\bibitem{zhang2018binary}
Zheng Zhang, Li~Liu, Fumin Shen, Heng~Tao Shen, and Ling Shao.
\newblock Binary multi-view clustering.
\newblock {\em IEEE transactions on pattern analysis and machine intelligence},
  41(7):1774--1782, 2018.

\bibitem{wang2015multi}
Chang-Dong Wang, Jian-Huang Lai, and S~Yu Philip.
\newblock Multi-view clustering based on belief propagation.
\newblock {\em IEEE Transactions on Knowledge and Data Engineering},
  28(4):1007--1021, 2015.

\bibitem{zhang2020consensus}
Pei Zhang, Xinwang Liu, Jian Xiong, Sihang Zhou, Wentao Zhao, En~Zhu, and
  Zhiping Cai.
\newblock Consensus one-step multi-view subspace clustering.
\newblock {\em IEEE Transactions on Knowledge and Data Engineering}, 2020.

\bibitem{wu2014k}
Junjie Wu, Hongfu Liu, Hui Xiong, Jie Cao, and Jian Chen.
\newblock K-means-based consensus clustering: A unified view.
\newblock {\em IEEE transactions on knowledge and data engineering},
  27(1):155--169, 2014.

\bibitem{blum1998combining}
Avrim Blum and Tom Mitchell.
\newblock Combining labeled and unlabeled data with co-training.
\newblock In {\em Proceedings of the eleventh annual conference on
  Computational learning theory}, pages 92--100. ACM, 1998.

\bibitem{kumar2011co}
Abhishek Kumar, Piyush Rai, and Hal Daume.
\newblock Co-regularized multi-view spectral clustering.
\newblock In {\em Advances in neural information processing systems}, pages
  1413--1421, 2011.

\bibitem{yang2014information}
Pei Yang and Wei Gao.
\newblock Information-theoretic multi-view domain adaptation: A theoretical and
  empirical study.
\newblock {\em Journal of Artificial Intelligence Research}, 49:501--525, 2014.

\bibitem{tao2017ensemble}
Zhiqiang Tao, Hongfu Liu, Sheng Li, Zhengming Ding, and Yun Fu.
\newblock From ensemble clustering to multi-view clustering.
\newblock In {\em IJCAI}, 2017.

\bibitem{yao2017revisiting}
Xiwen Yao, Junwei Han, Dingwen Zhang, and Feiping Nie.
\newblock Revisiting co-saliency detection: A novel approach based on two-stage
  multi-view spectral rotation co-clustering.
\newblock {\em IEEE Transactions on Image Processing}, 26(7):3196--3209, 2017.

\bibitem{zhang2017robust}
Zhao Zhang, Fanzhang Li, Mingbo Zhao, Li~Zhang, and Shuicheng Yan.
\newblock Robust neighborhood preserving projection by nuclear/l2, 1-norm
  regularization for image feature extraction.
\newblock {\em IEEE Transactions on Image Processing}, 26(4):1607--1622, 2017.

\bibitem{liu2013multi}
Jialu Liu, Chi Wang, Jing Gao, and Jiawei Han.
\newblock Multi-view clustering via joint nonnegative matrix factorization.
\newblock In {\em Proceedings of the 2013 SIAM International Conference on Data
  Mining}, pages 252--260. SIAM, 2013.

\bibitem{cao2015diversity}
Xiaochun Cao, Changqing Zhang, Huazhu Fu, Si~Liu, and Hua Zhang.
\newblock Diversity-induced multi-view subspace clustering.
\newblock In {\em Proceedings of the IEEE conference on computer vision and
  pattern recognition}, pages 586--594, 2015.

\bibitem{cai2013multi}
Xiao Cai, Feiping Nie, and Heng Huang.
\newblock Multi-view k-means clustering on big data.
\newblock In {\em Twenty-Third International Joint conference on artificial
  intelligence}, 2013.

\bibitem{zhao2017multi}
Handong Zhao, Zhengming Ding, and Yun Fu.
\newblock Multi-view clustering via deep matrix factorization.
\newblock In {\em Thirty-First AAAI Conference on Artificial Intelligence},
  2017.

\bibitem{xu2015multi}
Chang Xu, Dacheng Tao, and Chao Xu.
\newblock Multi-view self-paced learning for clustering.
\newblock In {\em Twenty-Fourth International Joint Conference on Artificial
  Intelligence}, 2015.

\bibitem{wang2017exclusivity}
Xiaobo Wang, Xiaojie Guo, Zhen Lei, Changqing Zhang, and Stan~Z Li.
\newblock Exclusivity-consistency regularized multi-view subspace clustering.
\newblock In {\em Proceedings of the IEEE Conference on Computer Vision and
  Pattern Recognition}, pages 923--931, 2017.

\bibitem{zhu2018one}
Xiaofeng Zhu, Shichao Zhang, Wei He, Rongyao Hu, Cong Lei, and Pengfei Zhu.
\newblock One-step multi-view spectral clustering.
\newblock {\em IEEE Transactions on Knowledge and Data Engineering},
  31(10):2022--2034, 2018.

\bibitem{wang2019gmc}
Hao Wang, Yan Yang, and Bing Liu.
\newblock Gmc: Graph-based multi-view clustering.
\newblock {\em IEEE Transactions on Knowledge and Data Engineering}, 2019.

\bibitem{zhang2016joint}
Zhao Zhang, Fanzhang Li, Mingbo Zhao, Li~Zhang, and Shuicheng Yan.
\newblock Joint low-rank and sparse principal feature coding for enhanced
  robust representation and visual classification.
\newblock {\em IEEE Transactions on Image Processing}, 25(6):2429--2443, 2016.

\bibitem{sun2021projective}
Mengjing Sun, Siwei Wang, Pei Zhang, Xinwang Liu, Sihang Zhou, Xifeng Guo, and
  En~Zhu.
\newblock Projective multiple kernel subspace clustering.
\newblock {\em IEEE Transactions on Multimedia}, 2021.

\bibitem{DBLP:conf/ijcai/NieLL17}
Feiping Nie, Jing Li, and Xuelong Li.
\newblock Self-weighted multiview clustering with multiple graphs.
\newblock In {\em Proceedings of the Twenty-Sixth International Joint
  Conference on Artificial Intelligence, {IJCAI} 2017, Melbourne, Australia,
  August 19-25, 2017}, pages 2564--2570. ijcai.org, 2017.

\bibitem{2021Fast}
S.~Shi, F.~Nie, R.~Wang, and X.~Li.
\newblock Fast multi-view clustering via prototype graph.
\newblock {\em IEEE Transactions on Knowledge and Data Engineering},
  PP(99):1--1, 2021.

\bibitem{yu2012optimized}
Shi Yu, Leon Tranchevent, Xinhai Liu, Wolfgang Glanzel, Johan~AK Suykens, Bart
  De~Moor, and Yves Moreau.
\newblock Optimized data fusion for kernel k-means clustering.
\newblock {\em IEEE Transactions on Pattern Analysis and Machine Intelligence},
  34(5):1031--1039, 2012.

\bibitem{liu2019multiple}
X.~{Liu}, X.~{Zhu}, M.~{Li}, L.~{Wang}, E.~{Zhu}, T.~{Liu}, M.~{Kloft},
  D.~{Shen}, J.~{Yin}, and W.~{Gao}.
\newblock Multiple kernel k-means with incomplete kernels.
\newblock {\em IEEE Transactions on Pattern Analysis and Machine Intelligence},
  pages 1--14, 2019.

\bibitem{liu2019robust}
Guangcan Liu, Zhao Zhang, Qingshan Liu, and Hongkai Xiong.
\newblock Robust subspace clustering with compressed data.
\newblock {\em IEEE Transactions on Image Processing}, 28(10):5161--5170, 2019.

\bibitem{yin2018subspace}
Ming Yin, Shengli Xie, Zongze Wu, Yun Zhang, and Junbin Gao.
\newblock Subspace clustering via learning an adaptive low-rank graph.
\newblock {\em IEEE Transactions on Image Processing}, 27(8):3716--3728, 2018.

\bibitem{xu2017re}
Jinglin Xu, Junwei Han, Feiping Nie, and Xuelong Li.
\newblock Re-weighted discriminatively embedded $ k $-means for multi-view
  clustering.
\newblock {\em IEEE Transactions on Image Processing}, 26(6):3016--3027, 2017.

\bibitem{chen2007nonlinear}
Jianhui Chen, Zheng Zhao, Jieping Ye, and Huan Liu.
\newblock Nonlinear adaptive distance metric learning for clustering.
\newblock In {\em Proceedings of the 13th ACM SIGKDD international conference
  on Knowledge discovery and data mining}, pages 123--132. ACM, 2007.

\bibitem{wang2019mvclfa}
Siwei Wang, Xinwang Liu, En~Zhu, Chang Tang, Jiyuan Liu, Jingtao Hu, Jingyuan
  Xia, and Jianping Yin.
\newblock Multi-view clustering via late fusion alignment maximization.
\newblock In {\em Proceedings of the Twenty-Eighth International Joint
  Conference on Artificial Intelligence}, pages 3778--3784, 2019.

\bibitem{boutsidis2015randomized}
Christos Boutsidis, Anastasios Zouzias, Michael~W Mahoney, and Petros Drineas.
\newblock Randomized dimensionality reduction for $ k $-means clustering.
\newblock {\em IEEE Transactions on Information Theory}, 61(2):1045--1062,
  2015.

\bibitem{DBLP:journals/tit/Lloyd82}
Stuart~P. Lloyd.
\newblock Least squares quantization in {PCM}.
\newblock {\em {IEEE} Trans. Inf. Theory}, 28(2):129--136, 1982.

\bibitem{li2015large}
Yeqing Li, Feiping Nie, Heng Huang, and Junzhou Huang.
\newblock Large-scale multi-view spectral clustering via bipartite graph.
\newblock In {\em Twenty-Ninth AAAI Conference on Artificial Intelligence},
  2015.

\bibitem{wang2015robust}
Yang Wang, Xuemin Lin, Lin Wu, Wenjie Zhang, Qing Zhang, and Xiaodi Huang.
\newblock Robust subspace clustering for multi-view data by exploiting
  correlation consensus.
\newblock {\em IEEE Transactions on Image Processing}, 24(11):3939--3949, 2015.

\bibitem{wang2018multiview}
Yang Wang, Lin Wu, Xuemin Lin, and Junbin Gao.
\newblock Multiview spectral clustering via structured low-rank matrix
  factorization.
\newblock {\em IEEE transactions on neural networks and learning systems},
  29(10):4833--4843, 2018.

\bibitem{Guo2013}
Yuhong Guo.
\newblock {Convex subspace representation learning from multi-view data}.
\newblock {\em Proceedings of the 27th AAAI Conference on Artificial
  Intelligence, AAAI 2013}, pages 387--393, 2013.

\bibitem{xia2014robust}
Rongkai Xia, Yan Pan, Lei Du, and Jian Yin.
\newblock Robust multi-view spectral clustering via low-rank and sparse
  decomposition.
\newblock In {\em Twenty-Eighth AAAI Conference on Artificial Intelligence},
  2014.

\bibitem{huang2016ensemble}
Dong Huang, Jianhuang Lai, and Chang-Dong Wang.
\newblock Ensemble clustering using factor graph.
\newblock {\em Pattern Recognition}, 50:131--142, 2016.

\bibitem{liu2020efficient}
X~Liu, M~Li, C~Tang, J~Xia, J~Xiong, L~Liu, M~Kloft, and E~Zhu.
\newblock Efficient and effective regularized incomplete multi-view clustering.
\newblock {\em IEEE transactions on pattern analysis and machine intelligence},
  2020.

\bibitem{JegelkaGSSL09}
Stefanie Jegelka, Arthur Gretton, Bernhard Sch{\"{o}}lkopf, Bharath~K.
  Sriperumbudur, and Ulrike von Luxburg.
\newblock Generalized clustering via kernel embeddings.
\newblock In {\em {KI} 2009: Advances in Artificial Intelligence, 32nd Annual
  German Conference on AI}, pages 144--152, 2009.

\bibitem{scholkopf1998nonlinear}
Bernhard Sch{\"o}lkopf, Alexander Smola, and Klaus-Robert M{\"u}ller.
\newblock Nonlinear component analysis as a kernel eigenvalue problem.
\newblock {\em Neural computation}, 10(5):1299--1319, 1998.

\bibitem{DBLP:books/sp/HastieFT01}
Trevor Hastie, Jerome~H. Friedman, and Robert Tibshirani.
\newblock {\em The Elements of Statistical Learning: Data Mining, Inference,
  and Prediction}.
\newblock Springer Series in Statistics. Springer, 2001.

\bibitem{DBLP:conf/kdd/NieTL18}
Feiping Nie, Lai Tian, and Xuelong Li.
\newblock Multiview clustering via adaptively weighted procrustes.
\newblock In Yike Guo and Faisal Farooq, editors, {\em Proceedings of the 24th
  {ACM} {SIGKDD} International Conference on Knowledge Discovery {\&} Data
  Mining, {KDD} 2018, London, UK, August 19-23, 2018}, pages 2022--2030. {ACM},
  2018.

\bibitem{DBLP:journals/npsc/BezdekH03}
James~C. Bezdek and Richard~J. Hathaway.
\newblock Convergence of alternating optimization.
\newblock {\em Neural Parallel Sci. Comput.}, 11(4):351--368, 2003.

\bibitem{maurer2008generalization}
Andreas Maurer and Massimiliano Pontil.
\newblock Generalization bounds for k-dimensional coding schemes in hilbert
  spaces.
\newblock In {\em International Conference on Algorithmic Learning Theory},
  pages 79--91. Springer, 2008.

\bibitem{liu2016dimensionality}
Tongliang Liu, Dacheng Tao, and Dong Xu.
\newblock Dimensionality-dependent generalization bounds for k-dimensional
  coding schemes.
\newblock {\em Neural computation}, 28(10):2213--2249, 2016.

\bibitem{DBLP:journals/prl/Jain10}
Anil~K. Jain.
\newblock Data clustering: 50 years beyond k-means.
\newblock {\em Pattern Recognit. Lett.}, 31(8):651--666, 2010.

\bibitem{mohri2018foundations}
Mehryar Mohri, Afshin Rostamizadeh, and Ameet Talwalkar.
\newblock {\em Foundations of machine learning}.
\newblock MIT press, 2018.

\bibitem{du2015robust}
Liang Du, Peng Zhou, Lei Shi, Hanmo Wang, Mingyu Fan, Wenjian Wang, and Yi-Dong
  Shen.
\newblock Robust multiple kernel k-means using l21-norm.
\newblock In {\em Twenty-Fourth International Joint Conference on Artificial
  Intelligence}, 2015.

\bibitem{zhang2017latent}
Changqing Zhang, Qinghua Hu, Huazhu Fu, Pengfei Zhu, and Xiaochun Cao.
\newblock Latent multi-view subspace clustering.
\newblock In {\em Proceedings of the IEEE Conference on Computer Vision and
  Pattern Recognition}, pages 4279--4287, 2017.

\bibitem{zhan2017graph}
Kun Zhan, Changqing Zhang, Junpeng Guan, and Junsheng Wang.
\newblock Graph learning for multiview clustering.
\newblock {\em IEEE transactions on cybernetics}, 48(10):2887--2895, 2017.

\bibitem{chen2019multi}
Man-Sheng Chen, Ling Huang, Chang-Dong Wang, and Dong Huang.
\newblock Multi-view spectral clustering via multi-view weighted consensus and
  matrix-decomposition based discretization.
\newblock In {\em International Conference on Database Systems for Advanced
  Applications}, pages 175--190. Springer, 2019.

\bibitem{cortes2012algorithms}
Corinna Cortes, Mehryar Mohri, and Afshin Rostamizadeh.
\newblock Algorithms for learning kernels based on centered alignment.
\newblock {\em Journal of Machine Learning Research}, 13(Mar):795--828, 2012.

\bibitem{maaten2008visualizing}
Laurens van~der Maaten and Geoffrey Hinton.
\newblock Visualizing data using t-sne.
\newblock {\em Journal of machine learning research}, 9(Nov):2579--2605, 2008.

\bibitem{Bezdek2003}
James~C. Bezdek and Richard~J. Hathaway.
\newblock Convergence of alternating optimization.
\newblock {\em Neural, Parallel Sci. Comput.}, 11(4):351--368, 2003.

\bibitem{DBLP:journals/ml/ChapelleVBM02}
Olivier Chapelle, Vladimir Vapnik, Olivier Bousquet, and Sayan Mukherjee.
\newblock Choosing multiple parameters for support vector machines.
\newblock {\em Mach. Learn.}, 46(1-3):131--159, 2002.

\bibitem{DBLP:books/daglib/0097035}
Vladimir Vapnik.
\newblock {\em Statistical learning theory}.
\newblock Wiley, 1998.

\bibitem{liu2017spectral}
Hongfu Liu, Junjie Wu, Tongliang Liu, Dacheng Tao, and Yun Fu.
\newblock Spectral ensemble clustering via weighted k-means: Theoretical and
  practical evidence.
\newblock {\em IEEE transactions on knowledge and data engineering},
  29(5):1129--1143, 2017.

\end{thebibliography}

\appendix
\section{Proof of Lemma 1}
\begin{lemma}\label{trace form1}
For any given matrix $\mathbf{P}\in \mathbb{R}^{k\times k}$, the inequality $\mathrm{Tr}^2 (\mathbf{P}) \leq k \cdot \mathrm{Tr}({\mathbf{P}}^{\top}\mathbf{P})$ always holds.
\end{lemma}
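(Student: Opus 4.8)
The plan is to reduce this matrix trace inequality to the classical Cauchy--Schwarz inequality applied to the diagonal entries of $\mathbf{P}$. First I would rewrite both sides entrywise: the left-hand side is $\mathrm{Tr}^2(\mathbf{P}) = \left(\sum_{i=1}^{k} P_{ii}\right)^2$, while the right-hand side uses $\mathrm{Tr}(\mathbf{P}^{\top}\mathbf{P}) = \sum_{i,j=1}^{k} P_{ij}^2$, i.e. the squared Frobenius norm of $\mathbf{P}$. Casting the problem this way makes clear that only the diagonal of $\mathbf{P}$ enters the quantity we are bounding from above, whereas the bounding quantity collects all entries.

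Next I would view the trace as an inner product. Writing $\mathbf{d} = (P_{11},\ldots,P_{kk})^{\top}$ for the vector of diagonal entries and $\mathbf{1}_k$ for the all-ones vector, we have $\mathrm{Tr}(\mathbf{P}) = \langle \mathbf{1}_k, \mathbf{d}\rangle$. Cauchy--Schwarz then yields $\langle \mathbf{1}_k, \mathbf{d}\rangle^2 \leq \|\mathbf{1}_k\|^2\,\|\mathbf{d}\|^2 = k\sum_{i=1}^{k} P_{ii}^2$, which already produces the factor $k$ that appears in the statement.

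Finally I would discard nothing but nonnegative terms: since every summand $P_{ij}^2$ is nonnegative and the diagonal sum is a sub-sum of the full entrywise sum, $\sum_{i=1}^{k} P_{ii}^2 \leq \sum_{i,j=1}^{k} P_{ij}^2 = \mathrm{Tr}(\mathbf{P}^{\top}\mathbf{P})$. Chaining this with the Cauchy--Schwarz bound gives $\mathrm{Tr}^2(\mathbf{P}) \leq k\sum_{i} P_{ii}^2 \leq k\cdot\mathrm{Tr}(\mathbf{P}^{\top}\mathbf{P})$, completing the argument.

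Honestly there is no substantive obstacle here; the only point requiring care is to keep track of the two separate inequalities and the conditions for tightness. It is worth remarking that equality holds precisely when $\mathbf{d}$ is proportional to $\mathbf{1}_k$ (all diagonal entries equal) and all off-diagonal entries vanish, i.e. exactly when $\mathbf{P}$ is a scalar multiple of the identity $\mathbf{I}_k$. This observation is not needed for the lemma but confirms the bound is the sharpest of its form and clarifies why the constant $k$ cannot be improved.
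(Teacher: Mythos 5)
Your proof is correct, and it takes a genuinely different route from the paper's. The paper argues through the singular values: it writes $\mathrm{Tr}^2(\mathbf{P}) = \left(\sum_{i=1}^{k}\sigma_i\right)^2$ and $\mathrm{Tr}(\mathbf{P}^{\top}\mathbf{P}) = \sum_{i=1}^{k}\sigma_i^2$, then applies Cauchy--Schwarz to the vector of singular values. You instead work entrywise: Cauchy--Schwarz on the diagonal vector against $\mathbf{1}_k$ gives $\mathrm{Tr}^2(\mathbf{P}) \leq k\sum_i P_{ii}^2$, and then you enlarge the sum of squared diagonal entries to the full squared Frobenius norm. Your route buys two things. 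First, it is more elementary --- no SVD is invoked at all. Second, it is actually more rigorous as stated: the paper's opening identity $\mathrm{Tr}(\mathbf{P}) = \sum_i \sigma_i$ is false for a general real matrix (take $\mathbf{P} = -\mathbf{I}_k$, whose trace is $-k$ while its singular values sum to $k$); the trace equals the sum of eigenvalues, not singular values. The paper's argument can be repaired by replacing the identity with the nuclear-norm bound $\absvert{\mathrm{Tr}(\mathbf{P})} \leq \sum_i \sigma_i$, after which the squared inequality goes through, but your diagonal-entry version avoids the issue entirely. Your closing remark on tightness --- equality iff $\mathbf{P}$ is a scalar multiple of $\mathbf{I}_k$ --- is also correct and is not present in the paper.
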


\begin{proof}
For a given matrix $\mathbf{P}\in \mathbb{R}^{k\times k}$, $\mathrm{Tr}^2 (\mathbf{P}) = (\sum_{i=1}^{k} \sigma_{i})^2,$ where $\sigma_{i}$ denotes the $i$-th singular value of $\mathbf{P}$. Further, $\mathrm{Tr}({\mathbf{P}}^{\top}\mathbf{P}) = \sum_{i=1}^{k} \sigma_{i}^2 $. According to Cauchy-Schwarz Inequality, we can obtain that $(\sum_{i=1}^{k} \sigma_{i})^2  \leq k\sum_{i=1}^{k} \sigma_{i}^2$. Therefore Lemma \ref{trace form1} is proved.
\end{proof}

\section{Proof of Lemma 2}

\begin{lemma}\label{trace form2}
$\mathrm{Tr}(\mathbf{B} \mathbf{B}^{\top}) \leq m^2 k$.
\end{lemma}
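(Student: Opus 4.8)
The plan is to pass to the Frobenius norm, using $\mathrm{Tr}(\mathbf{B}\mathbf{B}^{\top}) = \left\| \mathbf{B} \right\|_{\mathrm{F}}^2$, and to exploit that $\mathbf{B} = \sum_{p=1}^m \boldsymbol{\beta}_p \mathbf{H}_p \mathbf{W}_p$ is a nonnegative combination of matrices each having a fixed, easily computed norm. First I would record the per-summand identity: since the base partitions satisfy $\mathbf{H}_p^{\top}\mathbf{H}_p = \mathbf{I}_k$ and the permutation matrices satisfy $\mathbf{W}_p^{\top}\mathbf{W}_p = \mathbf{I}_k$,
\[
\left\| \mathbf{H}_p \mathbf{W}_p \right\|_{\mathrm{F}}^2 = \mathrm{Tr}(\mathbf{W}_p^{\top}\mathbf{H}_p^{\top}\mathbf{H}_p\mathbf{W}_p) = \mathrm{Tr}(\mathbf{W}_p^{\top}\mathbf{W}_p) = \mathrm{Tr}(\mathbf{I}_k) = k ,
\]
so that $\left\| \mathbf{H}_p \mathbf{W}_p \right\|_{\mathrm{F}} = \sqrt{k}$ for every $p$.

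Next I would apply the triangle inequality for the Frobenius norm to the defining sum, which is legitimate because $\boldsymbol{\beta}_p \geq 0$:
\[
\left\| \mathbf{B} \right\|_{\mathrm{F}} \leq \sum_{p=1}^m \boldsymbol{\beta}_p \left\| \mathbf{H}_p \mathbf{W}_p \right\|_{\mathrm{F}} = \sqrt{k}\sum_{p=1}^m \boldsymbol{\beta}_p .
\]
It then remains to bound $\sum_{p=1}^m \boldsymbol{\beta}_p$ using the constraints $\boldsymbol{\beta}_p \geq 0$ and $\left\| \boldsymbol{\beta} \right\|_2 = 1$: from $\boldsymbol{\beta}_p^2 \leq \sum_{q}\boldsymbol{\beta}_q^2 = 1$ we get $\boldsymbol{\beta}_p \leq 1$, hence $\sum_{p=1}^m \boldsymbol{\beta}_p \leq m$. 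Squaring the displayed inequality yields $\mathrm{Tr}(\mathbf{B}\mathbf{B}^{\top}) = \left\| \mathbf{B} \right\|_{\mathrm{F}}^2 \leq m^2 k$, which is the claim.

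The computations themselves are routine, so the genuine point to get right is not any single estimate but recognizing that $m^2 k$ is a deliberately loose target: a sharper Cauchy--Schwarz bound gives $\sum_{p}\boldsymbol{\beta}_p \leq \sqrt{m}\,\left\| \boldsymbol{\beta} \right\|_2 = \sqrt{m}$, which would improve the bound to $mk$. Since only $m^2 k$ is needed downstream in Theorem \ref{theorem1}, the crude estimate $\boldsymbol{\beta}_p \leq 1$ suffices and keeps the argument short; I would simply flag this rather than chase the sharp constant. As an alternative that wires in the previously stated Lemma \ref{proposition1}, one could instead expand $\mathrm{Tr}(\mathbf{B}\mathbf{B}^{\top}) = \sum_{p,q}\boldsymbol{\beta}_p\boldsymbol{\beta}_q\,\mathrm{Tr}(\mathbf{P}_{pq})$ with $\mathbf{P}_{pq} = \mathbf{W}_p^{\top}\mathbf{H}_p^{\top}\mathbf{H}_q\mathbf{W}_q \in \mathbb{R}^{k\times k}$, observe $\mathrm{Tr}(\mathbf{P}_{pq}^{\top}\mathbf{P}_{pq}) = \left\| \mathbf{H}_p^{\top}\mathbf{H}_q \right\|_{\mathrm{F}}^2 \leq k$ (orthonormal columns force each singular value to be at most one), apply Lemma \ref{proposition1} to each block to get $\mathrm{Tr}(\mathbf{P}_{pq}) \leq k$, and then sum over the $m^2$ pairs using $\boldsymbol{\beta}_p\boldsymbol{\beta}_q \leq 1$.
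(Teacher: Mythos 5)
Your proof is correct, and it reaches the same constant $m^2k$ by essentially the same counting (each view contributes Frobenius mass $k$, and there are effectively $m^2$ pairwise interactions), but the execution differs from the paper's in a way worth noting. The paper expands $\mathrm{Tr}(\mathbf{B}\mathbf{B}^{\top})$ into the double sum $\sum_{p,q}\boldsymbol{\beta}_p\boldsymbol{\beta}_q\,\mathrm{Tr}[(\mathbf{H}_p\mathbf{W}_p)^{\top}(\mathbf{H}_q\mathbf{W}_q)]$, first discards the coefficients via $\boldsymbol{\beta}_p\boldsymbol{\beta}_q\leq 1$, and then bounds each cross trace by $\frac{1}{2}\big(\mathrm{Tr}[(\mathbf{H}_p\mathbf{W}_p)^{\top}(\mathbf{H}_p\mathbf{W}_p)]+\mathrm{Tr}[(\mathbf{H}_q\mathbf{W}_q)^{\top}(\mathbf{H}_q\mathbf{W}_q)]\big)=k$. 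That first step silently assumes each cross trace is nonnegative (otherwise shrinking its coefficient toward $1$ need not increase the sum), which is not obvious once the $\mathbf{H}_p$ have been relaxed to general orthonormal-column matrices. Your route through $\lVert\mathbf{B}\rVert_{\mathrm{F}}\leq\sum_p\boldsymbol{\beta}_p\lVert\mathbf{H}_p\mathbf{W}_p\rVert_{\mathrm{F}}=\sqrt{k}\sum_p\boldsymbol{\beta}_p\leq m\sqrt{k}$ sidesteps this entirely, since the triangle inequality only ever manipulates nonnegative quantities; it is both shorter and strictly more rigorous. Your side remark that Cauchy--Schwarz sharpens the bound to $mk$ is also correct and is a genuine improvement over the stated constant, though, as you say, the looser $m^2k$ is all that Theorem \ref{theorem1} consumes. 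Your alternative block argument via Lemma \ref{proposition1} is fine as well and is the closest in spirit to what the paper actually writes.
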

\begin{proof}
\begin{equation}
\begin{split}
\mathrm{Tr}(\mathbf{B} \mathbf{B}^{\top}) & = \mathrm{Tr}( {\sum\nolimits_{p=1}^m {\boldsymbol{\beta}}_p \mathbf{H}_p \mathbf{W}_p)}^{\top}( \sum\nolimits_{q=1}^m {\boldsymbol{\beta}}_q \mathbf{H}_q \mathbf{W}_q)\leq \mathrm{Tr}[{(\sum\nolimits_{p=1}^m {\mathbf{H}_p}\mathbf{W}_p)}^{\top}(\sum\nolimits_{p=1}^m {\mathbf{H}_q} \mathbf{W}_q)] \\
& = \mathrm{Tr} (\sum\nolimits_{p,q=1}^m {(\mathbf{H}_p\mathbf{W}_p)}^{\top} (\mathbf{H}_q\mathbf{W}_q))   \leq \frac{m^2}{2} (\mathrm{Tr} [{(\mathbf{H}_p\mathbf{W}_p)}^{\top} {(\mathbf{H}_p\mathbf{W}_p)}]  + \mathrm{Tr} [{(\mathbf{H}_q\mathbf{W}_q)}^{\top} {(\mathbf{H}_q\mathbf{W}_q)}]) = m^2 k,
\end{split}
\end{equation}
 Therefore Lemma \ref{trace form2} is proved.
\end{proof}

\section{Gap between various optimization functions}

According to Theorem 1, $\mathrm{Tr}^2 ({\mathbf{F}}^{\top}\mathbf{B}) = (\sum_{i=1}^{k} \sigma_{i})^2  \leq k\sum_{i=1}^{k} \sigma_{i}^2 = \mathrm{Tr}({\mathbf{F}}^{\top} \mathbf{B} \mathbf{B}^{\top} \mathbf{F})$ where $\sigma_{i}$ denotes the $i$-th singular value of ${\mathbf{F}}^{\top}\mathbf{B}$. The equation holds when all the $\sigma_i$ are equal. Then, it is not difficult to verify that
	\begin{equation}
	    \underbrace{\mathrm{Tr}(\mathbf{B} \mathbf{B}^{\top}) - \mathrm{Tr} ( {\mathbf{F}}^{\top} \mathbf{B} \mathbf{B}^{\top} \mathbf{F})}_{\Large{\textcircled{\small{1}}}}  \leq  \underbrace{\mathrm{Tr}(\mathbf{B} \mathbf{B}^{\top}) - \frac{1}{k} \mathrm{Tr}^2 ({\mathbf{F}}^{\top}\mathbf{B})}_{\Large{\textcircled{\small{2}}}} \leq \underbrace{m^2k - \frac{1}{k} \mathrm{Tr}^2 ({\mathbf{F}}^{\top}\mathbf{B})}_{\Large{\textcircled{\small{3}}}}
	\end{equation}
\\
This optimization strategy has been widely applied in machine learning community \cite{DBLP:journals/ml/ChapelleVBM02,DBLP:books/daglib/0097035}. For example, instead of directly minimizing the infeasible number of errors of the leave-one-out procedure, the authors suggest to minimize the upper bound of relative errors \cite{DBLP:journals/ml/ChapelleVBM02}, leading to the widely known "radius-margin bound".
	\begin{equation}
	\label{bound}
	    \underbrace{\mathrm{Tr}(\mathbf{B} \mathbf{B}^{\top}) - \mathrm{Tr} ( {\mathbf{F}}^{\top} \mathbf{B} \mathbf{B}^{\top} \mathbf{F})}_{\Large{\textcircled{\small{1}}}}  \leq  \underbrace{\mathrm{Tr}(\mathbf{B} \mathbf{B}^{\top}) - \frac{1}{k} \mathrm{Tr}^2 ({\mathbf{F}}^{\top}\mathbf{B})}_{\Large{\textcircled{\small{2}}}} \leq \underbrace{m^2k - \frac{1}{k} \mathrm{Tr}^2 ({\mathbf{F}}^{\top}\mathbf{B})}_{\Large{\textcircled{\small{3}}}}
	\end{equation}
	
	Further, in our case,  $\Large{\textcircled{\small{1}}}$ is upper-bounded by $\Large{\textcircled{\small{3}}}$. We theoretically prove that the solution of $\Large{\textcircled{\small{3}}}$ is a global optimum for problem $\Large{\textcircled{\small{1}}}$, as stated in Lemma \ref{lemma1}. Also, it is not difficult to verify that the solutions of $\Large{\textcircled{\small{3}}}$ and $\Large{\textcircled{\small{1}}}$ share the same clustering structure, leading to the same clustering results.\\
	
	In addition, solving $\Large{\textcircled{\small{3}}}$ is proved with $\mathcal{O}(n)$ time complexity while $\Large{\textcircled{\small{1}}}$ is $\mathcal{O}(n^3)$, making it more efficient in large-scale data clustering tasks. Please check the last paragraph of left column in page four in the revised version.\\

	\begin{lemma}\label{lemma1}
	The optimal solution of $\mathbf{F}$ to minimize $\Large{\textcircled{\small{3}}}$ is also a solution of the global optimum  of $\Large{\textcircled{\small{1}}}$. 
	\end{lemma}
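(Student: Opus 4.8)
The plan is to show that the closed-form minimizer of $\Large{\textcircled{\small{3}}}$ produced by the Procrustes argument belongs to the family of global minimizers of $\Large{\textcircled{\small{1}}}$, so that it attains the same optimal value. First I would simplify $\Large{\textcircled{\small{3}}}$: since $m^2k$ is constant in $\mathbf{F}$, minimizing $\Large{\textcircled{\small{3}}}$ subject to the Stiefel constraint $\mathbf{F}^{\top}\mathbf{F}=\mathbf{I}_k$ is equivalent to maximizing $\mathrm{Tr}^2(\mathbf{F}^{\top}\mathbf{B})$. Using that the relevant singular values are nonnegative (Theorem \ref{Q-singular}), this reduces to maximizing the linear form $\mathrm{Tr}(\mathbf{F}^{\top}\mathbf{B})$, to which the closed-form Procrustes solution of Theorem \ref{optimization-H-theorem} applies directly. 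Writing the rank-$k$ truncated SVD as $\mathbf{B}=\mathbf{S}\boldsymbol{\Sigma}\mathbf{V}^{\top}$, this delivers the explicit minimizer $\mathbf{F}_{3}=\mathbf{S}_k\mathbf{V}_k^{\top}$, where $\mathbf{S}_k$ collects the $k$ leading left singular vectors of $\mathbf{B}$.

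Next I would characterize all global minimizers of $\Large{\textcircled{\small{1}}}$. Because $\mathrm{Tr}(\mathbf{B}\mathbf{B}^{\top})$ is constant in $\mathbf{F}$, minimizing $\Large{\textcircled{\small{1}}}$ amounts to maximizing $\mathrm{Tr}(\mathbf{F}^{\top}\mathbf{B}\mathbf{B}^{\top}\mathbf{F})$ under $\mathbf{F}^{\top}\mathbf{F}=\mathbf{I}_k$. By the standard Ky Fan trace-maximization result, the optimal value equals the sum of the $k$ largest eigenvalues of $\mathbf{B}\mathbf{B}^{\top}$, and the minimizers are exactly the orbit $\{\mathbf{S}_k\mathbf{R}:\mathbf{R}^{\top}\mathbf{R}=\mathbf{I}_k\}$, where I use the key fact that the leading eigenvectors of $\mathbf{B}\mathbf{B}^{\top}$ coincide with the leading left singular vectors $\mathbf{S}_k$ of $\mathbf{B}$. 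I would verify the orbit invariance directly: from $\mathbf{B}\mathbf{B}^{\top}\mathbf{S}_k=\mathbf{S}_k\boldsymbol{\Sigma}_k^2$, for any orthogonal $\mathbf{R}$ one has $\mathrm{Tr}((\mathbf{S}_k\mathbf{R})^{\top}\mathbf{B}\mathbf{B}^{\top}\mathbf{S}_k\mathbf{R})=\mathrm{Tr}(\mathbf{R}^{\top}\boldsymbol{\Sigma}_k^2\mathbf{R})=\mathrm{Tr}(\boldsymbol{\Sigma}_k^2)$, independent of $\mathbf{R}$ and equal to the maximal value.

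The conclusion then follows by matching the two: $\mathbf{V}_k^{\top}$ is a $k\times k$ orthogonal matrix, so taking $\mathbf{R}=\mathbf{V}_k^{\top}$ places $\mathbf{F}_{3}=\mathbf{S}_k\mathbf{V}_k^{\top}$ inside the orbit $\{\mathbf{S}_k\mathbf{R}\}$ and hence certifies it as a global minimizer of $\Large{\textcircled{\small{1}}}$; moreover $\mathbf{F}_3$ and any eigenvector solution share the same column space $\mathrm{span}(\mathbf{S}_k)$, so they induce identical labels under the Floyd step. I expect the main obstacle to be the rigorous orbit characterization of the minimizers of $\Large{\textcircled{\small{1}}}$ when $\mathbf{B}\mathbf{B}^{\top}$ has repeated or degenerate leading eigenvalues, where the leading eigenspace admits no unique orthonormal basis; this is also where the nonnegativity guarantee of Theorem \ref{Q-singular} is needed, to exclude the reflected solution and ensure that maximizing $\mathrm{Tr}(\mathbf{F}^{\top}\mathbf{B})$ rather than its square recovers the genuine optimizer.
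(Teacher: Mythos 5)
Your proposal is correct and follows essentially the same route as the paper's own proof: reduce $\Large{\textcircled{\small{3}}}$ to the Procrustes problem $\max\mathrm{Tr}(\mathbf{F}^{\top}\mathbf{B})$ with solution $\mathbf{S}_k\mathbf{V}_k^{\top}$, characterize the minimizers of $\Large{\textcircled{\small{1}}}$ as the leading eigenvectors of $\mathbf{B}\mathbf{B}^{\top}$ times an arbitrary orthogonal matrix, and identify the two via the coincidence of those eigenvectors with the leading left singular vectors of $\mathbf{B}$. Your write-up is actually more careful than the paper's, since you verify the orbit invariance explicitly and flag the degenerate-eigenvalue and sign issues that the paper passes over silently.
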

	
	\begin{proof}
	For $\Large{\textcircled{\small{3}}}$, we can obtain the optimum solution with SVD. Suppose that the matrix $\mathbf{B}$ in $\Large{\textcircled{\small{3}}}$ has the  rank-$k$ truncated singular value decomposition form as $\mathbf{B} = {\mathbf{S} \mathbf{\Sigma} \mathbf{V}^{\top}}$, where ${\mathbf{S}_{k}} \in \mathbb{R}^{n \times n},{\mathbf{\Sigma}_{k}} \in \mathbb{R}^{n \times k},{\mathbf{V}_{k}} \in \mathbb{R}^{k \times k}$. The optimization problem in $\Large{\textcircled{\small{3}}}$ has a closed-form solution that $\mathbf{F}_{3} = \mathbf{S}_{k} \mathbf{V}^{\top}$, where $\mathbf{S}_{k}$ denotes the $k$ left singular vector corresponding to $k$-largest singular values. The detailed deviation can found in our Theorem 2 in the manuscript. Further, we can easily obtain that the solution of $\Large{\textcircled{\small{1}}}$ is the product of the $k$-largest eigenvectors of $\mathbf{B} \mathbf{B}^{\top}$ and arbitrary rank-$k$ orthogonal square matrix. The the $k$-largest eigenvectors of $\mathbf{B} \mathbf{B}^{\top}$ is also the rank-$k$ left singular vector of $\mathbf{B}$. Therefore, it is not difficult to verify that the solution of $\Large{\textcircled{\small{3}}}$ can also reach the global optimum  of $\Large{\textcircled{\small{1}}}$. This completes the proof.\\
	\end{proof}

	To see this point in depth, we have also plotted the relative gap of \Large{\textcircled{\small{1}}} and \Large{\textcircled{\small{2}}} on the tested datasets in Figure. \ref{gap}. The red line is $\mathrm{Tr}(\mathbf{B} \mathbf{B}^{\top}) - \mathrm{Tr} ( {\mathbf{F}}^{\top} \mathbf{B} \mathbf{B}^{\top} \mathbf{F})$ (\Large{\textcircled{\small{1}}}) while the blue line represents $\mathrm{Tr}(\mathbf{B} \mathbf{B}^{\top}) - \frac{1}{k} \mathrm{Tr}^2 ({\mathbf{F}}^{\top}\mathbf{B})$(\Large{\textcircled{\small{2}}}). Form Figure. \ref{gap}, it is clearly observed that \Large{\textcircled{\small{2}}} is indeed the upper bound for the red line \Large{\textcircled{\small{1}}} and the gap is tight.\\
 	
\begin{figure*}[!htbp]
\centering
\subfloat[Flower17]{\includegraphics[width = 0.25\textwidth]{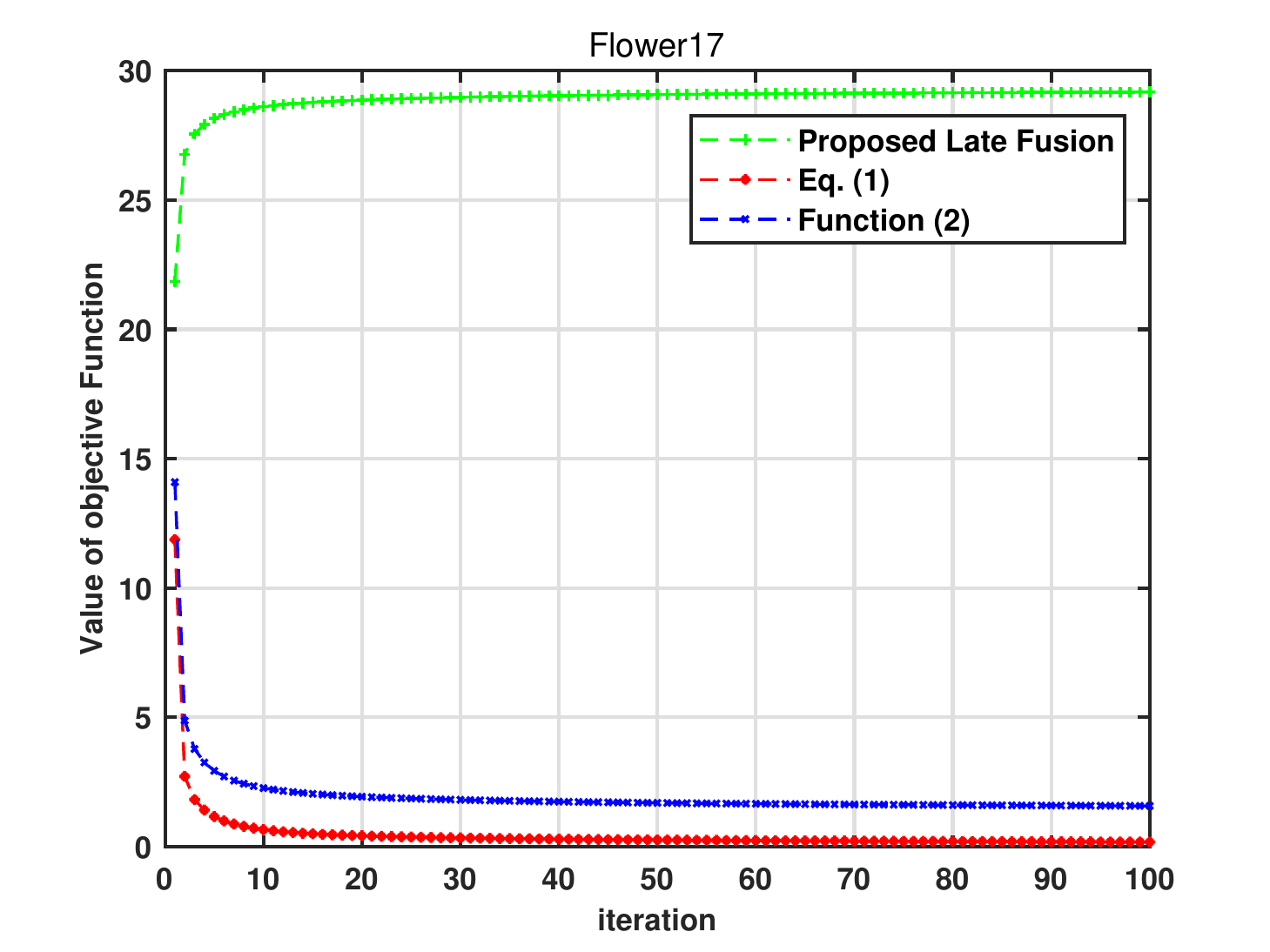}}
\subfloat[YALE]{\includegraphics[width =0.25\textwidth]{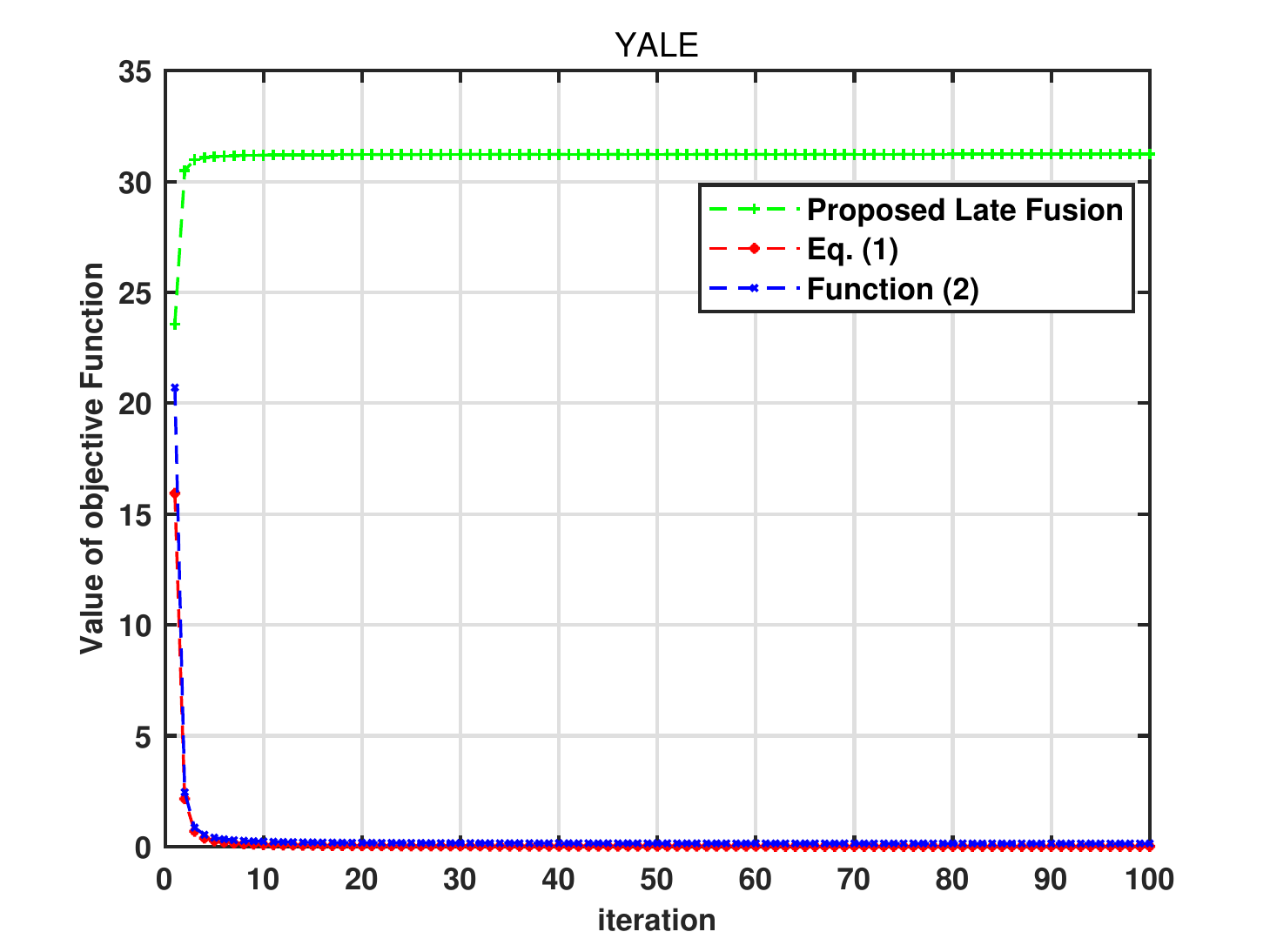}}
\subfloat[AR10P]{\includegraphics[width=0.25\textwidth]{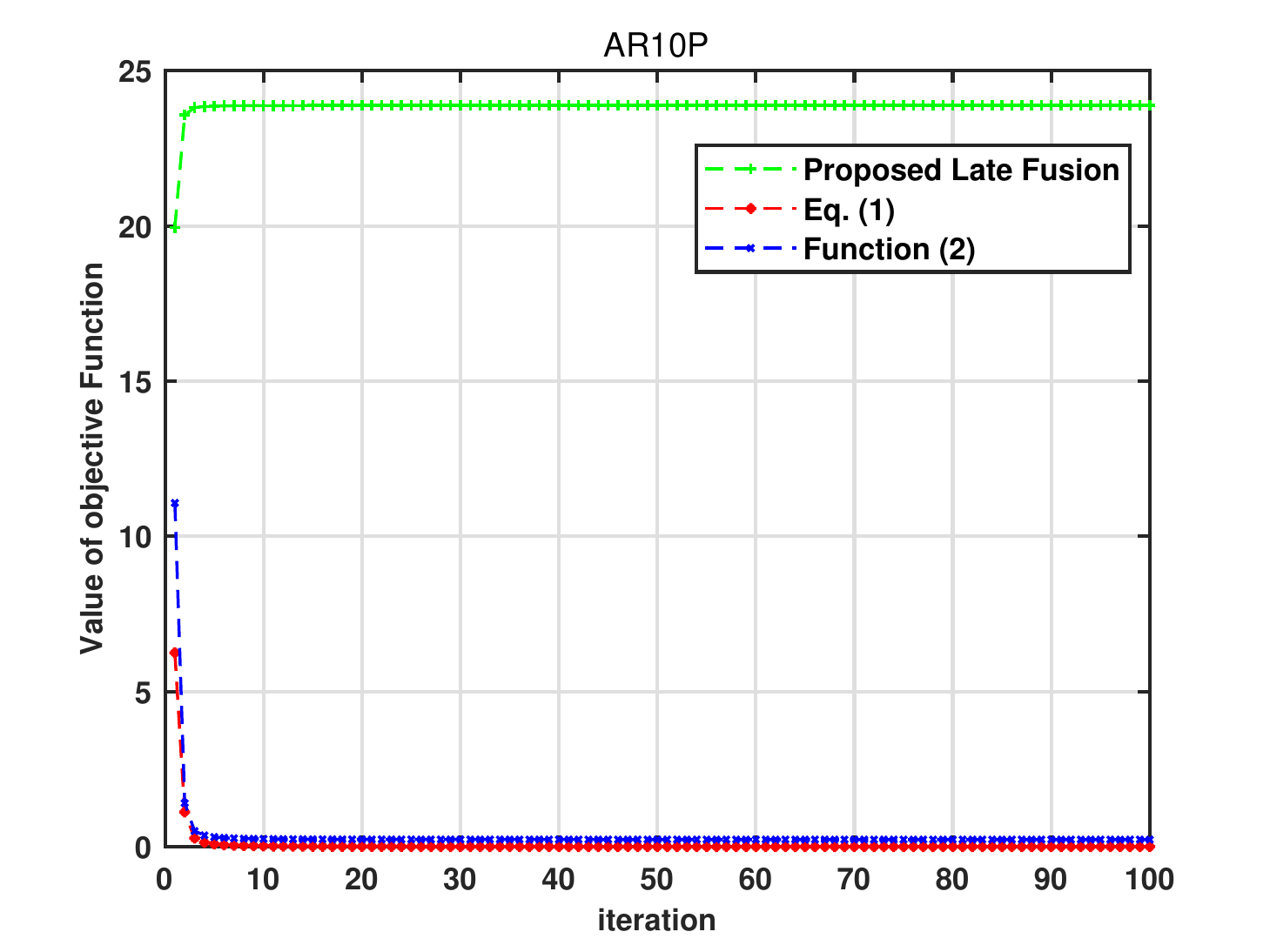}}
\subfloat[Mfeat]{\includegraphics[width =0.25\textwidth]{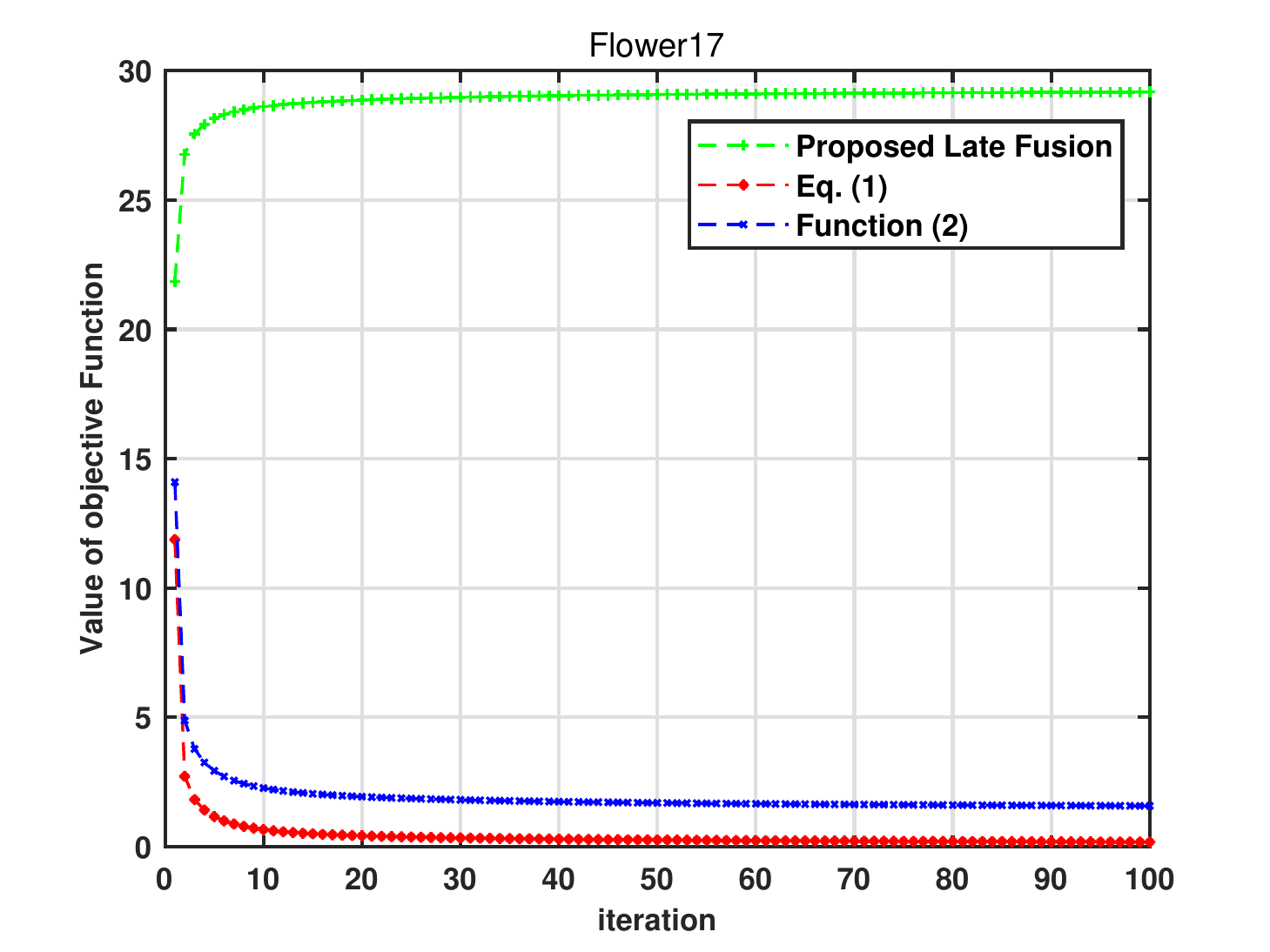}}
\caption{The evolution of two function objectives with the variation of iterations.}\label{gap}
\end{figure*}

\section{GENERALIZATION ANALYSIS}

In this section,we are going to give more detailed proof for our proposed algorithms' generalization ability.
Firstly, we present some notations or definitions in the beginning.

Given an input space $\mathcal{X}$, $n$ samples $\left\{\mathbf{x}_i\right\}_{i=1}^n$ drawn i.i.d. from an unknown sampling distribution $\mu$ that $\mathcal{X} \sim \mu$. Then the label space is $\mathcal{Y}$ and $\mathcal{F}$ is a group of mapping functions which each function $g: \mathcal{X} \rightarrow \mathcal{Y}$. Based on these assumptions, we will define the Rademacher complexity and Gaussian complexity.

For a given function class $\mathcal{F}$ and $n$ samples,suppose $\sigma_1,\sigma_2,\cdots,\sigma_n$ and $z_1,z_2,\cdots, z_n$ are independent Rademacher variables and standard normal variables ($\sigma  \in {\left\{-1,1\right\}}, z \sim\mathcal{N}(0,1)$). The empirical Rademacher complexity and Gaussian complexity can be expressed as,
\begin{equation}
\label{empirical}
\mathfrak{R}_{n}(\mathcal{F})=\mathop{\mathbb{E}}\nolimits_{\boldsymbol{\sigma}} \sup _{g \in \mathcal{F}} \frac{1}{n} \sum_{i=1}^{n} \sigma_{i} g\left(\mathbf{x}_{i}\right),\qquad\mathcal{Q}_{n}(\mathcal{F})=\mathop{\mathbb{E}}_{z} \sup _{g \in \mathcal{F}} \frac{1}{n} \sum_{i=1}^{n} z_{i} g\left(\mathbf{x}_{i}\right),\\
\end{equation}
where $\mathfrak{R}_{n}(\mathcal{F})$ and $\mathcal{Q}_{n}(\mathcal{F})$ define the Rademacher complexity and Gaussian complexity respectively. Furthermore, the expected Rademacher complexity and Gaussian complexity can be expressed by taking expectations of Eq. (\ref{empirical}) over the sampling distribution $\mu$,
\begin{equation}
\label{empirical}
\mathfrak{R}(\mathcal{F})=\mathop{\mathbb{E}}\limits_{\mathbf{x} \sim \mu} (\mathfrak{R}_{n}(\mathcal{F})),\qquad
\mathcal{Q}(\mathcal{F})=\mathop{\mathbb{E}}\limits_{\mathbf{x}\sim \mu} (\mathcal{Q}_{n}(\mathcal{F})),\\
\end{equation}

Our proposed formulation for LF-MVC-GAM and LF-MVC-LAM is given as follows,
\begin{equation}\label{LFA 3}
\begin{split}
\; &\; \max\limits_{\mathbf{F},\left\{\mathbf{W}_p\right\}_{p=1}^m,\boldsymbol{\beta}} \mathrm{Tr} ({\mathbf{F}}^{\top} \sum\nolimits_{p=1}^m{\boldsymbol{\beta}}_p \mathbf{H}_p \mathbf{W}_p) +  \lambda \mathrm{Tr} ({\mathbf{F}}^{ \top} \mathbf{M}),\\
\;&\; s.t.\;{\mathbf{F}}^{ \top}\mathbf{F} = \mathbf{I}_k, \mathbf{{W}}_p^{ \top}\mathbf{W}_p = \mathbf{I}_k,{\Vert {\boldsymbol{\beta}} \Vert}_2 = 1,{\boldsymbol{\beta}}_p \geq 0,
\end{split}
\end{equation}
where $\left\{\mathbf{H}_p \right\}_{p=1}^m$ are a set of base partitions, $\left\{\mathbf{W}_p \right\}_{p=1}^m$ are the respective transformed matrices and $\boldsymbol{\beta} \in \mathbb{R}^{m}$ is the view-specific coefficient. Let $\mathbf{C} = [\mathbf{c}_1,\cdots,\mathbf{c}_k]$ be the learned center matrix composed of the $k$ centroids, and $\boldsymbol{\beta},\,\{\mathbf{W}_{p}\}_{p=1}^{m}$ the learned kernel weights and permutation matrices by the proposed algorithms. The proposed algorithms should make the following empirical error small \cite{maurer2008generalization,liu2016dimensionality},
\begin{equation}\label{eq:Generalization1}
{
1-\frac{1}{n} \sum_{i=1}^{n}\left[\max\limits_{ \mathbf{y}\in\{\mathbf{e}_1,\cdots,\mathbf{e}_k\}}
\langle h(\mathbf{x}_i),\mathbf{C}\mathbf{y}\rangle\right],
}
\end{equation}
where $i$ denotes the index of the sample, $h(\mathbf{x}_i)=\sum_{p=1}^{m}{\boldsymbol{\beta}}_p\mathbf{W}_{p}^{\top}h_p(\mathbf{x}_{i}^{(p)}) +  \lambda \mathbf{M}_{i,:}^{\top}$, and $\mathbf{e}_1,\ldots,\mathbf{e}_k$ form the orthogonal bases of $\mathbb{R}^k$. $h_p(\mathbf{x}_{i})$ denotes the $i$-th row of $\mathbf{H}_p$ representing the $i$-th sample's representation in $p$-th view and $\mathbf{M}_{i,:}$ denotes the $i$-th row of the regularization matrix $\mathbf{M}$. The operator $\langle \mathbf{A},\mathbf{B}\rangle$ denotes  $\mathrm{Tr} (\mathbf{A}^{\top} \mathbf{B})$.

We define the function class $\mathcal{F}$ for LF-MVC-GAM first:
\begin{equation}\label{eq:Generalization2}
{
\begin{split}
\mathcal{F}= &\Big\{g:\;\mathbf{x}\mapsto 1 - \max\nolimits_{\mathbf{y}\in\{\mathbf{e}_1,\cdots,\mathbf{e}_k\}}
\left\langle \sum\nolimits_{p=1}^{m}{\boldsymbol{\beta}}_p\mathbf{W}_{p}^{\top}h_p(\mathbf{x}^{(p)}),\,\mathbf{C}\mathbf{y}\right\rangle {\Big|}{\mathbf{F}}^{\top}\mathbf{F} = \mathbf{I}_k,\,\mathbf{{W}}_p^{\top}\mathbf{W}_p = \mathbf{I}_k,\,{\Vert {\boldsymbol{\beta}} \Vert}_2 = 1,\,{\boldsymbol{\beta}}_p \geq 0,\\
&\qquad\forall p,\forall \mathbf{x}\in\mathcal{X}\Big\},
\end{split}
}
\end{equation}

where $\mathbf{x} $ is a multi-view data sample feature containing $m$ views $\left[\mathbf{x}^{(1)},\mathbf{x}^{(2)}, \cdots, \mathbf{x}^{(m)} \right]$ and $\mathbf{C}$ is the obtained cluster centroids by performing Floyd algorithm on $\mathbf{F}$\cite{DBLP:journals/prl/Jain10}.
According to \cite{mohri2018foundations}, an essential theorem about the generalization error bound is gives as follows,
\begin{theorem}
\label{1}
Let $\mathcal{F}$ be a family of functions class mapping on $\mathcal{X}$ with value range $\left[ a,b \right]$.
For any $\delta>0$, with probability at least $1-\delta$, the following holds for all $g\in \mathcal{F}$:
\begin{equation}
\mathbb{E}\left[g({\mathbf{x}})\right]-\frac{1}{n}\sum\nolimits_{i=1}^{n}g({\mathbf{x}}_i)\leq 2\mathfrak{R}(\mathcal{F})+ 	\left( b-a \right) \sqrt{\frac{\log{1/\delta}}{2n}},
\end{equation}
\end{theorem}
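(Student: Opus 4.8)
The plan is to follow the standard two-ingredient recipe for distribution-free uniform convergence that underlies Theorem~3.3 of \cite{mohri2018foundations}: a concentration inequality (McDiarmid's bounded-differences inequality) to control the worst-case empirical deviation around its mean, followed by a symmetrization argument to bound that mean by the Rademacher complexity $\mathfrak{R}(\mathcal{F})$. Since $\mathcal{F}$ is assumed to take values in $[a,b]$, the boundedness hypotheses needed for both ingredients are immediate, and the argument transfers essentially verbatim.

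First I would introduce the single-sample supremum deviation
\[
\Phi(x_1,\dots,x_n) = \sup_{g\in\mathcal{F}}\Big(\mathbb{E}[g(\mathbf{x})] - \tfrac{1}{n}\textstyle\sum_{i=1}^n g(x_i)\Big),
\]
which is exactly the quantity the theorem must bound with high probability. The key structural observation is that $\Phi$ satisfies the bounded-differences property with constant $(b-a)/n$: replacing a single coordinate $x_i$ by $x_i'$ changes each empirical average $\tfrac1n\sum_j g(x_j)$ by at most $(b-a)/n$ uniformly in $g$, and a supremum of functions that each move by at most $(b-a)/n$ itself moves by at most $(b-a)/n$. Applying McDiarmid's inequality with $\sum_i c_i^2 = (b-a)^2/n$ then yields, with probability at least $1-\delta$,
\[
\Phi(S) \le \mathbb{E}_S[\Phi(S)] + (b-a)\sqrt{\tfrac{\log(1/\delta)}{2n}}.
\]

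It then remains to prove $\mathbb{E}_S[\Phi(S)] \le 2\mathfrak{R}(\mathcal{F})$, which is the symmetrization step and the crux of the argument. Here I would introduce an independent ghost sample $S'=\{x_i'\}_{i=1}^n$ drawn from $\mu$, rewrite $\mathbb{E}[g(\mathbf{x})]$ as $\mathbb{E}_{S'}[\tfrac1n\sum_i g(x_i')]$, bring both empirical averages under a single supremum using convexity of the sup together with Jensen's inequality, and then insert i.i.d.\ Rademacher signs $\sigma_i$, exploiting the fact that swapping $x_i$ with $x_i'$ leaves the joint distribution invariant. Splitting the resulting term $\tfrac1n\sum_i \sigma_i\big(g(x_i')-g(x_i)\big)$ into two pieces and bounding each by the expected Rademacher complexity produces the factor $2\mathfrak{R}(\mathcal{F})$. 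Combining this with the McDiarmid bound above gives the claimed inequality.

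The main obstacle is the symmetrization step: care is needed to justify passing from the true expectation to the ghost-sample average, to move the supremum correctly through the nested expectations, and to legitimately introduce the Rademacher variables via the exchange symmetry. By contrast, the bounded-differences verification and the McDiarmid application are routine once the value range $[a,b]$ is invoked.
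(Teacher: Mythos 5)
Your outline is correct: the two-step argument (McDiarmid's bounded-differences inequality applied to $\Phi(S)=\sup_{g\in\mathcal{F}}\bigl(\mathbb{E}[g(\mathbf{x})]-\tfrac{1}{n}\sum_{i}g(x_i)\bigr)$ with constants $(b-a)/n$, followed by ghost-sample symmetrization to get $\mathbb{E}_S[\Phi(S)]\leq 2\mathfrak{R}(\mathcal{F})$) is exactly the standard proof of this bound. The paper itself does not prove this theorem but imports it directly from \cite{mohri2018foundations}, and your proposal reproduces the proof of that cited result faithfully, so there is nothing to correct.
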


\begin{theorem}
\label{2}
For each loss function $g \in \mathcal{F}$ learned by LF-MVC-GAM and LF-MVC-LAM, $1-4m \leq g(\x_i) \leq 1+4m$ always holds.
\end{theorem}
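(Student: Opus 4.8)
The plan is to prove the two-sided bound at once: since $g(\mathbf{x}_i)=1-\max_{\mathbf{y}\in\{\mathbf{e}_1,\dots,\mathbf{e}_k\}}\langle h(\mathbf{x}_i),\mathbf{C}\mathbf{y}\rangle$, it suffices to show that the subtracted quantity has absolute value at most $4m$; this gives $g(\mathbf{x}_i)\le 1+4m$ and $g(\mathbf{x}_i)\ge 1-4m$ simultaneously. First I would note that each admissible $\mathbf{y}$ is a standard basis vector, so $\mathbf{C}\mathbf{y}$ is a single centroid $\mathbf{c}_j$; hence it is enough to bound $|\langle h(\mathbf{x}_i),\mathbf{c}_j\rangle|$ uniformly in $j$, because $|\max_j\langle h(\mathbf{x}_i),\mathbf{c}_j\rangle|$ is then controlled by the same constant. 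The whole argument reduces to two norm estimates followed by Cauchy--Schwarz.

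The first estimate controls the representation vector $h(\mathbf{x}_i)=\sum_{p=1}^m\boldsymbol{\beta}_p\mathbf{W}_p^{\top}h_p(\mathbf{x}_i^{(p)})$ (and, if the regularization term is retained, an extra summand $\lambda\mathbf{M}_{i,:}^{\top}$). By the triangle inequality and the orthogonality of each $\mathbf{W}_p$, which is norm-preserving, one gets $\llnorm{h(\mathbf{x}_i)}\le\sum_{p=1}^m\boldsymbol{\beta}_p\,\llnorm{h_p(\mathbf{x}_i^{(p)})}$. The key sub-step is $\llnorm{h_p(\mathbf{x}_i^{(p)})}\le 1$: since $\mathbf{H}_p^{\top}\mathbf{H}_p=\mathbf{I}_k$, the matrix $\mathbf{H}_p\mathbf{H}_p^{\top}$ is an orthogonal projector, so its $i$-th diagonal entry, which equals $\llnorm{h_p(\mathbf{x}_i^{(p)})}^2$, lies in $[0,1]$. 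Combining this with $\sum_{p}\boldsymbol{\beta}_p\le\sqrt{m}\,\llnorm{\boldsymbol{\beta}}=\sqrt{m}$ (Cauchy--Schwarz on the nonnegative unit-$\ell_2$ weight vector) yields $\llnorm{\sum_p\boldsymbol{\beta}_p\mathbf{W}_p^{\top}h_p(\mathbf{x}_i^{(p)})}\le\sqrt{m}$; the same projector argument applied to $\mathbf{M}$ gives $\llnorm{\mathbf{M}_{i,:}}\le 1$, so the regularization summand contributes at most a further $\sqrt m$ under the scaling used, and in all cases $\llnorm{h(\mathbf{x}_i)}\le 2\sqrt{m}$.

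The second estimate controls the centroids. Each $\mathbf{c}_j$ produced by the Floyd/Lloyd step is the within-cluster mean of the representation vectors, hence a convex combination of vectors each of norm at most $2\sqrt{m}$; by convexity of the norm, $\llnorm{\mathbf{c}_j}\le 2\sqrt{m}$ as well. With both bounds in hand, Cauchy--Schwarz gives $|\langle h(\mathbf{x}_i),\mathbf{c}_j\rangle|\le\llnorm{h(\mathbf{x}_i)}\,\llnorm{\mathbf{c}_j}\le 4m$ for every $j$, so $\big|\max_{\mathbf{y}}\langle h(\mathbf{x}_i),\mathbf{C}\mathbf{y}\rangle\big|\le 4m$, and substituting into $g=1-\max_{\mathbf{y}}\langle\cdot,\cdot\rangle$ closes the proof. (If one uses the regularization-free form of $\mathcal{F}$ in Eq. (\ref{eq:Generalization2}), both norms drop to $\sqrt m$ and the bound sharpens to $m$, which is well within $4m$.)

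I expect the only delicate points to be the two norm estimates. The bound $\llnorm{h_p(\mathbf{x}_i^{(p)})}\le 1$ has to be read off from the projector $\mathbf{H}_p\mathbf{H}_p^{\top}$ rather than directly from column-orthonormality, and the centroid bound relies on identifying $\mathbf{c}_j$ as a within-cluster average so that the ball of radius $2\sqrt m$ is preserved under averaging. Everything else---the reduction of $\mathbf{C}\mathbf{y}$ to one centroid, the Cauchy--Schwarz step on $\boldsymbol{\beta}$, and the symmetric two-sided conclusion---is routine, and the constant $4$ is deliberately loose so that no sharp tracking of the regularization scale $\lambda$ is required.
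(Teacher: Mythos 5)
Your proof is correct and follows essentially the same route as the paper's: both arguments rest on $\sum_{p}\boldsymbol{\beta}_p\le\sqrt{m}$ via Cauchy--Schwarz, a unit bound on the rows of $\mathbf{H}_p$ derived from the orthogonality constraint, the identification of each centroid $\mathbf{C}\mathbf{e}_v$ as a within-cluster average of the representations $h(\mathbf{x}_j)$, and the same implicit normalization of the $\lambda\mathbf{M}$ term that lets it be absorbed as an extra $\sqrt{m}$. The only difference is presentational: you package the estimate as two norm bounds followed by a single Cauchy--Schwarz on $\langle h(\mathbf{x}_i),\mathbf{c}_j\rangle$, whereas the paper expands $h^{\top}(\mathbf{x}_i)h(\mathbf{x}_j)$ into a double sum over view pairs $(p,q)$ with weights $(\boldsymbol{\beta}_p+1/\sqrt{m})(\boldsymbol{\beta}_q+1/\sqrt{m})$ and bounds it termwise; your version is the tidier of the two and makes the two-sided bound symmetric from the outset.
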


\begin{proof}
From Cauchy-Schwarz inequality $m = \sum\nolimits_{p=1}^{m} 1 \sum\nolimits_{p=1}^{m} {\boldsymbol{\beta}}_p^2 \geq (\sum\nolimits_{p=1}^{m} {\boldsymbol{\beta}}_p)^2 $, the $ \sum\nolimits_{p=1}^{m}{\boldsymbol{\beta}}_p \leq \sqrt{m}$. Moreover, we can easily obtain that for each $ i,j \in\{1,\cdots,n\},-1 \leq{h}_{p}^{\top}(\mathbf{x}^{(p)}_i) {h}_{q}(\mathbf{x}^{(q)}_j) \leq 1$ because of for each $ p \in \{1,\cdots,m\},\mathbf{H}_p \mathbf{H}_p^{\top} = \mathbf{I}_k $ holds.

We can prove that, for $i,j \in\{1,\cdots,n\}$,
\begin{equation}
\begin{split}
{{h}^{\top}(\mathbf{x}_i)} {h}(\mathbf{x}_j) &\;= \left(\sum\nolimits_{p=1}^{m}{\boldsymbol{\beta}}_p \mathbf{W}_{p}^{\top}h_p(\mathbf{x}_{i}^{(p)}) + \lambda \mathbf{M}_{i,:}^{\top}\right)^{\top} \left(\sum\nolimits_{p=1}^{m}{\boldsymbol{\beta}}_p\mathbf{W}_{p}^{\top}h_p(\mathbf{x}_{j}^{(p)}) + \lambda \mathbf{M}_{j,:}^{\top}\right),\\
&\; = \left(\sum\nolimits_{p=1}^{m}({\boldsymbol{\beta}}_p+\frac{1}{\sqrt{m}}) \mathbf{W}_{p}^{\top}h_p(\mathbf{x}_{i}^{(p)})\right)^{\top} \left(\sum\nolimits_{p=1}^{m}({\boldsymbol{\beta}}_p+\frac{1}{\sqrt{m}})\mathbf{W}_{p}^{\top}h_p(\mathbf{x}_{j}^{(p)})\right),\\
&\; = \sum\nolimits_{p,q=1}^{m} ({\boldsymbol{\beta}}_p+\frac{1}{\sqrt{m}}) ({\boldsymbol{\beta}}_p+\frac{1}{\sqrt{m}})
{h}_{p}^{\top}(\mathbf{x}^{(p)}_i)\mathbf{W}_{p}\mathbf{W}_{q}^{\top}{h}_{q}(\mathbf{x}^{(q)}_j),\\
&\; \leq \sum\nolimits_{p,q=1}^{m} ({\boldsymbol{\beta}}_p+\frac{1}{\sqrt{m}}) ({\boldsymbol{\beta}}_q+\frac{1}{\sqrt{m}})
{h}_{p}^{\top}(\mathbf{x}^{(p)}_i){h}_{q}(\mathbf{x}^{(q)}_j),\\
&\; \leq \sum\nolimits_{p,q=1}^{m} ({\boldsymbol{\beta}}_p+\frac{1}{\sqrt{m}}) ({\boldsymbol{\beta}}_q+\frac{1}{\sqrt{m}}),\\
&\; =\sum\nolimits_{p,q=1}^{m} ({\boldsymbol{\beta}}_p{\boldsymbol{\beta}}_q)+ m * \frac{2}{\sqrt{m}} \sum\nolimits_{p=1}^{m} {\boldsymbol{\beta}}_p + m ,\\
&\; \leq \sum\nolimits_{p,q=1}^{m} \frac{1}{2}({\boldsymbol{\beta}}_p^2 + {\boldsymbol{\beta}}_q^2)+ m *\frac{2}{\sqrt{m}} \sum\nolimits_{p=1}^{m} {\boldsymbol{\beta}}_p + m ,\\
&\; \leq m + 2m +m = 4m.
\end{split}
\end{equation}
In the same way, it is easy to prove that $-4m \leq {h}(\mathbf{x}_i) {h}^{\top}(\mathbf{x}_j) \leq 4m$.

Therefore for $v\in\{1,\cdots,k\}$
\begin{equation}
\begin{split}
& \langle h(\mathbf{x}_i),\mathbf{C}\mathbf{e}_{v}\rangle  \\
& = h^{\top}(\mathbf{x}_i)\left(\frac{1}{|\mathbf{C}_{v}|}\sum\nolimits_{ j\in\mathbf{C}_{v}} {h(\mathbf{x}_{j})}\right)\\
& =  \left(\frac{1}{|\mathbf{C}_{v}|}\sum\nolimits_{ j\in\mathbf{C}_{v}}h^{\top}(\mathbf{x}_i)h(\mathbf{x}_{j})\right) \geq -4m.
\end{split}
\end{equation}
Therefore we can obtain that $ 1-4m \leq g(\mathbf{x}_i) = 1 - \max \langle h(\mathbf{x}_i),\mathbf{C}\mathbf{y}\rangle \leq 1+4m$.
This completes the proof.
\end{proof}

Based on Theorem \ref{1} and \ref{2}, we have the following Theorem,
\begin{theorem}\label{3}
Let $\mathcal{F}$ be a family of functions class mapping on $\mathcal{X}$ learned from LF-MVC-GAM and LF-MVC-LAM. For any $\delta>0$, with probability at least $1-\delta$, the following holds for all $g\in \mathcal{F}$:
\begin{equation}
\mathbb{E}\left[g({\mathbf{x}})\right]-\frac{1}{n}\sum\nolimits_{i=1}^{n}g({\mathbf{x}}_i)\leq 2\mathfrak{R}(\mathcal{F})+ 	\left( 8m \right) \sqrt{\frac{\log{1/\delta}}{2n}},
\end{equation}
\end{theorem}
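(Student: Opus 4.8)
The plan is to obtain Theorem \ref{3} as a direct corollary of the two preceding results in this section, namely the generic Rademacher-complexity deviation bound of Theorem \ref{1} and the uniform two-sided range estimate of Theorem \ref{2}. The key observation is that Theorem \ref{1} requires only that the function class take values in some bounded interval $[a,b]$, and Theorem \ref{2} supplies exactly such an interval for every $g\in\mathcal{F}$.

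Concretely, I would first invoke Theorem \ref{2}, which asserts that $1-4m\leq g(\mathbf{x}_i)\leq 1+4m$ for every $g\in\mathcal{F}$ and every sample. Reading off the endpoints, set $a=1-4m$ and $b=1+4m$, so that the width of the range is
\begin{equation}
b-a=(1+4m)-(1-4m)=8m.
\end{equation}
I would then apply Theorem \ref{1} to the class $\mathcal{F}$ with this choice of $a$ and $b$: with probability at least $1-\delta$, uniformly over $g\in\mathcal{F}$,
\begin{equation}
\mathbb{E}\left[g(\mathbf{x})\right]-\frac{1}{n}\sum\nolimits_{i=1}^{n}g(\mathbf{x}_i)\leq 2\mathfrak{R}(\mathcal{F})+(b-a)\sqrt{\frac{\log 1/\delta}{2n}}.
\end{equation}
Substituting $b-a=8m$ reproduces verbatim the inequality claimed in Theorem \ref{3}, which completes the argument. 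Note that the Rademacher term $2\mathfrak{R}(\mathcal{F})$ is left unexpanded at this stage, so no further complexity computation is needed for the statement as given.

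The genuine content of the generalization analysis has therefore already been discharged upstream: the concentration and symmetrization machinery is packaged inside Theorem \ref{1}, and the sharp range constant $8m$ is the output of the Cauchy--Schwarz computation in the proof of Theorem \ref{2}. Consequently I anticipate no real obstacle here. The single point worth checking is that the range bound of Theorem \ref{2} is \emph{uniform} over the whole class rather than holding only for a single fixed $g$, since Theorem \ref{1} needs a class-wide interval; but the proof of Theorem \ref{2} yields the endpoints $1\pm 4m$ using only the structural constraints $\mathbf{H}_p\mathbf{H}_p^{\top}=\mathbf{I}_k$, $\|\boldsymbol{\beta}\|_2=1$, and $\boldsymbol{\beta}_p\geq 0$, which are shared by all members of $\mathcal{F}$. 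Hence the interval is independent of the particular $g$ and the substitution into Theorem \ref{1} is legitimate.
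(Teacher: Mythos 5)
Your proposal is correct and follows exactly the paper's own route: the paper states this theorem as an immediate consequence of the generic deviation bound (its Theorem with range $[a,b]$) combined with the range estimate $1-4m\leq g(\mathbf{x}_i)\leq 1+4m$, giving $b-a=8m$. Your additional remark that the interval is uniform over the class because it depends only on the shared constraints $\mathbf{H}_p^{\top}\mathbf{H}_p=\mathbf{I}_k$, $\|\boldsymbol{\beta}\|_2=1$, $\boldsymbol{\beta}_p\geq 0$ is a point the paper leaves implicit, and it is the right thing to verify.
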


In the next step, we are going to find the upper bound for $\mathfrak{R}(\mathcal{F})$.The Gaussian complexity is related to the Rademacher complexity by the following lemma,
\begin{lemma}
\begin{equation}
\mathfrak{R}(\mathcal{F}) \leq \sqrt{\frac{\pi}{2}} \mathcal{Q}(\mathcal{F})
\end{equation}
\end{lemma}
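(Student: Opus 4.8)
The plan is to establish the pointwise (empirical) inequality $\mathfrak{R}_n(\mathcal{F}) \le \sqrt{\pi/2}\,\mathcal{Q}_n(\mathcal{F})$ and then take expectations over the sampling distribution $\mu$, since both $\mathfrak{R}(\mathcal{F})$ and $\mathcal{Q}(\mathcal{F})$ are defined as $\mathop{\mathbb{E}}_{\mathbf{x}\sim\mu}$ of their empirical counterparts and expectation preserves the inequality. The single fact I will exploit about the standard normal is that $\mathop{\mathbb{E}}|z_i| = \sqrt{2/\pi}$, so that $\sqrt{\pi/2}\,\mathop{\mathbb{E}}|z_i| = 1$; this is precisely where the constant $\sqrt{\pi/2}$ originates.

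The key structural step is to couple the Gaussian and Rademacher variables. I would write each standard normal as $z_i = \sigma_i |z_i|$, where $\sigma_i := \operatorname{sign}(z_i)$. By symmetry of the Gaussian density, $\sigma_i$ is a Rademacher variable and $|z_i|$ a half-normal variable, and crucially $\sigma_i$ is independent of $|z_i|$; moreover the pairs $(\sigma_i,|z_i|)$ are independent across $i$. Substituting this decomposition into the definition of $\mathcal{Q}_n(\mathcal{F})$ and factoring the product measure gives $\mathcal{Q}_n(\mathcal{F}) = \mathop{\mathbb{E}}_{\boldsymbol\sigma}\mathop{\mathbb{E}}_{|\boldsymbol z|}\,\sup_{g\in\mathcal{F}} \frac1n\sum_{i} \sigma_i |z_i|\, g(\mathbf{x}_i)$.

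Next I would push the inner expectation over $|\boldsymbol z|$ inside the supremum by Jensen's inequality. For each fixed $\boldsymbol\sigma$ the map $(t_1,\dots,t_n)\mapsto \sup_{g\in\mathcal{F}}\frac1n\sum_{i} \sigma_i t_i\, g(\mathbf{x}_i)$ is a pointwise supremum of affine functions of $\boldsymbol t$, hence convex; Jensen therefore yields $\mathop{\mathbb{E}}_{|\boldsymbol z|}\sup_{g} \frac1n\sum_{i} \sigma_i |z_i|\, g(\mathbf{x}_i) \ge \sup_{g} \frac1n\sum_{i} \sigma_i \mathop{\mathbb{E}}|z_i|\, g(\mathbf{x}_i) = \sqrt{2/\pi}\,\sup_{g} \frac1n\sum_{i} \sigma_i\, g(\mathbf{x}_i)$, the last equality extracting the scalar $\mathop{\mathbb{E}}|z_i|=\sqrt{2/\pi}$. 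Taking $\mathop{\mathbb{E}}_{\boldsymbol\sigma}$ of both sides identifies the right-hand side as $\sqrt{2/\pi}\,\mathfrak{R}_n(\mathcal{F})$, so $\mathcal{Q}_n(\mathcal{F}) \ge \sqrt{2/\pi}\,\mathfrak{R}_n(\mathcal{F})$, which rearranges to the claim.

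I expect the main obstacle to be getting the direction of Jensen's inequality right and justifying the exchange it licenses: one must verify convexity of the inner supremum as a function of the magnitudes $|z_i|$ (not accidentally invoking concavity), and confirm the independence of $\operatorname{sign}(z_i)$ from $|z_i|$, which is what permits $\mathop{\mathbb{E}}|z_i|$ to be pulled out as a scalar multiplier without disturbing the retained sign pattern $\sigma_i$. Measurability of the supremum, ensuring the expectations are well defined, is routine when $\mathcal{F}$ is separable (or upon interpreting the supremum as an essential supremum), and I would note this in passing rather than dwell on it.
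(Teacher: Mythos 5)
The paper states this lemma without proof, treating it as the classical comparison between Rademacher and Gaussian complexities (as in Ledoux--Talagrand or Bartlett--Mendelson), so there is no in-paper argument to measure you against. Your proof is correct and is exactly the standard one: you decompose $z_i=\sigma_i|z_i|$ with $\sigma_i=\operatorname{sign}(z_i)$ a Rademacher variable independent of the half-normal $|z_i|$ (so that $\sigma_i|z_i|$ has the law of $z_i$ and the iterated expectation over $(\boldsymbol\sigma,|\boldsymbol z|)$ really does equal $\mathcal{Q}_n(\mathcal{F})$), apply Jensen in the correct direction to the convex map $\boldsymbol t\mapsto\sup_{g}\frac{1}{n}\sum_i\sigma_i t_i g(\mathbf{x}_i)$, pull the positive scalar $\mathbb{E}|z_i|=\sqrt{2/\pi}$ out of the supremum, and finish by taking the expectation over the sample to pass from the empirical to the expected complexities. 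The only point worth stating explicitly rather than in passing is the positivity of $\mathbb{E}|z_i|$, which is what licenses moving it outside the supremum without flipping anything; otherwise nothing is missing.
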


Therefore, we can upper bound the Rademacher complexity by finding a proper Gaussian process.
\begin{lemma}
(Slepians Lemma) Let $\Omega$ and $\Xi$ be mean zero, separable Gaussian processes indexed by a common set
$\mathcal{F},$ such that
$$
\begin{array}{c}
\mathbb{E} \left(\Omega_{g_{1}}-\Omega_{g_{2}}\right)^{2} \leq \mathbb{E}\left(\Xi_{g_{1}}-\Xi_{g_{2}}\right)^{2}, \forall g_{1}, g_{2} \in \mathcal{F} \\
\end{array}
$$
Then $ \mathbb{E} \sup _{g \in \mathcal{F}} \Omega_{g} \leq \mathbb{E} \sup_{g \in \mathcal{F}} \Xi_{g}$
\end{lemma}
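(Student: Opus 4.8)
The plan is to prove the inequality by the Gaussian interpolation (Sudakov--Fernique) method, after first reducing to a finite index set. Since $\Omega$ and $\Xi$ are separable, their suprema over $\mathcal{F}$ coincide almost surely with the suprema over a fixed countable dense subset $\{g_1,g_2,\dots\}$, and the supremum over that countable set is the monotone increasing limit of the suprema over the finite subsets $\{g_1,\dots,g_N\}$. By monotone convergence it therefore suffices to establish $\mathbb{E}\max_{1\le i\le N}X_i \le \mathbb{E}\max_{1\le i\le N}Y_i$, where $X=(\Omega_{g_1},\dots,\Omega_{g_N})$ and $Y=(\Xi_{g_1},\dots,\Xi_{g_N})$ are the two centered Gaussian vectors obtained by restricting to $g_1,\dots,g_N$; we may realize $X$ and $Y$ as independent. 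The increment hypothesis then reads $\gamma_{ij}:=\mathbb{E}(Y_i-Y_j)^2-\mathbb{E}(X_i-X_j)^2\ge 0$ for all $i,j$.

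First I would replace the non-smooth maximum by the soft-max $f_\beta(z)=\beta^{-1}\log\sum_{i=1}^N e^{\beta z_i}$, which is smooth and satisfies $\max_i z_i\le f_\beta(z)\le \max_i z_i+\beta^{-1}\log N$. Introduce the interpolating Gaussian vector $Z(t)=\sqrt{1-t}\,X+\sqrt{t}\,Y$ for $t\in[0,1]$, so that $Z(0)=X$, $Z(1)=Y$, and the covariance of $Z(t)$ is the linear interpolation $(1-t)\Sigma^X+t\Sigma^Y$. Setting $\phi(t)=\mathbb{E}f_\beta(Z(t))$, the goal reduces to showing $\phi'(t)\ge 0$ on $[0,1]$, since then $\mathbb{E}f_\beta(X)=\phi(0)\le\phi(1)=\mathbb{E}f_\beta(Y)$.

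The core computation is to differentiate $\phi$ and simplify it with Gaussian integration by parts. Writing $\dot Z_i(t)=\tfrac12 t^{-1/2}Y_i-\tfrac12(1-t)^{-1/2}X_i$, the chain rule gives $\phi'(t)=\sum_i\mathbb{E}[\partial_i f_\beta(Z(t))\,\dot Z_i(t)]$, and applying Stein's identity $\mathbb{E}[G_i\,h(G)]=\sum_j\mathbb{E}[G_iG_j]\,\mathbb{E}[\partial_j h(G)]$ to the jointly Gaussian family collapses this into $\phi'(t)=\tfrac12\sum_{i,j}(\Sigma^Y_{ij}-\Sigma^X_{ij})\,H_{ij}$, where $H_{ij}=\mathbb{E}[\partial^2_{ij}f_\beta(Z(t))]$ is the expected soft-max Hessian. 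For the soft-max one computes $\partial^2_{ij}f_\beta=\beta(p_i\delta_{ij}-p_ip_j)$ with $p_i=e^{\beta z_i}/\sum_k e^{\beta z_k}$; in particular the Hessian has zero row and column sums, $\sum_j H_{ij}=0$, and its off-diagonal entries are nonpositive, $H_{ij}=-\beta\mathbb{E}[p_ip_j]\le 0$ for $i\ne j$.

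The main obstacle --- and the step that genuinely uses the increment hypothesis rather than a covariance-ordering hypothesis --- is the algebraic rewriting that turns $\sum_{i,j}(\Sigma^Y_{ij}-\Sigma^X_{ij})H_{ij}$ into a sum weighted by the increments $\gamma_{ij}$. Writing $D_{ij}=\Sigma^Y_{ij}-\Sigma^X_{ij}$, the zero-sum property annihilates the diagonal contributions, $\sum_{i,j}D_{ii}H_{ij}=\sum_i D_{ii}\sum_j H_{ij}=0$ and likewise $\sum_{i,j}D_{jj}H_{ij}=0$, so from $\gamma_{ij}=D_{ii}+D_{jj}-2D_{ij}$ one obtains $\sum_{i,j}\gamma_{ij}H_{ij}=-2\sum_{i,j}D_{ij}H_{ij}$, whence $\phi'(t)=\tfrac12\sum_{i,j}D_{ij}H_{ij}=-\tfrac14\sum_{i,j}\gamma_{ij}H_{ij}$. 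Now every term is signed correctly: $\gamma_{ii}=0$, while for $i\ne j$ we have $\gamma_{ij}\ge 0$ and $H_{ij}\le 0$, so $\sum_{i,j}\gamma_{ij}H_{ij}\le 0$ and hence $\phi'(t)\ge 0$. Finally, sandwiching with $\max_i X_i\le f_\beta(X)$ and $f_\beta(Y)\le\max_i Y_i+\beta^{-1}\log N$ gives $\mathbb{E}\max_i X_i\le\mathbb{E}\max_i Y_i+\beta^{-1}\log N$, and letting $\beta\to\infty$ followed by the finite-to-countable passage described above yields $\mathbb{E}\sup_{g}\Omega_g\le\mathbb{E}\sup_g\Xi_g$, as claimed.
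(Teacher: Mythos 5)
Your proof is correct, but there is nothing in the paper to compare it against: the paper states this lemma (under the name ``Slepian's Lemma'') as a known classical fact and gives no proof, using it purely as a black box in the generalization analysis. What you have written out is the standard Gaussian-interpolation proof of the Sudakov--Fernique inequality, which is in fact the right theorem to prove here: the hypothesis only orders the increments $\mathbb{E}(\Omega_{g_1}-\Omega_{g_2})^2 \leq \mathbb{E}(\Xi_{g_1}-\Xi_{g_2})^2$ and does not assume equal variances, whereas Slepian's lemma proper assumes $\mathbb{E}\,\Omega_g^2=\mathbb{E}\,\Xi_g^2$ together with covariance domination. All the key steps check out: the reduction to finite index sets via separability and monotone convergence; the soft-max sandwich $\max_i z_i\le f_\beta(z)\le \max_i z_i+\beta^{-1}\log N$; the interpolation derivative $\phi'(t)=\tfrac12\sum_{i,j}\bigl(\Sigma^Y_{ij}-\Sigma^X_{ij}\bigr)\mathbb{E}\bigl[\partial^2_{ij}f_\beta(Z(t))\bigr]$ obtained from Stein's identity (using that you realized $X$ and $Y$ independently); and, decisively, the zero row-sum property of the soft-max Hessian, which lets you trade the covariance differences $D_{ij}$ for the increment differences $\gamma_{ij}$ and conclude $\phi'(t)=-\tfrac14\sum_{i,j}\gamma_{ij}H_{ij}\ge 0$ since $\gamma_{ij}\ge 0$ and $H_{ij}\le 0$ off the diagonal --- this is exactly the step that makes the increment hypothesis, rather than covariance ordering, suffice. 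Three minor points worth making explicit in a polished write-up, none affecting correctness: $\dot Z(t)$ blows up at $t\in\{0,1\}$, so one should observe that $\phi$ is continuous on $[0,1]$ and that $\phi'\ge 0$ on the open interval already yields $\phi(0)\le\phi(1)$; differentiation under the expectation and Stein's identity are legitimate here because $\nabla f_\beta$ and $\nabla^2 f_\beta$ are uniformly bounded; and in the countable-to-$\mathcal{F}$ passage one should allow $\mathbb{E}\sup_{g}\Xi_g=+\infty$, in which case the conclusion holds trivially.
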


This indicates that we need to find the proper Gaussian process for the bound. Suppose  $z_1,z_2,\cdots, z_n$ is standard Gaussian variables ($ z_i \sim \mathcal{N}(0,1), i \in \{1,\cdots,m\}$). In our case, let
\begin{equation}\label{eqtl1}
\Omega_{\mathbf{F},\left\{\mathbf{W}_p\right\}_{p=1}^m,\boldsymbol{\beta}}=\sum_{i=1}^{n}z_i \big(1 - \max\limits_{ \mathbf{y}\in\{\mathbf{e}_1,\cdots,\mathbf{e}_k\}}\langle h(\mathbf{x}_i),\mathbf{C}\mathbf{y}\rangle \big)
\end{equation}

Then we define that $\Delta=\{\mathbf{e}_1,\cdots,\mathbf{e}_k\}$ and $z_{il}$ are standard Gaussian variables ($ z_{il} \sim \mathcal{N}(0,1), i \in \{1,\cdots,n\}, l \in \{1,\cdots,k\} $).

\begin{equation}\label{eqtl2}
\begin{split}
&{\Xi_{\mathbf{F},\left\{\mathbf{W}_p\right\}_{p=1}^m,\boldsymbol{\beta}}} = \sum\limits_{i=1}^n \sum\limits_{l=1}^k  z_{il} f^{\top}(\mathbf{x}_i)\mathbf{C}\mathbf{e}_l.
\end{split}
\end{equation}
where $f(\mathbf{x}_i)=\sum_{p=1}^{m}{\boldsymbol{\beta}}_p\mathbf{W}_{p}^{\top}h_p(\mathbf{x}_{i}^{(p)})$.

	Firstly, assume that $\max_{\mathbf{y}\in\Delta}\left\langle g_{1}(\mathbf{x}_i),\mathbf{C}\mathbf{y}\right\rangle  = \left\langle g_{1}(\mathbf{x}_i),\mathbf{C}\mathbf{y}_1\right\rangle$ and $ \max_{\mathbf{y}\in\Delta}\left\langle g_{2}(\mathbf{x}_i),\mathbf{C}\mathbf{y}\right\rangle  = \left\langle g_{2}(\mathbf{x}_i),\mathbf{C}\mathbf{y}_2\right\rangle$, we can easily obtain that,
	\begin{equation}
	    \big | \max_{\mathbf{y}\in\Delta}\left\langle g_{1}(\mathbf{x}_i),\mathbf{C}\mathbf{y}\right\rangle - \max_{\mathbf{y}\in \Delta}\left\langle g_{2}(\mathbf{x}_i),\mathbf{C}^{\prime}\mathbf{y}\right\rangle  \big | = \big | \left\langle g_{1}(\mathbf{x}_i),\mathbf{C}\mathbf{y}_1\right\rangle - \left\langle g_{2}(\mathbf{x}_i),\mathbf{C}\mathbf{y}_2\right\rangle \big |.
	\end{equation}
	
	If $\left\langle g_{1}(\mathbf{x}_i),\mathbf{C}\mathbf{y}_1\right\rangle - \left\langle g_{2}(\mathbf{x}_i),\mathbf{C}\mathbf{y}_2\right\rangle \geq 0$, then it is easy to show that, 
	
	$ 0 \leq \left\langle g_{1}(\mathbf{x}_i),\mathbf{C}\mathbf{y}_1\right\rangle - \left\langle g_{2}(\mathbf{x}_i),\mathbf{C}\mathbf{y}_2\right\rangle \leq \left\langle g_{1}(\mathbf{x}_i),\mathbf{C}\mathbf{y}_1\right\rangle - \left\langle g_{2}(\mathbf{x}_i),\mathbf{C}\mathbf{y}_1\right\rangle \leq \max_{\mathbf{y}\in \Delta} \big | \left\langle g_{1}(\mathbf{x}_i),\mathbf{C}\mathbf{y}\right\rangle - \left\langle g_{2}(\mathbf{x}_i),\mathbf{C}\mathbf{y}\right\rangle \big |.$ 
	
	Else if $\left\langle g_{1}(\mathbf{x}_i),\mathbf{C}\mathbf{y}_1\right\rangle - \left\langle g_{2}(\mathbf{x}_i),\mathbf{C}\mathbf{y}_2\right\rangle \le 0$, we can still obtain that 
	
	$\big | \left\langle g_{1}(\mathbf{x}_i),\mathbf{C}\mathbf{y}_1\right\rangle - \left\langle g_{2}(\mathbf{x}_i),\mathbf{C}\mathbf{y}_2\right\rangle \big | = \big | \left\langle g_{2}(\mathbf{x}_i),\mathbf{C}\mathbf{y}_2\right\rangle - \left\langle g_{1}(\mathbf{x}_i),\mathbf{C}\mathbf{y}_1\right\rangle  \big | \leq \big | \left\langle g_{2}(\mathbf{x}_i),\mathbf{C}\mathbf{y}_2\right\rangle - \left\langle g_{1}(\mathbf{x}_i),\mathbf{C}\mathbf{y}_2\right\rangle  \big | \\ \leq \max_{\mathbf{y}\in \Delta} \big | \left\langle g_{1}(\mathbf{x}_i),\mathbf{C}\mathbf{y}\right\rangle - \left\langle g_{2}(\mathbf{x}_i),\mathbf{C}\mathbf{y}\right\rangle \big |$. \\
	
	Combine the above two circumstances, we can obtain that 
	$ \big | \max_{\mathbf{y}\in\Delta}\left\langle g_{1}(\mathbf{x}_i),\mathbf{C}\mathbf{y}\right\rangle - \max_{\mathbf{y}\in \Delta}\left\langle g_{2}(\mathbf{x}_i),\mathbf{C}^{\prime}\mathbf{y}\right\rangle  \big | \\ \leq \max_{\mathbf{y}\in \Delta} \big | \left\langle g_{1}(\mathbf{x}_i),\mathbf{C}\mathbf{y}\right\rangle - \left\langle g_{2}(\mathbf{x}_i),\mathbf{C}\mathbf{y}\right\rangle \big |$.

For any $g_1$ and $g_2$,the learned centroid matrix $\mathbf{C}$ and $\mathbf{C}^{\prime}$, we can obtain that,

\begin{equation}
\begin{split}
\mathbb{E} \left(\Omega_{g_{1}}-\Omega_{g_{2}}\right)^{2}  & = \mathbb{E}\sum_{i=1}^{n}z_i^2 \left( \max_{\mathbf{y}\in\Delta}\left\langle g_{1}(\mathbf{x}_i),\mathbf{C}\mathbf{y}\right\rangle - \max_{\mathbf{y}\in \Delta}\left\langle g_{2}(\mathbf{x}_i),\mathbf{C}^{\prime}\mathbf{y}\right\rangle \right)^2 \\
& \leq \mathbb{E}\sum_{i=1}^{n}z_i^2 \max_{\mathbf{y}\in \Delta} \left( \sum\limits_{p=1}^{m}{\boldsymbol{\beta}}_p h^{\top}_p(\mathbf{x}^{(p)}_i)\mathbf{W}_p\mathbf{C}\mathbf{y} -\sum\limits_{p=1}^{m}{\boldsymbol{\beta}}_p^{\prime}h_p^{\top}(\mathbf{x}^{(p)}_i){\mathbf{W}_{p}^{\prime}}\mathbf{C}^{\prime}\mathbf{y}\right)^2 \\
& = \mathbb{E}\sum_{i=1}^{n}z_i^2 \max_{\mathbf{y}\in \Delta}\left( \sum\limits_{l=1}^{k}y_{l}\left(\sum\limits_{p=1}^{m}{\boldsymbol{\beta}}_p h^{\top}_p(\mathbf{x}^{(p)}_i)\mathbf{W}_p\mathbf{C} -\sum\limits_{p=1}^{m}{\boldsymbol{\beta}}_p^{\prime}h_p^{\top}(\mathbf{x}^{(p)}_i){\mathbf{W}_{p}^{\prime}}\mathbf{C}^{
\prime}\right)\mathbf{e}_l \right)^2\\
& \leq \mathbb{E}\sum_{i=1}^{n}\sum\limits_{l=1}^{k}   z_{il}^2 
\left(\left(\sum\limits_{p=1}^{m}{\boldsymbol{\beta}}_p h^{\top}_p(\mathbf{x}^{(p)}_i)\mathbf{W}_p\mathbf{C} -\sum\limits_{p=1}^{m}{\boldsymbol{\beta}}_p^{\prime}h_p^{\top}(\mathbf{x}^{(p)}_i){\mathbf{W}_{p}^{\prime}}\mathbf{C}^{
\prime}\right)\mathbf{e}_l \right)^2,\\
&\; = \mathbb{E}\left(\Xi_{g_{1}}-\Xi_{g_{2}}\right)^{2}.
\end{split}
\end{equation}
where the last inequality (second to last step) holds because $\sum_{l=1}^k {y}_{l} =1$ (\cite{liu2017spectral,liu2020efficient}).

We further obtain that,
\begin{equation}\label{eq:guassianbound}
\small{
\begin{split}
\mathbb{E}\left[\sup\nolimits_{g\in \mathcal{F}}{\Xi}_g\right] & = \mathbb{E}_z\left[\sup_{\mathbf{C},\left\{\mathbf{W}_p\right\}_{p=1}^m,\boldsymbol{\beta}}\sum\limits_{i=1}^n \sum\limits_{l=1}^k  z_{il} \left(\sum\nolimits_{p=1}^{m}{\boldsymbol{\beta}}_p\mathbf{W}_{p}^{\top}h_p(\mathbf{x}^{(p)}_i)\right)^{\top}\mathbf{C}\mathbf{e}_l\right] \\
&\leq \mathbb{E}_z \left[\sum_{l=1}^k\left|\sum\limits_{i=1}^n z_{il}\right|\right]\\
& \leq k\sqrt{n}.
\end{split}
}
\end{equation}
The inequality holds for the Jensen's inequality and $\mathop{\mathbb{E}}\left( {z}_{il}^2  \right)=1$.

Combing the above discussion,we now give the generalization bound for our proposed algorithms.

For any $\delta>0$, with probability at least $1-\delta$, the following holds for all $g\in \mathcal{G}$:
\begin{equation}
\mathbb{E}\left[g({\mathbf{x}})\right]-\frac{1}{n}\sum\nolimits_{i=1}^{n}g({\mathbf{x}}_i)\leq \frac{\sqrt{\pi/2}k}{\sqrt{n}} + \left( 8m \right) \sqrt{\frac{\log{1/\delta}}{2n}},
\end{equation}

This completes the proof.

\section{Experimental Results}
\subsection{Clustering Performance}
We also report the purity in Table \ref{purity result} and \ref{mgc purity result}. Again, we observe that the proposed algorithm significantly outperforms other early-fusion multi-kernel algorithms. These results are consistent with our observations in the metric of ACC and NMI.

We have also report the time consuming on large-scale multiple-kernel benchmark datasets in Table \ref{large time result}.
\begin{table*}[!htbp]
\begin{center}
{
\centering
\caption{Purity comparison of different multiple kernel clustering algorithms on twelve benchmark data sets.The best result is highlighted with underlines. Boldface means no statistical difference from the best one and '-' means the out-of-memory failure.}
\label{purity result}
\resizebox{\textwidth}{!}{
\begin{tabular}{|c|c|c|c|c|c|c|c|c|c|c|c|}
\hline
\multirow{2}{*}{Datasets} & \multirow{2}{*}{A-MKKM} & \multirow{2}{*}{SB-KKM} & {MKKM} & {CRSC} & {RMKKM} & {RMSC} & {LMKKM} & {MKKM-MR} & {LKAM} & LF-MVC-GAM & LF-MVC-LAM \\
\cline{4-10,11-12}
 &  &  &\cite{huang2012multiple}  &\cite{kumar2011co}  &\cite{du2015robust}  &\cite{xia2014robust}  &\cite{gonen2014localized}  &\cite{Liu2016Multiple}  &\cite{Li2016Multiple}  & \multicolumn{2}{c|}{Proposed}  \\
\midrule
\multicolumn{12}{|c|}{Purity$(\%)$}    \\ \hline

AR10P & 39.23  & 43.08  & 40.00  & 32.44  & 32.31  & 33.08  & 40.77  & 39.23  & 28.46  & 43.85  & \underline{$\mathbf{53.08}$}  \\ \hline
YALE & 53.94  & 57.58  & 52.73  & 53.82  & 58.18  & 57.24  & 53.94  & 60.00  & 49.09  & 55.76  & \underline{$\mathbf{61.82}$}  \\ \hline
Plant & 60.21  & 56.38  & 56.38  & 60.21  & 55.00  & 59.47  & - & 58.72  & 56.60  & $\mathbf{62.66}$  & \underline{$\mathbf{65.00}$}  \\ \hline
Mfeat & 94.20  & 86.00  & 66.75  & 77.93  & 77.45  & 94.60  & 94.90  & 92.55  & \underline{$\mathbf{96.65}$}  & $\mathbf{95.80}$  & $\mathbf{95.90}$  \\ \hline
Flower17 & 51.99  & 44.63  & 46.84  & 47.55  & 55.07  & 54.12  & 49.63  & 60.51  & 59.26  & $\mathbf{62.13}$ & \underline{$\mathbf{63.97}$}  \\ \hline
CCV & 23.56  & 23.62  & 22.41  & 23.56  & 20.79  & 20.08  & 24.36  & 24.88  & 22.90  & \underline{$\mathbf{28.51}$}  & \underline{$\mathbf{30.12}$}  \\ \hline
Caltech102-5 & 36.24  & 38.04  & 29.80  & 33.57  & 33.92  & 34.90  & 39.41 & 39.02  & 33.92  & $\mathbf{40.59}$  & \underline{$\mathbf{41.96}$}  \\ \hline
Caltech102-10 & 31.24  & 34.47  & 24.22  & 30.83  & 28.82  & 31.47  & 34.02  & 35.88  & 30.39  & $\mathbf{37.06}$  & \underline{$\mathbf{37.75}$}  \\ \hline
Caltech102-15 & 31.81  & 32.54  & 21.63  & 29.56  & 26.21  & 27.12  & - & 34.25  & 28.89  & $\mathbf{35.62}$  & \underline{$\mathbf{38.17}$}  \\ \hline
Caltech102-20 & 31.91  & 32.78  & 20.39  & 29.31  & 26.13  & 25.59  & - & 34.66  & 27.84  & $\mathbf{36.47}$  & \underline{$\mathbf{37.06}$}  \\ \hline
Caltech102-25 & 30.41  & 31.76  & 18.00  & 29.00 & 23.45  & 25.80  & - & 32.20  & 28.75  & 34.43  & \underline{$\mathbf{36.20}$}  \\ \hline
Caltech102-30 & 28.71  & 31.35  & 18.04  & 28.63 & 23.50  & 24.15  & - & 33.20  & 26.76  & $\mathbf{34.77}$  & \underline{$\mathbf{35.16}$}  \\ \hline

\end{tabular}}
}
\end{center}
\end{table*}

\begin{table*}[]
\caption{Purity comparison of different Graph-based multi-view clustering algorithms on twelve benchmark data sets.The best result is highlighted with format mean(std).}
\resizebox{\textwidth}{!}{
\label{mgc purity result}
\begin{tabular}{c|cccccccc}
\midrule
Dataset       & SC            & Co-reg        & LMSC          & MVGL          & MvWCMD        & GMC           & LF-MVC-GAM & LF-MVC-LAM      \\
\hline
MSRC-v1       & 0.699 (0.007) & 0.559 (0.052) & 0.694 (0.053) & 0.757 (0.000) & 0.766 (0.034) & 0.781 (0.000) & 0.810 (0.000) & \textbf{0.843 (0.000)} \\
Caltech101-7  & 0.799 (0.000) & 0.825 (0.014) & 0.855 (0.012) & 0.858 (0.000) & 0.855 (0.003) & 0.897 (0.000) & 0.888 (0.000) & \textbf{0.914 (0.000)} \\
Caltech101-20 & 0.744 (0.007) & 0.746 (0.006) & 0.783 (0.003) & 0.756 (0.000) & 0.664 (0.033) & 0.750 (0.000) & 0.785 (0.000) & \textbf{0.810 (0.000)} \\
ALOI          & 0.651 (0.015) & 0.661 (0.016) & 0.716 (0.003) & 0.663 (0.000) & 0.537 (0.004) & 0.682 (0.000) & 0.685 (0.000) & \textbf{0.736 (0.000)} \\
\hline
\end{tabular}}
\end{table*}

\begin{table*}[]
\caption{Time-consuming comparison (in seconds) of kernel-based multi-view clustering algorithms on large-scale benchmarks and and '-' means the out-of-memory failure.}
\label{large time result}
\resizebox{\textwidth}{!}{
\begin{tabular}{c|cccccccccc}
\midrule
        & \multicolumn{2}{c}{CSRC} & \multicolumn{2}{c}{MKAM} & \multicolumn{2}{c}{MKKM-MR} & \multicolumn{2}{c}{LFA-GAM} & \multicolumn{2}{c}{LFA-LAM}      \\
\hline
Dataset & Time       & Speed Up    & Time        & Speed Up   & Time        & Speed Up      & Time           & Speed Up   & Time             & Speed Up      \\
\hline
Reuters & -    & -        & 22147.81    & 0.01$\times$       & 280.48      & 1$\times$& \textbf{76.43} & \textbf{4$\times$} & \textbf{118.318} & \textbf{2.37$\times$} \\
MNIST   & -           &  -           &   -          &  -          &  -           &  -             & 246.01         & 1.14$\times$       & 280.34           & 1 $\times$\\
\hline           
\end{tabular}}
\end{table*}

\section{More experimental results}
In \cite{DBLP:conf/ijcai/NieLL17}, a self-weighted strategy is proposed to adaptively update the view coefficients and output the ideal low-rank graph similarity matrix for spectral clustering (SWMC). Moreover, \cite{2021Fast} proposes a unified framework to jointly conduct prototype graph fusion and directly output the clustering labels. The main difference between the referenced papers and our proposed methods is that \cite{DBLP:conf/ijcai/NieLL17,2021Fast} follow the similarity fusion while ignore the partition matrix information. Both of them optimize the optimal similarity matrix by assuming the linear combination of multiple static or dynamic graphs. Our experimental results in Table \ref{compareresult} clearly show the advantages of partition level information fusion comparing to similarity fusion.\\
\begin{table*}[htbp]
\begin{center}
\caption{Experimental results on benchmark datasets with SWMC and Ours.}
\label{compareresult}
\begin{tabular}{|c|c|c|c|c|c|c|}
\hline
Datasets   & SWMC  & Ours           & SWMC   & Ours            & SWMC     & Ours             \\ \hline
\multicolumn{3}{|c|}{ACC}           & \multicolumn{2}{c|}{NMI} & \multicolumn{2}{c|}{Purity} \\ \hline
Caltech-5  & 12.16 & \textbf{41.37} & 22.07  & \textbf{72.85}  & 21.37    & \textbf{41.96}   \\ \hline
Caltech-10 & 8.24  & \textbf{35.59} & 18.6   & \textbf{64.34}  & 13.53    & \textbf{37.75}   \\ \hline
Caltech-15 & 6.73  & \textbf{36.61} & 13     & \textbf{61.05}  & 10.46    & \textbf{38.17}   \\ \hline
Caltech-20 & 6.67  & \textbf{35.9}  & 12.05  & \textbf{57.47}  & 9.46     & \textbf{37.06}   \\ \hline
CCV        & 10.59 & \textbf{29.79} & 0.35   & \textbf{22.1}   & 10.75    & \textbf{30.12}   \\ \hline
Flower17   & 12.72 & \textbf{62.35} & 19.02  & \textbf{59.39}  & 13.31    & \textbf{63.97}   \\ \hline
mfeat      & 42.1  & \textbf{95.9}  & 43.26  & \textbf{91.25}  & 99.75    & \textbf{95.9}    \\ \hline
AR10P      & 13.85 & \textbf{53.08} & 5.42   & \textbf{53.11}  & 15.48    & \textbf{53.08}   \\ \hline
\end{tabular}
\end{center}
\end{table*}

\subsection{Parameter Sensitivity of LF-MVC-LAM}

Notice that LF-MVC-LAM introduces two hyper-meters, i.e., the trade-off coefficient $\lambda$ and the neighborhood number $\tau$ which denotes $\tau$-nearest neighbors for the sample. To test the sensitivity of the proposed algorithm against these two parameters, we fix one parameter and tune the other in a large range and the results are reported in Figure. \ref{LMVC-LFA SensitivityFig}. The $\lambda$ is chosen ranging from $\left[2^{-5}, 2^{-4}, \cdots, 2^{5}\right]$ and $\tau$ is carefully searched in the range of $[0.1,0.2,\cdots,0.9,1]*n$. The Figure \ref{LMVC-LFA SensitivityFig} shows the sensitivity experimental results on AR10P, CCV, Flower17, Mfeat, Plant and YALE.

From these figures, we observe that: i) all the two hyper-parameters are effective in improving the clustering performance; ii) LF-MVC-LAM is practically stable against the these parameters that it achieves competitive performance in a wide range of parameter settings; iii) the acc first increases to a high value and generally maintains it up to slight variation with the increasing value of $\lambda$. LF-MVC-LAM demonstrates stable performance across a wide range of $\lambda$. iiii) the proposed algorithm is relatively sensitive to the neighbor numbers $\tau$ which relatively reflects the inner data structure of data themselves. However, it still out-performances the second-best algorithm in most of the benchmark datasets.

\begin{figure*}[!htbp]
\centering
\subfloat[AR10P($\lambda$,$\tau$)]{\includegraphics[width = 0.3\textwidth]{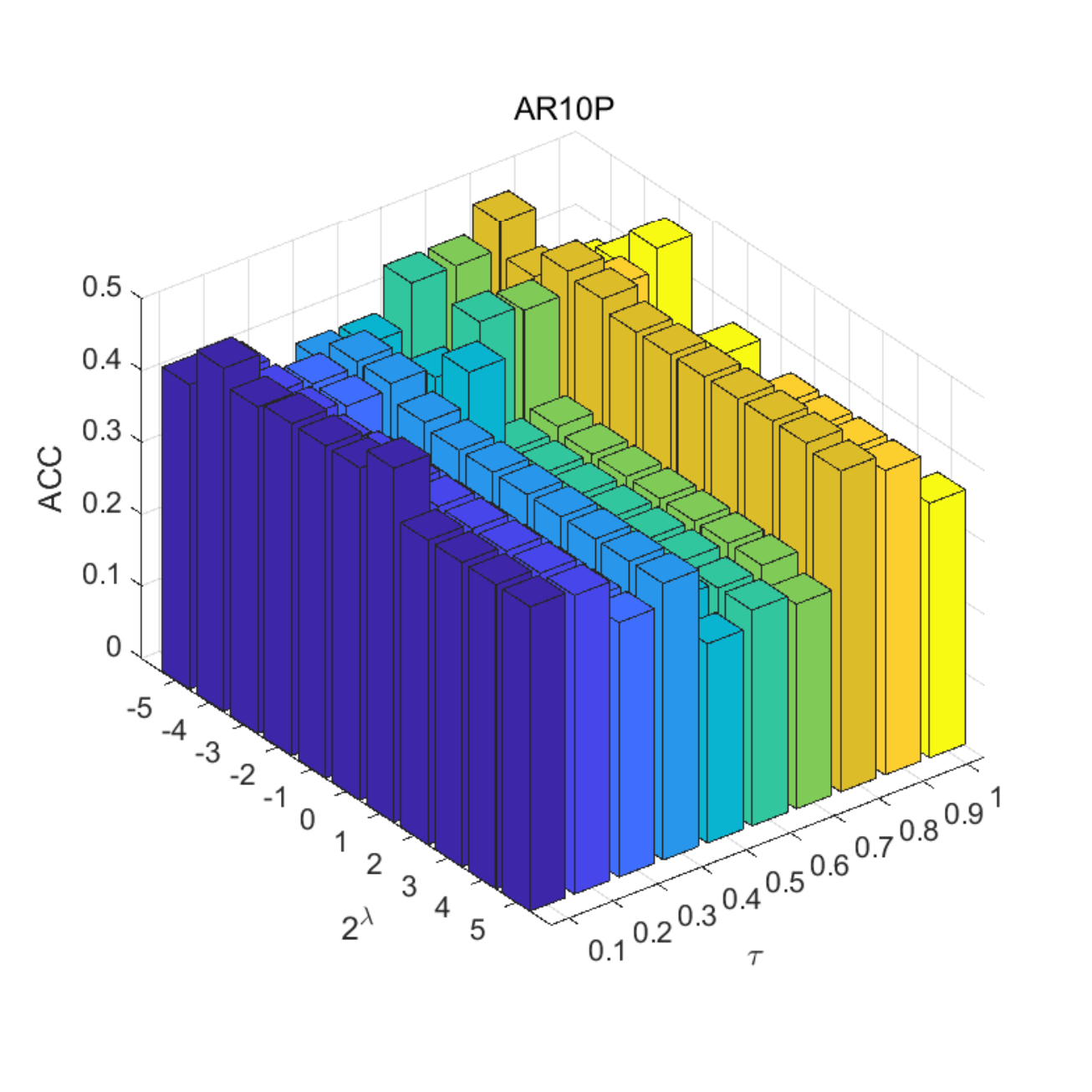}\label{flower17_paraMRTauACC}}
\subfloat[CCV($\lambda$,$\tau$)]{\includegraphics[width = 0.3\textwidth]{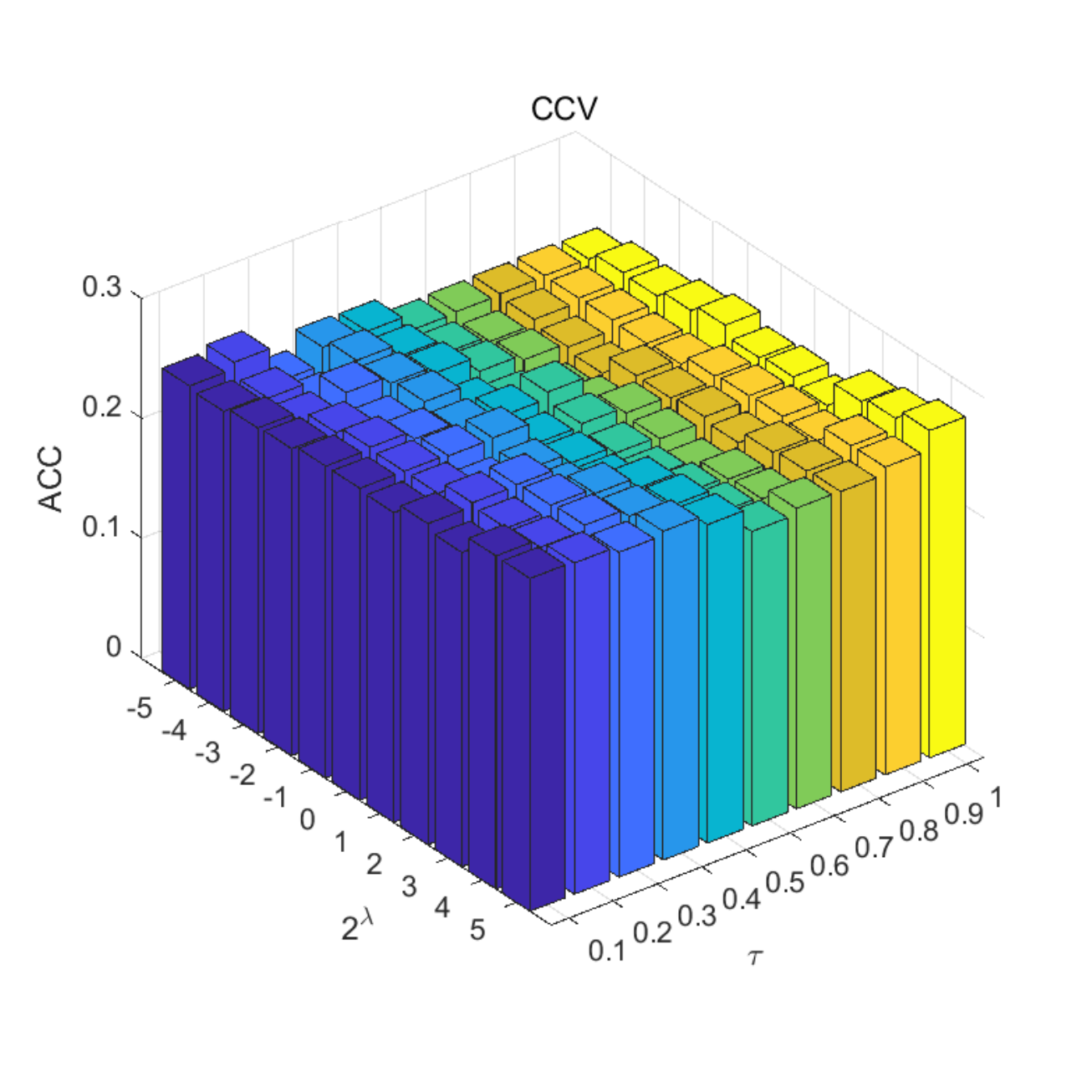}\label{flower17_paraMRTauNMI}}
\subfloat[Flower17($\lambda$,$\tau$)]{\includegraphics[width = 0.3\textwidth]{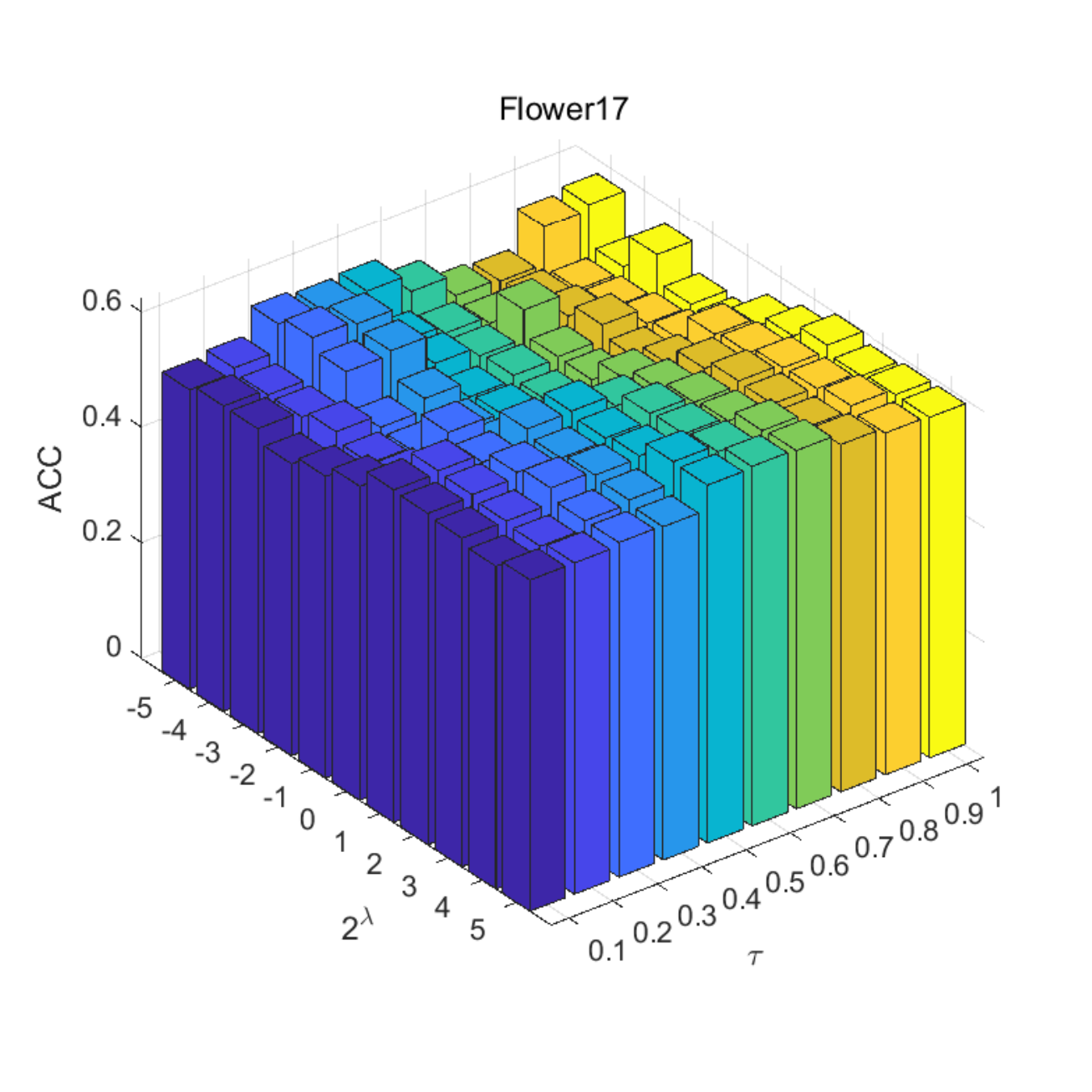}\label{flower17_paraMRLambdaACC}}\\[-10pt]
\subfloat[Mfeat($\lambda$,$\tau$)]{\includegraphics[width = 0.3\textwidth]{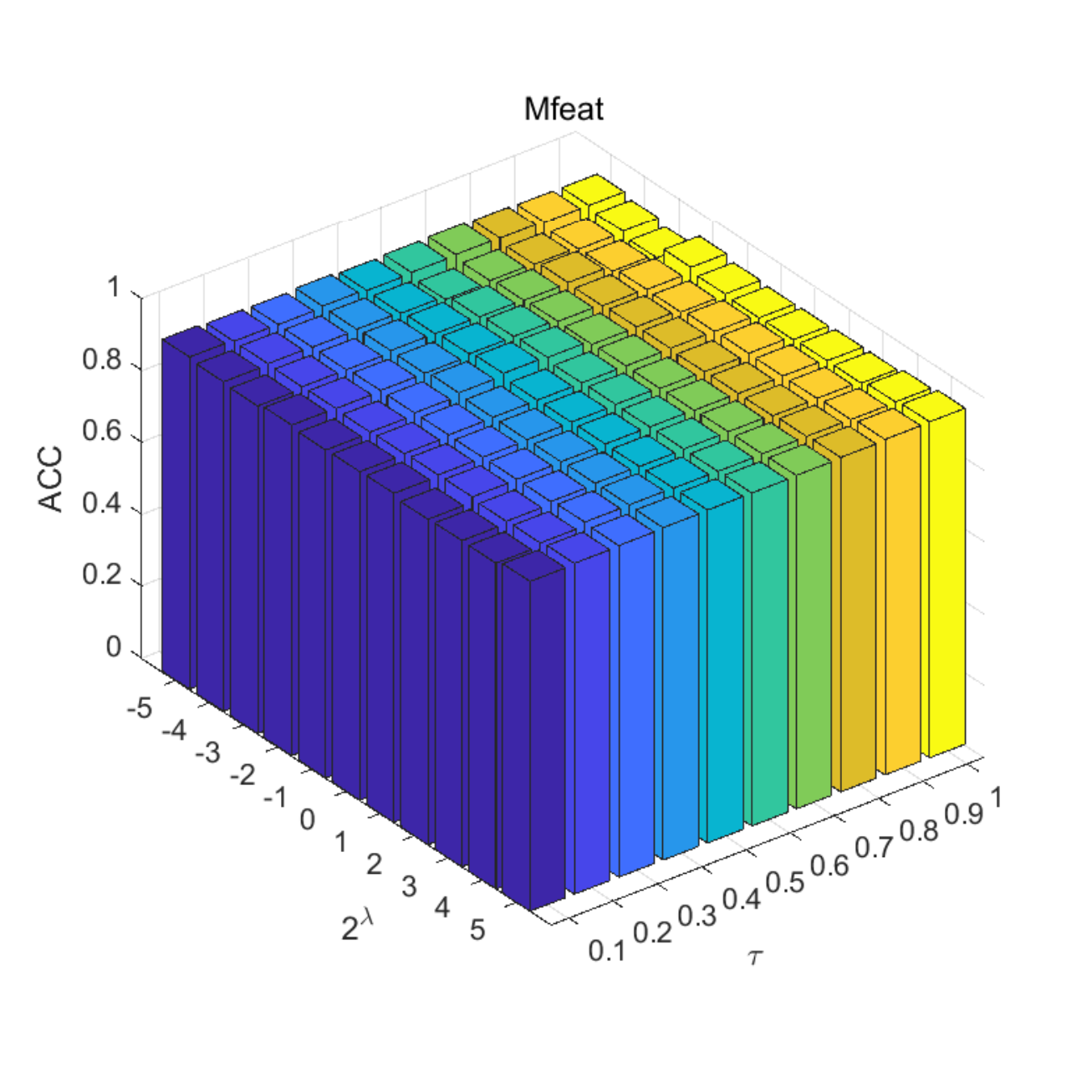}\label{flower17_paraMRLambdaNMI}}
\subfloat[Plant($\lambda$,$\tau$)]{\includegraphics[width = 0.3\textwidth]{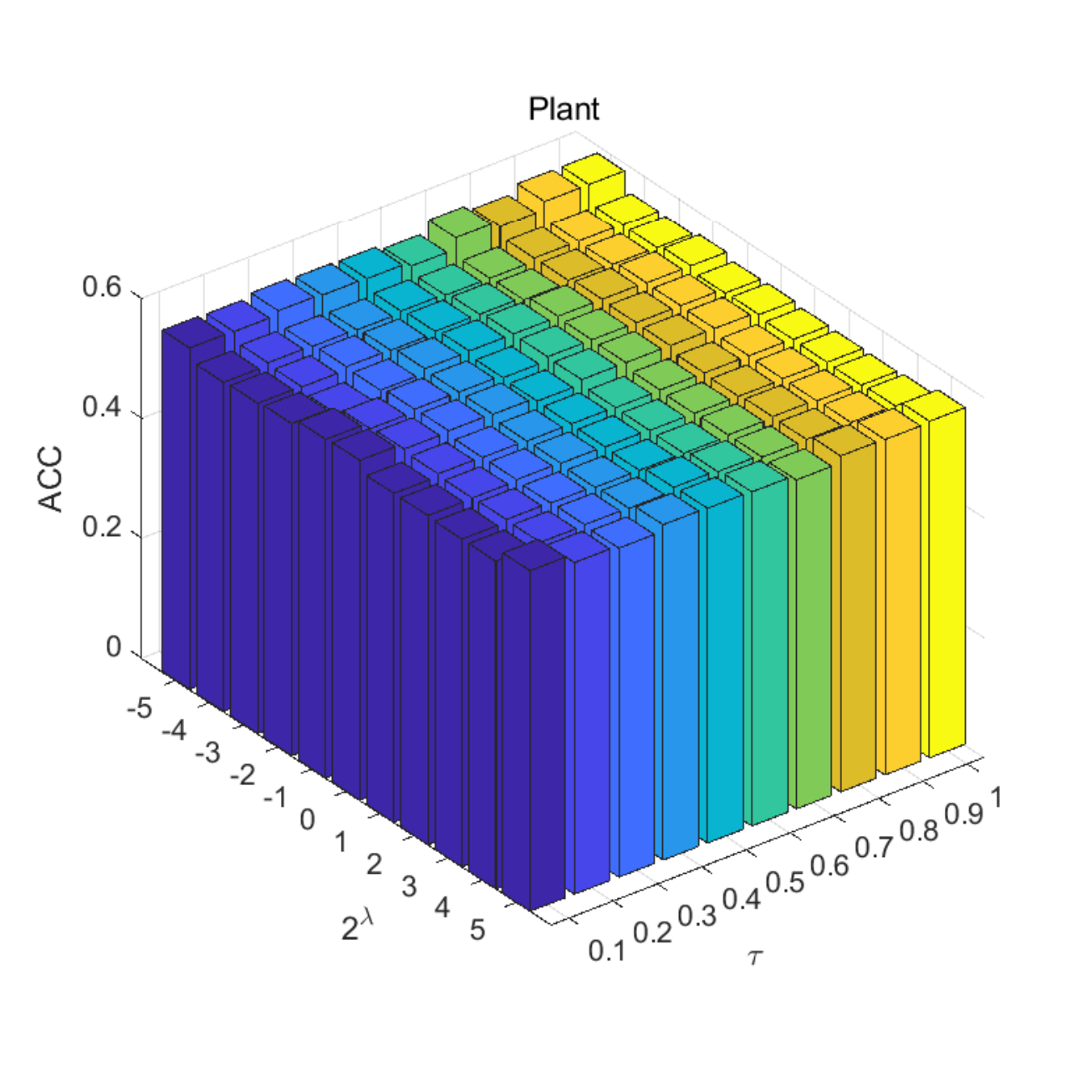}\label{flower17_paraMRLambdaNMI}}
\subfloat[YALE($\lambda$,$\tau$)]{\includegraphics[width = 0.3\textwidth]{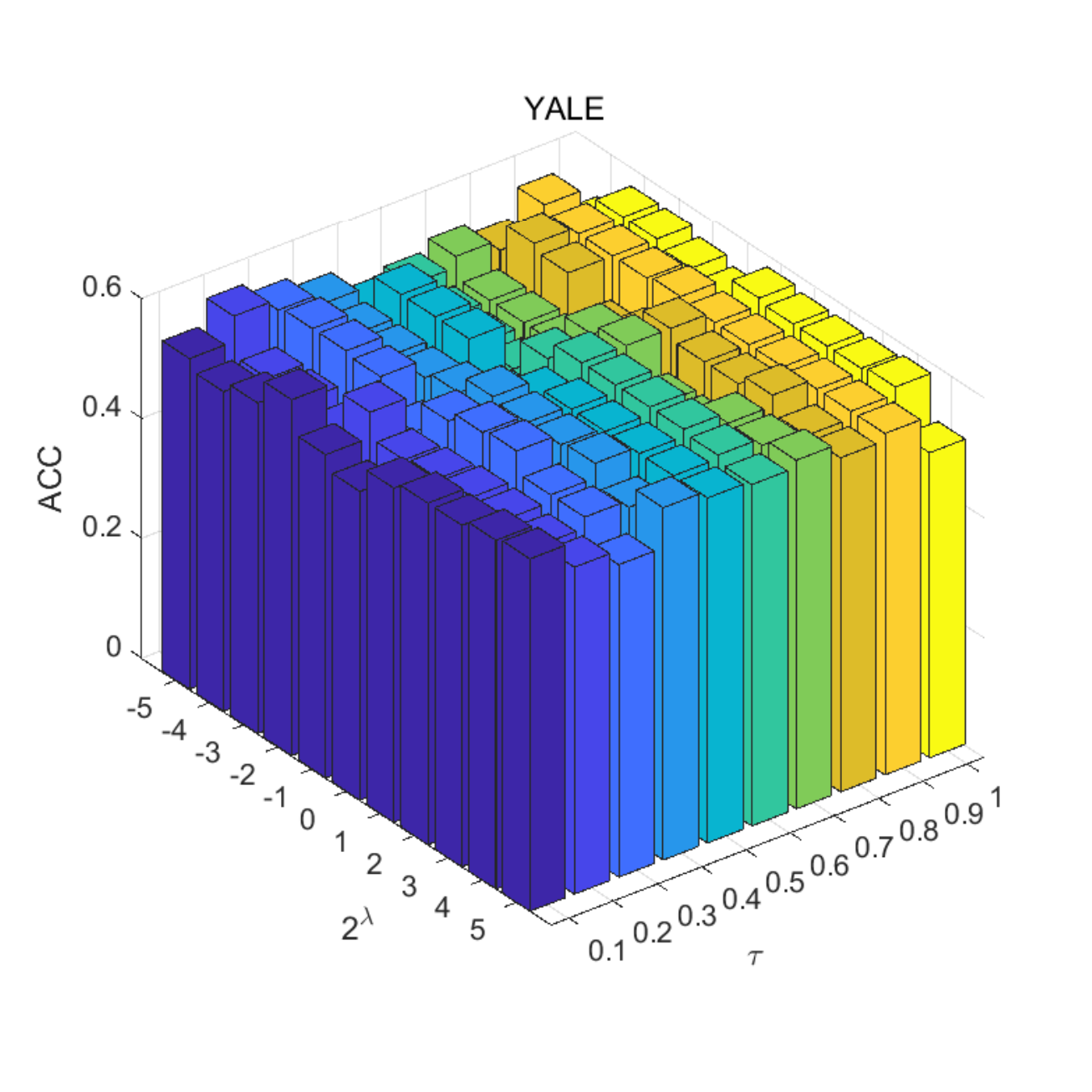}\label{flower17_paraMRLambdaNMI}}
\caption{{The sensitivity of the proposed LF-MVC-LAM with the variation of $\lambda$ and $\tau$ on six benchmark datasets. }}\label{LMVC-LFA SensitivityFig}
\end{figure*}

\section{convergence}
The examples of the evolution of the objective value on the experimental results are shown in Figure \ref{MVC-LFA convergence1} and \ref{LMVC-LFA convergence1}.

\begin{figure*}[!htbp]
\centering
\subfloat[Caltech102-25]{\includegraphics[width = 0.25\textwidth]{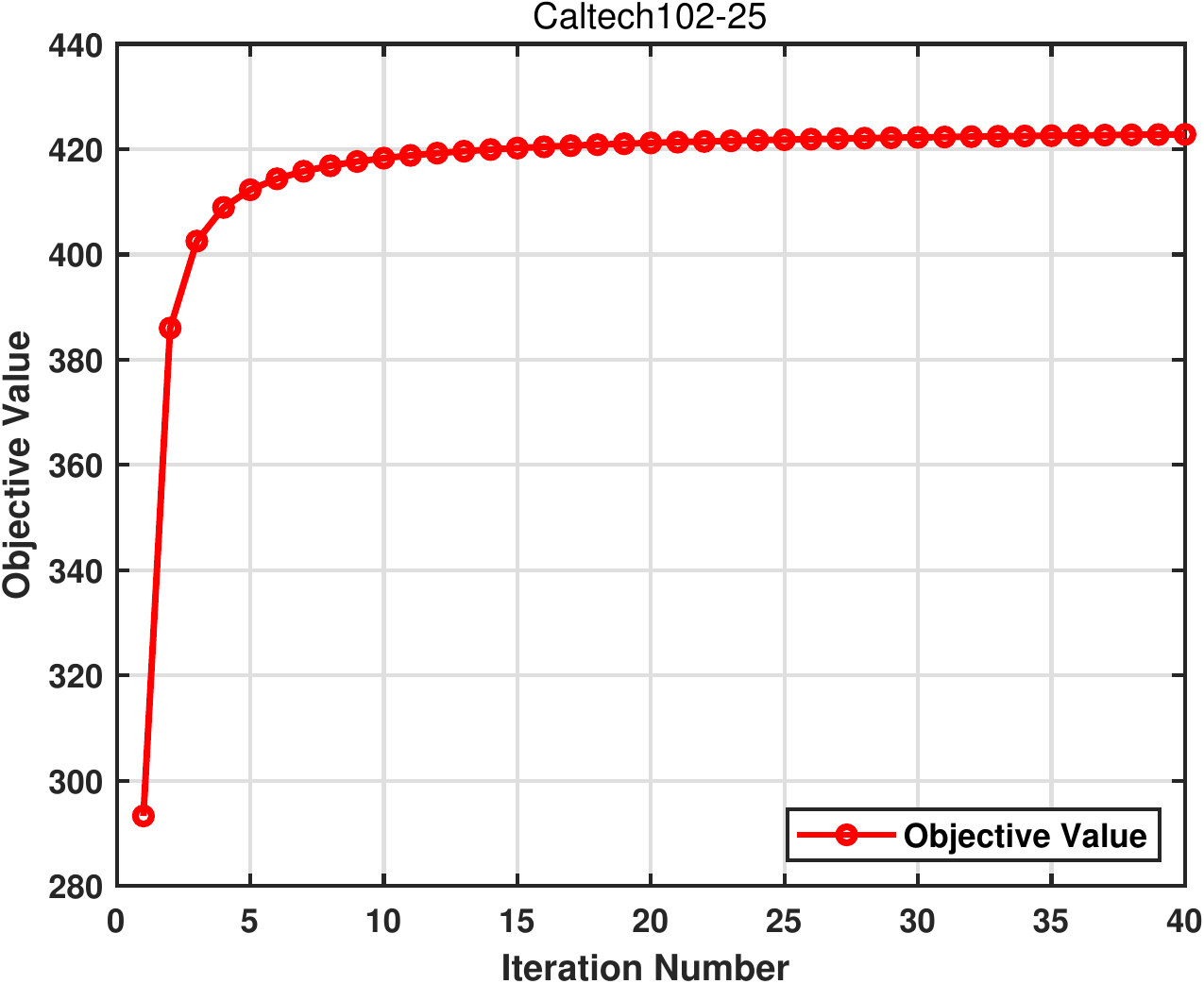}\label{flower17_paraMRLambdaACC}}
\subfloat[Caltech102-30]{\includegraphics[width = 0.25\textwidth]{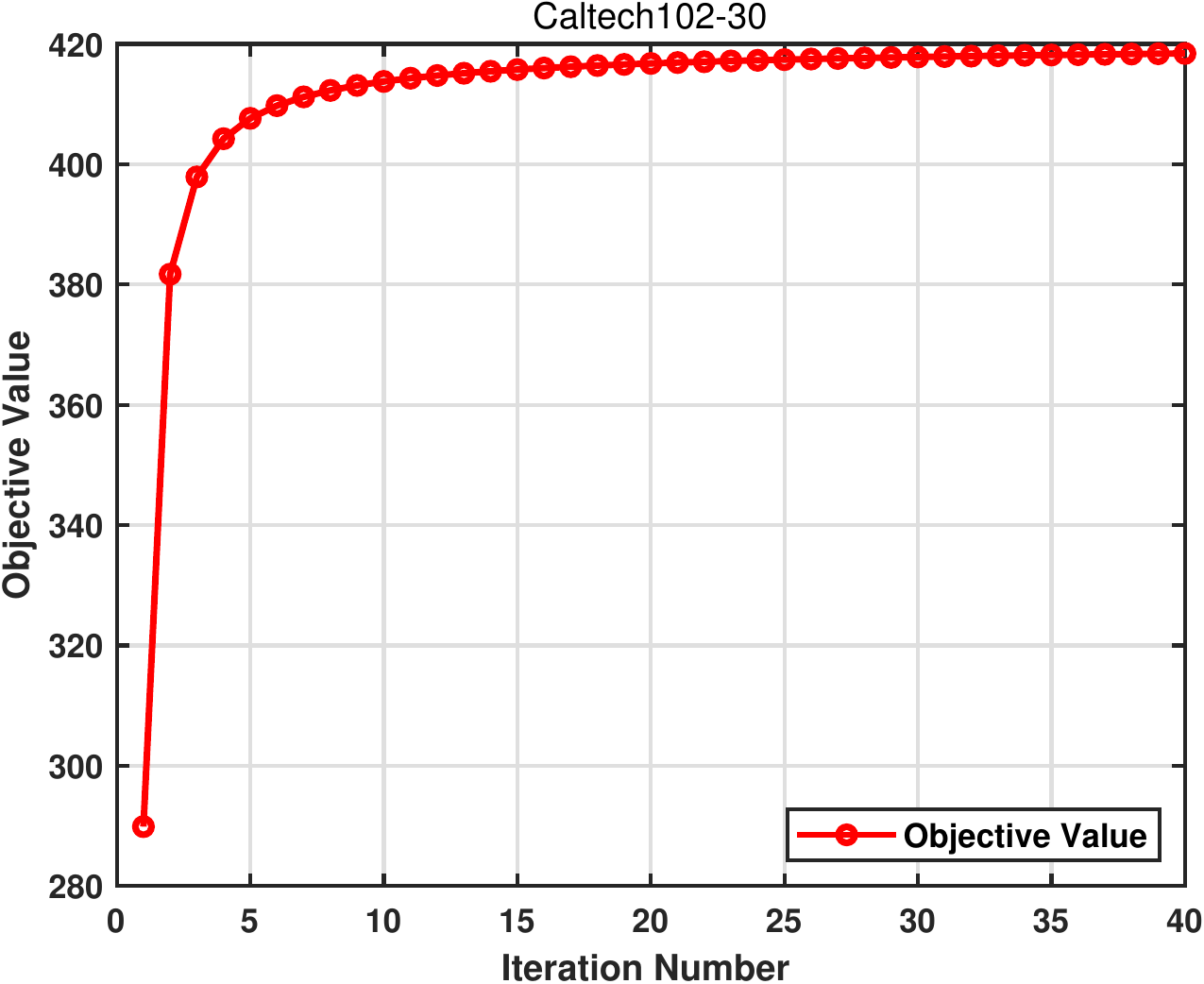}\label{flower17_paraMRLambdaNMI}}
\subfloat[AR10P]{\includegraphics[width = 0.25\textwidth]{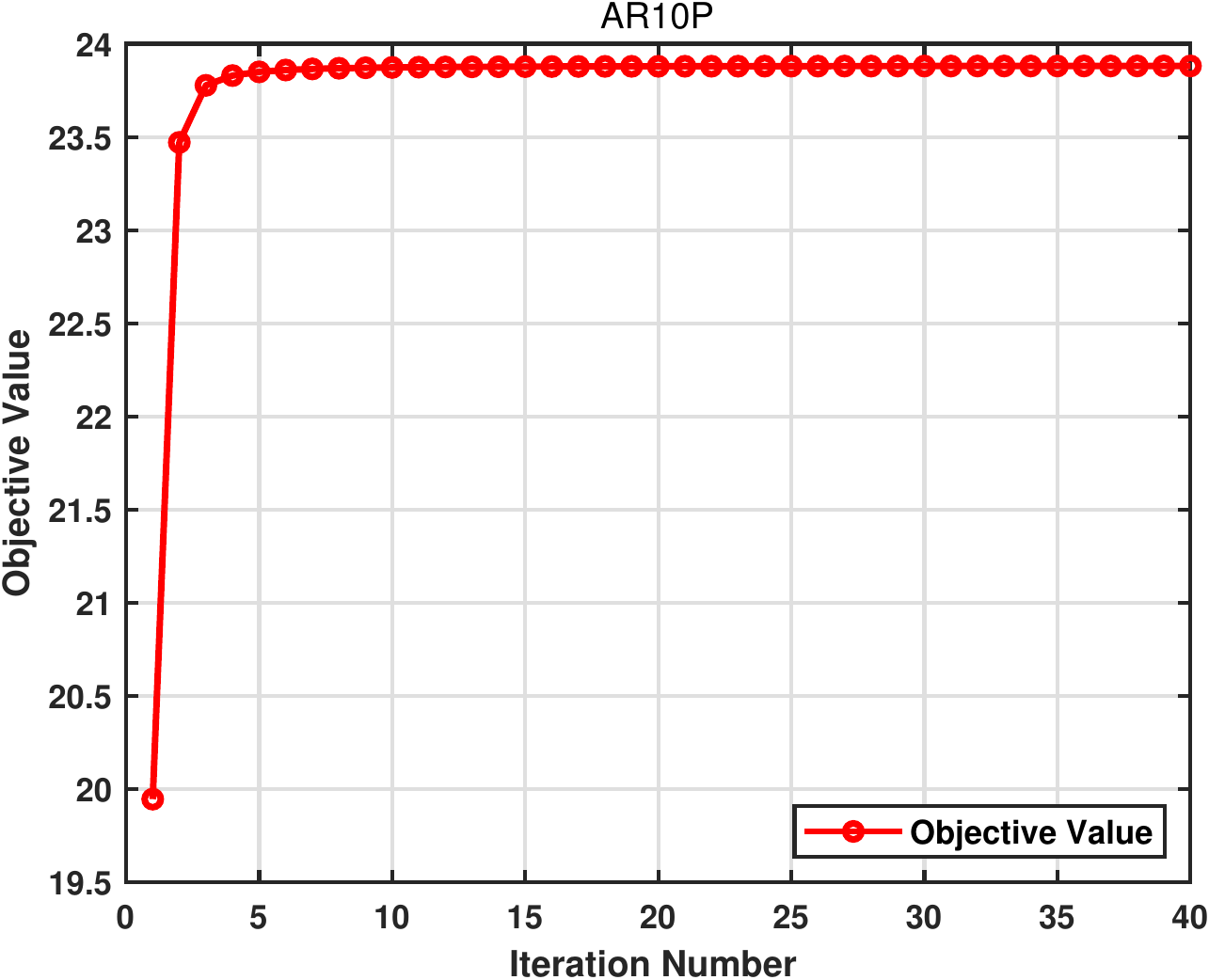}\label{flower17_paraMRLambdaNMI}}
\subfloat[Mfeat]{\includegraphics[width = 0.25\textwidth]{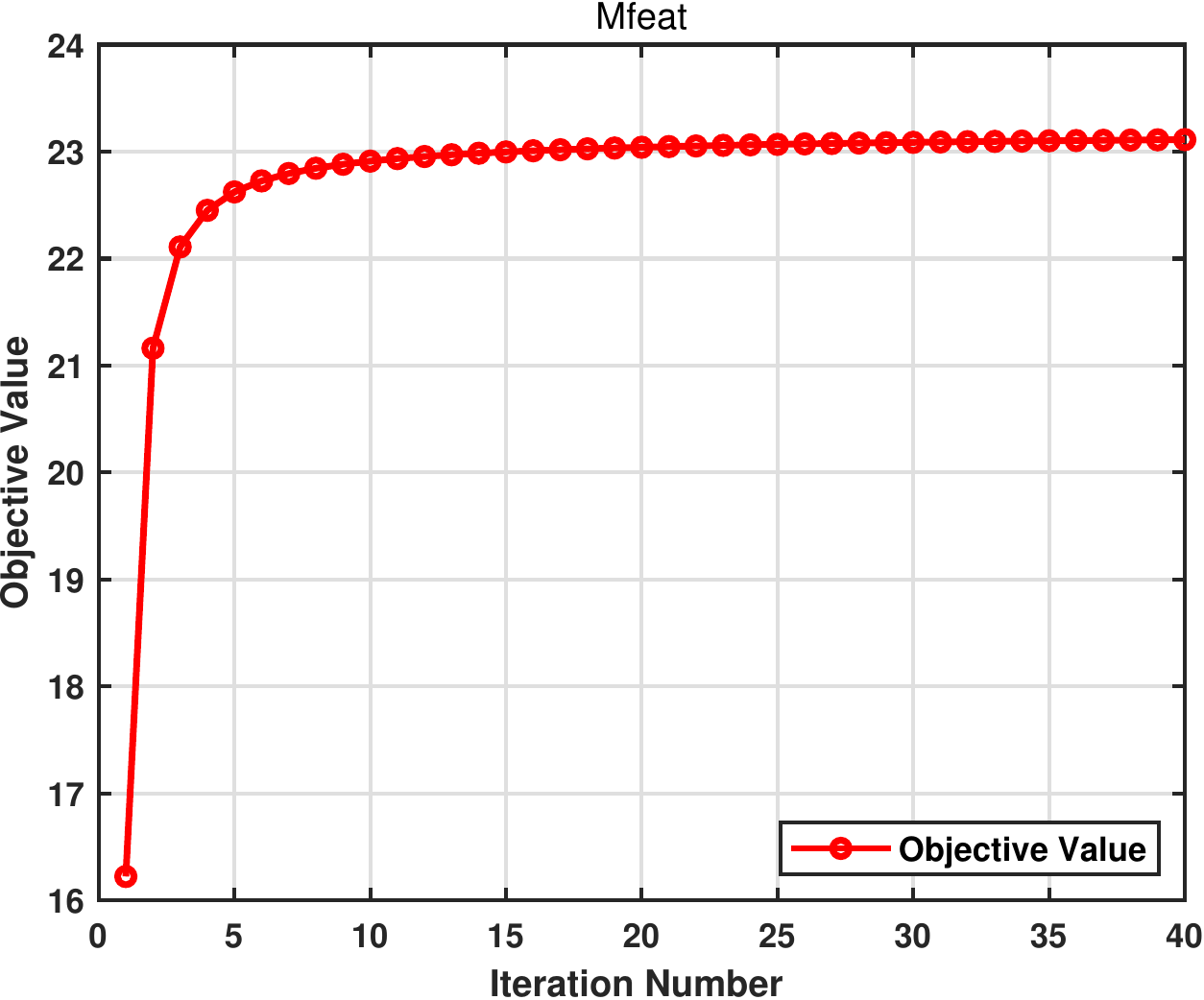}\label{flower17_paraMRLambdaNMI}}\\
\subfloat[YALE]{\includegraphics[width = 0.25\textwidth]{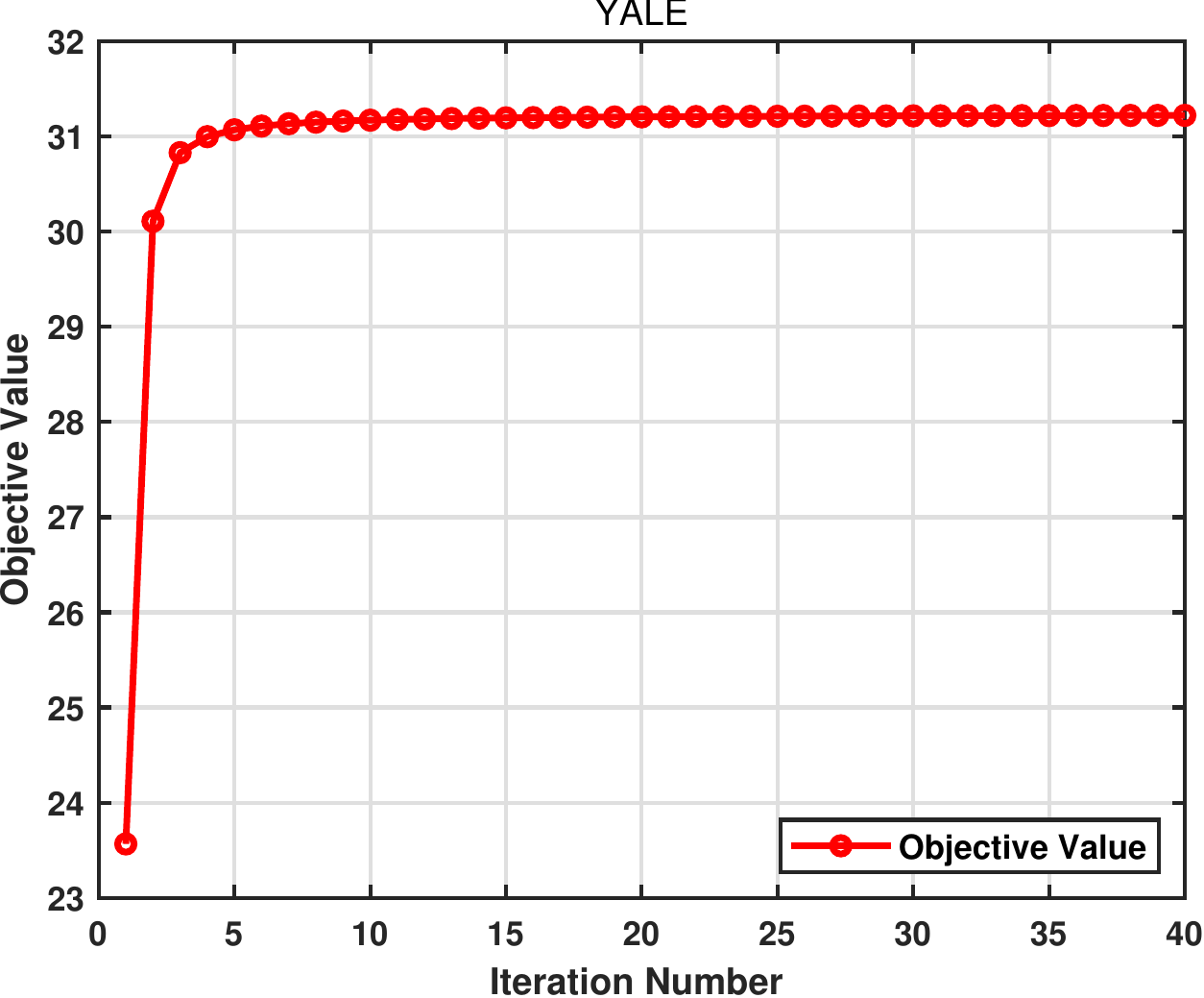}\label{flower17_paraMRLambdaNMI}}
\subfloat[Plant]{\includegraphics[width = 0.25\textwidth]{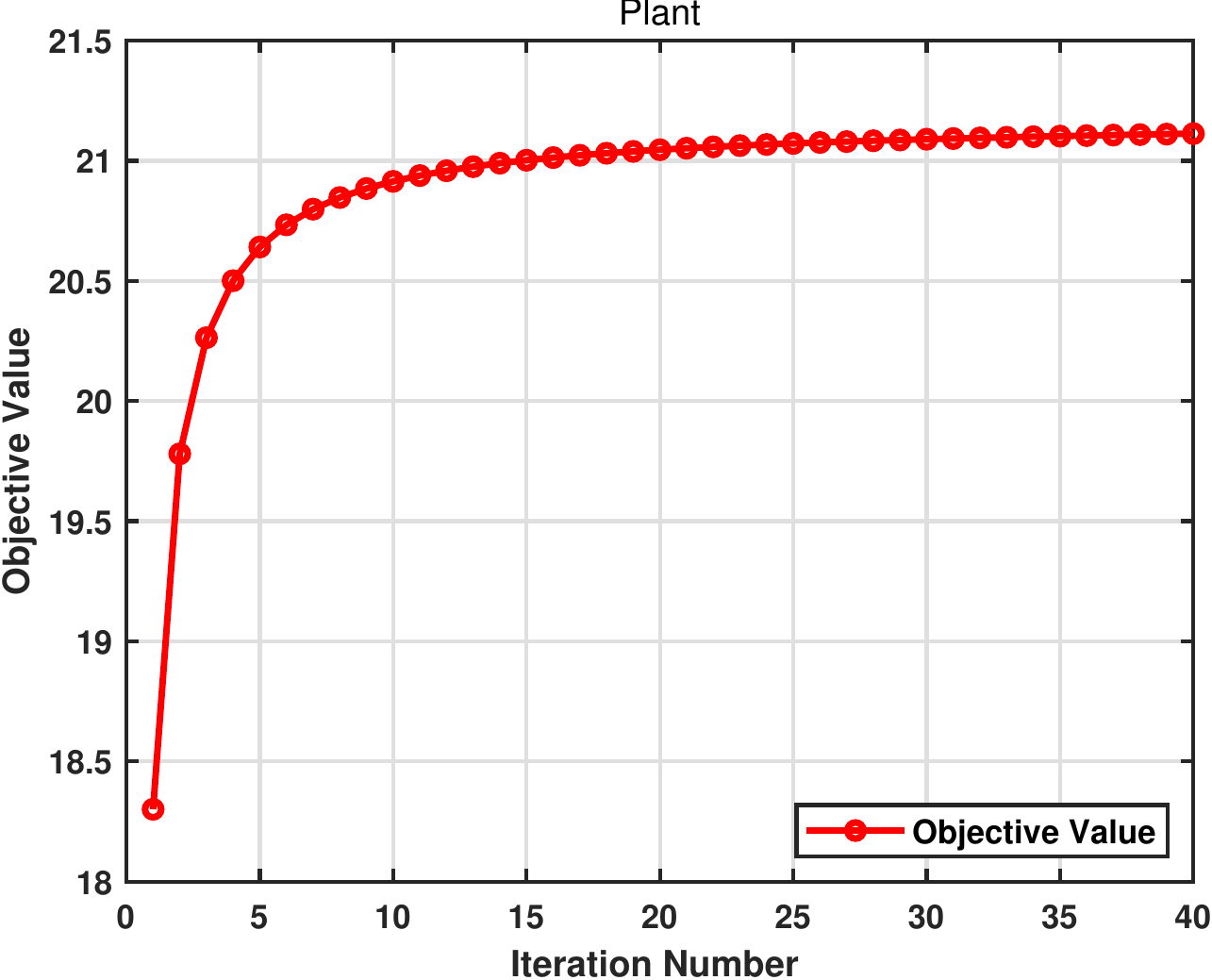}\label{flower17_paraMRLambdaNMI}}
\subfloat[CCV]{\includegraphics[width = 0.25\textwidth]{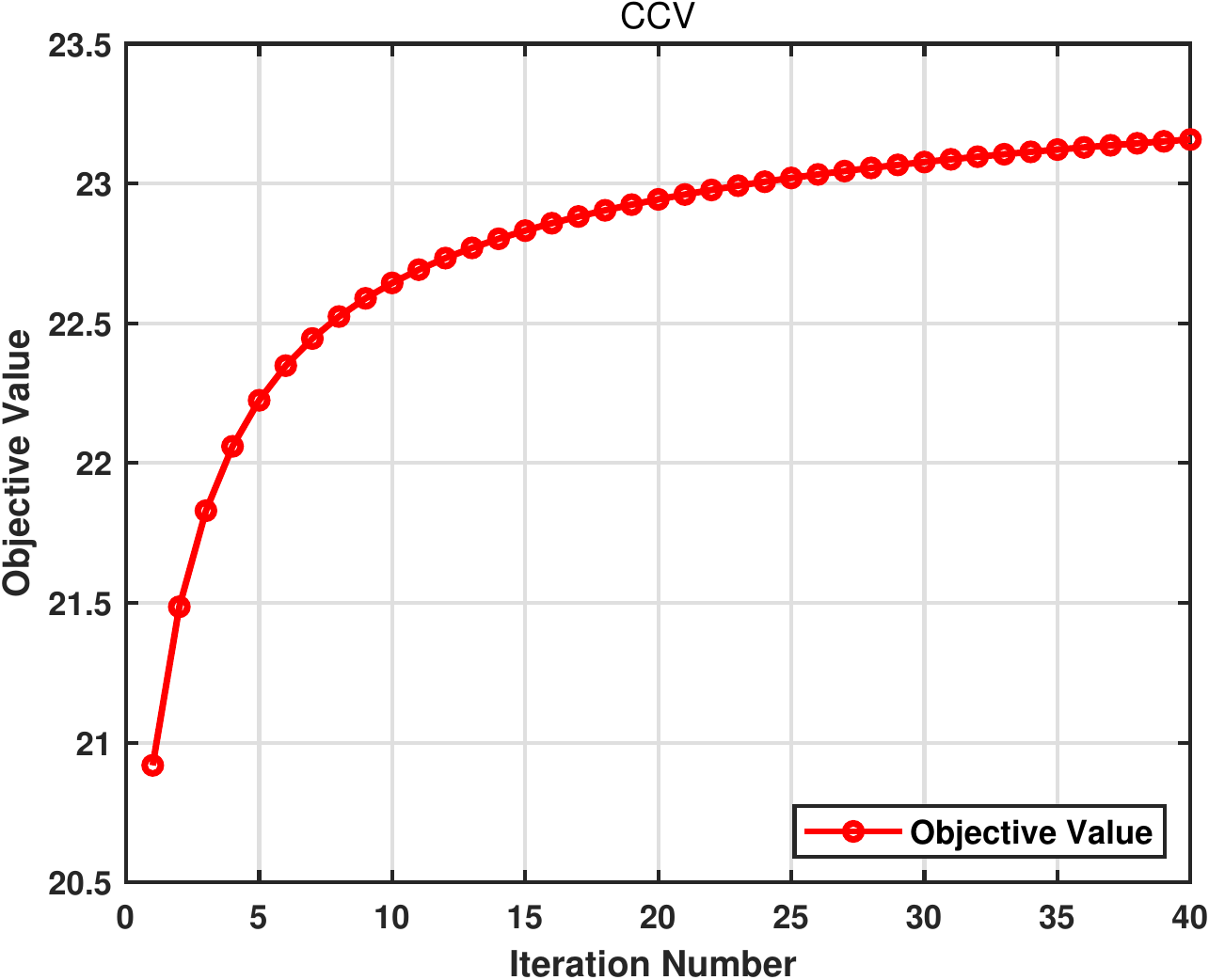}\label{flower17_paraMRLambdaNMI}}
\subfloat[Flower17]{\includegraphics[width = 0.25\textwidth]{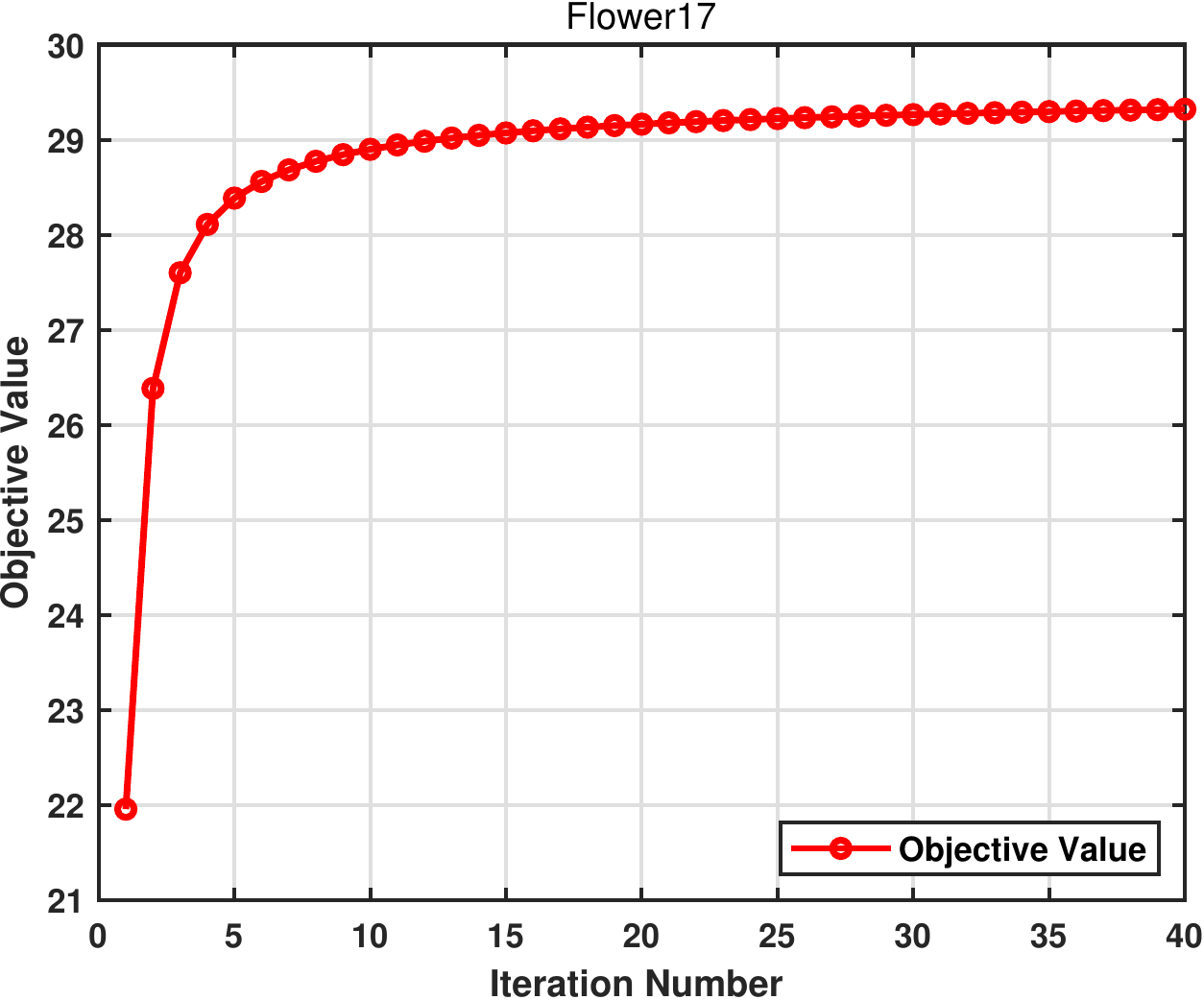}\label{flower17_paraMRLambdaNMI}}
\caption{{The convergence of the proposed LFA-GAM other eight datasets. }}\label{MVC-LFA convergence1}
\end{figure*}

\begin{figure*}[!htbp]
\centering
\subfloat[Caltech102-25]{\includegraphics[width = 0.25\textwidth]{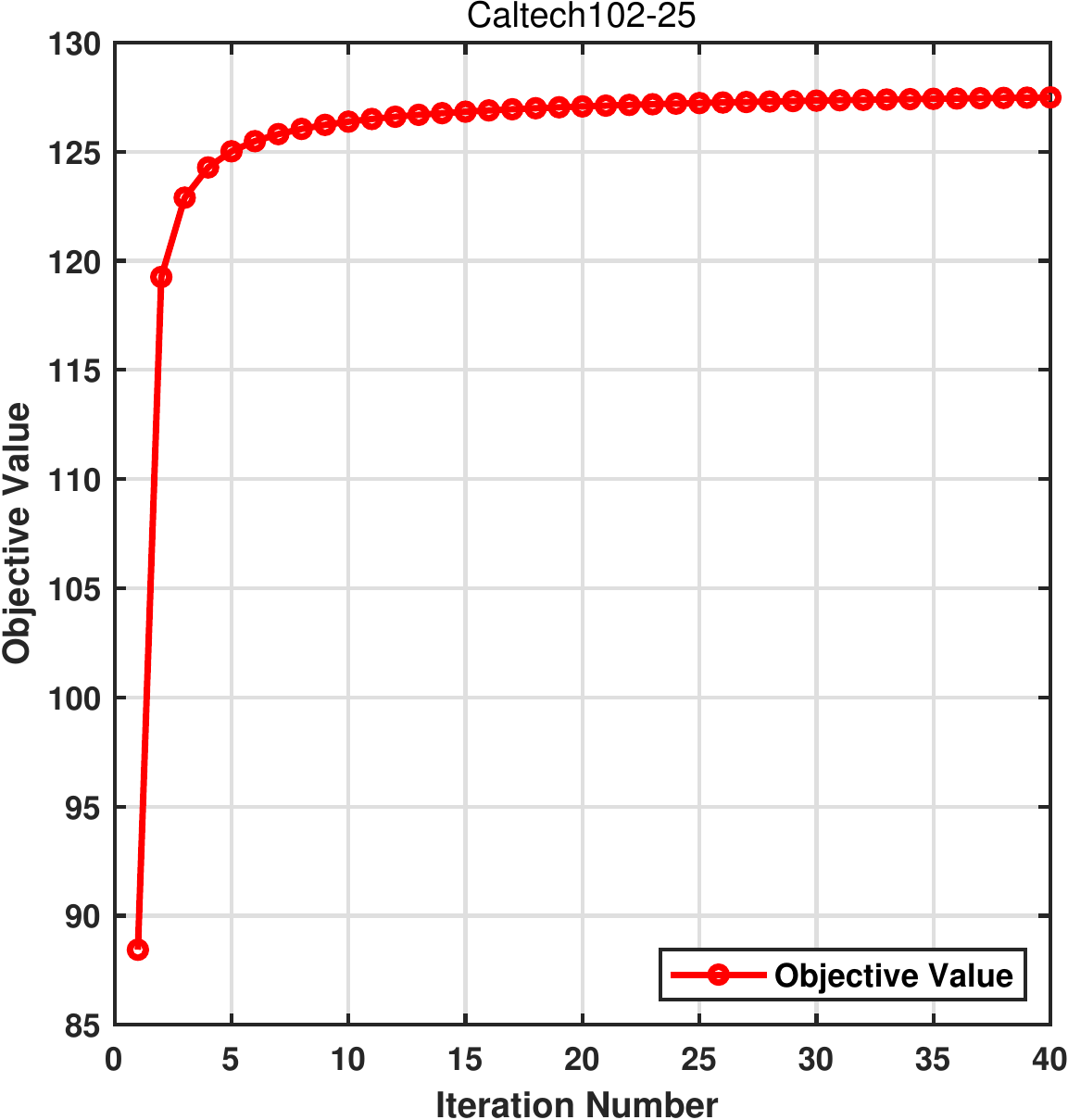}\label{flower17_paraMRLambdaACC}}
\subfloat[Caltech102-30]{\includegraphics[width = 0.25\textwidth]{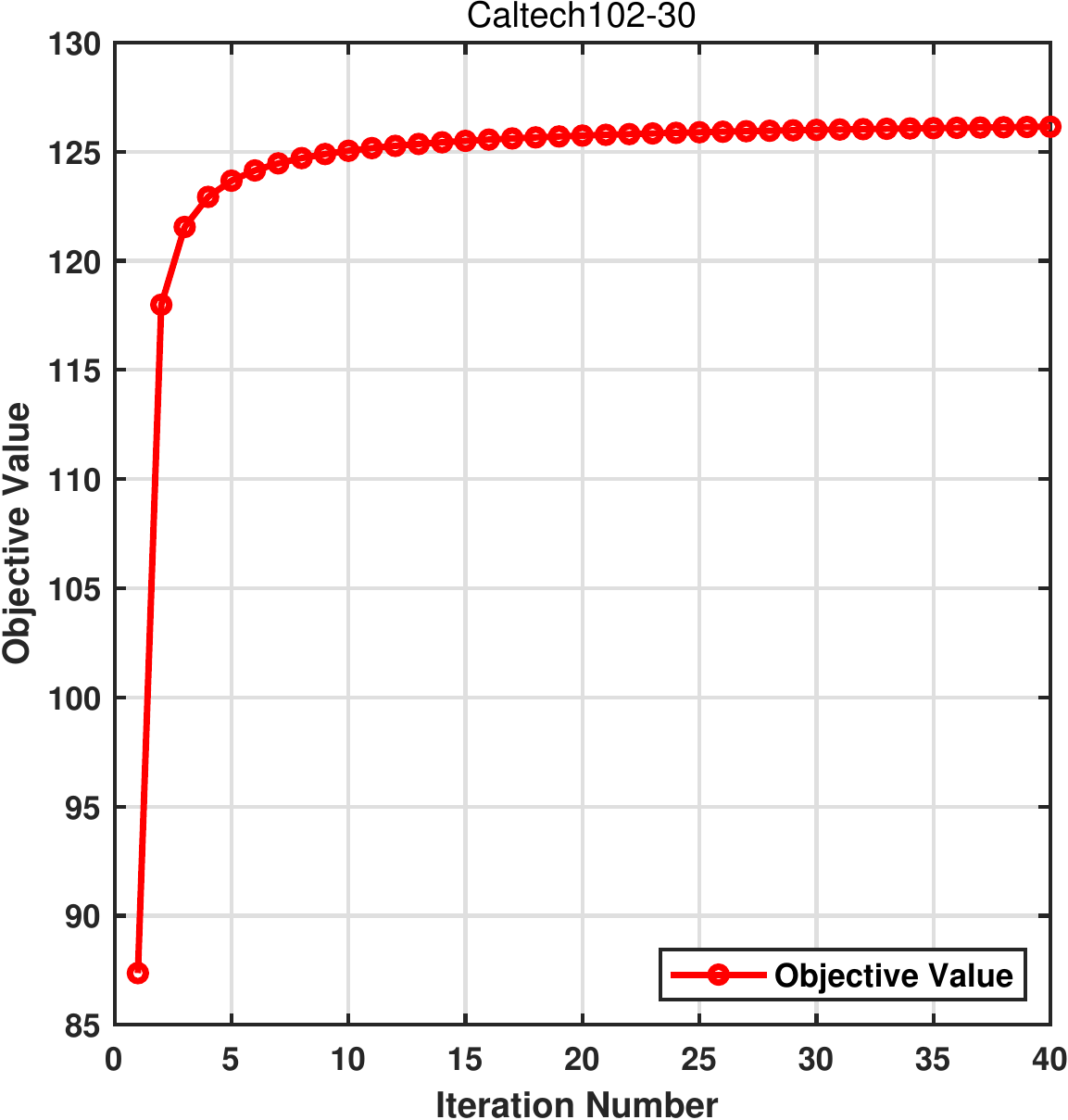}\label{flower17_paraMRLambdaNMI}}
\subfloat[AR10P]{\includegraphics[width = 0.25\textwidth]{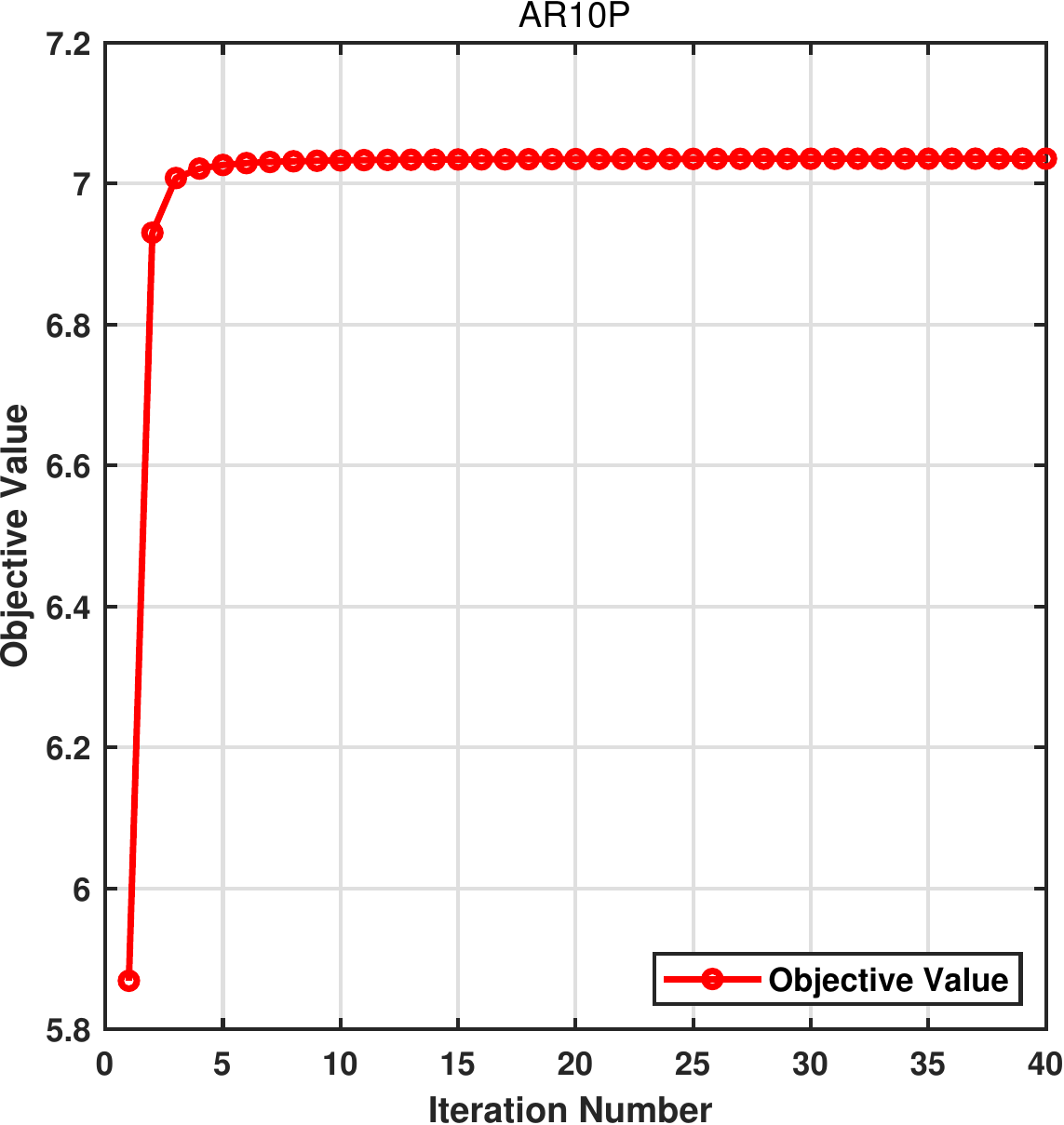}\label{flower17_paraMRLambdaNMI}}
\subfloat[Mfeat]{\includegraphics[width = 0.25\textwidth]{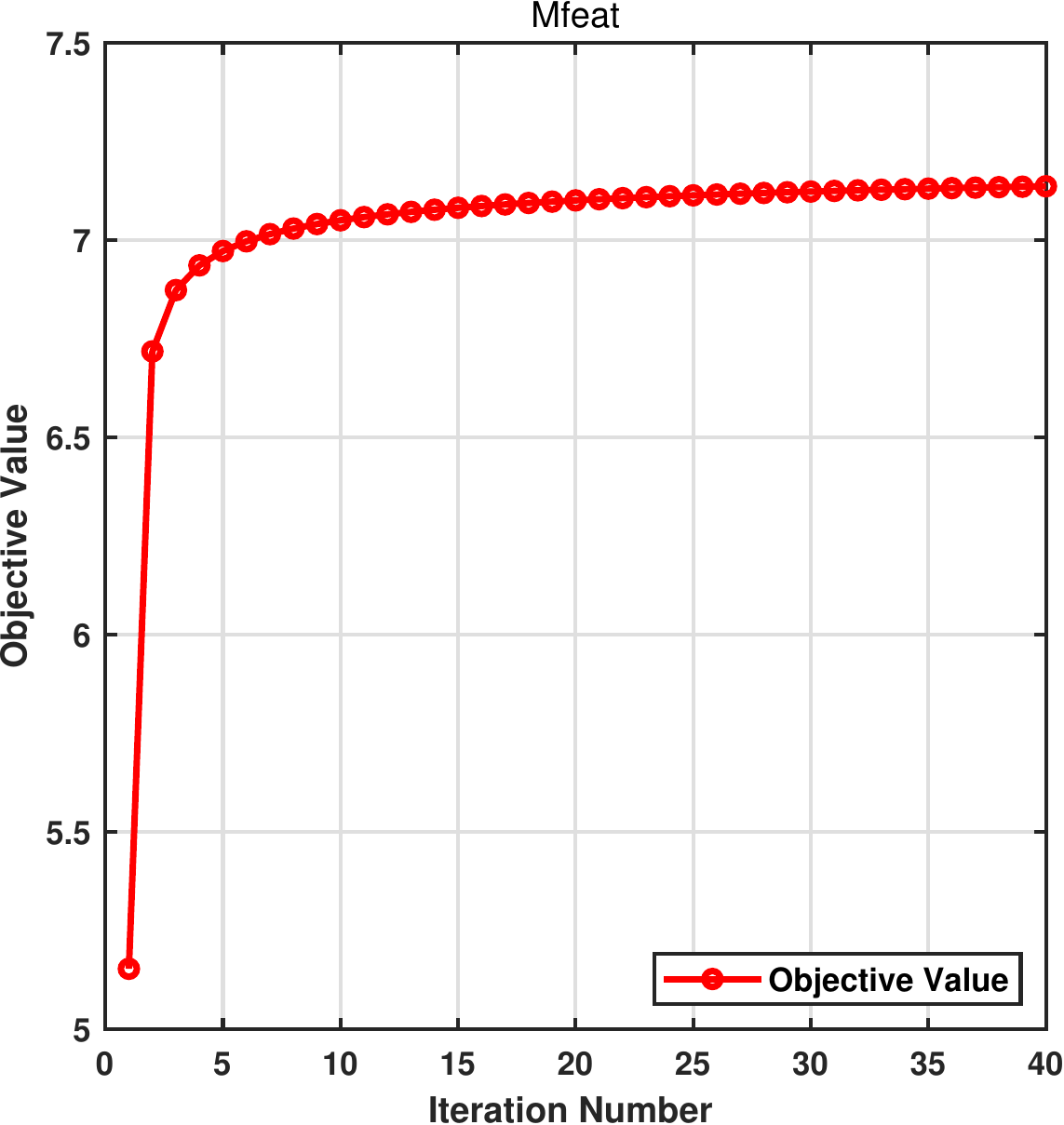}\label{flower17_paraMRLambdaNMI}}\\
\subfloat[YALE]{\includegraphics[width = 0.25\textwidth]{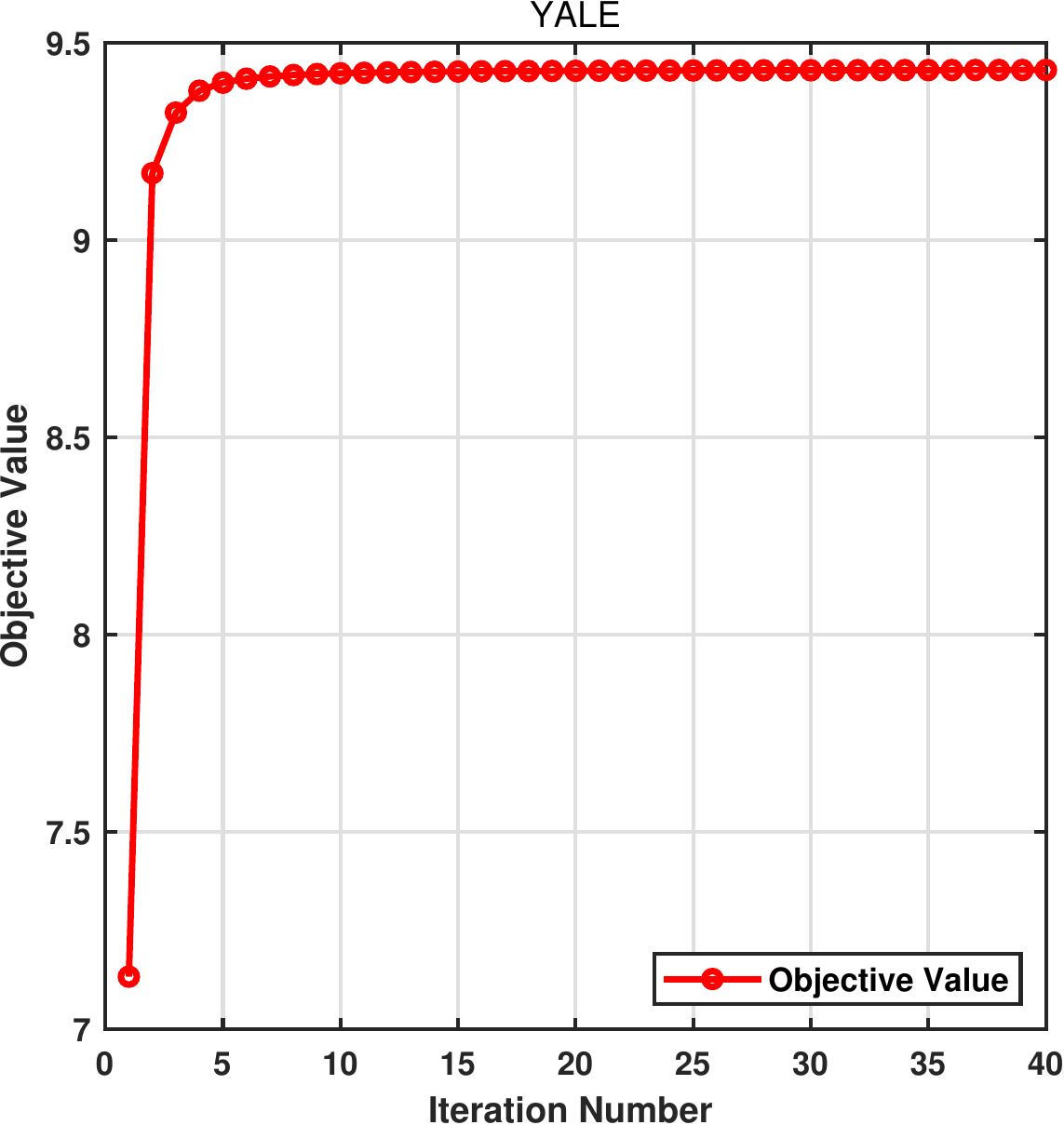}\label{flower17_paraMRLambdaNMI}}
\subfloat[Plant]{\includegraphics[width = 0.25\textwidth]{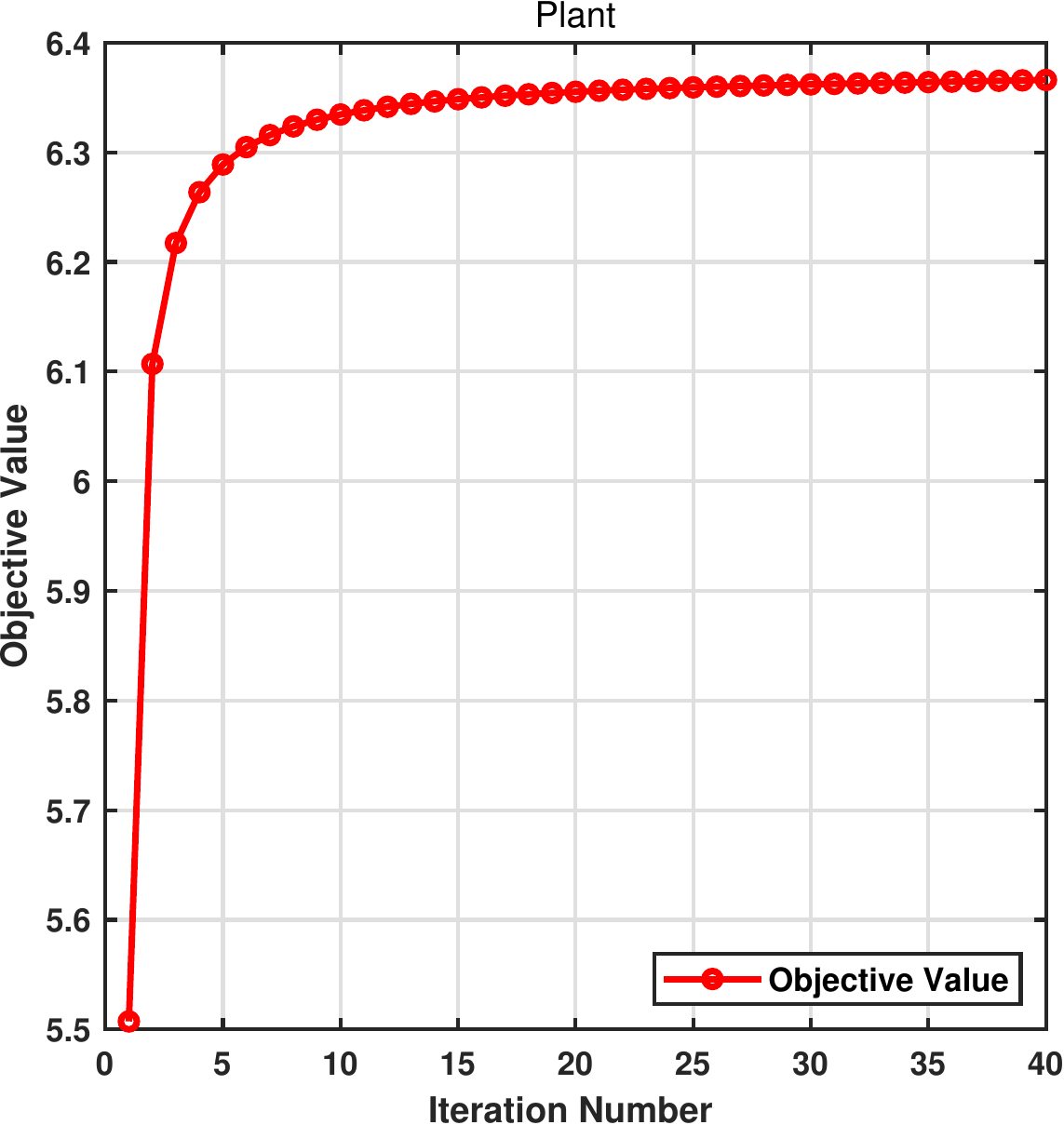}\label{flower17_paraMRLambdaNMI}}
\subfloat[CCV]{\includegraphics[width = 0.25\textwidth]{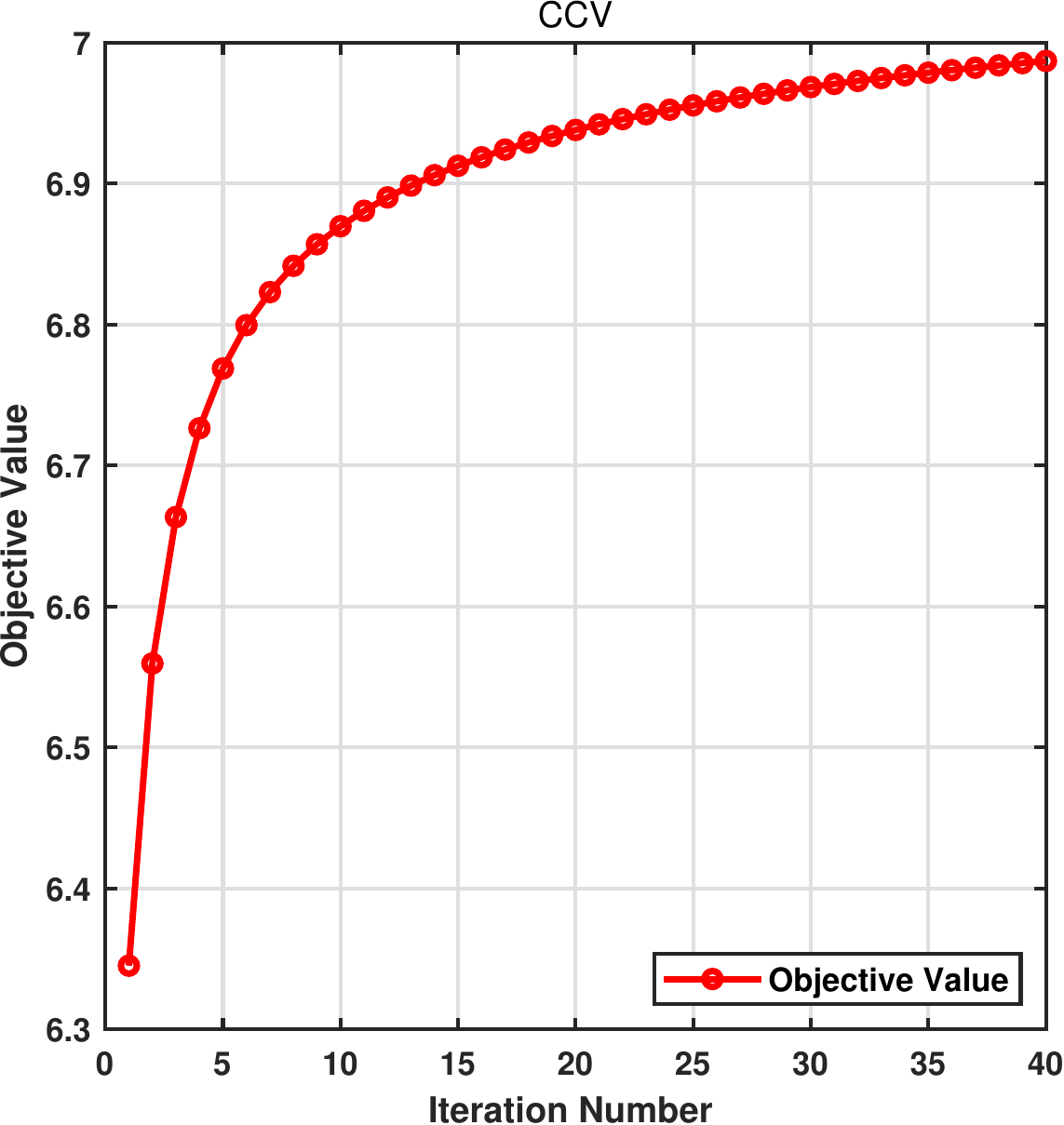}\label{flower17_paraMRLambdaNMI}}
\subfloat[Flower17]{\includegraphics[width = 0.25\textwidth]{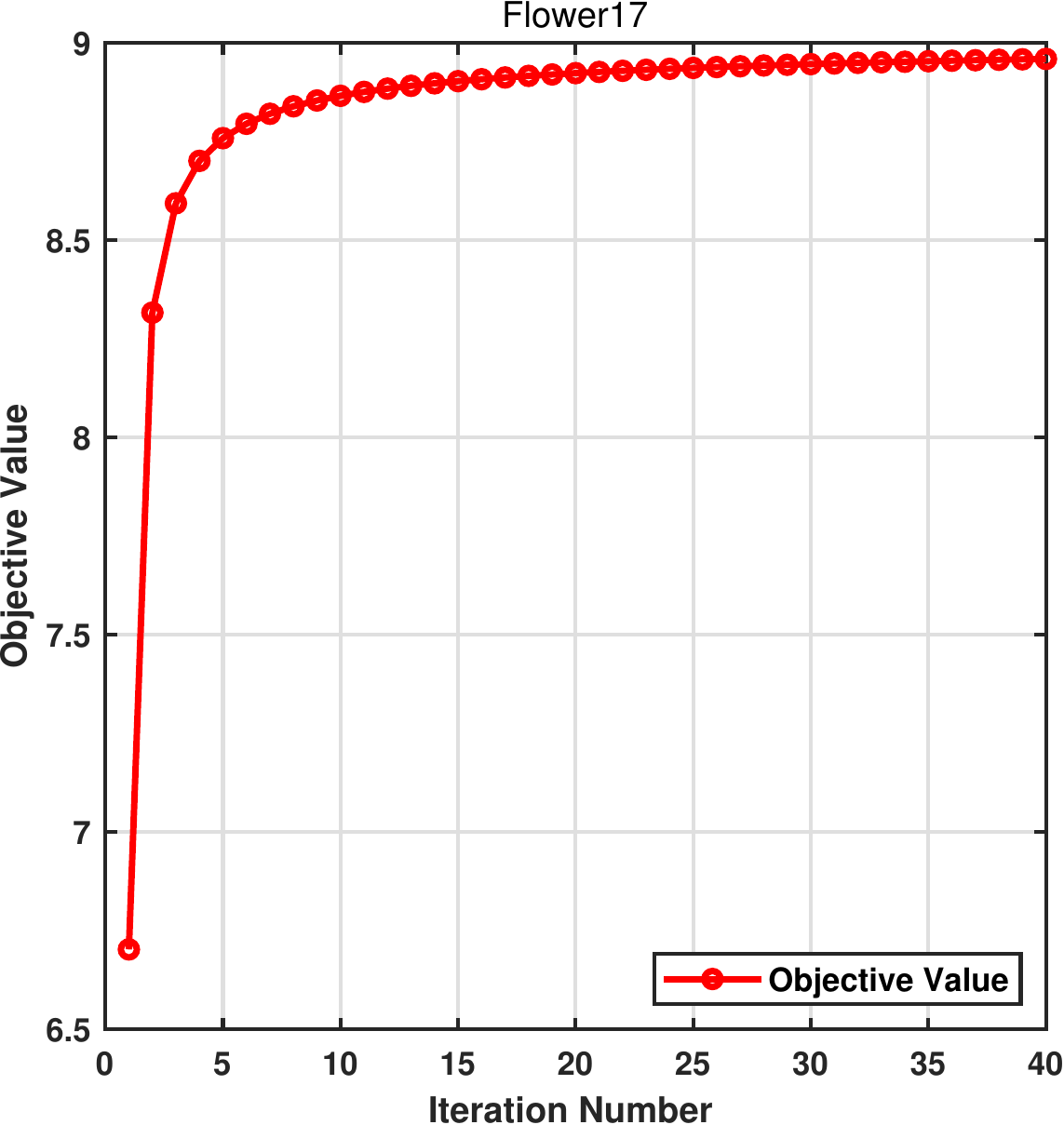}\label{flower17_paraMRLambdaNMI}}
\caption{{The convergence of the proposed LFA-LAM other eight datasets. }}\label{LMVC-LFA convergence1}
\end{figure*}

\section{Evolution of the learned affinity matrices}
It can be observed form Figure \ref{evolution_s}, the learned affinity matrices show clearer block clustering structure with the variation of iterations. 
	
\begin{figure*}[!htbp]
\centering
\subfloat[Mfeat $1^{st}$ iteration]{\includegraphics[width = 0.25\textwidth]{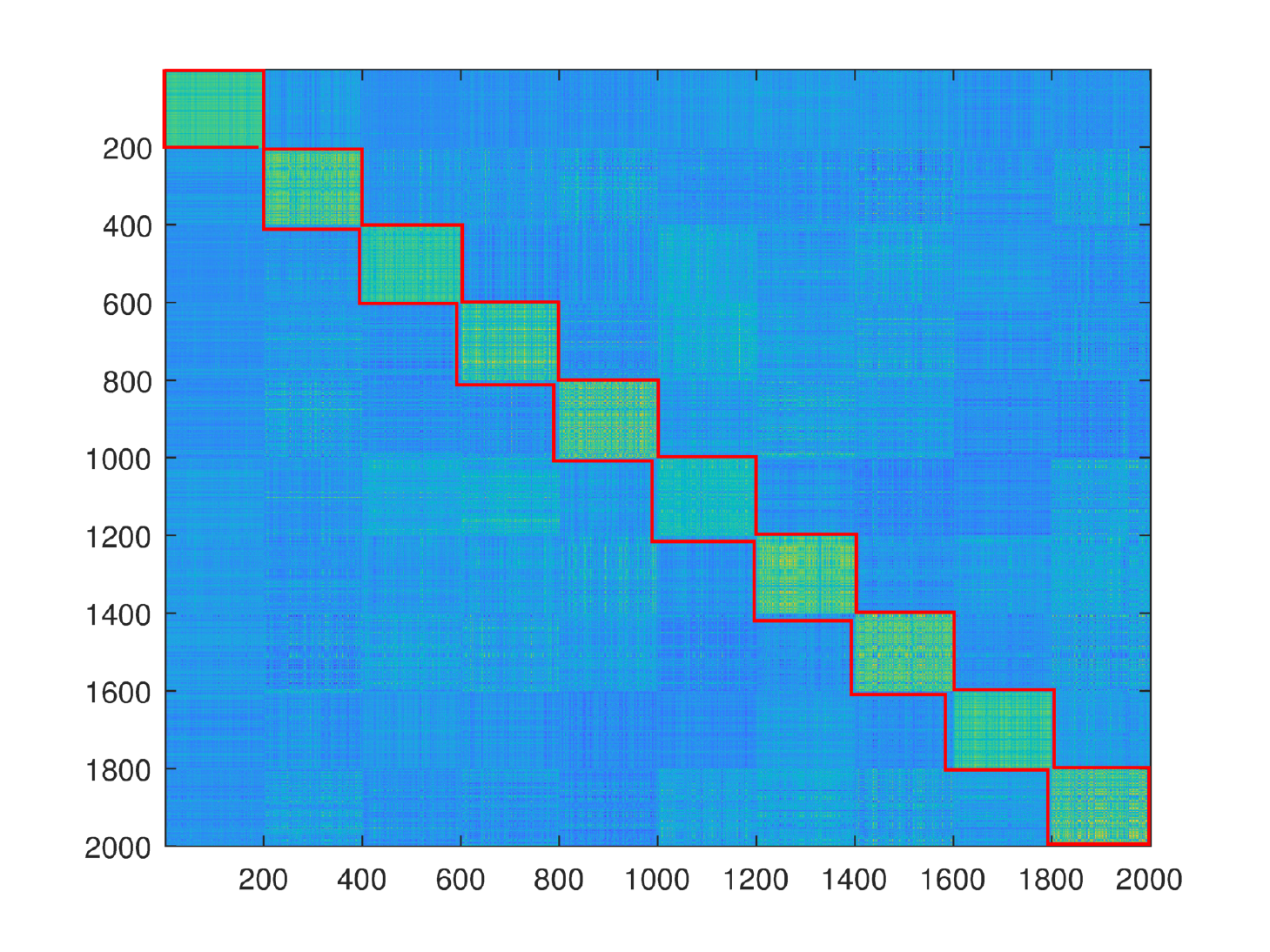}\label{evolution_1_s}}
\subfloat[Mfeat $2^{nd}$ iteration]{\includegraphics[width = 0.25\textwidth]{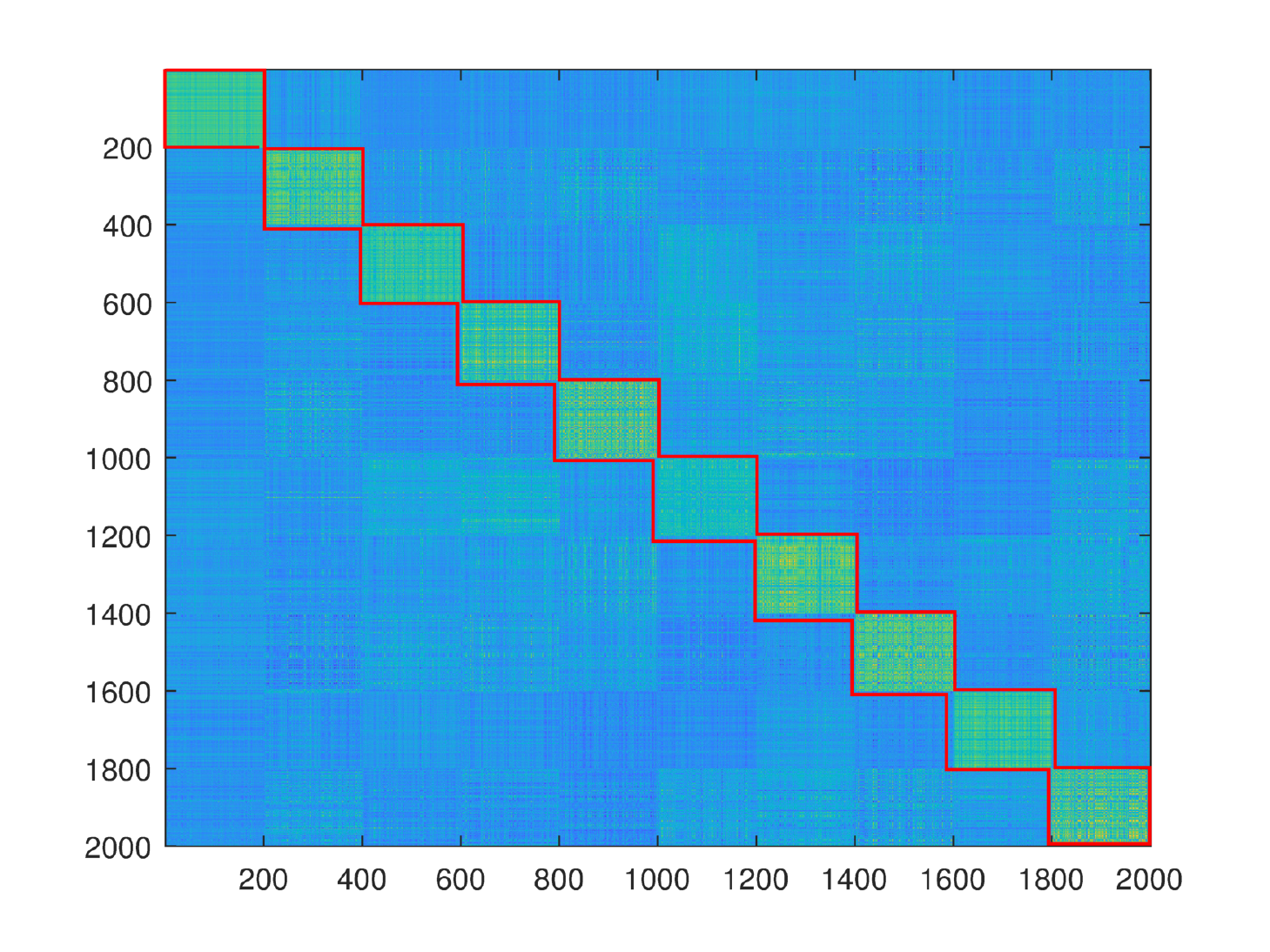}\label{evolution_5_s}}
\subfloat[Mfeat $5^{th}$ iteration]{\includegraphics[width = 0.25\textwidth]{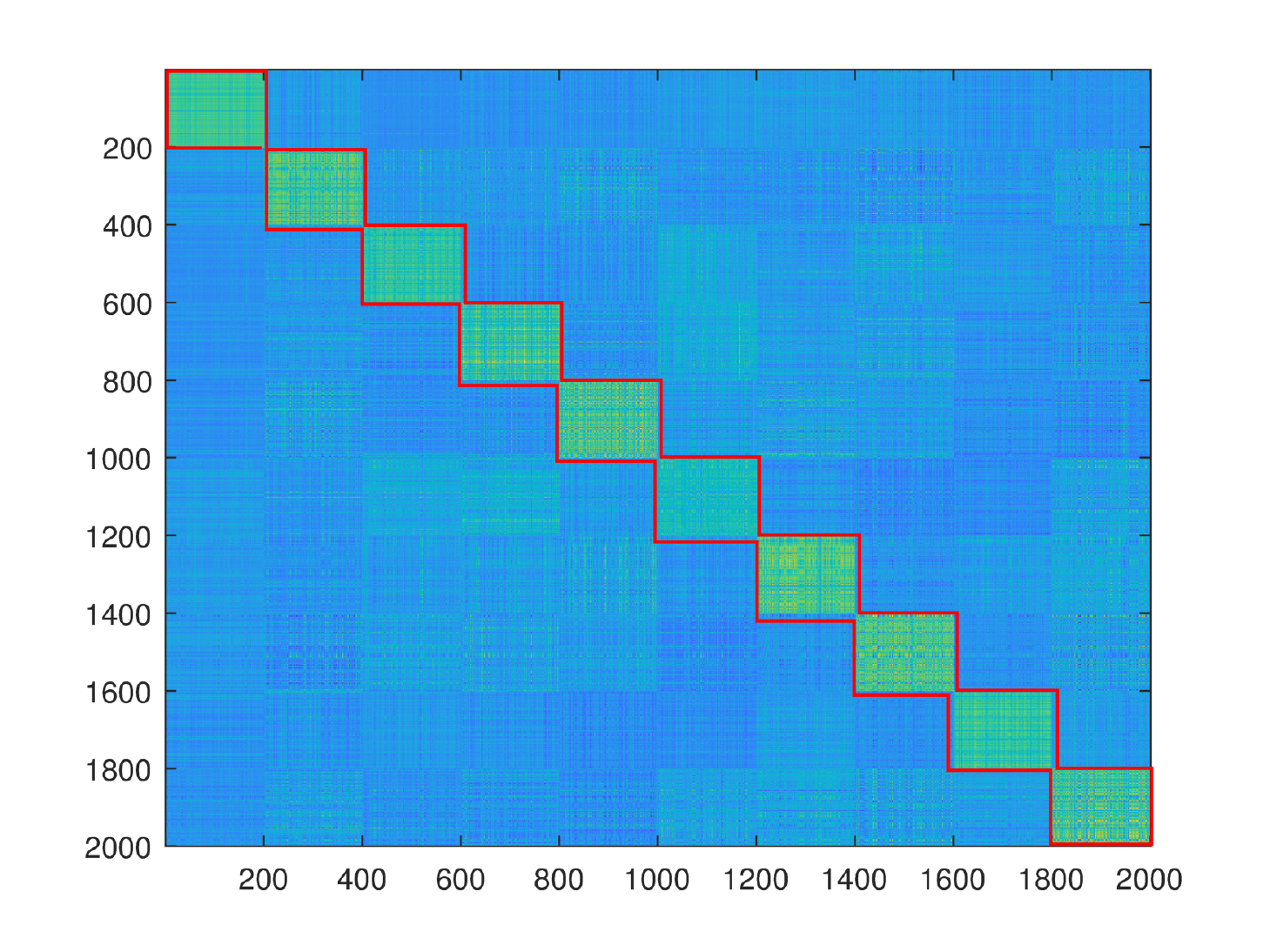}\label{evolution_10_s}}
\subfloat[Mfeat $10^{th}$ iteration]{\includegraphics[width = 0.25\textwidth]{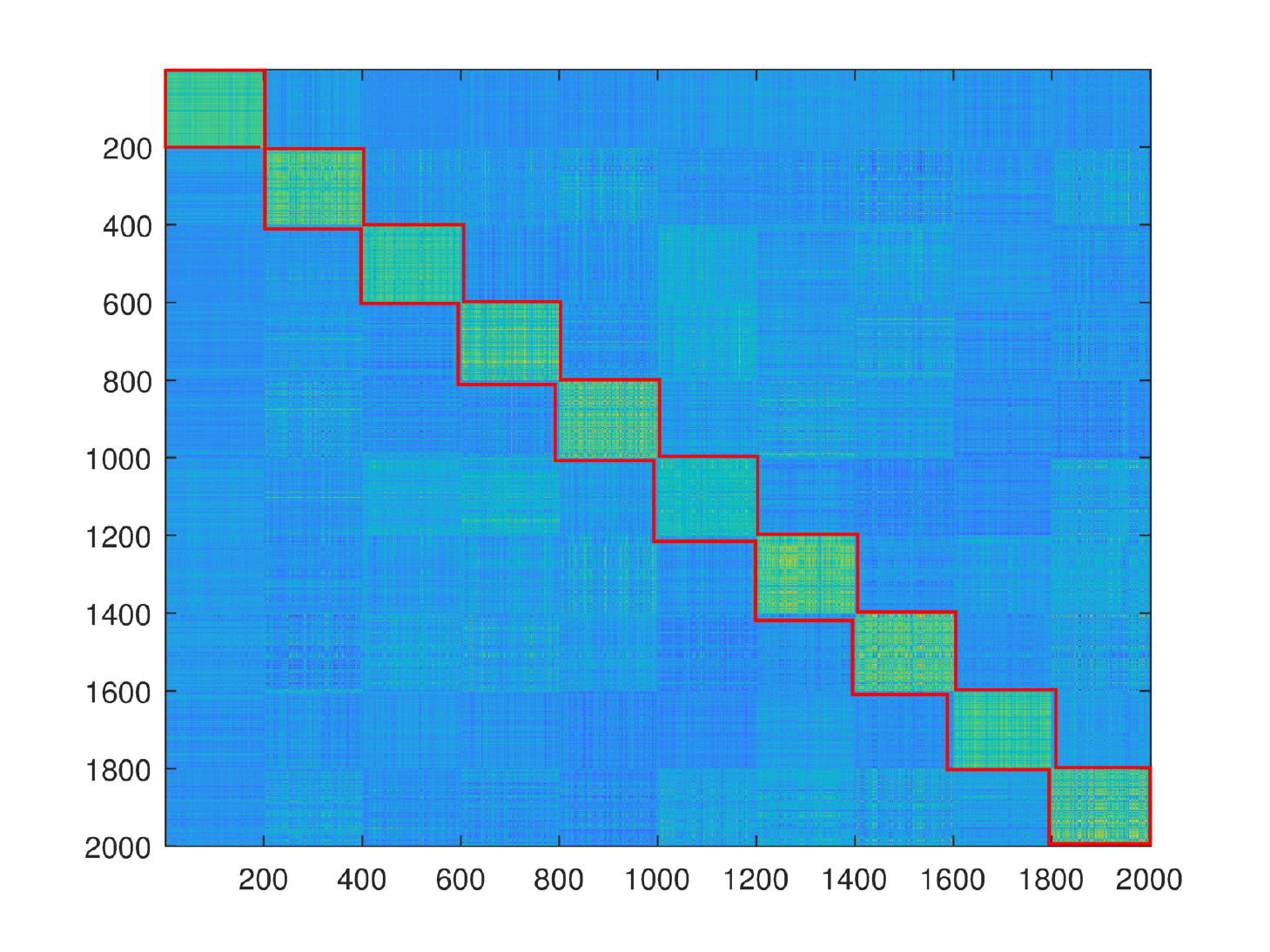}\label{evolution_20_s}}
\caption{The learned affinity matrices  of the $1^{st}$,$2^{nd}$,$5^{th}$ and $10^{th}$ iteration on Mfeat.}\label{evolution_s}
\end{figure*}

\end{document}